\documentclass{article}

\PassOptionsToPackage{authoryear,round}{natbib}
\usepackage[preprint]{neurips_2026}

\usepackage[utf8]{inputenc} % allow utf-8 input
\usepackage[T1]{fontenc}    % use 8-bit T1 fonts
\usepackage[hypertexnames=false]{hyperref}       % hyperlinks
\usepackage{url}            % simple URL typesetting
\usepackage{booktabs}       % professional-quality tables
\usepackage{amsfonts}       % blackboard math symbols
\usepackage{nicefrac}       % compact symbols for 1/2, etc.
\usepackage{microtype}      % microtypography
\usepackage{xcolor}         % colors

\usepackage{subcaption}

\usepackage{amsmath}
\usepackage{amssymb}
\usepackage{mathtools}
\usepackage{amsthm}
\usepackage{wrapfig}

\usepackage{bbm}
\usepackage{csquotes}
\usepackage{enumitem}  

\usepackage[capitalize,noabbrev]{cleveref}

\usepackage[textsize=tiny]{todonotes}

\usepackage{graphicx}
\usepackage{apptools}
\usepackage[flushleft]{threeparttable}
\usepackage{array,booktabs,makecell}
\usepackage{multirow}
\usepackage{nicefrac}
\usepackage{amsthm}
\usepackage{amsmath,amsfonts,bm,amssymb,mathtools}
\usepackage{caption}
\usepackage{siunitx}
\usepackage{thm-restate}
\usepackage{nccmath}
\usepackage{empheq}
\usepackage{suffix}
\usepackage{tabularx}
\usepackage{mathrsfs}
\usepackage{derivative}
\usepackage[dvipsnames]{xcolor}

\usepackage{aligned-overset}
\newcommand{\oversetref}[2]{\overset{{\scriptscriptstyle\mathrm{#1}\!~\text{#2}}}}
\newcommand{\oversetlab}[1]{\overset{{\scriptscriptstyle\text{#1}}}}

\newcounter{relctr} %% <- counter for relations
\newcounter{relgroup} %% <- counter for relations

\everydisplay\expandafter{\the\everydisplay\setcounter{relctr}{0}}%% <- reset every eq
\AtBeginDocument{\let\originallabel\label} %% <- store original definition

\makeatletter
\newcommand\oversetrel[1]{%
    \ifnum\value{relctr}=0\relax%
      \stepcounter{relgroup}%
    \fi%
    \refstepcounter{relctr}%
    \phantomsection%
    \originallabel{rel:#1@\therelgroup}%
    \overset{\scriptscriptstyle\text{(\alph{relctr})}}%
}

\newcommand\relref[1]{%
    \@ifundefined{r@rel:#1@\arabic{relgroup}}{%
        \eqref{}%
    }{%
        \eqref{rel:#1@\arabic{relgroup}}% Always point to (a)
    }%
}
\makeatother

\makeatletter
\DeclareRobustCommand\widecheck[1]{{\mathpalette\@widecheck{#1}}}
\def\@widecheck#1#2{%
    \setbox\z@\hbox{\m@th$#1#2$}%
    \setbox\tw@\hbox{\m@th$#1%
       \widehat{%
          \vrule\@width\z@\@height\ht\z@
          \vrule\@height\z@\@width\wd\z@}$}%
    \dp\tw@-\ht\z@
    \@tempdima\ht\z@ \advance\@tempdima2\ht\tw@ \divide\@tempdima\thr@@
    \setbox\tw@\hbox{%
       \raise\@tempdima\hbox{\scalebox{1}[-1]{\lower\@tempdima\box
\tw@}}}%
    {\ooalign{\box\tw@ \cr \box\z@}}}
\makeatother

\usepackage{algorithmic}
\usepackage{color}
\usepackage{colortbl}
\definecolor{bgcolor}{rgb}{0.76,0.88,0.50}
\definecolor{bgcolor0}{rgb}{0.93,0.99,1}
\definecolor{bgcolor1}{rgb}{0.8,1,1}
\definecolor{bgcolor2}{rgb}{0.8,1,0.8}
\definecolor{bgcolor3}{rgb}{0.50,0.90,0.50}
\usepackage{tcolorbox}
\usepackage{pifont}

\definecolor{mydarkgreen}{RGB}{39,130,67}
\definecolor{mydarkorange}{RGB}{236,147,14}
\definecolor{mydarkred}{RGB}{192,47,25}
\definecolor{ruby}{RGB}{155,17,30}
\definecolor{chili}{RGB}{191,0,0}
\definecolor{sangria}{RGB}{146,0,10}
\definecolor{burgundy}{RGB}{128,0,32} 
\definecolor{darkred}{RGB}{132,0,0} 
\definecolor{cherry}{RGB}{192,0,0} 

\definecolor{blue}{RGB}{0,0,255}

\usepackage[textsize=tiny]{todonotes}

\usepackage{xspace}

\newcommand{\norm}[1]{\left\| #1 \right\|}

\newcommand{\abs}[1]{\left| #1 \right|}

\newcommand{\R}{\mathbb{R}} % reals
\newcommand{\N}{\mathbb{N}} % reals
\newcommand{\E}[1]{\mathbb{E}\left[#1\right]}

\newcommand{\cA}{\mathcal{A}}

\newcommand{\cE}{\mathcal{E}}

\newcommand{\cK}{\mathcal{K}}
\newcommand{\cL}{\mathcal{L}}

\newcommand{\cP}{\mathcal{P}}
\newcommand{\cQ}{\mathcal{Q}}

\newcommand{\cU}{\mathcal{U}}

\newcommand{\eqdef}{:=}
\WithSuffix\newcommand\eqdef*{:\!&=}

\makeatletter
\newcommand{\vast}{\bBigg@{4}}

\def\<{\left\langle}
\def\>{\right\rangle}
\def\({\left(}
\def\){\right)}

\usepackage{thmtools}
\usepackage{thm-restate}

\usepackage{amsthm}
\theoremstyle{plain}
\newtheorem{theorem}{Theorem}[section]

\newtheorem{lemma}{Lemma}[section]
\newtheorem{corollary}[theorem]{Corollary}
\theoremstyle{definition}
\newtheorem{definition}{Definition}[section]

\theoremstyle{remark}
\newtheorem{remark}{Remark}[section]

\usepackage{cleveref}

\crefname{assumption}{assumption}{assumptions}
\Crefname{assumption}{Assumption}{Assumptions}
\creflabelformat{assumption}{#2#1#3}

\crefname{condition}{condition}{conditions}
\creflabelformat{condition}{#2#1#3}

\crefname{observation}{observation}{observations}
\creflabelformat{observation}{#2#1#3}

\theoremstyle{plain}
\newtheorem{innercustomthm}{Theorem}
\newenvironment{restate-theorem}[1]
{\renewcommand\theinnercustomthm{#1}\innercustomthm}
{\endinnercustomthm}

\newtheorem{innercustomlemma}{Lemma}
\newenvironment{restate-lemma}[1]
{\renewcommand\theinnercustomlemma{#1}\innercustomlemma}
{\endinnercustomlemma}

\newtheorem{innercustomcorollary}{Corollary}
\newenvironment{restate-corollary}[1]
{\renewcommand\theinnercustomcorollary{#1}\innercustomcorollary}
{\endinnercustomcorollary}

\newtheorem{innercustomproposition}{Proposition}
\newenvironment{restate-proposition}[1]
{\renewcommand\theinnercustomproposition{#1}\innercustomproposition}
{\endinnercustomproposition}

\theoremstyle{definition}
\newtheorem{innercustomdef}{Definition}
\newenvironment{restate-definition}[1]
{\renewcommand\theinnercustomdef{#1}\innercustomdef}
{\endinnercustomdef}

\newcommand*{\sketchproofname}{Sketch of Proof}

\usepackage{longtable}

\newcommand{\supp}{\mathrm{supp}\,}

\DeclareMathSymbol{\mh}{\mathord}{operators}{`\-}

\newcommand*{\lbd}{\lambda}% lambda grec.
\newcommand*{\eps}{\varepsilon}% epsilon grec.

\newcommand*{\Floor}[1]{\left\lfloor #1 \right\rfloor}% partie entière inférieure.
\newcommand*{\Ceil}[1]{\left\lceil #1 \right\rceil}% partie entière supérieure.
\newcommand*{\T}[1]{{#1}^{\scriptscriptstyle\smash{\top}}}% transposée.

\newcommand*{\Tr}{\mathop{\mathrm{Tr}}}% trace.

\newcommand*{\intervalle}[4]{\mathopen{#1}#2\mathclose{}\mathpunct{},#3\mathclose{#4}}%
\newcommand*{\intff}[2]{\intervalle{[}{#1}{#2}{]}}% intervalle fermé.
\newcommand*{\intof}[2]{\intervalle{(}{#1}{#2}{]}}% intervalle ouvert-fermé.
\newcommand*{\intfo}[2]{\intervalle{[}{#1}{#2}{)}}% intervalle ferme-ouvert.
\newcommand*{\intoo}[2]{\intervalle{(}{#1}{#2}{)}}% intervalle ouvert.
\newcommand*{\ens}[1]{\left\{#1\right\}}%
\newcommand*{\enstq}[2]{\left\{#1\,:\,#2\right\}}%
\newcommand*{\Card}[1]{\left|#1\right|}% cardinal.
\renewcommand*{\N}{\mathbb{N}}% entiers naturels.
\newcommand*{\Z}{\mathbb{Z}}% entiers relatifs.
\renewcommand*{\R}{\mathbb{R}}% réels.
\renewcommand*{\T}{\mathbb{T}}% réels.
\newcommand*{\C}{\mathbb{C}}% complexes.
\newcommand*{\K}{\mathbb{K}}% corps, R ou C en général.
\newcommand{\diam}{\delta}
\newcommand{\normop}[1]{{\left\vert\kern-0.25ex\left\vert\kern-0.25ex\left\vert #1 \right\vert\kern-0.25ex\right\vert\kern-0.25ex\right\vert}}% operator norm

\newcommand{\adh}[1]{\overline{#1}}% Adhérence d'une partie.
\newcommand*{\Conv}{\mathop{\mathrm{Conv}}}% Enveloppe convexe.

\newcommand{\ps}[2]{\left\langle #1 , #2 \right\rangle}% produit scalaire.

\renewcommand*{\abs}[1]{\left\lvert #1 \right\rvert}% module / valeur absolue.
\renewcommand*{\lim}{\mathop{\mathrm{lim}}\limits}% limite.

\newcommand\numberthis{\addtocounter{equation}{1}\tag{\theequation}}%number 
\definecolor{jogreen}{rgb}{0.15,.7,0.15}

\newcommand{\johan}[1]{\textcolor{jogreen}{{#1}}}
\newcommand{\abde}[1]{{\color{blue}{\textbf{Abderrahim:} #1}}}
\newcommand{\adri}[1]{{\color{blue}{\textbf{Adrien:} #1}}}

\newcommand{\normgridvec}[1]{\norm{#1}_{\ell^2(\T^d_N)}}
\newcommand{\normgridmattwo}[1]{\normop{#1}_{\ell^2(\T^d_N)}}
\newcommand{\normgridmatone}[1]{\normop{#1}_{\ell^1(\T^d_N)}}
\newcommand{\normsobolev}[1]{\norm{#1}_{H^s}}

\usepackage{xcolor}

\definecolor{mplblue}{RGB}{31,119,180}
\definecolor{mplgreen}{RGB}{44,160,44}

\title{Stability and Discretization Error of State Space Model Neural Operators}

\author{%
  Abderrahim Bendahi %\thanks{Use footnote for providing further information about author (webpage, alternative address)---\emph{not} for acknowledging funding agencies.} 
    \\
  École polytechnique \\
  Paris, France\\
  \texttt{abderrahim.bendahi@polytechnique.edu} \\
   \And
  Adrien Fradin \\
  École polytechnique \\
  Paris, France \\
  \texttt{adrien.fradin@polytechnique.edu} \\
  \AND
  Johan Peralez \\
  Université Claude Bernard Lyon 1, CNRS, LAGEPP UMR 5007 \\
  Villeurbanne, France \\
  \texttt{johan.peralez@univ-lyon1.fr} \\
   \And
  Julie Digne \\
  CNRS, Université Lyon 1, INSA Lyon, LIRIS \\
  Lyon, France \\
  \texttt{julie.digne@cnrs.fr} \\
   \And
  Madiha Nadri \\
  Université Claude Bernard Lyon 1, CNRS, LAGEPP UMR 5007 \\
  Villeurbanne, France \\
  \texttt{madiha.nadri-wolf@univ-lyon1.fr} \\
}

\begin{document}

\maketitle

\begin{abstract}
%Neural operators provide a discretization-invariant framework for learning solution operators of PDEs, but the relation between their continuous formulation and discrete implementation remains incomplete. We study this continuous-to-discrete gap for convolution-based neural operators, with a focus on State Space Neural Operators (SS-NOs). We prove discretization-error bounds linking the approximation error to input regularity, kernel regularity, and activation regularity, extending prior FNO analyses to fractional Sobolev regimes and non-smooth activations. We also establish continuous and discrete stability estimates, including an input-to-state stability bound that quantifies the joint effect of discretization and input perturbations. Experiments in 1D and 2D validate the predicted resolution scaling and illustrate the robustness of SS-NOs under grid refinement.

  Neural operators have emerged as a powerful, discretization-invariant framework for solving partial differential equations (PDEs). Although established approaches like the Deep Operator Network (DeepONet) have successfully achieved universal approximation for operators, and architectures such as Fourier Neural Operators (FNOs) have shown algebraic convergence rates, a precise theoretical connection between the continuous theory and its discrete numerical implementation remains a challenge. Specifically, the relationship between the continuous formulation and the discrete numerical stability has yet to be fully explored. In this paper, we address this gap by establishing theoretical guarantees for the discretization error and stability of neural operator approximation schemes. We prove analytical bounds that link solution regularity to input discretization, providing a formal quantification of neural operator accuracy under real-world numerical constraints. We derive these bounds to the specific cases of State Space Model-based Neural Operators (SS-NOs) and FNOs, thus providing a new discretization error theorem for these models. Additionally, through an input-to-state stability (ISS) analysis, we formally assess the impact of discretization on the stability of SS-NOs results obtained in the continuous domain. 
%We show empirically\mn{ca casse l'ambiance ce Ampirically! ON ne peut pas dire
    Our empirical experiments on 1D and 2D benchmarks validate our theoretical bounds and show the robustness of SS-NOs under varying resolutions.
\end{abstract}

\section{Introduction}

Scientific machine learning increasingly uses neural networks to solve partial differential equations~\cite{brunton2023machine}. Since solving PDEs amounts to learning solution operators, neural operators (NOs)~\cite{Kovachki-NeuralOperator:20223} have emerged as models between infinite-dimensional function spaces that can generalize across grid resolutions. Architectures such as DeepONet~\cite{learning-nonlinear-operators-deeponets}, Fourier Neural Operators (FNOs)~\cite{lifourier}, and State Space Neural Operators (SS-NOs) are defined in the continuous domain, but their implementation requires discretizing the convolution kernel. Most analyses focus on the continuous setting~\cite{kovachki2021universalapproximationerrorbounds}, leaving open how discretization affects accuracy and stability. While discretization error has recently been studied for FNOs~\cite{lanthaler2025discretizationerrorfourierneural}, analogous guarantees for SS-NOs are still missing.

In this work, we bridge this gap through three contributions:
\begin{itemize}
    \setlength{\itemsep}{0pt}
    \item \emph{Continuous stability.} We prove Lipschitz stability of deep SS-NO architectures in infinite-dimensional Hilbert spaces.
    \item \emph{Discretization error.} We derive explicit continuous-to-discrete error bounds that account for the kernel and activation regularity. Compared with prior FNO results~\cite{lanthaler2025discretizationerrorfourierneural}, our analysis covers fractional Sobolev regularity, more general convolution kernels, and non-smooth activations such as \textsc{ReLU} and \textsc{LeakyReLU}.
    \item \emph{Discrete stability.} We prove a global stability bound for the discretized model, including both discretization error and input perturbations.
\end{itemize}

\section{Related Work}

Neural operators aim to approximate mappings between infinite-dimensional function spaces \cite{boulle2024mathematical, lanthaler2025discretizationerrorfourierneural, kovachki2024operator}. Unlike classical neural networks designed for finite-dimensional Euclidean spaces \cite{boulle2024mathematical, Kovachki-NeuralOperator:20223, lifourier}, neural operators are discretization-invariant, allowing them to share model parameters across different grid resolutions and providing significant speedups over traditional numerical solvers \cite{boulle2024mathematical, li2023fourier, lifourier}.

%\paragraph{Foundational Approaches in Neural Operator Learning} A seminal approach in this domain is the Deep Operator Network (DeepONet) \cite{learning-nonlinear-operators-deeponets}, which builds upon the universal approximation theorem for operators using a dual-network architecture consisting of a branch net for input functions and a trunk net for output locations. DeepONets have demonstrated high-order error convergence and significantly reduced generalization errors compared to fully-connected networks when learning dynamic systems and PDEs \cite{learning-nonlinear-operators-deeponets}. Theoretical work has focused so far on proving universal approximation theorems for neural operators formulated as a composition of linear integral operators and non-linear activations \cite{Kovachki-NeuralOperator:20223,boulle2024mathematical}. Since their introduction, many neural operators have been proposed either based on graph kernels~\cite{li2020multipole,anandkumar2020neural}, learning the kernel directly in physical space~\cite{gin2020deepgreendeeplearninggreens}, incorporating physics priors~\cite{li2024physics}, or adapting to different geometries~\cite{Li23-gino}. We focus on two types of neural operators, described below, that are particularly interesting theoretically.

\paragraph{Foundational Approaches in Neural Operator Learning}
Deep Operator Networks (DeepONets) are a seminal neural-operator architecture based on the universal approximation theorem for operators, using a branch net for input functions and a trunk net for output locations~\cite{learning-nonlinear-operators-deeponets}. They achieve high-order error convergence and reduced generalization error compared to fully-connected networks on dynamic systems and PDEs~\cite{learning-nonlinear-operators-deeponets}. Theory has mainly studied universal approximation for neural operators composed of linear integral operators and nonlinear activations~\cite{Kovachki-NeuralOperator:20223,boulle2024mathematical}. Subsequent architectures use graph kernels~\cite{li2020multipole,anandkumar2020neural}, learned physical-space kernels~\cite{gin2020deepgreendeeplearninggreens}, physics priors~\cite{li2024physics}, or geometry-aware designs~\cite{Li23-gino}. We focus on two theoretically relevant neural-operator families below.

\paragraph{Fourier Neural Operators}
The Fourier Neural Operators (FNOs) introduced by \cite{lifourier} parameterize the integral kernel in Fourier space, enabling the modeling of turbulent flows with high efficiency. To handle complex geometries beyond uniform grids, \cite{li2023fourier} proposed Geo-FNO, which uses a learnable deformation to map irregular domains into a canonical latent space. On the theoretical side, \cite{lanthaler2025discretizationerrorfourierneural} established convergence rates for FNOs, quantifying the discretization error between the continuous operator and its numerical implementation. We extend their results to SS-NOs, and beyond discretization error, we study the discrete stability of SS-NOs.

\paragraph{State Space Models and SS-NO}
In parallel, \cite{gu2022parameterization} introduced S4D, a simplified diagonal state space model (SSM) that captures long-range dependencies with minimal computational overhead.
Building on this, the State Space Neural Operator (SS-NO) \cite{anonymous2025merging} extends SSMs to joint spatiotemporal modeling.
This architecture introduces adaptive damping and learnable frequency modulation, proving a universality theorem for full field-of-view convolutional architectures while maintaining high parameter efficiency on benchmarks like Navier-Stokes \cite{anonymous2025merging}.
Related efforts in computational efficiency also include generative frameworks like LD3 \cite{tong2025learningdiscretizedenoisingdiffusion}, which optimize time discretization to improve sampling efficiency in diffusion probabilistic models without retraining.
However, a rigorous theoretical quantification of the discretization error and numerical stability for SS-NO architectures is currently missing. We address this gap by establishing explicit error bounds linked to the kernel parameters.

\section{Preliminaries and Continuous Stability}
We introduce the neural-operator architecture and the FNO/SS-NO kernels, then prove continuous stability for SS-NO layers on the torus.

%In this section, we briefly introduce the foundational definitions and architectural framework for %structured  state-space models 
%SSMs, with a specific focus on the SS-NO and 
%Fourier neural operator (FNO)
%FNO architectures. Then we provide a stability analysis for SS-NOs kernel under periodic boundary conditions.

\subsection{Definitions and Notation}

\paragraph{Notations.} 
We let $\N := \ens{1, 2, \ldots}$, $\T^d \cong [0, 1)^d$ the $d$-dimensional torus, and $\mathcal{A}$, $\mathcal{U}$ are the Banach spaces of input and output functions respectively. For $M \in \R^{H \times H}$, $\normop{M} := \sup_{\norm{x}_2 = 1} \norm{Mx}_2$ is the operator norm and $\norm{M}_F := \sqrt{\Tr( M M^\top )}$ is the Frobenius norm. Standard $L^p(\T^d)$ and Sobolev $H^s(\T^d)$ norms are denoted $\norm{\cdot}_{L^p}$ and $\norm{\cdot}_{H^s}$. We define the discrete torus as the uniform grid $\T^d_N := \frac{1}{N} \ens{0, \ldots, N-1}^d = \frac{1}{N} [N]^d$, which we equip, for $1 \le p < +\infty$, with the norm $\norm{f}_{\ell^p(\T_N^d)} := \left( \sum_{x \in \T_N^d} \norm{f(x)}^p \right)^{1/p}$. See~\Cref{appdx-sec:function-spaces} for more details on the function spaces considered.% in this work.

Building on the work of~\citet{lanthaler2025discretizationerrorfourierneural}, we consider neural operators acting on the torus $\T^d$:
% Consistent with recent advancements in operator learning, the following definitions extend the mathematical formalism of \citet{lanthaler2025discretizationerrorfourierneural} 
% to convolution-based neural operator architectures, specifically the SS-NO architecture.
% For now on, unless otherwise stated, we exclusively work on the $d$--dimensional torus $\T^d$ (assuming periodic boundary conditions).

\begin{definition}[Neural Operator]\label{def:neural-operator}
    Let $\cA$ and $\cU$ be Banach spaces of functions on $\T^d$. The operator $\Psi_{\theta} \colon \cA \to \cU$ is composed of lifting ($\cP$), projection ($\cQ$), and $T$ layers:
    \begin{align}
            \Psi_{\theta} :\!&= \cQ \circ \cL_{T - 1} \circ \cdots \circ \cL_0 \circ \cP \numberthis\label{eq:psi-def} \\
            v_{t + 1} &= \cL_t v_t := \sigma_t \left( W_t v_t + \cK_t * v_t + b_t \right), \numberthis\label{def:ssno-layer}
        \end{align}
    for $t = 0, \ldots, T - 1$, with $v_0 = \cP(a)$. $\cP \colon \R^{d_a} \to \R^{d_0}$ and $\cQ \colon \R^{d_T} \to \R^{d_u}$ are shallow networks with smooth ($\mathscr{C}^{\infty}$) Lipschitz activations, $\sigma_t$ is a $L_{\sigma}$--Lipschitz continuous activation function, $W_t \in \R^{d_{t + 1} \times d_t}$ and $b_t \in \R^{d_{t + 1}}$ are weights and biases, $\cK_t$ is the convolution kernel, %It is defined via its Fourier transform $\widehat{\cK}_t(\xi) = \int_{\T^d} \cK_t(z) e^{-2 i \pi \xi \cdot z} \odif{z}$ for $\xi \in \Z^d$, with the inverse $\cK_t(z) = \sum_{\xi} \widehat{\cK}_t(\xi) e^{2 i \pi \xi \cdot z}$.
    and $\theta$ denotes the collection of all learnable parameters.
\end{definition}

We now specify the kernel parameterizations for FNO and SS-NO architectures.

\begin{definition}[FNO Kernel]\label{def:fno}
    For $1 \le K \le N/2$ and learnable matrices $P_t^{(k)} \in \C^{d_{t + 1} \times d_t}$, we let:
    \begin{equation}\label{def:kernel-fnos}
        \cK_t^{\text{FNO}}(z) := \sum_{k \in \ens{-K, \ldots, K}^d} P_t^{(k)} e^{2i \pi k \cdot z}.
    \end{equation}
\end{definition}

\begin{definition}[SS-NO Kernels]\label{def:ss-no-kernels}
    SS-NO kernels rely on 1D directional components. We distinguish the \textbf{Product-Form} (left) and the \textbf{Sum-Form} (right) in \eqref{eq:kernel-ssnos}:
    \begin{align}
        \cK_t^{\text{SS-NO}}(z) := \prod_{j = 1}^d \left( \cK^{(j)}_{t, +}(z_j) + \cK^{(j)}_{t, -}(z_j) \right) \quad
        \cK_t^{\text{SS-NO-FF}}(z) := \sum_{j = 1}^d \left( \cK^{(j)}_{t, +}(z_j) + \cK^{(j)}_{t, -}(z_j) \right), \label{eq:kernel-ssnos}
    \end{align}
    % The components are defined for each $j \in \{1, \dots, d\}$ by:
    % \begin{equation}
    %     \begin{split}
    %         \cK^{(j)}_{t, \pm}(z_j) := \mathbbm{1}_{\ens{\pm z_j \geq 0}} \sum_{k = 1}^K c_{t, k, j} & e^{-\rho_{t, k, j} \abs{z_j} + i \omega_{t, k, j} z_j} \\
    %         & \times C_{t, \pm}^{(k)} \left( B_{t, \pm}^{(k)} \right)^{\top},
    %     \end{split}
    % \end{equation}
    % where $c_{t, k, j} \in \R$, $\rho_{t, j, k} \in \R_+$, $\omega_{t, j, k} \in \R$, and matrices $C_{t, \pm}^{(k)}, B_{t, \pm}^{(k)}$ are learnable parameters.
    where each directional component $\cK^{(i)}_{t, \pm}(z_i)$ is defined as:
    \begin{align*}        &\hspace{-0.5cm}\cK^{(i)}_{t, \pm}(z_i) := \mathbbm{1}_{\ens{\pm z_i \geq 0}}   \sum_{k = 1}^K c_{t, k, i} e^{-\rho_{t, k, i} \abs{z_i}} e^{i \omega_{t, k, i} z_i} C_{t, \pm}^{(k)} \left( B_{t, \pm}^{(k)} \right)^{\top}.
    \end{align*}
    The variables $c_{t, k, i} \in \R$, $\rho_{t, i, k} \in \R_+, \omega_{t, i, k} \in \R$, and $C_{t, \pm}^{(k)} \in \R^{d_{t + 1}}, B_{t, \pm}^{(k)} \in \R^{d_t}$ are learnable parameters. $K$ denotes the fixed mode truncation parameter.
\end{definition}

\iffalse
\begin{remark}\label{rem:fno-vs-ssno}
In 1D, the class of FNO kernels is contained in the family of SS-NO kernels. Indeed, by setting $\rho_{t,k,i} = 0$, $\omega_{t,k,i} = 2\pi k_i$, with $k_i \in \ens{-K, \ldots, K}$, and choosing appropriate finite sums of rank-one factors $C_{t,\pm}^{(k)} (B_{t,\pm}^{(k)})^\top$, the SS-NO kernel reproduces the Fourier atoms $e^{2\pi i k z}P_t^{(k)}$, and further allow nonzero decay rates (with the damping parameters $\{\rho_{t,k,i}\}_{k = 1}^K$), and learnable frequencies. SS-NOs therefore extend the representational capacity of FNOs.% (e.g., it can model spatially localized and anisotropic kernels that FNO cannot).
\end{remark}
\fi

\begin{remark}\label{rem:fno-vs-ssno}
In 1D, FNO kernels are contained in the SS-NO family: setting $\rho_{t,k,i}=0$, $\omega_{t,k,i}=2\pi k_i$, and choosing rank-one sums $C_{t,\pm}^{(k)}(B_{t,\pm}^{(k)})^\top$ recovers the Fourier atoms $e^{2\pi i kz}P_t^{(k)}$. Nonzero damping and learnable frequencies therefore strictly extend the FNO parameterization.
\end{remark}

Throughout this work, \textbf{we establish and prove our results for the \emph{sum-form} SS-NOs architecture}. Results for the \emph{product-form} are given in Section \ref{sec:prodform} and the proofs are omitted as they follow from straightforward adaptation of sum-form proofs. For conciseness, we use $\cK_t^{\text{SS-NO}}$ to denote $\cK_t^{\text{SS-NO-FF}}$.

\iffalse
\subsection{Diagonal State Space Models (S4D)} \label{sec:S4D}
\textbf{Spatial state space system}: \\
 $\partial_x \tilde{v}(x) = A\, \tilde{v}(x) + B\, v(x)$, with $x \in [a, b]$ \\
 $(Kv)(x) = C\, \tilde v(x)$ with $\quad \tilde{v}(a) = 0$. \\
 Integral solution: \\
 $\tilde{v}(x) = \int_{a}^{x} e^{A (x - y)} B\, v(y)\, dy$ \\
 $(Kv)(x) = \int_{a}^{x} C\, e^{A (x - y)} B\, v(y)\, dy$. \\
\textbf{Kernel form}: \\
$(Kv)(x) = \int_{a}^{x} \cK(x-y)\, v(y)\, dy$, \\
$\cK(x-y) := C\, e^{A (x-y)} B$.\\
\textbf{S4D parametrization from \citep{gu2022parameterization}, used in \cite{anonymous2025merging} :} \\
The vector-valued function $v(x) \in \R^H$ is treated as $H$ independent channels. Denoting $ v(x)=[v_1(x), ..., v_h(x), ..., v_H(x)]$, \\
$\partial_x \tilde{v}_h(x) = A_h\, \tilde{v}_h(x) + B_h\, v(x_h)$, $A_h \in \C^{K \times K}, B_h \in \C^{K \times 1}$ \\
 $(Kv_h)(x) = C_h\, \tilde v_h(x)$, $C_h \in \C_h^{1 \times K}$ \\
 Moreover, $A_h$ is chosen to be diagonal: $A_h = \Lambda(r_1 + i \omega_1, ..., r_K + i \omega_K)$. \\
 As a result, the number of (complex) parameters, for the kernel of a single layer is about $H \times K$, while the number of (real) parameters for the matrix $W$ is $H^2$. \\ 
 This aligns with the claim in \cite[section 4]{anonymous2025merging} claim: \textit{spatial-SSM \#Parameter = $L(H^2+HK)d$} with $L$ the number of layers. 
 
\fi

In the following, we analyze the stability of SS-NO layers as nonlinear operators on $L^2(\mathbb{T}^d)$. Our approach follows a  two-step procedure: we first establish stability results in the continuous domain to derive explicit Lipschitz constants as a function of model parameters (a similar analysis in the case of FNOs is done in \cite{lanthaler2025discretizationerrorfourierneural}. This continuous-time analysis will later be used as a baseline to evaluate the approximation error induced by discretization, using ISS analysis.

\subsection{Some Stability Results}

We now establish basic stability properties of SS-NO layers viewed as nonlinear operators acting on functions over the torus $\T^d$. Our goal is twofold: $(i)$ to show that a single SS-NO layer is Lipschitz continuous in $L^2(\T^d)$ under a mild integrability condition on its kernel; and $(ii)$ to propagate this stability through compositions of multiple layers. We then verify that SS-NO kernels indeed satisfy the required integrability assumption, yielding explicit stability constants that depend on the model parameters.

We begin by analyzing a single SS-NO layer. The following lemma shows that, provided the operator norm of the kernel is integrable over $\T^d$, a single SS-NO layer defines a Lipschitz map on $L^2(\T^d, \R^H)$. For the following results, we defer the proof to~\Cref{apx:stability-proofs} for readability.

\begin{lemma}[Stability of a Single SS-NO layer]\label{lem:stability-layer-ssm}
    Assume that $g \colon z \mapsto \normop{\cK^{\textnormal{SS-NO}} (z)} \in L^1(\T^d, \R)$, then for any input functions $v, w \in L^2(\T^d, \R^H)$, we have
        \begin{align}
             \norm{\mathcal{L}v - \mathcal{L}w}_{L^2} \leq C_{\sigma} \norm{v - w}_{L^2}, 
        \end{align}
    where $\displaystyle C_{\sigma} := L_\sigma \left( \normop{W} + \int_{\T^d} \normop{\cK^{\textnormal{SS-NO}}(z)} \odif{z} \right)$, with $\normop{\cdot}$ the operator norm of a matrix, and $\mathcal{L}$ is the map defined in~\eqref{def:ssno-layer} (we omit the layer index $t$) $\mathcal{L} \colon v \mapsto \sigma\left( W v + \cK^{\textnormal{SS-NO}} * v + b \right)$.
\end{lemma}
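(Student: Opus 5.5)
The plan is to reduce the claim to the Lipschitz property of $\sigma$ followed by bounds on the two linear parts. Since $\sigma$ is $L_\sigma$--Lipschitz and acts pointwise, for almost every $x \in \T^d$ we have
\begin{align*}
\norm{(\mathcal{L}v)(x) - (\mathcal{L}w)(x)}_2 \le L_\sigma \norm{W(v-w)(x) + (\cK^{\textnormal{SS-NO}} * (v-w))(x)}_2 .
\end{align*}
The bias $b$ cancels. Squaring, integrating over $\T^d$ and taking square roots gives $\norm{\mathcal{L}v - \mathcal{L}w}_{L^2} \le L_\sigma \norm{W u + \cK^{\textnormal{SS-NO}} * u}_{L^2}$ with $u := v - w \in L^2(\T^d,\R^H)$. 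By the triangle inequality in $L^2$, it then suffices to bound $\norm{Wu}_{L^2} \le \normop{W}\norm{u}_{L^2}$, which is immediate pointwise, and to bound $\norm{\cK^{\textnormal{SS-NO}} * u}_{L^2} \le \norm{g}_{L^1}\norm{u}_{L^2}$.

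For the convolution term I would prove a matrix-valued Young inequality on the torus. Pointwise,
\begin{align*}
\norm{(\cK * u)(x)}_2 \le \int_{\T^d} \normop{\cK(x-y)}\,\norm{u(y)}_2 \odif{y} = (g * \abs{u})(x),
\end{align*}
where $\abs{u}(y) := \norm{u(y)}_2 \in L^2(\T^d,\R)$. The scalar Young inequality $\norm{g * \abs{u}}_{L^2} \le \norm{g}_{L^1}\norm{\abs{u}}_{L^2}$ on the compact group $\T^d$ with Haar (Lebesgue) measure then gives the bound. This inequality can also be derived directly from Cauchy--Schwarz, by writing $g = g^{1/2} g^{1/2}$ and applying Fubini together with the translation invariance of the measure.

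The step I expect to need the most care is measure-theoretic. First, I must check that $\cK * u$ is well defined almost everywhere and measurable. This follows from Tonelli applied to $g * \abs{u}$, which is finite a.e. because $g \in L^1$ and $\abs{u} \in L^2 \subset L^1(\T^d)$. Second, I must interpret the kernel periodically. The SS-NO kernel is given via indicators $\mathbbm{1}_{\ens{\pm z_i \ge 0}}$ on a representative of $\T^d$, so I would identify $\T^d$ with $[0,1)^d$ (or a centered fundamental domain) and treat $\cK$ as its periodic extension. With that convention the integral $\int_{\T^d}\normop{\cK(z)}\odif{z}$ is exactly $\norm{g}_{L^1}$.

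The kernel is complex-valued through $e^{i\omega z}$, so the operator norm should be read over $\C^H$. This causes no trouble provided the layer output is interpreted as real, for instance by taking a real part, and taking a real part is a contraction. Combining the bounds yields $C_\sigma = L_\sigma(\normop{W} + \norm{g}_{L^1})$.
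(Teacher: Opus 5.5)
Your proposal is correct and follows essentially the same route as the paper. Both arguments use a pointwise Lipschitz bound for $\sigma$ (with $b$ cancelling), the triangle inequality to split off the $W$-term, the pointwise domination $\norm{(\cK * u)(x)} \le (g * \norm{u(\cdot)})(x)$, and Young's $L^1 \times L^2 \to L^2$ inequality on $\T^d$. Your extra remarks on measurability, on the periodic interpretation of the kernel, and on the complex factors $e^{i\omega z}$ are sound refinements that the paper leaves implicit.
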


Stability of individual layers naturally extends to deep architectures via composition. The next corollary quantifies how Lipschitz constants accumulate across a stack of SS-NO layers, yielding an explicit bound for the full operator.

\begin{corollary}[Stability of Layer Stacks]\label{cor:stability-full-ssm}
    Let $\mathscr{L} := \mathcal{L}_{T-1} \circ \dots \circ \mathcal{L}_0$ be a stack of $T > 0$ layers of the form \eqref{eq:psi-def}, with $\cK_t^{\textnormal{SS-NO}}$ satisfying $z \mapsto \normop{\cK_t^{\textnormal{SS-NO}}(z)} \in L^1(\T^d)$, then we have for any input functions $v, w \in L^2(\T^d, \R^H)$: $\norm{\mathscr{L}v - \mathscr{L}w}_{L^2} \leq C_{\sigma, T} \norm{v - w}_{L^2}$, where 
        \[ C_{\sigma, T} := L_\sigma^T \left[ \prod_{t = 0}^{T-1} \left( \normop{W_t} + \int_{\T^d} \normop{\cK_t^{\textnormal{SS-NO}}(z)} \odif{z} \right) \right]. \]
\end{corollary}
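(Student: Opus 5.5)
The plan is an induction on the number of layers $T$, applying \Cref{lem:stability-layer-ssm} to each layer. Write $C_{\sigma}^{(t)} := L_\sigma \bigl( \normop{W_t} + \int_{\T^d} \normop{\cK_t^{\textnormal{SS-NO}}(z)} \odif{z} \bigr)$. Under the hypothesis that $z \mapsto \normop{\cK_t^{\textnormal{SS-NO}}(z)}$ is integrable, the lemma gives $\norm{\cL_t v - \cL_t w}_{L^2} \le C_\sigma^{(t)} \norm{v - w}_{L^2}$ for every $t$.

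Define the partial stacks $\mathscr{L}_t := \cL_{t-1} \circ \cdots \circ \cL_0$ and prove by induction that
\[ \norm{\mathscr{L}_t v - \mathscr{L}_t w}_{L^2} \le \Bigl(\prod_{s=0}^{t-1} C_\sigma^{(s)}\Bigr) \norm{v-w}_{L^2}. \]
The case $t = 1$ is exactly the lemma. For the inductive step, set $v' = \mathscr{L}_t v$ and $w' = \mathscr{L}_t w$. Applying the lemma for layer $t$ to $v', w'$ and then the inductive hypothesis gives the bound for $t+1$. Taking $t = T$ and factoring out $L_\sigma^T$ yields the constant $C_{\sigma,T}$.

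\textbf{Main obstacle: well-posedness of the composition.} The lemma requires its inputs to lie in $L^2(\T^d,\R^{d_t})$, so I must check that each $\cL_t$ maps $L^2$ into $L^2$.
\begin{itemize}
\item \emph{Linear part.} By Young's inequality on the torus, $\norm{\cK_t * v}_{L^2} \le \norm{g_t}_{L^1}\norm{v}_{L^2}$, where $g_t(z) := \normop{\cK_t^{\textnormal{SS-NO}}(z)}$. Clearly $W_t v \in L^2$, and the constant $b_t$ is in $L^2$ because $\T^d$ has finite measure.
\item \emph{Activation.} Since $\sigma_t$ is Lipschitz, $\abs{\sigma_t(y)} \le \abs{\sigma_t(0)} + L_\sigma \abs{y}$. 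Hence $\sigma_t(\cdot)$ of an $L^2$ function is again in $L^2$, using finite measure once more.
\end{itemize}
So each $\cL_t$ maps $L^2$ into $L^2$ and the iteration is legitimate. Varying channel dimensions $d_t$ (versus the fixed $H$ in the statement) need no extra work, since the lemma's argument is dimension-agnostic.
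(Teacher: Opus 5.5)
Your proof is correct and follows the paper's route: apply \Cref{lem:stability-layer-ssm} to each layer, then multiply the single-layer Lipschitz constants under composition. The paper leaves this composition step implicit for the continuous corollary and spells out the same argument in its proof of \Cref{thm:global_stability_layers}; your check that each $\cL_t$ maps $L^2$ into $L^2$ is a detail the paper omits, and it does not change the argument.
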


Both previous results require that SS-NO kernels are integrable on $L^1(\T^d)$, a property that is ensured by the following lemma.
%The remaining question is whether SS-NO kernels satisfy the integrability condition required in the previous results. The following lemma answers this in the affirmative: the SS-NO kernels $\cK^{\textnormal{SS-NO}}$ satisfy the desired $L^1(\T^d)$ integrability assumption required in both~\Cref{lem:stability-layer-ssm} and~\Cref{cor:stability-full-ssm}. 

\begin{lemma}\label{lem:finite-bound-L1-op-norm}
    For SS-NO kernels defined in \Cref{def:ss-no-kernels}, 
    the assumption of \Cref{lem:stability-layer-ssm} on $\cK^{\textnormal{SS-NO}}$ holds, and we have
        \begin{align*}
            &\int_{\T^d} \normop{\cK^{\textnormal{SS-NO}}(z)} \odif{z} \leq \sum_{i = 1}^d \sum_{k=1}^K \left[ \dfrac{1 - e^{-\rho_{k, i}}}{\abs{\rho_{k, i}}} \norm{C_+^{(k)}}_2 \norm{B^{(k)}_+}_2 \right],
        \end{align*}
    where we omit the layer index $t$ for simplicity.
    \iffalse
    more precisely, if $D = [-a, b] \subset \R$ with $a, b \geq 0$ then
        \begin{align}
            \int_{\T^d} \norm{\cK(z)}_{\mathrm{op}} \odif{z} &\leq \sum_{k=1}^K \left[ \dfrac{2 - e^{r_k b} - e^{r_k a}}{\abs{r_k}} \norm{C_k}_2 \norm{B_k}_2 \right] \\
            &\le \begin{cases} 
            \displaystyle\diam(D) \sum_{k = 1}^K \norm{C_k}_2 \norm{B_k}_2  < \infty, \\[7pt]
            \displaystyle 2 \sum_{k = 1}^K \dfrac{\norm{C_k}_2 \norm{B_k}_2}{\abs{r_k}}
        \end{cases}  \numberthis\label{30ba552b-cb1a-44c5-b9d2-732306caae3d} 
        \end{align} 
    where $\diam(D) := b + a$ is the diameter of the space domain.
    \fi
\end{lemma}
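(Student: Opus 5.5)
The plan is to bound the operator norm of the sum-form kernel pointwise by a sum of scalar functions of one variable each, and then integrate term by term over $\T^d \cong [0,1)^d$. Write $\cK^{\textnormal{SS-NO}}(z) = \sum_{i=1}^d \bigl(\cK^{(i)}_+(z_i) + \cK^{(i)}_-(z_i)\bigr)$. By the triangle inequality for $\normop{\cdot}$,
\[
\normop{\cK^{\textnormal{SS-NO}}(z)} \le \sum_{i=1}^d \Bigl( \normop{\cK^{(i)}_+(z_i)} + \normop{\cK^{(i)}_-(z_i)} \Bigr).
\]
On the representative $[0,1)^d$ we have $z_i \ge 0$. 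The indicator $\mathbbm{1}_{\ens{-z_i \ge 0}}$ then vanishes except on the null set $\ens{z_i = 0}$, so the $\cK^{(i)}_-$ terms contribute nothing to the integral. This is why only the $+$ factors $C_+^{(k)}, B_+^{(k)}$ appear in the stated bound. If one preferred the centered representative $[-1/2,1/2)^d$, the same computation would produce both $\pm$ terms on half-intervals. Since the statement only has the $+$ terms, I follow the $[0,1)^d$ convention fixed in the notation.

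Next, each directional component is a finite sum of rank-one matrices. I use $\normop{C B^\top} = \norm{C}_2 \norm{B}_2$ and $\abs{e^{i\omega z_i}} = 1$ to get
\[
\normop{\cK^{(i)}_+(z_i)} \le \sum_{k=1}^K \abs{c_{k,i}}\, e^{-\rho_{k,i} z_i}\, \norm{C_+^{(k)}}_2 \norm{B_+^{(k)}}_2 .
\]
The right-hand side depends only on $z_i$. Fubini over the remaining $d-1$ coordinates, each of unit length, therefore reduces the problem to a one-dimensional integral. The integral $\int_0^1 e^{-\rho z}\,dz = (1-e^{-\rho})/\rho$ gives the claimed factor for $\rho>0$, with the usual limit value $1$ at $\rho = 0$. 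Summing over $i$ and $k$ yields a finite quantity, which proves $g \in L^1(\T^d)$.

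The step I expect to be the main obstacle is the bookkeeping of constants rather than any analysis. The stated bound omits the scalar weights $\abs{c_{k,i}}$, so either they must be absorbed into the $C^{(k)}_+$ (a harmless reparametrization of the rank-one factor, which I would state explicitly) or the bound should carry an extra factor $\abs{c_{k,i}}$. A second point is that measurability and the $L^1$ property are immediate, because the kernel is a finite sum of bounded piecewise-continuous functions. Finiteness in fact already follows from boundedness on a compact domain; the explicit computation only serves to give the quantitative constant used in \Cref{lem:stability-layer-ssm} and \Cref{cor:stability-full-ssm}.
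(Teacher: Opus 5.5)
Your argument is the same as the paper's: the triangle inequality over directions and signs, the rank-one identity $\normop{C B^\top} = \norm{C}_2 \norm{B}_2$, discarding the $\cK^{(i)}_-$ terms because $z_i \ge 0$ on the representative $[0,1)^d$, and Fubini reducing everything to $\int_0^1 e^{-\rho z}\,dz$. Your concern about the constants is justified: the paper's proof carries $c_{k,i}$ up to its penultimate display and then silently drops it, so what is actually proved is the bound with an extra factor $\abs{c_{k,i}}$. You should keep that factor rather than absorb it into $C_+^{(k)}$: for $d > 1$ the vector $C_+^{(k)}$ is shared across all directions $i$ while $c_{k,i}$ is not, so that reparametrization is unavailable.
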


Taken together, \Cref{lem:stability-layer-ssm,cor:stability-full-ssm,lem:finite-bound-L1-op-norm} show that the continuous SS-NO architectures define stable nonlinear operators on $L^2(\T^d)$, with Lipschitz constants that are explicitly parametrized by the kernel parameters. In next Section, we shift our analysis to the discrete setting which is the main contribution of this work. In~\Cref{thm:global_stability_layers}, we extend these stability guarantees to finite-resolution operators, providing a rigorous framework to \enquote{quantify} the discretization error.

\iffalse
Denote $f(r_k) := \dfrac{2 - e^{r_k b} - e^{r_k a}}{\abs{r_k}} = \dfrac{e^{r_k b} + e^{r_k a} - 2}{r_k}$ the quantity that appears in the upper-bound above. Notice that
    \begin{align}
        f'(r_k) = \dfrac{(b e^{r_k b} + a e^{r_k a}) r_k - (e^{r_k b} + e^{r_k a} - 2)}{r_k^2} = \dfrac{N(r_k)}{r_k^2},
    \end{align}
    where
    \begin{align}
        N(r) := (b e^{rb} + a e^{r a})r - (e^{r a} + e^{r b} - 2).
    \end{align}
    Now differentiate $N(r)$:
        \begin{align}
            N'(r) &= r (b^2 e^{r b} + a^2 e^{r a}) + (b e^{r b} + a e^{r a}) - (b e^{rb} + a e^{ra}) \\
            &= r (b^2 e^{r b} + a^2 e^{ra})  < 0 
        \end{align}
        
    This yields 
        \[ \forall r < 0, \quad N(r) > N(0) = 0.\]
    And this implies that the stability Lipschitz upper-bound $f(r_k)$ is increasing in $r_k$, \textbf{which suggests that in order to have better stability, it may be useful to have greater damping (equivalently, very negative $r_k$)}. This aligns with the findings of~\citet{anonymous2025merging} when they say:
    \begin{itemize}
        \item \enquote{\emph{lower-capacity models rely critically on explicit damping for stability.}}
        \item \enquote{\emph{when frequency adaptation is disabled, models intensify their damping mechanisms to maintain stability.}}
        \item \enquote{\emph{This progressive dependency relationship underscores damping's role as a crucial stabilization mechanism in capacity-constrained settings.}}
    \end{itemize}
\fi

\section{Discrete Framework}

The architecture of a neural operator can be defined independently of discretization, as it is based on a convolution operation $\cK*v$.
However, in practice, to make such architectures computationally feasible (e.g., for FNOs or SS-NOs models, and more generally, any \emph{discretization invariant} neural operator based on integral convolution), one needs to discretize the convolution operation. As done in~\citet{lanthaler2025discretizationerrorfourierneural} and briefly discussed in~\citet{anonymous2025merging}, the approximation is done over the uniform grid $\T_N^d$\footnote{For simplicity, we assume the grid to be a square. Our results can be easily extended to a general, non-squared, uniform grid.}, and for every $x \in \T^d_N$ we define the \emph{discrete convolution} (a Monte Carlo approximation of the true integral)
    \begin{align*}
        (\cK^N_t * v_t^N)(x) :&\!= \frac{1}{N^d} \sum_{y \in \T^d_N} \cK_t(x - y) v_t^N(y) \approx \int_{\T^d} \cK_t(x - y) v_t(y) \odif{y}, \numberthis\label{9754cccd-2dc7-4de6-a4a3-bdb8b9bbf6c9-2}
    \end{align*}
where $v_t^N$ is the function passed to the $t^{\textnormal{th}}$ layer when using only discrete (i.e., approximate) convolutions, resulting in a different function than the ground truth $v_t$ if integral (i.e., exact) convolutions were used. % Equality~\eqref{9754cccd-2dc7-4de6-a4a3-bdb8b9bbf6c9-2} shows that the true integral convolution is approximated by a Monte Carlo sum on the grid $\T^d_N$.
%Recall the grid $\T^d_N$ is identified to the additive group $(\frac{1}{N} (\nicefrac{\Z}{N \Z})^d, +)$, which corresponds to the discretization of the $d$--dimensional torus $\T^d$. 
It is worth mentioning that we \emph{do not} discretize the function space. In both scenarios, whether involving the integral or discrete convolution, the architecture outlined in ~\Cref{def:neural-operator} operates directly on functions defined over a continuous domain.

\subsection{Discretization Error}

To analyze the discretization error induced by~\eqref{9754cccd-2dc7-4de6-a4a3-bdb8b9bbf6c9-2}, it is convenient to work in the Fourier domain.  
We therefore begin by characterizing the Fourier coefficients of the convolution kernels used by FNO and SS-NO architectures.

Note that FNO kernels are defined directly in the frequency domain via a hard truncation of Fourier modes. This yields the following explicit form of their Fourier transform.
\begin{lemma}[Fourier Transform of FNO Kernel]\label{lem:fourier-fno}
    For any $\xi \in \Z^d$, we have:
    \begin{equation}
        \widehat{\cK}_t^{\textnormal{FNO}}(\xi) = \sum_{k \in \ens{-K, \ldots, K}^d} P_t^{(k)} \delta_k(\xi) = P_t^{(\xi)} \mathbbm{1}_{\ens{\xi = k}},
    \end{equation}
    where we let $P_t^{(\xi)} = 0$ for any $\xi \in \Z^d \setminus \ens{-K, \ldots, K}^d$.
\end{lemma}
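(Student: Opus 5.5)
The plan is a direct computation of the Fourier coefficients on $\T^d$. I take the convention $\widehat{f}(\xi) = \int_{\T^d} f(z) e^{-2i\pi \xi\cdot z}\,\odif{z}$ for $\xi \in \Z^d$, applied entrywise to matrix-valued kernels. By \Cref{def:fno}, $\cK_t^{\textnormal{FNO}}$ is a \emph{finite} sum of terms $P_t^{(k)} e^{2i\pi k\cdot z}$. Each term is smooth and bounded on $\T^d$, so the integral can be exchanged with the sum without any convergence argument. This gives
\begin{equation*}
\widehat{\cK}_t^{\textnormal{FNO}}(\xi) = \sum_{k \in \ens{-K,\ldots,K}^d} P_t^{(k)} \int_{\T^d} e^{2i\pi (k-\xi)\cdot z}\,\odif{z}.
\end{equation*}

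The key step is the orthogonality of the characters on the torus. Identifying $\T^d$ with $[0,1)^d$, Fubini factorizes the integral into $\prod_{j=1}^d \int_0^1 e^{2i\pi (k_j-\xi_j) z_j}\,\odif{z_j}$. For $m\in\Z$, the one-dimensional integral $\int_0^1 e^{2i\pi m s}\,\odif{s}$ equals $1$ if $m=0$ and $\frac{e^{2i\pi m}-1}{2i\pi m}=0$ otherwise. Hence the full integral equals $\delta_k(\xi)$, the Kronecker delta, which is $1$ exactly when $\xi=k$ componentwise. Substituting this gives the first equality of the statement.

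For the second equality, I collapse the sum. If $\xi \in \ens{-K,\ldots,K}^d$, exactly one index survives, namely $k=\xi$, and the sum equals $P_t^{(\xi)}$. If $\xi \notin \ens{-K,\ldots,K}^d$, every delta vanishes and the sum is $0$, which agrees with the stated convention $P_t^{(\xi)}=0$. The notation $P_t^{(\xi)}\mathbbm{1}_{\ens{\xi=k}}$ in the statement should be read as this collapsed expression, with $k$ implicitly ranging over the truncation set.

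None of these steps is a genuine obstacle. The only points needing care are fixing the sign and normalization convention of the Fourier transform so that it matches the synthesis formula in \Cref{def:fno}, and noting that the finiteness of the sum removes any need for convergence arguments.
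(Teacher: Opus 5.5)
Your proposal is correct and follows the same route as the paper: linearity of the Fourier transform over the finite sum, then the orthogonality relation $\int_{\T^d} e^{2i\pi(k-\xi)\cdot z}\,\odif{z} = \mathbbm{1}_{\ens{\xi=k}}$ to collapse the sum to $P_t^{(\xi)}$, or to $0$ outside the truncation set. Your Fubini factorization and your reading of the loosely written $P_t^{(\xi)}\mathbbm{1}_{\ens{\xi=k}}$ only make explicit what the paper leaves implicit.
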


In contrast, SS-NO kernels are defined in the spatial domain via sums of damped oscillatory components. Their Fourier transform therefore exhibits a fundamentally different structure, with non-compact spectral support.

\begin{lemma}[Fourier Transform of SS-NO Kernel]\label{lem:fourier-ss-no}
    For any $\xi = (\xi_1, \dots, \xi_d) \in \Z^d$, we have:
    \begin{align*}
        &\widehat{\cK}_t^{\textnormal{SS-NO}}(\xi) = \sum_{i=1}^d \sum_{k=1}^K c_{t, k, i} \left[ F_{+, k, i}(\xi_i) A_{k, +} + F_{-, k, i}(\xi_i) A_{k, -} \right],
    \end{align*}
    where $A_{k, \eps} := C_{t, \eps}^{(k)} \left( B_{t, \eps}^{(k)} \right)^\top$ for $\eps \in \ens{+, -}$, and
%\johan{\\typo ? $C_{t, \eps}^{(k)} \left( B_{t, \eps}^{(k)} \right)$}    
    \begin{align*}
         F_{+, k, i}(\xi_i) &:= \frac{1 - e^{-(\rho_{t, k, i} - i(\omega_{t, k, i} - 2 \pi \xi_i) ) / 2}}{\rho_{t, k, i} - i(\omega_{t, k, i} - 2 \pi \xi_i)},  \\  \quad
         F_{-, k, i}(\xi_i) &:= \frac{1 - e^{-(\rho_{t, k, i} + i(\omega_{t, k, i} - 2 \pi \xi_i) ) / 2}}{\rho_{t, k, i} + i(\omega_{t, k, i} - 2 \pi \xi_i)}.
         %&\int_{-\frac{1}{2}}^0 e^{- \rho_{t, k} \abs{z}} e^{i z (\omega_{t, k} - 2 n \pi)} \odif{z} = \int_{-\frac{1}{2}}^0 e^{z (\rho_{t, k} + i (\omega_{t, k} - 2 n \pi))} \odif{z} = \frac{e^{z (\rho_{t, k} + i (\omega_{t, k} - 2 n \pi))}}{\rho_{t, k} + i (\omega_{t, k} - 2 n \pi)} \Bigg]_{z = -\frac{1}{2}}^{z = 0},
    \end{align*}
\end{lemma}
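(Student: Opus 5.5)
The plan is a direct computation. We identify $\T^d$ with the fundamental domain $[-\tfrac12,\tfrac12)^d$, so that the indicators $\mathbbm{1}_{\ens{\pm z_i \ge 0}}$ in \Cref{def:ss-no-kernels} split each one-dimensional integral into a ``forward'' half $[0,\tfrac12)$ and a ``backward'' half $[-\tfrac12,0)$. We then compute
\[
\widehat{\cK}_t^{\textnormal{SS-NO}}(\xi)=\int_{\T^d}\cK_t^{\textnormal{SS-NO}}(z)\,e^{-2i\pi \xi\cdot z}\odif{z}.
\]
By \Cref{lem:finite-bound-L1-op-norm} the kernel is integrable, and it is a finite sum. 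Linearity therefore lets us interchange the sums over $i$, $k$ and $\pm$ with the integral, and pull out the constant matrices $A_{k,\pm}=C_{t,\pm}^{(k)}(B_{t,\pm}^{(k)})^\top$ and the scalars $c_{t,k,i}$. The problem then reduces to computing scalar integrals of the form
\[
\int_{\T^d}\mathbbm{1}_{\ens{\pm z_i\ge 0}}\,e^{-\rho_{t,k,i}\abs{z_i}}\,e^{i\omega_{t,k,i}z_i}\,e^{-2i\pi\xi\cdot z}\odif{z}.
\]

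Since each summand of the sum-form kernel depends only on the coordinate $z_i$, Fubini factorizes this integral into a one-dimensional integral in $z_i$ times $\prod_{j\neq i}\int_{-1/2}^{1/2}e^{-2i\pi\xi_j z_j}\odif{z_j}$. Each factor in that product equals $\mathbbm{1}_{\ens{\xi_j=0}}$. This indicator has to be carried along: it is trivially $1$ when $d=1$, and for $d\ge 2$ the formula in the statement should be read with the factor $\prod_{j\ne i}\mathbbm{1}_{\ens{\xi_j=0}}$ attached to the $i$-th summand. I would state this explicitly in the proof, since otherwise the identity only holds on the coordinate axes.

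The remaining one-dimensional integrals are elementary. Write $\rho=\rho_{t,k,i}$, $\omega=\omega_{t,k,i}$ and $\lambda=\omega-2\pi\xi_i$. For the forward part,
\[
\int_0^{1/2}e^{-(\rho-i\lambda)z}\odif{z}=\frac{1-e^{-(\rho-i\lambda)/2}}{\rho-i\lambda}=F_{+,k,i}(\xi_i).
\]
For the backward part, $\abs{z}=-z$ gives
\[
\int_{-1/2}^0 e^{(\rho+i\lambda)z}\odif{z}=\frac{1-e^{-(\rho+i\lambda)/2}}{\rho+i\lambda}=F_{-,k,i}(\xi_i).
\]
The point $z_i=0$, which is covered by both indicators, has measure zero and does not matter.

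The only delicate step is the degenerate case $\rho=0$ and $\lambda=0$, i.e.\ $\omega=2\pi\xi_i$, where the denominators vanish. There the integrals equal $\tfrac12$, which is the continuous extension of $(1-e^{-w/2})/w$ at $w=0$. I would note this convention explicitly. Collecting the terms then yields the claimed formula.
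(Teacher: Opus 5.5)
Your computation matches the paper's: identify the torus with $[-\tfrac12,\tfrac12)$, split each one-dimensional integral at $0$, and integrate the exponentials to obtain $F_{\pm,k,i}$.

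You go further at the $d$-dimensional step, and you are right to. The paper's proof only does the one-dimensional computation and then writes the $d$-dimensional formula without the factor $\prod_{j\neq i}\mathbbm{1}_{\{\xi_j=0\}}$. Your Fubini argument shows this factor is genuinely needed, so the lemma as literally stated is false for $d\ge 2$. For example, with $d=2$ and $\xi=(1,1)$ the true coefficient is $0$, while the stated right-hand side is generically nonzero.

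One correction to your side remark. Without the indicators, the identity fails even on the coordinate axes and holds generically only at $\xi=0$. At $\xi=(n,0,\dots,0)$ with $n\neq 0$, the right-hand side picks up the spurious terms $\sum_k c_{t,k,j}\bigl[F_{+,k,j}(0)A_{k,+}+F_{-,k,j}(0)A_{k,-}\bigr]$ for $j\ge 2$, which are absent from the true coefficient.

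Your handling of the degenerate case $\rho_{t,k,i}=0$, $\omega_{t,k,i}=2\pi\xi_i$ (value $\tfrac12$) is also missing from the paper. It is precisely the situation of the FNO embedding in Remark~\ref{rem:fno-vs-ssno}, so it is worth keeping.
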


The proofs of lemmas \ref{lem:fourier-fno} and \ref{lem:fourier-ss-no} are deferred to \Cref{apx:bounds-fno-ss-no}.

%\begin{remark}
%   FNO kernels, while having \emph{fixed $d$--dimensional} frequencies $\ens{1, 2, 3, \ldots, K}^d$, are $\mathscr{C}^{\infty}$ $1$--periodic functions thus, by~\Cref{lem:todo}, the Fourier coefficients $\widehat{\cK}_t^{\text{FNO}}(\xi)$, $\xi \in \Z^d$ will decay faster than any polynomial as $\norm{\xi} \to +\infty$. This \emph{is not the case} for the SS-NO kernel; while the frequencies (which are only \emph{one dimensional}) are learnable, this result in lack of regularity (\emph{a priori} no periodicity at the boundaries of $\T^d$) and the Fourier coefficients of $\widehat{\cK}_t^{\text{SS-NO}}(\xi) \asymp \norm{\xi}^{-1}$ as $\norm{\xi} \to +\infty$; such kernel typically does not have finite Sobolev norm.
%\end{remark}

%\begin{remark}\label{rem:regularity-fno-ssno}
FNO kernels are finite linear combinations of $d$--dimensional Fourier modes with frequencies in $\{-K,\ldots,K\}^d$. As such, they define $\mathscr{C}^{\infty}$, $1$--periodic functions on $\mathbb{T}^d$. According to~\Cref{lem:fourier-fno}, their Fourier coefficients $\widehat{\mathcal K}_t^{\mathrm{FNO}}(\xi)$ decay faster than any polynomial as $\|\xi\|\to\infty$. In particular, FNO kernels possess finite Sobolev norms of arbitrary order.

This behavior contrasts sharply with that of SS-NO kernels. While SS-NO employs learnable (and only one--dimensional) frequencies, the resulting kernels are generally not periodic on $\mathbb{T}^d$ and lack smoothness across the boundary. Consequently, their Fourier coefficients exhibit in the worse case only algebraic decay
\[ \widehat{\mathcal K}_t^{\mathrm{SS\text{-}NO}}(\xi) \asymp \|\xi\|^{-1}
\,\, \text{ as } \,\, \|\xi\|\to\infty, \]
and the associated kernels only have finite Sobolev norm up to a constant $s < + \infty$. This distinction is illustrated in \Cref{fig:operator-norm-ssno-vs-fno}. As shown in \Cref{fig:ssno-kernel}, the SS-NO kernel exhibits a continuous, full-band spectral profile consistent with a power-law decay, whereas the FNO kernel in \Cref{fig:fno-kernel} displays compact spectral support induced by the hard frequency cutoff at $K$. These qualitative differences reflect the fundamentally different regularity of the two architectures.
%\end{remark}

%\adri{please someone correct Parseval's identity or provide correct norms so as to get bounds on $\normgridmatone{\cdot}$, thank you.}
%the discrete Parseval's formula is $\sum_{x \in \T^d_N} \abs{f(x)}^2 = \frac{1}{N^d} \sum_{\xi \in \ens{0, 1, \ldots, N - 1}^d} \abs{\widehat{f}(\xi)}^2$

%We first recall the Sobolev's embedding theorem
%\begin{theorem}
    
%\end{theorem}

%A consequence of this is the following lemma
%\begin{lemma}[{Sobolev's Inequality on $\T^d$~\citep{BenyiOh2013}}]
%    For any integer $s > \frac{d}{2}$ and any function (\adri{check regularity}) $u \colon \T^d \to \R$, we have
%        \[ \norm{u}_{L^{\infty}(\T^d)} \lesssim \norm{u}_{H^s(\T^d)}. \]
%\end{lemma}

%\adri{Above lemma can be used to give similar bounds as in the paper~\citet{lanthaler2025discretizationerrorfourierneural}.}

%\adri{Draft}

\begin{wrapfigure}{r}{0.3\textwidth}
\vspace{-0.2cm}
    \centering
    \begin{subfigure}[b]{0.3\columnwidth}
        \includegraphics[width=\linewidth,trim={25.4cm 0 0 0},clip]{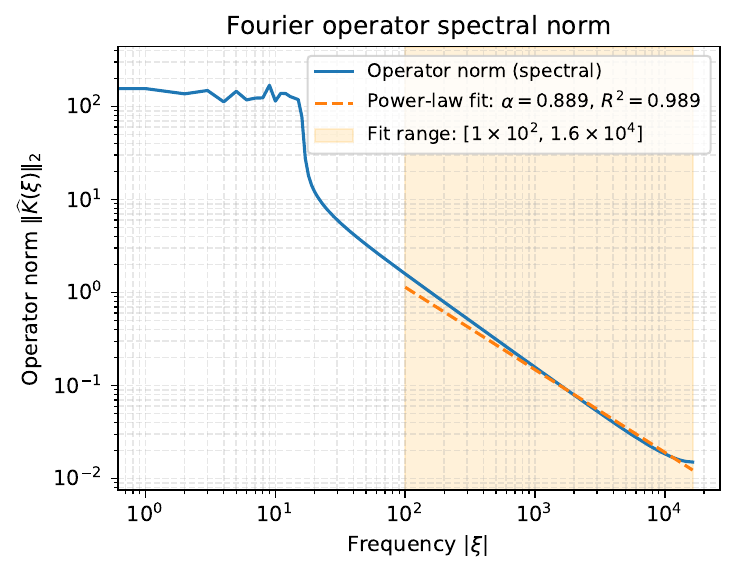}
        \caption{SSNO (forward) kernel}
        \label{fig:ssno-kernel}
    \end{subfigure}
    \hfill
    \begin{subfigure}[b]{0.3\columnwidth}
        \includegraphics[width=1.035\linewidth,trim={25cm 0 0 0},clip]{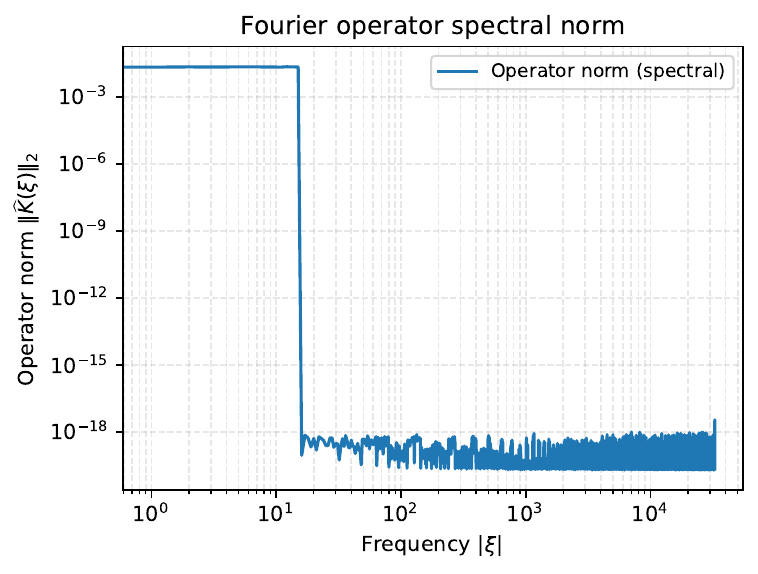}
        \caption{FNO kernel}
        \label{fig:fno-kernel}
    \end{subfigure}
    \caption{
        Comparison of a SS-NO and a FNO kernel (both \emph{randomly initialized}) evaluated at spatial resolution $L = 2^{15}$ on $\intff{0}{1}$. SS-NO exhibits continuous, full-band spectral structure without mode truncation, while FNO enforces a hard cutoff frequency (at $K = 16$), resulting in compact spectral support.
    }
    \label{fig:operator-norm-ssno-vs-fno}
    \vspace{-2cm}
\end{wrapfigure}

As an intermediate step toward establishing discretization error bounds, we derive explicit estimates on the discrete $L^2$-norm of the convolution kernels associated with each architecture. We begin with the FNO case.
\begin{lemma}[Bound on the $L^2$-norm of $\cK_t$ for FNO]\label{lem:bound-fno}
     For the FNO kernel in~\eqref{def:kernel-fnos}, there exists a constant $C_d \ge 0$ such that
        \[ \normgridmattwo{\cK_t^{\textnormal{FNO}}} \le C_d (N K)^{\frac{d}{2}} \sup_{k \in \ens{1, 2, \ldots, K}^d} \normop{P_t^{(k)}}. \]
\end{lemma}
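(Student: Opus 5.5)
The quantity to bound is $\normgridmattwo{\cK_t^{\textnormal{FNO}}}$, which I read as the grid norm $\bigl(\sum_{x \in \T^d_N} \normop{\cK_t^{\textnormal{FNO}}(x)}^2\bigr)^{1/2}$ of the matrix-valued kernel sampled on $\T^d_N$. The plan is to control the kernel pointwise, uniformly in $x$, and then sum over the $N^d$ grid points. A sharper route would use the discrete Parseval identity, but the pointwise argument already gives the right powers of $N$ and $K$ once the operator norm is replaced by the Frobenius norm at the correct moment.

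\textbf{Step 1 (Frobenius domination).} For each $x$ use $\normop{\cK_t^{\textnormal{FNO}}(x)} \le \norm{\cK_t^{\textnormal{FNO}}(x)}_F$. The sum then becomes $\sum_{x}\norm{\cK_t^{\textnormal{FNO}}(x)}_F^2$, which splits entrywise into $\sum_{a,b}\sum_x \abs{(\cK_t^{\textnormal{FNO}})_{ab}(x)}^2$.

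\textbf{Step 2 (discrete Parseval).} Each entry $(\cK_t^{\textnormal{FNO}})_{ab}(x) = \sum_{k} (P_t^{(k)})_{ab} e^{2i\pi k\cdot x}$ is a trigonometric polynomial with frequencies in $\ens{-K,\ldots,K}^d$. Since $K \le N/2$, these frequencies are distinct modulo $N$, except possibly at $\pm N/2$, where they alias pairwise. Orthogonality of characters on $\T^d_N$ then gives
\[ \sum_{x\in\T^d_N}\abs{(\cK_t^{\textnormal{FNO}})_{ab}(x)}^2 \le 2^d N^d \sum_k \abs{(P_t^{(k)})_{ab}}^2 . \]
The factor $2^d$ absorbs the possible aliasing, and if the boundary case is excluded it can be dropped.

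\textbf{Step 3 (resumming).} Summing over $a,b$ gives $2^d N^d \sum_k \norm{P_t^{(k)}}_F^2$. Bounding each term by $\norm{P_t^{(k)}}_F^2 \le \min(d_t,d_{t+1})\,\normop{P_t^{(k)}}^2$ and using that there are $(2K+1)^d \le 3^d K^d$ modes yields
\[ N^d (3K)^d\, 2^d \min(d_t,d_{t+1}) \sup_k \normop{P_t^{(k)}}^2 . \]
Taking square roots gives the claimed $(NK)^{d/2}$ scaling, with $C_d = 6^{d/2}\sqrt{\min(d_t,d_{t+1})}$.

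\textbf{Main obstacle: the constant.} The rank factor $\min(d_t,d_{t+1})$ is not purely $d$-dependent. It can be avoided by bounding directly: $\normop{\cK_t^{\textnormal{FNO}}(x)} \le \sum_k \normop{P_t^{(k)}} \le (3K)^d\sup_k\normop{P_t^{(k)}}$. Summing over the grid, however, then only gives $N^{d/2}K^{d}$, which is worse in $K$.

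To get $K^{d/2}$ with an operator-norm constant, I would instead apply orthogonality to vectors. Fix a unit vector $u$; Parseval gives
\[ \sum_x \norm{\cK_t^{\textnormal{FNO}}(x)u}^2 \lesssim N^d\sum_k \normop{P_t^{(k)}}^2 . \]
Passing from this to $\sum_x \normop{\cK_t^{\textnormal{FNO}}(x)}^2$ is not direct, because the maximizing $u$ depends on $x$. I would therefore accept a channel-dependent constant, noting that the channel widths are fixed architecture parameters so that $C_d$ is understood as depending on them as well.

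A second point to handle is the supremum in the statement, which runs over $\ens{1,\ldots,K}^d$ rather than $\ens{-K,\ldots,K}^d$. I would treat this via the real-valuedness relation $P_t^{(-k)}=\overline{P_t^{(k)}}$, interpreting the supremum over all modes.
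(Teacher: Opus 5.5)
Your argument is correct, and it differs from the paper's at the one step that matters. The paper's proof writes the discrete Parseval identity directly with the operator norm, $\normgridmattwo{\cK_t^{\textnormal{FNO}}}^2 = N^d \sum_{\xi} \normop{\sum_{m} \widehat{\cK}_t^{\textnormal{FNO}}(\xi + mN)}^2$. It then collapses the aliasing sum and concludes with $C_d = 3^{d/2}$.

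That step is not valid. Parseval holds entrywise, i.e.\ for the Frobenius norm, which is exactly how you route it; $\normop{\cdot}$ is not a Hilbert norm. Your suspicion that the width factor cannot be removed is justified, as the following example shows.

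Take $d = 1$, $d_t = d_{t+1} = K+1$, and $P_t^{(k)} := \sum_{0 \le j, l \le K,\, j - l = k} e_j e_l^{\top}$ for $\abs{k} \le K$. Then $\cK_t^{\textnormal{FNO}}(z) = a(z)\,\overline{a(z)}^{\top}$ with $a(z) := \sum_{j=0}^{K} e^{2 i \pi j z} e_j$. Hence $\normop{\cK_t^{\textnormal{FNO}}(z)} = \norm{a(z)}_2^2 = K+1$ for every $z$. On the other hand, each $P_t^{(k)}$ is a partial shift, so $\normop{P_t^{(k)}} = 1$.

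It follows that $\normgridmattwo{\cK_t^{\textnormal{FNO}}} = \sqrt{N}\,(K+1)$. This exceeds $\sqrt{3NK}$ for every $K \ge 1$, whereas the paper's ``identity'' would give $\sqrt{N(2K+1)}$. Moreover, the ratio to $\sqrt{NK}$ grows like $\sqrt{K}$.

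So no constant depending on $d$ alone can work. A width-dependent constant such as your $6^{d/2}\sqrt{\min(d_t, d_{t+1})}$ is necessary, and your Frobenius-then-rank-bound route is the correct way to make the Parseval argument rigorous. Your factor $2^d$ also covers the aliasing of $\pm N/2$ when $K = N/2$, a case the paper's collapse of the aliasing sum silently ignores.

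On the index set: the paper's last display takes the supremum over $\ens{-K, \ldots, K}^d$, so it also proves only the all-modes version you adopt. Your conjugate-symmetry remark does not justify that reading, however, for two reasons:
\begin{itemize}
\item \Cref{def:fno} allows arbitrary complex $P_t^{(k)}$, so no symmetry is imposed.
\item Even for real kernels, $P_t^{(-k)} = \overline{P_t^{(k)}}$ never reaches $k = 0$, nor, for $d \ge 2$, modes with a zero or mixed-sign coordinate.
\end{itemize}
With the literal index set $\ens{1, \ldots, K}^d$ the statement is false (take only $P_t^{(0)} \neq 0$). The all-modes supremum therefore has to be taken as the intended statement, not derived from it.
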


We now turn to SS-NO kernels. In contrast to FNOs, which rely on an explicit truncation over a $d$--dimensional frequency grid and therefore involve an exponential number of modes in $d$, SS-NOs employ a parameter-efficient representation based on sums of one--dimensional spectral components. This structural difference leads to a distinct scaling behavior for the discrete kernel norm.

\begin{lemma}[Bound on the $L^2$-norm of $\cK_t$ for SS-NO]\label{lem:bound-ssno}
    For the SS-NO kernel $\cK_t^{\textnormal{SS-NO}}$ in~\eqref{eq:kernel-ssnos} we have
     %   \[ \normgridmattwo{\cK_t^{\textnormal{SS-NO}}} \leq K d N^{\frac{d}{2}} \sup_{i \in [d], k \in [K]} \left\{ \abs{c_{t, k, i}} \alpha_{t, k, i}  \left(  \norm{C_{t, +}^{(k)}}^2  \norm{B_{t, +}^{(k)}}^2 + \norm{C_{t, -}^{(k)}}^2  \norm{B_{t, -}^{(k)}}^2 \right)^{\nicefrac{1}{2}} \right\} .  \]
    %Alternatively,
        \begin{align*}
            \normgridmattwo{\cK_t^{\textnormal{SS-NO}}} \le C_d Kd N^{\frac{d}{2}},
        \end{align*}
    where $C_d := \sup_{i \in [d], \, k \in [K]} \ens{\abs{c_{t, k, i}} \left(  A_{k, +} + A_{k, -} \right)}$ and $A_{k, \pm} := \normop{C_{t, \pm}^{(k)}} \, \normop{B_{t, \pm}^{(k)}}$.
        
\end{lemma}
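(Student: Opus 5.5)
The quantity $\normgridmattwo{\cK_t^{\textnormal{SS-NO}}}$ is the $\ell^2(\T^d_N)$ norm of the grid function $x \mapsto \normop{\cK_t^{\textnormal{SS-NO}}(x)}$, that is $\big(\sum_{x\in\T^d_N}\normop{\cK_t^{\textnormal{SS-NO}}(x)}^2\big)^{1/2}$. Since no normalization by $N^{-d}$ is present, the factor $N^{d/2}$ should come simply from counting the $N^d$ grid points. The plan is therefore to avoid Fourier analysis (\Cref{lem:fourier-ss-no} is not needed) and instead prove a uniform pointwise bound $\sup_{z\in\T^d}\normop{\cK_t^{\textnormal{SS-NO}}(z)} \le C_d K d$. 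This immediately gives $\normgridmattwo{\cK_t^{\textnormal{SS-NO}}} \le (N^d)^{1/2}\, C_d K d$.

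For the pointwise bound I would use the sum form~\eqref{eq:kernel-ssnos} together with the triangle inequality for $\normop{\cdot}$:
\[
\normop{\cK_t^{\textnormal{SS-NO}}(z)} \le \sum_{i=1}^d \left( \normop{\cK^{(i)}_{t,+}(z_i)} + \normop{\cK^{(i)}_{t,-}(z_i)} \right).
\]
Expanding each directional component, the indicator is at most $1$, and $\abs{e^{-\rho_{t,k,i}\abs{z_i}}}\le 1$ because $\rho_{t,k,i}\ge 0$. The phase $e^{i\omega_{t,k,i} z_i}$ has modulus $1$. For the rank-one matrix, $\normop{C(B)^\top} = \norm{C}_2\norm{B}_2$, which the statement writes as $A_{k,\pm}$. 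This yields
\[
\normop{\cK^{(i)}_{t,+}(z_i)} + \normop{\cK^{(i)}_{t,-}(z_i)} \le \sum_{k=1}^K \abs{c_{t,k,i}}\,(A_{k,+}+A_{k,-}) \le K\, C_d.
\]
Here I keep both the $+$ and $-$ contributions for safety, although at most one indicator is active except at $z_i=0$. Taking the supremum over $i,k$ in the definition of $C_d$ and summing over $i\in[d]$ gives the factor $Kd$.

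The only point that needs care is the convention. I have to make sure that $\normgridmattwo{\cdot}$ is the unweighted sum over grid points with the operator norm inside, as the notation section indicates, rather than a normalized norm or a Frobenius variant. If Frobenius were intended, the same argument applies with $\norm{CB^\top}_F=\norm{C}_2\norm{B}_2$, so nothing changes. I also need to check the torus representative $z_i\in[-1/2,1/2)$ used when evaluating $\cK_t(x-y)$; the bound is uniform in $z$, so this does not matter either. I do not expect a real obstacle: the argument is a uniform $L^\infty$ estimate followed by counting $N^d$ points.
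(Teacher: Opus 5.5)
Your proposal is correct and takes essentially the same route as the paper. Both proofs bound $\normop{\cK_t^{\textnormal{SS-NO}}(x)}$ pointwise via the triangle inequality, using $|e^{-\rho_{t,k,i}|z_i|}e^{i\omega_{t,k,i} z_i}|\le 1$ and the rank-one norm identity, and then sum over the $N^d$ grid points; the only cosmetic difference is that the paper squares first and applies Cauchy--Schwarz over the $Kd$ summands before taking the supremum, which gives the same bound $(C_d K d)^2 N^d$.
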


The proofs of~\Cref{lem:bound-fno,lem:bound-ssno} are in~\Cref{apx:bounds-fno-ss-no}.

We now state our main result on the discretization error of SS-NOs. We measure the error at the network output, after $T$ layers, defined by $\smash{\cE_T^{(0)}(x) := v_T^N(x) - v_T(x), \ x \in \T_N^d}$. For simplicity, we omit the networks $\mathcal{P}$ and $\mathcal{Q}$ in our bound to focus on the layers $\{\mathcal{L}_t\}_{t = 0}^{T - 1}$. $\mathcal{P}$ and $\mathcal{Q}$ only affect our bound up to some multiplicative constant independent of $N$, $s$ and the kernels.% (which are the quantities of interest).

\begin{theorem}[Discretization Error of SS-NO]\label{thm:discretization-error-ssnos}
    The discretization error of SS-NOs verifies:
        \[ \normgridvec{\cE_T^{(0)}} \le N^\beta B \dfrac{A^T - 1}{A - 1}, \numberthis\label{thm:discretization-error-ssnos-result} \]
    with $A := L_{\sigma} \sup_{t \in \ens{0, \ldots, T - 1}} \left( \normop{W_t} + K d \, C(\cK_t) \right)$, and
        \begin{align*}
            C(\cK_t) :&\!= \sup_{i \in [d], \, k \in [K]} \left\{\abs{c_{t, k, i}} \left( \normop{C_{t, +}^{(k)}} \, \normop{B_{t, +}^{(k)}}  +  \normop{C_{t, -}^{(k)}} \, \normop{B_{t, -}^{(k)}} \right)\right\},
        \end{align*}
    and, if there exists an integer $K_{\textnormal{cutoff}} > 0$ such that $\widehat{\cK}_t(\xi) = 0$ for all $\xi \in \Z^d$ with $\norm{\xi}_{\infty} \ge K_{\textnormal{cutoff}}$. Then,~\eqref{thm:discretization-error-ssnos-result} holds with
            \begin{align*}
                \beta & := \frac{d}{2} -s < 0,  \quad B := L_\sigma C_{d, s} K_{\textnormal{cutoff}}^{\frac{d}{2} + s} \sup_{t \in \ens{0, 1, \ldots T - 1}} \normop{\widehat{\cK}_t}_{\infty} \normsobolev{v_t},
            \end{align*}
        where $0 \le C_{d, s} < \infty$ is a constant depending on $d$ and $s$.
\end{theorem}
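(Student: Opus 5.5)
We propose a one-step error recursion across layers, closed by a discrete Gronwall (geometric sum) argument, following the structure of \citet{lanthaler2025discretizationerrorfourierneural}. Write $\cE_t := v_t^N - v_t$ restricted to $\T_N^d$. Using the definition \eqref{def:ssno-layer} and the $L_\sigma$--Lipschitz property of $\sigma_t$, at every grid point $x\in\T^d_N$ we have
\begin{align*}
\norm{\cE_{t+1}(x)} \le L_\sigma\Big( \normop{W_t}\norm{\cE_t(x)} + \big\|(\cK_t^N * \cE_t)(x)\big\| + \big\|(\cK_t^N * v_t)(x) - (\cK_t * v_t)(x)\big\| \Big),
\end{align*}
after adding and subtracting $\cK_t^N * v_t$. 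This splits the error into a \emph{propagation term}, which is the discrete convolution acting on the previous error, and a \emph{fresh aliasing term}, which is the quadrature error of the exact trajectory $v_t$.

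For the propagation term, we need the $\ell^2(\T_N^d)\to\ell^2(\T_N^d)$ operator norm of $e\mapsto \cK_t^N * e$. We bound it by Young's inequality on the finite group, i.e.\ by $N^{-d}\sum_{y\in\T^d_N}\normop{\cK_t(y)}$. We then use the pointwise bound $\normop{\cK_t^{\textnormal{SS-NO}}(z)}\le \sum_{i,k}\abs{c_{t,k,i}}(A_{k,+}+A_{k,-})\le Kd\,C(\cK_t)$, which holds because $e^{-\rho\abs{z}}\le 1$. This is the same estimate behind \Cref{lem:bound-ssno}, but without the $N^{d/2}$ factor since we use the normalized sum. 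Taking $\ell^2$ norms gives $\normgridvec{\cE_{t+1}}\le A\normgridvec{\cE_t} + L_\sigma\normgridvec{\cK^N_t*v_t-\cK_t*v_t}$, with $A$ as in the statement.

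The main obstacle is bounding the fresh aliasing term by $N^\beta$ times the bracket in $B$. Our plan is to go to Fourier space. Writing $v_t=\sum_\eta \widehat v_t(\eta)e^{2i\pi\eta\cdot x}$, the discrete convolution at grid points equals $\sum_\eta \big(\sum_{m\in\Z^d}\widehat\cK_t(\eta+Nm)\big)\widehat v_t(\eta)e^{2i\pi\eta\cdot x}$. Subtracting the exact convolution leaves only the aliased terms $m\neq 0$. The cutoff hypothesis $\widehat\cK_t(\xi)=0$ for $\norm{\xi}_\infty\ge K_{\textnormal{cutoff}}$ restricts these to $\eta+Nm$ in a box of size $K_{\textnormal{cutoff}}^d$, which forces $\norm{\eta}_\infty\gtrsim N$ for $m\neq 0$. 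We then apply the discrete Parseval identity on $\T_N^d$ and Cauchy--Schwarz, bound $\normop{\widehat\cK_t(\cdot)}$ by its sup, and use $\abs{\widehat v_t(\eta)}\le \langle\eta\rangle^{-s}\cdot \langle\eta\rangle^{s}\abs{\widehat v_t(\eta)}$. This should produce $C_{d,s}K_{\textnormal{cutoff}}^{d/2+s}N^{d/2-s}\normop{\widehat\cK_t}_\infty\normsobolev{v_t}$. The factor $N^{d/2}$ comes from the unnormalized $\ell^2(\T^d_N)$ norm, and the factor $N^{-s}$ from the high-frequency Sobolev tail. Fractional $s$ poses no extra difficulty here, since only the weight $\langle\eta\rangle^{s}$ is used.

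Finally, we take the supremum over $t$ to get $\normgridvec{\cE_{t+1}}\le A\normgridvec{\cE_t}+N^\beta B$ with $\cE_0=0$, since the lifted input is evaluated exactly. Unrolling this recursion gives $\normgridvec{\cE_T}\le N^\beta B\sum_{t=0}^{T-1}A^t = N^\beta B\frac{A^T-1}{A-1}$.
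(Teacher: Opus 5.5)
Your argument is essentially the paper's proof. It uses the same split of $\cE_{t+1}^{(0)}$ into $W_t\cE_t^{(0)}$, a propagated term $\cE_t^{(2)}=\cK_t^N*\cE_t^{(0)}$, and a quadrature (aliasing) term $\cE_t^{(1)}$ for the exact trajectory. It applies discrete Young on $\T^d_N$ to the propagated term, a Poisson-summation/discrete-Parseval computation with a Sobolev tail estimate to $\cE_t^{(1)}$, and geometric unrolling with $\cE_0^{(0)}=0$. The differences are minor:
\begin{itemize}
\item You bound $N^{-d}\sum_{y}\normop{\cK_t(y)}$ directly by the pointwise bound $Kd\,C(\cK_t)$. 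The paper instead detours through Cauchy--Schwarz and the $\ell^2$ bound of \Cref{lem:bound-ssno}, reaching the same constant.
\item You use the cutoff directly. The paper first converts it into algebraic decay $\normop{\widehat\cK_t(\xi)}\le \normop{\widehat\cK_t}_\infty(2K_{\textnormal{cutoff}}\sqrt d)^{\frac d2+s}(1+\norm{\xi})^{-\frac d2-s}$ and then invokes \Cref{lem:bounding-E1-polynomial-decay} with $\alpha=\frac d2+s$.
\end{itemize}

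One step needs repair: the claim that $m\neq 0$ forces $\norm{\eta}_\infty\gtrsim N$. You only get $\norm{\eta}_\infty> N-K_{\textnormal{cutoff}}$, which is useful only when, say, $K_{\textnormal{cutoff}}\le N/2$.

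In that regime each residue class mod $N$ meets the cutoff box at most once. The aliased multiplier is then just $\widehat\cK_t(r_0)$, on classes with $\norm{r_0}_\infty<K_{\textnormal{cutoff}}$. Parseval plus the Sobolev tail give the sharper factor $(2K_{\textnormal{cutoff}})^{d/2}N^{\frac d2-s}$.

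The theorem, however, is claimed for all $N$. For $K_{\textnormal{cutoff}}>N/2$ a class can contain two points of the box, so the non-decaying low mode $\widehat v_t(r_0)$ gets multiplied by a nonzero alias. You then need a separate crude bound. Each class has at most $(4K_{\textnormal{cutoff}}/N)^d$ aliases, which gives $\normgridvec{\cE_t^{(1)}}\lesssim K_{\textnormal{cutoff}}^{d}N^{-d/2}\normop{\widehat\cK_t}_\infty\normsobolev{v_t}$. This is dominated by $K_{\textnormal{cutoff}}^{\frac d2+s}N^{\frac d2-s}$ because $s>\frac d2$ and $K_{\textnormal{cutoff}}\gtrsim N$. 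The ratio is $K_{\textnormal{cutoff}}^{\frac d2-s}N^{s-d}\le 2^{s-\frac d2}$ there.

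So the larger exponent $\frac d2+s$ in $B$ is what absorbs this bad regime; your direct computation in the good regime does not produce it. The paper's route avoids the case split because its decay lemma holds uniformly in $N$.
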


\begin{proof}[Proof (Sketch)]
For each layer $t$, subtract the discrete and continuous updates at a grid point $x \in \T_N^d$ and use that each $\sigma_t$ is $L_\sigma$-Lipschitz to obtain:
    \begin{align*}
        &\norm{\cE_{t+1}^{(0)}(x)}_{\ell^2(\T^d_N)} \le L_\sigma\Big(\normop{W_t} \norm{\cE_t^{(0)}(x)}_{\ell^2(\T^d_N)} + \norm{(\cK_t *_N v_t^N)(x) - (\cK_t * v_t)(x)} \Big).
    \end{align*} 
Then, split the convolution discrepancy into (i) the propagation of the current error through the discrete convolution and (ii) the pure aliasing error of replacing $\int_{\T^d}$ by the grid average; take the $\ell^2(\T_N^d)$--norm and apply (a) the discrete Young's inequality (\Cref{appdx-lem:young-convolution-inequality}) to bound the propagated term by $Kd\,C(\cK_t)\,\|\cE_t^{(0)}\|_{\ell^2(\T^d_N)}$, and (b) the Fourier cutoff assumption combined with $v_t\in H^s$ to bound the aliasing term by $C_{d,s}K_{\mathrm{cutoff}}^{\frac d2+s}\|\widehat{\cK}_t\|_\infty \|v_t\|_{H^s}\,N^{\frac d2-s}$. This yields the one-step recursion
    \[ \|\cE_{t+1}^{(0)}\|_{\ell^2(\T^d_N)} \le A \|\cE_t^{(0)}\|_{\ell^2(\T^d_N)} + N^\beta B, \,\, \beta=\frac d2-s<0, \]
with $A$ and $B$ as in the theorem. Iterating this inequality from $t=0$ to $T-1$ (and using $\cE_0^{(0)}=0$ when the inputs coincide) gives
$\|\cE_T^{(0)}\|_{\ell^2(\T^d_N)} \le  N^\beta B\frac{A^T-1}{A-1}$, which is~\eqref{thm:discretization-error-ssnos-result}.
\end{proof}

Note that, if there exists some real number $\alpha > d$ and a finite real constant $C_{d, \alpha} \ge 0$ such that
    \[ \normop{\widehat{\cK}_t(\xi)} \le C_{d, \alpha} (1 + \norm{\xi})^{-\alpha}, \numberthis\label{} \]
for all $\xi \in \Z^d$. Then, inequality~\eqref{thm:discretization-error-ssnos-result} holds with
        \begin{align*}
            \beta & := \max \ens{\frac{d}{2} -s, d - \alpha} < 0, \quad  B := L_\sigma C_{d, s, \alpha} \sup_{t \in \ens{0, 1, \ldots T - 1}}  \normsobolev{v_t},
        \end{align*}
where $0 \le C_{d, s, \alpha} < \infty$ is a universal constant depending only on $d$, $s$ and $\alpha$,

The only risk in~\Cref{thm:discretization-error-ssnos} is that some of the $v_t$, $t \in \ens{0, \dots, T-1}$ may have an infinite $H^s$ norm. We show that this \emph{cannot} happen, as long as the activation function $\sigma$ is smooth, i.e., $\sigma \in \mathscr{C}^\infty(\R, \R)$.

%We start with the next lemma from~\citet{lanthaler2025discretizationerrorfourierneural} to bound the $H^s$-norm of a point-wise composition $\sigma \circ v$ using the $H^s$-norm of $v$, albeit some regularity assumptions on $\sigma$.

\begin{corollary}[{\citet[Theorem~2.87]{Bahouri2011chap2}}]\label{lem:Moser}
    Let $\sigma \in \mathscr{C}^{\infty}(\R, \R)$ be a smooth activation function vanishing at $0$, let $s > \frac{d}{2}$ be a real number and $v \in H^s(\T^d, \R^H)$ then $\sigma \circ v \in H^s(\T^d, \R^H)$.
\end{corollary}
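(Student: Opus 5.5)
This is the classical Moser-type composition estimate. I will reduce the vector-valued statement to scalar components, prove it for integer $s$ via the Faà di Bruno formula combined with Gagliardo--Nirenberg interpolation, and then handle fractional $s$ with a paraproduct or paralinearization argument, as in \citet{Bahouri2011chap2}.

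\textbf{Step 1: reductions.} Write $v=(v^1,\dots,v^H)$ and apply $\sigma$ componentwise. Since $\norm{\sigma\circ v}_{H^s}^2=\sum_h\norm{\sigma\circ v^h}_{H^s}^2$, it suffices to treat a scalar $v\in H^s(\T^d)$.

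Because $s>\frac d2$, the Sobolev embedding gives $v\in L^\infty$ with $\norm{v}_{L^\infty}\lesssim\norm{v}_{H^s}=:R'$. Only the values of $\sigma$ on the compact set $[-R,R]$ with $R=\norm{v}_{L^\infty}$ matter. On that set every derivative $\sigma^{(j)}$ is bounded.

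The $L^2$ part follows from $\sigma(0)=0$, which gives $\abs{\sigma(y)}\le \sup_{[-R,R]}\abs{\sigma'}\,\abs{y}$, hence $\norm{\sigma\circ v}_{L^2}\le C(R)\norm{v}_{L^2}$. This is the only place the hypothesis $\sigma(0)=0$ is used. It is essential on $\T^d$ only to keep the statement homogeneous; constants are integrable on the torus anyway.

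\textbf{Step 2: integer $s=m$.} For a multi-index $\abs{\gamma}\le m$, Faà di Bruno expresses $\partial^\gamma(\sigma\circ v)$ as a sum of terms
\[
\sigma^{(j)}(v)\,\partial^{\gamma_1}v\cdots\partial^{\gamma_j}v,
\qquad \gamma_1+\dots+\gamma_j=\gamma,\quad \abs{\gamma_i}\ge1 .
\]
The factor $\sigma^{(j)}(v)$ is bounded in $L^\infty$ by $C(R)$. For the product, apply Hölder with exponents $p_i=2\abs{\gamma}/\abs{\gamma_i}$. Then use the Gagliardo--Nirenberg inequality on the torus,
\[
\norm{\partial^{\gamma_i}v}_{L^{p_i}}\lesssim \norm{v}_{L^\infty}^{1-\abs{\gamma_i}/\abs{\gamma}}\,\norm{v}_{H^{\abs{\gamma}}}^{\abs{\gamma_i}/\abs{\gamma}}.
\]
Multiplying these bounds gives
\[
\norm{\partial^\gamma(\sigma\circ v)}_{L^2}\le C(R)\,\norm{v}_{H^{m}},
\]
so $\sigma\circ v\in H^m$.

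\textbf{Step 3: fractional $s$, the main obstacle.} Here derivatives are not local, so I would follow the Littlewood--Paley route. Write the telescoping decomposition
\[
\sigma(v)=\sum_q\bigl(\sigma(S_{q+1}v)-\sigma(S_qv)\bigr)=\sum_q m_q\,\Delta_q v,
\qquad m_q=\int_0^1\sigma'\bigl(S_qv+\tau\Delta_qv\bigr)\,d\tau .
\]
By Bernstein's inequality and the bounds of Step 2 applied to the smooth function $S_qv$, each $m_q$ satisfies $\norm{\partial^\alpha m_q}_{L^\infty}\le C(R)\,2^{q\abs{\alpha}}$.

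The standard lemma on series $\sum_q m_q\Delta_qv$ whose coefficients obey such derivative bounds then gives, for $0<s<\lceil s\rceil$,
\[
\norm{\sigma(v)}_{H^s}\le C(R)\,\norm{v}_{H^s}.
\]
This is precisely \citet[Lemma~2.84 and Theorem~2.87]{Bahouri2011chap2}.

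The delicate point is the requirement $\norm{v}_{L^\infty}<\infty$, which is exactly what $s>\frac d2$ supplies. Because the argument is by citation, I would verify that the cited result, stated on $\R^d$, transfers to $\T^d$. This follows either by using the periodic Littlewood--Paley decomposition verbatim, or by localizing with a smooth cutoff of a periodic extension.
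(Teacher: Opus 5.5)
Your proposal is correct and follows the paper's route. The paper gives no argument beyond invoking \citet[Theorem~2.87]{Bahouri2011chap2}, and your Step~3 (Meyer's telescoping $\sigma(v)=\sum_q m_q\Delta_q v$, the bounds $\|\partial^\alpha m_q\|_{L^\infty}\le C(R)\,2^{q|\alpha|}$, then the series lemma) is exactly the proof of that theorem, with $s>\frac d2$ supplying the required $L^\infty$ bound; your additions beyond the paper (the componentwise reduction, the passage from $\R^d$ to $\T^d$, and the remark that $\sigma(0)=0$ is dispensable on the torus) are sound, while Step~2 is redundant since Theorem~2.87 covers every $s>0$ (and strictly the relevant compact set is $[-CR,CR]$, as $\|S_qv\|_{L^\infty}\le C\|v\|_{L^\infty}$ with $C$ possibly larger than $1$).
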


This result is more general than~\citet[Lemma~D.1]{lanthaler2025discretizationerrorfourierneural}, which applies only for integers $s > \frac{d}{2}$. 
\iffalse
\begin{lemma}\label{lem:Moser}
Assume $\varphi \colon \T^d \to \T^d$ possesses continuous derivatives up to order $r$ which are bounded by $B$ and let $v \in H^r(\T^d)$. Then $\varphi \circ v \in H^r(\T^d)$ and furthermore
\begin{equation*}
    \|\varphi \circ v\|_{H^r} \leq Bc \left(1+ \|v\|_{\infty}^{r-1}\right)\|v\|_{H^r}
\end{equation*}
where $c$ is a constant depending on $r$ and $d$. 
\end{lemma}
\fi
We now show that across all layers $t \in \ens{0, 1, \ldots, T - 1}$, the output $v_t$ stays in $H^s(\T^d)$:
\begin{corollary}\label{lem:heredity-H-s}
    If $\sigma$ is a smooth function ($\mathscr{C}^\infty$) and $v_t \in H^s(\T^d)$ with $s > \frac{d}{2}$, then $v_{t+1} \in H^s(\T^d)$.
\end{corollary}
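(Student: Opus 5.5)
We argue that the pre-activation $u_t := W_t v_t + \cK_t * v_t + b_t$ lies in $H^s(\T^d,\R^{d_{t+1}})$. We then conclude with \Cref{lem:Moser} applied to $v_{t+1} = \sigma_t \circ u_t$.

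\textbf{Step 1: the three terms of $u_t$.}
\begin{itemize}
    \item The term $W_t v_t$ is a constant matrix applied pointwise. On the Fourier side $\widehat{W_t v_t}(\xi) = W_t \widehat{v_t}(\xi)$, so $\norm{W_t v_t}_{H^s} \le \normop{W_t}\,\norm{v_t}_{H^s}$.
    \item The bias $b_t$ is a constant function on the torus. Its only nonzero Fourier coefficient is at $\xi=0$, so it belongs to $H^s$ for every $s$.
    \item For the convolution, use $\widehat{\cK_t * v_t}(\xi) = \widehat{\cK}_t(\xi)\widehat{v_t}(\xi)$. This gives
    \[ \norm{\cK_t * v_t}_{H^s}^2 = \sum_{\xi\in\Z^d} (1+\norm{\xi}^2)^s \norm{\widehat{\cK}_t(\xi)\widehat{v_t}(\xi)}^2 \le \sup_{\xi}\normop{\widehat{\cK}_t(\xi)}^2 \,\norm{v_t}_{H^s}^2 . \]
    It therefore suffices that $\sup_\xi \normop{\widehat{\cK}_t(\xi)} < \infty$. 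For SS-NO kernels this holds because $\normop{\widehat{\cK}_t(\xi)} \le \norm{\cK_t}_{L^1}$ (entrywise, via $\normop{\cdot}$ integrated), and $\norm{\cK_t}_{L^1}$ is finite by \Cref{lem:finite-bound-L1-op-norm}. One can also read the bound directly off \Cref{lem:fourier-ss-no}, since $|F_{\pm,k,i}(\xi_i)|$ is uniformly bounded in $\xi_i$. For FNO kernels it is immediate from \Cref{lem:fourier-fno}.
\end{itemize}
Summing the three pieces gives $u_t\in H^s$.

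\textbf{Step 2: applying the composition result.} The main obstacle is that \Cref{lem:Moser} requires $\sigma(0)=0$. I would handle this by writing $\sigma_t = \tilde\sigma + \sigma_t(0)$ with $\tilde\sigma := \sigma_t - \sigma_t(0)$. Then $\tilde\sigma$ is smooth and vanishes at $0$, so \Cref{lem:Moser} with $s>\frac d2$ gives $\tilde\sigma\circ u_t \in H^s$. Adding back the constant $\sigma_t(0)$ stays in $H^s$ on the compact torus, for the same reason as the bias in Step 1.

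\textbf{Step 3: vector-valued inputs.} The vector-valued case is handled componentwise, assuming $\sigma$ acts componentwise. Alternatively, if \Cref{lem:Moser} is read for $\R^H$-valued $v$ directly, it applies as stated. Combining Steps 1 and 2 yields $v_{t+1}\in H^s(\T^d)$.
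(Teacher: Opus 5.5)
Your argument follows the paper's route: bound $\norm{\cK_t * v_t}_{H^s}$ by $\sup_\xi \normop{\widehat{\cK}_t(\xi)}\,\norm{v_t}_{H^s}$, note that $W_t v_t \in H^s$, and conclude with \Cref{lem:Moser}. Your version is somewhat more careful than the paper's. The paper's proof passes over the bias $b_t$ and the hypothesis $\sigma(0)=0$ that \Cref{lem:Moser} requires, and subtracting the constant $\sigma_t(0)$ handles this correctly, since constants lie in $H^s$ on the torus. In addition, your $L^1$ justification of $\sup_\xi\normop{\widehat{\cK}_t(\xi)}<\infty$ is more robust than reading it off \Cref{lem:fourier-ss-no}.
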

Therefore, as long as $\sigma$ is smooth, $\sigma(0) = 0$, and the input function $v_0$ is in $H^s(\T^d)$, then $v_t \in H^s(\T^d)$ for every $t \in \ens{0, \dots, T-1}$, and the final bound on the discretization error is finite.

We now extend this result to less regular $\sigma$, e.g., \textsc{ReLU}. This result significantly strengthen the regularity framework underlying discretization error bounds for neural operators.
\begin{lemma}[A Regularity Lemma]\label{lem:regularity-lemma}
    Let $d, H$ be positive integers, $s > \frac{d}{2}$ be a real number. Assume $v \in H^s(\T^d, \R^H)$, and the activation function $\sigma$ is globally Lipschitz, $\sigma(0) = 0$ and $\sigma'$ has bounded variations on $\R$. Then, for any $0 < t < \min \ens{ \frac{3}{2}, s }$, we have $\sigma \circ v \in H^t(\T^d)$.
\end{lemma}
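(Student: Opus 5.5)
The plan is to use the difference (Gagliardo/Besov) characterization of $H^t(\T^d)=B^t_{2,2}(\T^d)$ and argue componentwise, since $\sigma$ acts pointwise on each coordinate of $v\in H^s(\T^d,\R^H)$. Because $s>\frac d2$, Sobolev embedding gives $v\in \mathscr C^{0}(\T^d)\cap L^\infty$, and $v\in H^1$ because $s>\frac d2\ge \frac12$ and, in the relevant range, $s>t\ge 1$. Also $\sigma(0)=0$ together with the Lipschitz bound gives $\abs{\sigma(v)}\le L_\sigma\abs{v}$, so $\sigma\circ v\in L^2$. I split into the two regimes $0<t<1$ and $1\le t<\min\{\frac32,s\}$.

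\emph{Case $0<t<1$.} This is immediate from the Lipschitz property. Since
$\abs{\sigma(v(x+h))-\sigma(v(x))}\le L_\sigma \abs{v(x+h)-v(x)}$, we get
\[
\int\!\!\int \frac{\abs{\sigma(v(x+h))-\sigma(v(x))}^2}{\abs h^{d+2t}}\,dx\,dh \le L_\sigma^2 \abs{v}_{H^t}^2<\infty ,
\]
using $H^s\subset H^t$.

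\emph{Case $1\le t<\frac32$.} Here I would write $t=1+r$ with $0\le r<\frac12$.

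First, the chain rule for a Lipschitz $\sigma$ composed with $v\in H^1$ gives $\nabla(\sigma\circ v)=\sigma'(v)\nabla v$ a.e. Here $\sigma'$ is the bounded, BV, right-continuous representative; on the set where $v$ hits a jump of $\sigma'$, $\nabla v=0$ a.e. This shows $\sigma\circ v\in H^1$, which settles $t=1$.

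For $r>0$ it then suffices to show $\sigma'(v)\nabla v\in H^r$. I would write the difference as
\[
\Delta_h\big(\sigma'(v)\nabla v\big)(x)=\sigma'(v(x+h))\,\Delta_h\nabla v(x)+\Delta_h[\sigma'(v)](x)\,\nabla v(x).
\]
The first term is controlled by $\norm{\sigma'}_\infty\abs{\nabla v}_{H^r}\lesssim \norm{\sigma'}_\infty\norm{v}_{H^s}$, since $r<s-1$.

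The second term is the \textbf{main obstacle}. Let $\mu:=\abs{D\sigma'}$, a finite measure. For $a,b\in\R$,
\[
\abs{\sigma'(b)-\sigma'(a)}^2\le 2\norm{\sigma'}_\infty\,\mu\big([a\wedge b,a\vee b]\big)=2\norm{\sigma'}_\infty\int \mathbbm 1_{\{c\text{ between } a,b\}}\,d\mu(c).
\]
By Fubini, the second term then reduces to bounding, uniformly in the level $c$,
\[
I_c(h):=\int_{\T^d}\mathbbm 1_{\{c \text{ between } v(x),\,v(x+h)\}}\,\abs{\nabla v(x)}^2\,dx .
\]
The goal is an estimate of the form $I_c(h)\lesssim \abs h^{\gamma}$ with $\gamma>2r$, or at least integrability of $\abs h^{-d-2r}I_c(h)$ near $h=0$. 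Given such a bound, the total contribution is at most $\mu(\R)\norm{\sigma'}_\infty\sup_c\int I_c(h)\abs h^{-d-2r}dh<\infty$.

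To get this bound, I would first try to write $v(x+h)-v(x)=\int_0^1 h\cdot\nabla v(x+\tau h)\,d\tau$. Then the event that $c$ lies between $v(x)$ and $v(x+h)$ is contained in the event that the segment from $x$ to $x+h$ crosses the level set $\{v=c\}$. Averaging over directions and using the coarea formula, $\int_{\{\abs{v-c}<\delta\}}\abs{\nabla v}\,dx=\int_{c-\delta}^{c+\delta}\mathcal H^{d-1}(\{v=\tau\})\,d\tau$, this should give a gain of order $\abs h$ (morally $\gamma=1$). The extra factor $\abs{\nabla v}$ is absorbed using H\"older together with $\nabla v\in H^{s-1}\subset L^{p}$ for some $p>2$.

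The condition $2r<\gamma=1$ is exactly $t<\frac32$, which explains the threshold. If this direct coarea argument becomes too delicate, my fallback is to invoke the known superposition result of Bourdaud--Sickel type: functions whose derivative is BV act on $H^t\cap L^\infty$ for $t<\frac32$. I would cite it after checking that the periodic setting and the vector-valued (componentwise) setting transfer via a partition of unity.

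Finally, I would combine the two cases, noting that $t<s$ guarantees every norm of $v$ used above is finite.
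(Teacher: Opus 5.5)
Your reduction of the case $1<t<\frac32$ is sound up to the crossing estimate. The product rule for differences, the bound $\abs{\sigma'(b)-\sigma'(a)}^2\le 2\norm{\sigma'}_\infty\,\mu([a\wedge b,a\vee b])$ and Fubini correctly reduce everything to $\sup_c\int I_c(h)\abs{h}^{-d-2r}\,dh<\infty$. That estimate is the whole difficulty, however, and the route you sketch for it does not work as stated.

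\textbf{Single level versus band of levels.} Because $\mu$ can be a single Dirac mass ($\mu=\delta_0$ for \textsc{ReLU}), you need control at one fixed level $c$. The coarea formula only controls band averages $\int_{c-\delta}^{c+\delta}\mathcal{H}^{d-1}(\{v=\tau\})\,d\tau$. At a single level the crossing set can be infinite even for very regular $v$. Take $d=1$, $c=0$, and $v(x)=e^{-M/x}\sin(e^{1/x})$ near $0^+$. This $v$ is $\mathscr{C}^k$ for $k<M$ and has simple zeros $x_n=1/\log(n\pi)$ with gaps $\approx 1/(n\log^2 n)$.

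\textbf{The H\"older step loses the key smallness.} What keeps $I_c(h)$ small in such examples is that $\abs{\nabla v}^2$ is tiny near the accumulating crossings. Splitting a factor $\abs{\nabla v}$ off by H\"older discards exactly that. In the example, the disjoint intervals $(x_n-\abs h,x_n)$ whose neighbouring gaps exceed $\abs h$ already give $\abs{E_0(h)}\gtrsim(\log(1/\abs h))^{-2}$, so no power of $\abs h$ survives.

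\textbf{The band inclusion needs a Lipschitz bound.} Placing $E_c(h)$ inside a band $\{\abs{v-c}\lesssim\abs h\}$ requires $\nabla v\in L^\infty$. $H^s$ does not provide this unless $s>1+\frac d2$.

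So the direct argument is not a proof, and the case $1<t<\frac32$ rests entirely on your fallback citation.

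With that fallback, your argument coincides with the paper's in the hard case. The paper simply invokes Bourdaud--Kateb: composition with $f$, where $f(0)=0$ and $f''$ is a bounded measure, acts on $H^s$ for $1<s<\frac32$. For $s\ge\frac32$ it first embeds $H^s\hookrightarrow H^t$. Like you, it takes the transfer from $\R^d$ to $\T^d$ for granted.

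Where you genuinely differ is the range $0<t\le 1$. Your Gagliardo-seminorm estimate via the Lipschitz bound, and the chain rule in $H^1$, are elementary and self-contained. They replace the paper's appeal to Bourdaud's characterization theorems for $W^1_2$ and for $H^s$ with $0<s<1$. Splitting on $t$ instead of on $s$ is immaterial.
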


% > > > > > > > > > > > > > > > > >       < < < < < < < < < < < < < < < <      

The proofs of~\Cref{lem:heredity-H-s,lem:regularity-lemma} are in~\Cref{appdx-sec:regularity-lemma-proof}. Also,~\Cref{lem:regularity-lemma} holds for $\sigma = \textsc{ReLU}$ as it is $1$-Lipschitz, $\sigma(0) = 0$, and $\sigma' = \mathbf{1}_{(0, \infty)}$ has bounded variations.

While~\Cref{lem:regularity-lemma} provides an \emph{a priori} bound on the Sobolev regularity of the composition $\sigma \circ v$, our empirical results in~\Cref{sec:exp:disc-error-ssno} strongly indicate that this bound is in fact \emph{sharp} for \textsc{ReLU}, and more generally for piecewise linear activation functions vanishing at $0$. In particular, the observed convergence rates are consistent with $\sigma \circ v$ belonging to $H^t(\T^d)$ for all $0 < t < \min\ens{\frac{3}{2}, s}$, but failing to exhibit higher Sobolev regularity beyond this threshold.

As a result, our core~\Cref{thm:discretization-error-ssnos} holds under markedly weaker assumptions, for a broad and practically relevant class of \emph{non-smooth} activations $\sigma$ than those considered in previous analyses.%, while at the same time sharply characterizing the intrinsic regularity limitations induced by such non-smooth nonlinearities.

%Notice that \Cref{lem:regularity-lemma} guarantees Sobolev regularity for  $\sigma \circ v$ up to $\min \ens{ \frac{3}{2}, s }$, and in particular, \emph{a priori}, it does not show that $\sigma \circ v \notin H^r$ for $r \ge \min \ens{ \frac{3}{2}, s }$. However interestingly, our experiments in \Cref{sec:exp:disc-error-ssno} show \emph{empirically} that this result should be tight, at least for \textsc{ReLU}, i.e., that the best Sobolev regularity is exactly $\min \ens{ \frac{3}{2}, s }^-$.

% > > > > > > > > > > > > > > > > >       < < < < < < < < < < < < < < < <      

\Cref{thm:discretization-error-ssnos} and above lemmas show that the discretized operator remains a faithful approximation of its continuous counterpart as the grid resolution $N$ increases. However, a low approximation error does not guarantee the numerical stability of the implemented algorithm. To ensure the discrete model is robust against perturbations, we perform an input-to-state stability (ISS) analysis. 

% > > > > > > > > > > > > > > > > >       < < < < < < < < < < < < < < < <      

\subsection{Global ISS under Discretization Errors}
Based on the previous approximation error analysis, we now turn to a global stability assessment. Specifically, we examine the operator’s behavior across different latent states, through an Input-to-State Stability (ISS) analysis. To do so, we establish the stability properties of the discretized SS-NO layers, viewed as nonlinear operators acting on grid functions over the discrete torus $\mathbb{T}_N^d$. Our goal is to demonstrate that a multi-layer SS-NO is globally Lipschitz continuous in $L^2(\mathbb{T}_N^d, \mathbb{R}^H)$ provided that the discrete operator norm of its kernel remains bounded. The following theorem shows that, under this condition, the composition of $T$ nonlinear layers defines a stable Lipschitz map:% on the discrete latent space:

\begin{theorem}[Global Stability of Discretized SS-NO]
\label{thm:global_stability_layers}
Let $\mathcal{L}_N^{(T)} = \mathcal{L}_{N, T-1} \circ \dots \circ \mathcal{L}_{N, 0}$ be the operator representing the composition of $T$ discretized SS-NO layers. For any latent states $v, w \in \ell^2(\mathbb{T}_N^d, \mathbb{R}^{H})$, we have: $\|\mathcal{L}_N^{(T)}(v) - \mathcal{L}_N^{(T)}(w)\|_{\ell^2(\mathbb{T}_N^d)} \le \mathbf{C}_{N, T} \|v - w\|_{\ell^2(\mathbb{T}_N^d)}$, where %the global discrete Lipschitz constant $\mathbf{C}_{N, T}$ is:
\begin{equation}
    \mathbf{C}_{N, T} := \prod_{t=0}^{T-1} L_{\sigma_t} \left( \|W_t\| + \|\mathcal{K}_{N,t}\|_{\ell^2(\mathbb{T}_N^d)} \right).
\end{equation}
\end{theorem}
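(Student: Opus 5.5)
The plan is to prove a one-layer Lipschitz estimate on the grid and then compose, mirroring the argument behind \Cref{lem:stability-layer-ssm} and \Cref{cor:stability-full-ssm}, with integrals replaced by grid sums. Fix a layer $t$ and write $\mathcal{L}_{N,t} v = \sigma_t(W_t v + \cK_{N,t} * v + b_t)$, where the discrete convolution is the one in~\eqref{9754cccd-2dc7-4de6-a4a3-bdb8b9bbf6c9-2}. For $v,w \in \ell^2(\T^d_N,\R^H)$ the bias cancels. Since $\sigma_t$ is $L_{\sigma_t}$-Lipschitz and acts pointwise,
\[
\norm{\mathcal{L}_{N,t}v - \mathcal{L}_{N,t}w}_{\ell^2(\T^d_N)} \le L_{\sigma_t}\Big( \normop{W_t}\,\normgridvec{v-w} + \normgridvec{\cK_{N,t} * (v-w)} \Big).
\]
The first term uses linearity of $W_t$ and the pointwise bound $\norm{W_t u(x)}_2 \le \normop{W_t}\norm{u(x)}_2$, summed over $x$.

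The key step is to bound the discrete convolution operator norm, $\normgridvec{\cK_{N,t} * u} \le \normgridvec{\cK_{N,t}}\,\normgridvec{u}$, where $\cK_{N,t}$ is the kernel sampled on the grid and the norm on the kernel is $\normgridmattwo{\cdot}$. I would get this by applying the discrete Young inequality (\Cref{appdx-lem:young-convolution-inequality}) on the finite group $\T^d_N$ with exponents $(p,q,r)=(2,1,2)$, as in the paper's sketch of \Cref{thm:discretization-error-ssnos}.

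A more elementary alternative is pointwise Cauchy--Schwarz. It gives $\norm{(\cK_{N,t}*u)(x)} \le N^{-d} \big(\sum_y \normop{\cK(x-y)}^2\big)^{1/2} \normgridvec{u}$. Summing the square over the $N^d$ points $x$ then yields the factor $N^{-d/2}\le 1$ times $\normgridvec{\cK_{N,t}}\normgridvec{u}$. This route only ever uses the $\ell^2$ norm of $y\mapsto\normop{\cK(y)}$, which matches the constant in the statement, so I would use it if the Young route has normalization issues.

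Combining the two bounds gives the one-layer constant $L_{\sigma_t}(\normop{W_t}+\normgridvec{\cK_{N,t}})$. Induction on $T$ then gives the product $\mathbf{C}_{N,T}$. The inductive step applies the one-layer bound to the inputs $\mathcal{L}_N^{(t)}v$ and $\mathcal{L}_N^{(t)}w$, and these lie in $\ell^2$ because the grid is finite.

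The main obstacle is bookkeeping the normalization conventions rather than any conceptual difficulty. The discrete convolution carries a $1/N^d$ weight, while $\normgridvec{\cdot}$ is an unweighted sum, so I must check that the resulting factor ($N^{-d/2}$ in the Cauchy--Schwarz route) is at most $1$, keeping the stated constant valid and even slightly loose. I must also make explicit that $\normgridmattwo{\cK_{N,t}}$ means the $\ell^2$ norm of $y \mapsto \normop{\cK_t(y)}$ over $\T^d_N$, consistent with \Cref{lem:bound-ssno}. That lemma can then be plugged in to give the explicit bound $\mathbf{C}_{N,T}\le \prod_t L_{\sigma_t}(\normop{W_t}+C_dKdN^{d/2})$.
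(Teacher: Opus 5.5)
Your proof is correct and has the same skeleton as the paper's (a one-layer Lipschitz estimate, then composition). The convolution step differs, though, and here it is your version that actually produces the stated constant.

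The paper applies the discrete Young inequality, discards the $N^{-d}$ weight, and obtains the one-layer constant $L_\sigma(\normop{W}+\normgridmatone{\cK})$. In the induction step it then silently replaces this by $\normgridmattwo{\cK_{N,t}}$. Since $\normgridmattwo{\cK}\le\normgridmatone{\cK}$, with the reverse holding only up to a factor $N^{d/2}$, that replacement is not justified by the displayed chain. The missing bridge is to keep the $N^{-d}$ and use Cauchy--Schwarz, $N^{-d}\normgridmatone{\cK}\le N^{-d/2}\normgridmattwo{\cK}$. This is exactly what the paper does in \Cref{lem:bounding-E2}, and what your pointwise Cauchy--Schwarz argument does directly. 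That argument uses the fact that $y\mapsto x-y$ permutes $\T^d_N$, so $\sum_y\normop{\cK(x-y)}^2$ is independent of $x$.

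You should therefore make the Cauchy--Schwarz argument your main one rather than a fallback. Young with $(p,q,r)=(2,1,2)$ for the counting measure leaves one factor in unweighted $\ell^1$. It only closes once the $N^{-d}$ weight is retained, or equivalently if you use the normalized Haar measure on $\T^d_N$ together with $L^1\le L^2$ on a probability space.

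Beyond correctness, your route gives the sharper one-layer constant $L_{\sigma_t}(\normop{W_t}+N^{-d/2}\normgridmattwo{\cK_{N,t}})$. Combined with \Cref{lem:bound-ssno}, this becomes $L_{\sigma_t}(\normop{W_t}+C_dKd)$, which is uniform in $N$ and matches the per-layer factor in the constant $A$ of \Cref{thm:discretization-error-ssnos}. Discarding the $N^{-d/2}$, as in your final sentence, instead yields a bound that grows like $N^{dT/2}$.
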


The proof starts by characterizing the Lipschitz property of a single layer. The overall stability then follows by induction. The complete proof can be found in~\Cref{apx:stability-proofs}.

Let $\Psi_N^{(T)}$ be the composition of $T$ discretized SS-NO layers, i.e., $\Psi_N^{(T)} = \mathcal{L}_{N, T-1} \circ \dots \circ \mathcal{L}_{N, 0}$ as defined in \ref{def:neural-operator}. The stability results established for the sequence of layers $\mathcal{L}_N^{(T)}$ in~\Cref{thm:global_stability_layers} can be naturally extended to the full architecture $\Psi_N^T$. Since the lifting operator $\mathcal{P}$ and the projection operator $\mathcal{Q}$ are typically Lipschitz continuous, often implemented as point-wise linear layers or shallow MLPs, with constants $L_{\mathcal{P}}$ and $L_{\mathcal{Q}}$, the global Lipschitz constant for the full operator $\Psi_N^T$ is given by:
\begin{equation}
    L_{\Psi} = L_{\mathcal{Q}} \cdot \mathbf{C}_{N, T} \cdot L_{\mathcal{P}}.
\end{equation}
Consequently, the numerical observability and stability properties derived for the hidden dynamics $\mathcal{L}$ hold for the end-to-end operator $\Psi$, scaled by the regularity of the input and output transformations.

Based on~\Cref{thm:global_stability_layers}, we can now state the main stability result, which bounds the total error on the grid,i.e., the discrepancy between the ground truth continuous operator $\smash{\Psi^{(T)}}$ applied to a clean input $v$, and its discretized counterpart $\Psi_N^{(T)}$ applied to $\smash{v^\delta}$, a discretized and noisy version of $v$.

\begin{theorem}[ISS under Discretization and Noise]
\label{thm:global_iss}
Let $v \in H^s(\mathbb{T}^d)$ be a continuous input function (with $s > d/2$) and $v^\delta \in \ell^2(\mathbb{T}_N^d)$ be a noisy discrete observation of $v$ such that $\smash{\|v^\delta - v|_{\mathbb{T}_N^d}\|_{l^2} \le \delta}$. The total error between the discrete output and the grid-projected ground truth satisfies:
\begin{align}
    \| \Psi_N^{(T)}(v^\delta) - \Psi^{(T)}&(v)|_{\mathbb{T}_N^d} \|_{\ell^2(\mathbb{T}_N^d)} \le \underbrace{\mathbf{C}_{N, T} \cdot \delta}_{\text{Input Perturbation}} + \underbrace{N^\beta B \frac{A^T - 1}{A - 1}}_{\text{Discretization Error}} \nonumber,
\end{align}
where $A$ and $B$ are the constants defined in ~\Cref{thm:discretization-error-ssnos}, $\beta < 0$ is the convergence rate derived in \Cref{lem:bounding-E1-polynomial-decay,lem:bounding-E1-truncation}, and $\mathbf{C}_{N, T}$ is defined in~\Cref{thm:global_stability_layers}.
\end{theorem}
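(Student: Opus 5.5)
The plan is a triangle-inequality split at an intermediate object: the discretized operator applied to the \emph{clean} grid restriction $v|_{\T_N^d}$. Writing $\Psi_N^{(T)}$ for the discrete stack and $\Psi^{(T)}$ for the continuous one, we decompose
\begin{align*}
\Psi_N^{(T)}(v^\delta) - \Psi^{(T)}(v)|_{\T_N^d} = \Big(\Psi_N^{(T)}(v^\delta) - \Psi_N^{(T)}(v|_{\T_N^d})\Big) + \Big(\Psi_N^{(T)}(v|_{\T_N^d}) - \Psi^{(T)}(v)|_{\T_N^d}\Big).
\end{align*}
We then take $\ell^2(\T_N^d)$ norms and bound the two terms separately. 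The first is the input-perturbation term and the second is the pure discretization term.

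For the first term, both arguments are grid functions in $\ell^2(\T_N^d,\R^H)$, so \Cref{thm:global_stability_layers} applies directly. It gives the bound $\mathbf{C}_{N,T}\,\|v^\delta - v|_{\T_N^d}\|_{\ell^2(\T_N^d)} \le \mathbf{C}_{N,T}\,\delta$. The only check here is that the discrete layer $\mathcal{L}_{N,t}$ used in that theorem is the same map as in the discrete convolution \eqref{9754cccd-2dc7-4de6-a4a3-bdb8b9bbf6c9-2}. This holds because the discrete layer only reads the values of its input on the grid, so it is well defined on grid functions. Consequently the discrete iterates $v_t^N$ at grid points depend only on $v|_{\T_N^d}$.

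For the second term, observe that $\Psi_N^{(T)}(v|_{\T_N^d})$ evaluated on the grid is exactly $v_T^N|_{\T_N^d}$, where $v_T^N$ is the output of the discrete-convolution network started at $v_0^N = v$. Likewise, $\Psi^{(T)}(v)|_{\T_N^d} = v_T|_{\T_N^d}$. Hence the second term equals $\cE_T^{(0)}$ with $\cE_0^{(0)} = 0$, and \Cref{thm:discretization-error-ssnos} bounds it by $N^\beta B\frac{A^T-1}{A-1}$. That theorem's constant $B$ involves $\sup_t\|v_t\|_{H^s}$, so we must ensure it is finite. For smooth $\sigma$ with $\sigma(0)=0$, this follows from $v\in H^s$ with $s>d/2$ and induction via \Cref{lem:heredity-H-s}. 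For Lipschitz $\sigma$ whose derivative has bounded variation, we instead invoke \Cref{lem:regularity-lemma}, with $s$ replaced by the attained regularity $t<\min\{3/2,s\}$; this affects $\beta$ as in \Cref{lem:bounding-E1-polynomial-decay,lem:bounding-E1-truncation}.

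Summing the two bounds gives the claim. The main obstacle is bookkeeping rather than analysis, namely making the identifications in the second term precise. First, the discrete network must be viewed both as a map on grid functions (for \Cref{thm:global_stability_layers}) and as the iteration defining $v_t^N$ (for \Cref{thm:discretization-error-ssnos}); these coincide because each layer is pointwise except for the grid-sum convolution. Second, the regularity hypothesis must propagate through all layers so that $B<\infty$. If the non-smooth case causes trouble, the fallback is to state the result with $\beta$ determined by the weakest regularity across layers.
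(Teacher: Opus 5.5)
Your proposal is correct and is essentially the paper's argument: the paper gives no separate proof, but its ``Input Perturbation'' and ``Discretization Error'' labels encode the same split through $\Psi_N^{(T)}(v|_{\T_N^d})$, with the first difference bounded by \Cref{thm:global_stability_layers} and the second identified with $\cE_T^{(0)}$, $\cE_0^{(0)}=0$, controlled by \Cref{thm:discretization-error-ssnos}. Your regularity discussion is not needed for the inequality as stated, since if some $\|v_t\|_{H^s}=\infty$ then $B=\infty$ and the bound holds vacuously; also, your non-smooth substitution $s\to t$ proves a variant with different $\beta$ and $B$ rather than the stated constants, and it needs $t>d/2$ with $t<3/2$, so it only works for $d\le 2$.
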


\section{Experiments}
\begin{figure*}[t]
\vspace{-1.5cm}
    \centering
    \begin{minipage}[t]{0.95\textwidth}
    % \begin{subfigure}[t]{\linewidth}
    %     \includegraphics[width=\linewidth]{imgs/stacked_SSM1D_same_instance_relL2_multiGRF_s_0p2-0p5-1p0-1p5-2p0_N50 (1).pdf}
    %     \caption{1D with $\sigma = \textsc{GELU}$.}
    %     \label{fig:1D-gelu}
    % \end{subfigure}
    % \begin{subfigure}[t]{\linewidth}
    %     \centering
    %     \includegraphics[width=\linewidth]{imgs/stacked_SSM1D_same_instance_relL2_multiGRF_s_0p2-0p5-1p0-1p5-2p0_N50.pdf}
    %     \caption{1D with $\sigma = \textsc{ReLU}$.}
    %     \label{fig:1D-relu}
    % \end{subfigure}
    \begin{subfigure}[t]{\linewidth}
        \centering
        \includegraphics[width=0.9\linewidth]{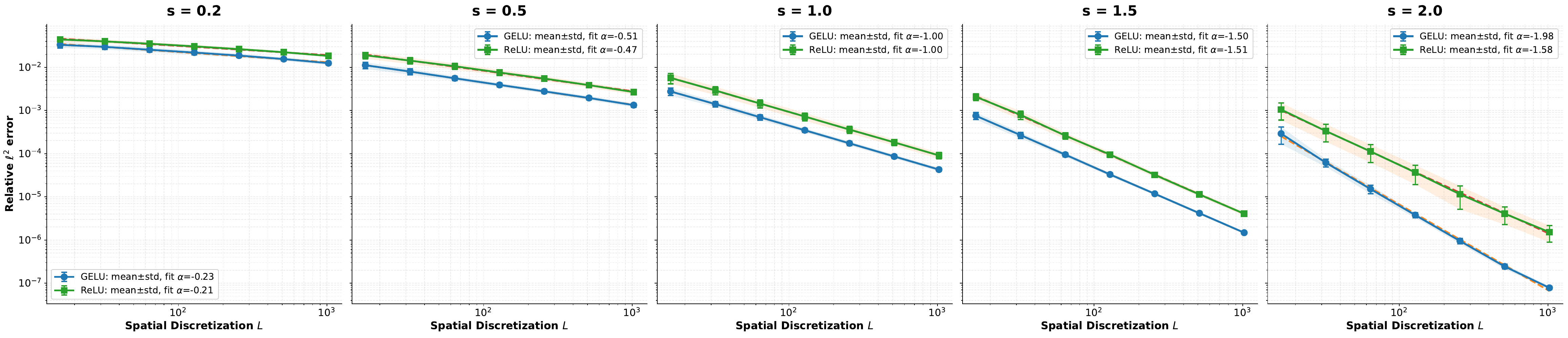}
        \caption{1D with $\sigma = \textsc{GELU}$ (in {\color{mplblue} blue}) and $\sigma = \textsc{ReLU}$ (in {\color{mplgreen} green}).}
        \label{fig:1D}
    \end{subfigure}
    \begin{subfigure}[t]{\linewidth}
        \centering
        \includegraphics[width=0.9\linewidth]{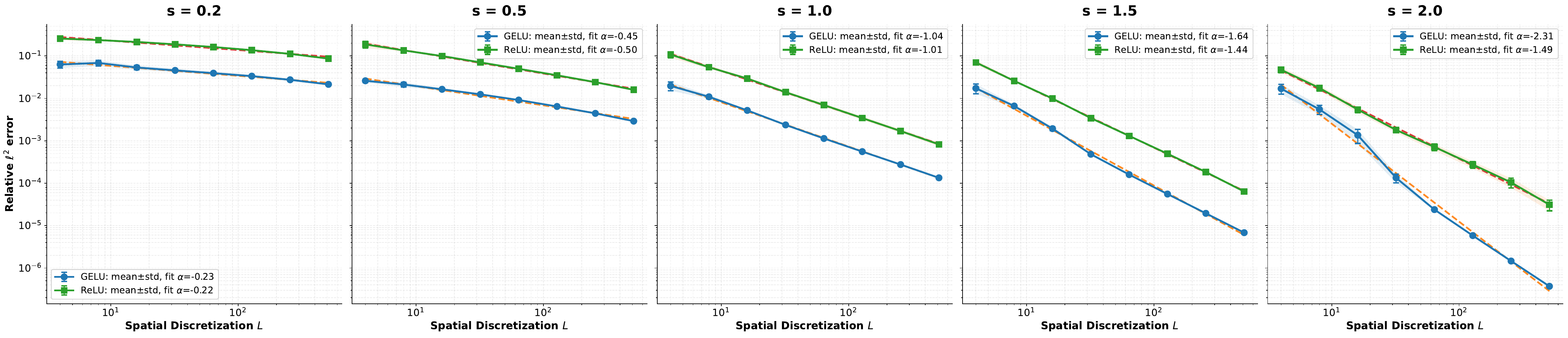}
        \caption{2D with $\sigma = \textsc{GELU}$ (in {\color{mplblue} blue}) and $\sigma = \textsc{ReLU}$ (in {\color{mplgreen} green}).}
        \label{fig:2D}
    \end{subfigure}
    \end{minipage}
    \caption{Relative $\ell^2$ error vs.\ resolution $L$ for GRF inputs of varying smoothness $s$ (mean $\pm$ std over $N = 50$ samples for $(a), (b)$ and $N = 5$ samples for $(c), (d)$).} %\mn{est ce qu'il possible de plus étoffer la description de la figure dans la caption?  et le rouge fit, on en dit rien !
    \label{fig:discretization-error}
    \vspace{-0.2cm}
\end{figure*}
%\abde{Uniform Grid.}

We now assess our theoretical bounds on discretization and stability error on numerical examples in 1D and 2D. Additional experiments on the 1D Burgers benchmark, including the effect of training and network depth, are provided in \Cref{appdx-sec:trained-burgers-depth}. These experiments show that the discretization-scaling behavior, predicted by our theoretical analysis, remains clearly visible even in the trained setting.

\subsection{Discretization Error of SS-NOs}
\label{sec:exp:disc-error-ssno}

We first turn to the empirical validation of the discretization-error predictions of
\Cref{thm:discretization-error-ssnos} in a controlled setting where the only source of variation is the spatial sampling of the input function. In particular, we isolate the \emph{numerical} error induced by approximating the integral convolution with its grid quadrature \eqref{9754cccd-2dc7-4de6-a4a3-bdb8b9bbf6c9-2}, separately from training effects, finite data, or optimization noise.

\paragraph{Ground truth and layerwise comparison.}
As is standard in numerical analysis when the true continuous quantity is unavailable, we define a high-resolution discrete SS-NO evaluation as our reference (“ground truth”), and measure discrepancies when the same operator is evaluated from coarser discretizations. Concretely, for each experiment we fix a single SS-NO architecture $\Psi_\theta$ and a single set of parameters
$\theta$ throughout. For a given input $u$ we compute the sequence of hidden states $\{v_\ell^{\mathrm{ref}}\}_{\ell=0}^{T}$ on a finest grid of size $N_{\mathrm{full}}^{\mathrm{1D}}=8196$ in 1D and $N_{\mathrm{full}}^{\mathrm{2D}}=2048$ in 2D\footnote{All grids are uniform on $\T^d$, identified with $\T^d_N \cong \frac1N [N]^d$.}.
For each subsampling factor $s\in\mathcal{S}$ we construct the coarsened input $u^{(s)}$ by uniform sampling with a stride of $s$ and evaluate the \emph{same} SS-NO parameters on the corresponding coarse grid of size $N^{(s)}=N_{\mathrm{full}}/s$.
To compare outputs across resolutions, we represent all states on the same set of spatial resolution (coarse-grid states are lifted to the finest grid).

\paragraph{Error metric.}
For the last layer $T-1$ we compute the relative $\ell^2$ error on the finest grid:
\begin{equation*}
    \mathrm{RelErr}_{T}(N^{(s)}) :=  \frac{\bigl\|\,\widetilde v_T^{(s)} - v_T^{\mathrm{ref}}\,\bigr\|_{\ell^2({\T^d_{N_{\mathrm{full}}}})}}
    {\bigl\|\,v_T^{\mathrm{ref}}\,\bigr\|_{\ell^2({\T^d_{N_{\mathrm{full}}}})}},
    \label{eq:relerr_layerwise}
\end{equation*}
where $\widetilde v_T^{(s)}$ denotes the interpolated version of the coarse-grid state $\smash{v_T^{(s)}}$. We average results over multiple independent input realizations ($50$ in 1D and $5$ in 2D), showing mean $\pm$ standard deviation.

\begin{wrapfigure}{r}{0.35\textwidth}
%\vspace{-1.5cm}
    \centering
    \begin{subfigure}[t]{0.3\columnwidth}
        \centering%
        \includegraphics[width=\linewidth]{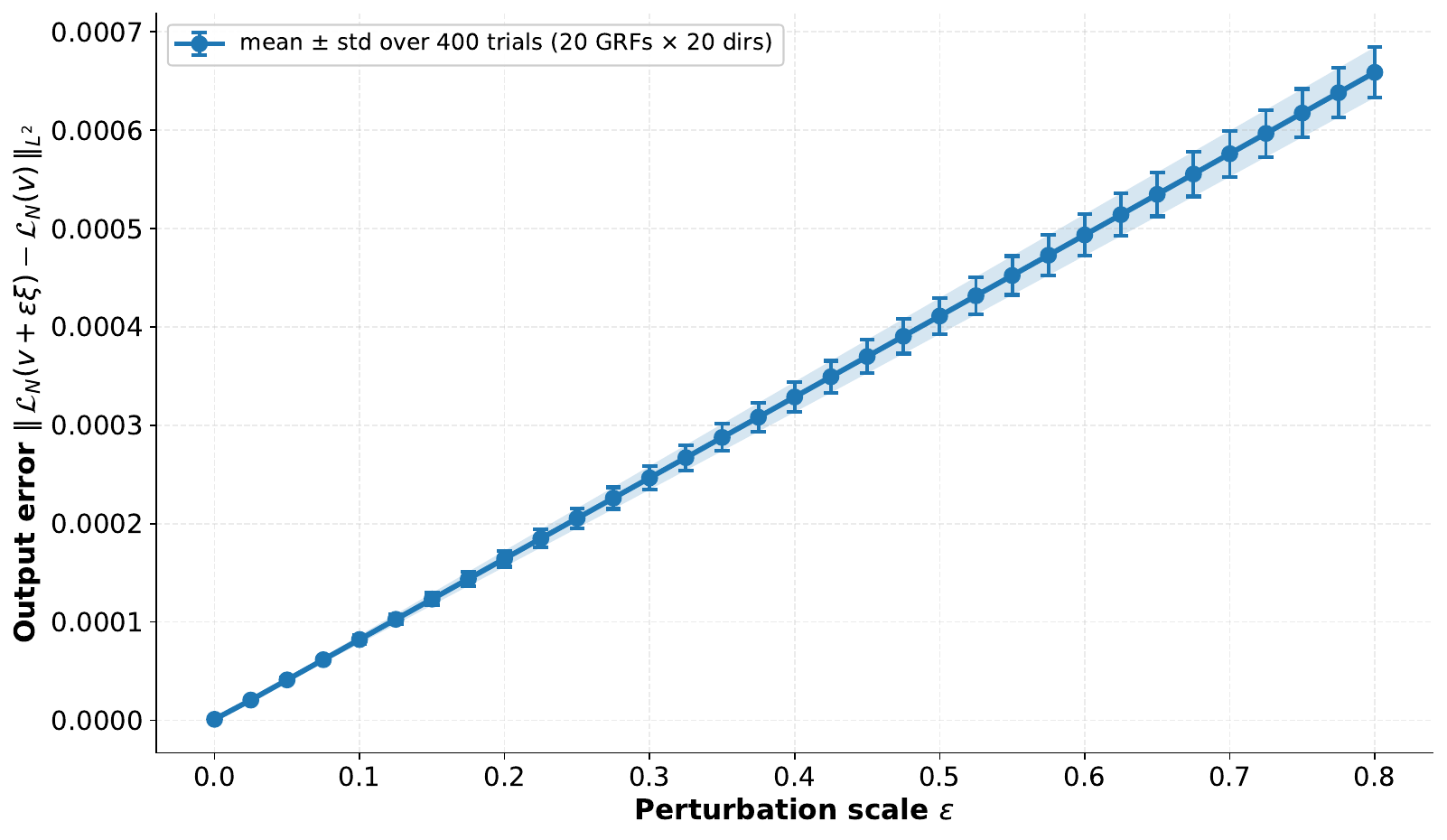}
        \caption{1D case.}
        \label{fig:1D-gelu-stability}
    \end{subfigure}
    \begin{subfigure}[t]{0.3\columnwidth}
        \centering
        \includegraphics[width=\linewidth]{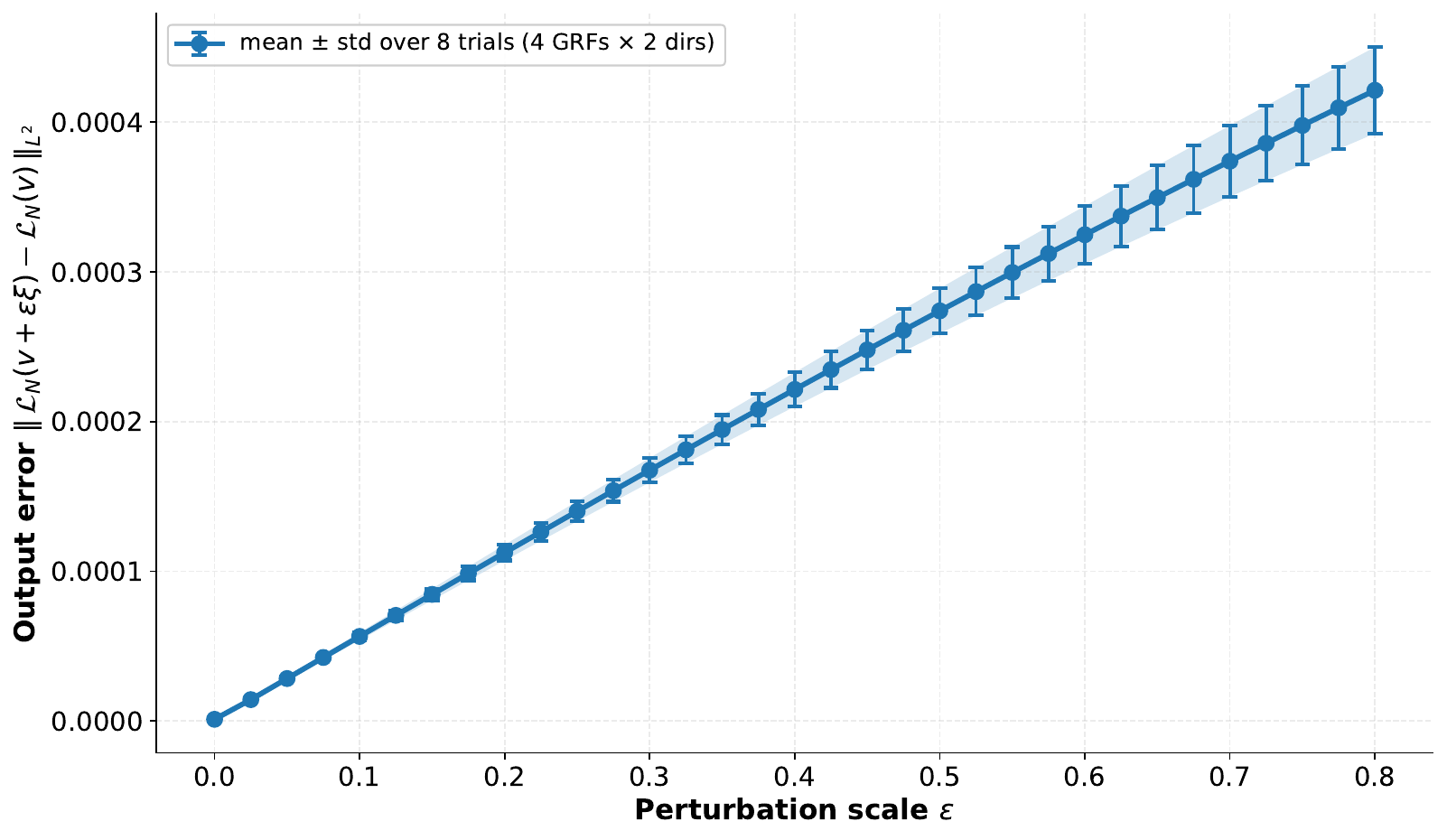}
        \caption{2D case.}
        \label{fig:2D-gelu-stability}
    \end{subfigure}
    \caption{Stability of a single discretized SS-NO layer (mean $\pm$ std over $N = 400$ samples ($(a)$) and $N = 8$ samples ($(b)$).}
    \label{fig:1D-2D-stability}
   s \vspace{-1cm}
\end{wrapfigure}

\paragraph{Input regularity via Gaussian random fields.}
To assess the dependency on Sobolev regularity, we generate Gaussian random fields (GRFs) inputs with fixed smoothness (\Cref{sec:datasets}). We evaluate on several smoothness levels (as shown in the figure caption), spanning low-regularity fields to highly smooth fields. This targets the exponent $\beta$ in \Cref{thm:discretization-error-ssnos}, which predicts algebraic decay in $N$ governed by (i) the input regularity $s$ and (ii) the high-frequency decay of the kernel Fourier coefficients (see \Cref{lem:bounding-E1-polynomial-decay,lem:bounding-E1-truncation}), and (iii) the regularity of the activation function (see~\Cref{lem:Moser}).

\paragraph{Models: random weights and activation regularity.}
Unless specified otherwise, SS-NO parameters are \emph{randomly initialized} and then kept fixed. This eliminates any confounding between discretization and training, aligning with the theoretical setting where $\theta$ is fixed while discretization varies. To assess the effect of the activation function regularity on discretization error propagation across layers, we compare the default smooth \textsc{GELU} activation with the non-smooth \textsc{ReLU}. This comparison is motivated by the error decomposition $\smash{\cE_{\ell+1}^{(0)}}$ in \eqref{b06ec7af-e2f9-46dc-ac2e-244f90be7c6d-3} and the role of composition estimates (\Cref{lem:Moser,lem:heredity-H-s}) in controlling the growth of Sobolev norms across layers.

\paragraph{Results.} \Cref{fig:1D,fig:2D} jointly confirm~\Cref{thm:discretization-error-ssnos}: in both 1D and 2D, $\mathrm{RelErr}_T(N)$ decays with the resolution $N$, improving with input smoothness at first (consistent with the discretization term being controlled by input regularity, e.g.\ the $N^{\frac d2-s}$ contribution in \Cref{lem:bounding-E1-truncation}) but eventually saturating in the case of activation functions with finite Sobolev regularity (e.g., \textsc{ReLU}).

\subsection{Stability in 1D and 2D}

\iffalse
To empirically illustrate the Lipschitz-type stability bound at the level of a single discretized SS-NO layer, we perform a controlled perturbation experiment in which the \emph{only} varying quantity is the input function. In both 1D and 2D we fix a discretized SS-NO layer $\mathcal{L}_N$ (i.e., a fixed architecture and a single frozen set of parameters) and sample a input function $v$ as a Gaussian random field on the uniform grid $\T_N^d$ with prescribed smoothness $s$ (here $s=2$; cf.\ \Cref{sec:datasets}). For each $v$ we generate independent Gaussian perturbation directions $\xi$ (normalized to unit discrete $\ell^2$ norm) and evaluate the perturbed inputs $v_\varepsilon := v + \varepsilon \xi$ for a linearly spaced range of amplitudes $\varepsilon \in [0, 0.8]$. We then record the output discrepancy
    $  f(\varepsilon) := \normgridvec{ \mathcal{L}_N(v_\varepsilon)  - \mathcal{L}_N(v) }, $ 
and report mean $\pm$ standard deviation of $f(\varepsilon)$ over multiple trials (in 1D: $20$ independent GRFs $\times$ $20$ directions; in 2D: $4$ GRFs $\times$ $2$ directions), shown in Figures~\ref{fig:1D-gelu-stability}--\ref{fig:2D-gelu-stability}.
\fi

To empirically illustrate \Cref{thm:global_stability_layers}, we perturb the input of a fixed discretized SS-NO layer. In 1D and 2D, we sample a GRF input $v$ with smoothness $s=2$, draw normalized Gaussian directions $\xi$, and evaluate $v_\varepsilon=v+\varepsilon\xi$ for $\varepsilon\in[0,0.8]$. We report
\[
f(\varepsilon):=\normgridvec{\mathcal{L}_N(v_\varepsilon)-\mathcal{L}_N(v)}
\]
as mean $\pm$ std over multiple trials; see Figures~\ref{fig:1D-gelu-stability}--\ref{fig:2D-gelu-stability}.

\iffalse
In both 1D and 2D we observe an essentially \emph{linear} growth of the output error as a function of the perturbation amplitude $\varepsilon$, with modest dispersion across realizations; this is exactly the qualitative behavior predicted by \Cref{thm:global_stability_layers}, which bounds $\normgridvec{\mathcal{L}_N v - \mathcal{L}_N w}$ by a constant times an input-distance term. In particular, the near-affine dependency on $f(\varepsilon)$ indicates that, for the explored perturbation magnitudes, the discretized layer behaves as a stable (locally Lipschitz) map on typical GRF inputs, and that its sensitivity does not exhibit blow-up as $\varepsilon$ increases on this range. 
%The larger spread in 2D is consistent with the increased variability of high-dimensional inputs and the heavier interaction between spatial mixing and channel dynamics, but the same linear scaling trend persists.

A dedicated empirical analysis of the interplay between depth and stability on 1D Gaussian random fields is provided in~\Cref{appdx-sec:depth-GRF-stability}.
\fi

In both 1D and 2D, the output error grows nearly linearly with $\varepsilon$, with modest dispersion across realizations. This agrees with \Cref{thm:global_stability_layers}, predicting Lipschitz-type dependence on input perturbations. Thus, over the tested range, the discretized SS-NO layer behaves as a stable map on typical GRF inputs, without visible sensitivity blow-up. A depth-stability study is provided in~\Cref{appdx-sec:depth-GRF-stability}.

\iffalse
\begin{figure}[t]
    \centering
    \begin{subfigure}[b]{0.49\columnwidth}
        \includegraphics[width=\linewidth]{imgs/kernel_imgs/SSNO_Fwd_layer-idx=0_L=2^15_random.png}
        \caption{SSNO (forward) kernel}
        \label{fig:ssno-kernel}
    \end{subfigure}
    \hfill
    \begin{subfigure}[b]{0.49\columnwidth}
        \includegraphics[width=\linewidth]{imgs/kernel_imgs/FNO_layer-idx=0_L=2^15_random.png}
        \caption{FNO kernel}
        \label{fig:fno-kernel}
    \end{subfigure}
    \caption{
        Comparison of the SSNO kernel and a Fourier Neural Operator (FNO) kernel
        (both \emph{randomly initialized}) evaluated at spatial resolution $L = 2^{15}$ on $\intff{0}{1}$.  
        SSNO exhibits continuous, full-band spectral structure without mode truncation,
        while FNO enforces a hard cutoff frequency (here at $K = 16$), resulting in compact spectral support.
    }
    \label{fig:operator-norm-ssno-vs-fno}
\end{figure}
\fi

%\vfill

\section{Conclusion}
This work develops a rigorous continuous-to-discrete theory for convolution-based neural operators, with a focus on SS-NOs. We derive explicit discretization-error bounds that yield an algebraic convergence rate in the spatial resolution under standard Sobolev regularity assumptions and mild spectral conditions on the kernels (e.g., frequency cutoff or decay). In parallel, we establish stability estimates both in the continuous model and for its discretized implementation, that quantify how input perturbations and numerical errors propagate across layers. Numerical experiments in 1D and 2D corroborate our theory and highlight the practical role of regularity assumptions. Overall, these results provide a rigorous pathway to reason about resolution generalization and stability of neural operators. They also suggest principled guidelines for choosing kernels, activations, and depth, to control discretization effects in operator learning.

\clearpage
\bibliography{draft/bib}

\begin{thebibliography}{27}
\providecommand{\natexlab}[1]{#1}
\providecommand{\url}[1]{\texttt{#1}}
\expandafter\ifx\csname urlstyle\endcsname\relax
  \providecommand{\doi}[1]{doi: #1}\else
  \providecommand{\doi}{doi: \begingroup \urlstyle{rm}\Url}\fi

\bibitem[Anandkumar et~al.(2020)Anandkumar, Azizzadenesheli, Bhattacharya,
  Kovachki, Li, Liu, and Stuart]{anandkumar2020neural}
Anima Anandkumar, Kamyar Azizzadenesheli, Kaushik Bhattacharya, Nikola
  Kovachki, Zongyi Li, Burigede Liu, and Andrew Stuart.
\newblock Neural operator: Graph kernel network for partial differential
  equations.
\newblock In \emph{ICLR 2020 workshop on integration of deep neural models and
  differential equations}, 2020.

\bibitem[Bahouri et~al.(2011{\natexlab{a}})Bahouri, Chemin, and
  Danchin]{Bahouri2011}
Hajer Bahouri, Jean-Yves Chemin, and Rapha{\"e}l Danchin.
\newblock \emph{Basic Analysis}, pages 1--50.
\newblock Springer Berlin Heidelberg, Berlin, Heidelberg, 2011{\natexlab{a}}.
\newblock ISBN 978-3-642-16830-7.
\newblock \doi{10.1007/978-3-642-16830-7_1}.
\newblock URL \url{https://doi.org/10.1007/978-3-642-16830-7_1}.

\bibitem[Bahouri et~al.(2011{\natexlab{b}})Bahouri, Chemin, and
  Danchin]{Bahouri2011chap2}
Hajer Bahouri, Jean-Yves Chemin, and Rapha{\"e}l Danchin.
\newblock \emph{Littlewood--Paley Theory}, pages 51--121.
\newblock Springer Berlin Heidelberg, Berlin, Heidelberg, 2011{\natexlab{b}}.
\newblock ISBN 978-3-642-16830-7.
\newblock \doi{10.1007/978-3-642-16830-7_2}.
\newblock URL \url{https://doi.org/10.1007/978-3-642-16830-7_2}.

\bibitem[Boull{\'e} and Townsend(2024)]{boulle2024mathematical}
Nicolas Boull{\'e} and Alex Townsend.
\newblock A mathematical guide to operator learning.
\newblock In \emph{Handbook of Numerical Analysis}. Elsevier, 2024.

\bibitem[Bourdaud and Kateb(1992)]{Bourdaud1992}
G.~Bourdaud and M.~E.~D. Kateb.
\newblock Calcul fonctionnel dans l'espace de sobolev fractionnaire.
\newblock \emph{Mathematische Zeitschrift}, 210\penalty0 (1):\penalty0
  607--613, Dec 1992.
\newblock ISSN 1432-1823.
\newblock \doi{10.1007/BF02571817}.
\newblock URL \url{https://doi.org/10.1007/BF02571817}.

\bibitem[Bourdaud et~al.(2014)Bourdaud, Moussai, and Sickel]{Bourdaud2014}
G{\'e}rard Bourdaud, Madani Moussai, and Winfried Sickel.
\newblock Composition operators acting on besov spaces on the real line.
\newblock \emph{Annali di Matematica Pura ed Applicata (1923 -)}, 193\penalty0
  (5):\penalty0 1519--1554, Oct 2014.
\newblock ISSN 1618-1891.
\newblock \doi{10.1007/s10231-013-0342-x}.
\newblock URL \url{https://doi.org/10.1007/s10231-013-0342-x}.

\bibitem[Brunton and Kutz(2023)]{brunton2023machine}
Steven~L Brunton and J~Nathan Kutz.
\newblock Machine learning for partial differential equations.
\newblock \emph{arXiv preprint arXiv:2303.17078}, 2023.

\bibitem[Dai et~al.(2020)Dai, Hu, Wu, and Xiao]{doi:10.1142/S0219530519500234}
Yichen Dai, Weiwei Hu, Jiahong Wu, and Bei Xiao.
\newblock The littlewood–paley decomposition for periodic functions and
  applications to the boussinesq equations.
\newblock \emph{Analysis and Applications}, 18\penalty0 (04):\penalty0
  639--682, 2020.
\newblock \doi{10.1142/S0219530519500234}.
\newblock URL \url{https://doi.org/10.1142/S0219530519500234}.

\bibitem[Folland(1999)]{folland1999real}
G.B. Folland.
\newblock \emph{Real Analysis: Modern Techniques and Their Applications}.
\newblock Pure and Applied Mathematics: A Wiley Series of Texts, Monographs and
  Tracts. Wiley, 1999.
\newblock ISBN 9780471317166.
\newblock URL \url{https://books.google.fr/books?id=N8jVDwAAQBAJ}.

\bibitem[Gin et~al.(2020)Gin, Shea, Brunton, and
  Kutz]{gin2020deepgreendeeplearninggreens}
Craig~R. Gin, Daniel~E. Shea, Steven~L. Brunton, and J.~Nathan. Kutz.
\newblock Deepgreen: Deep learning of green's functions for nonlinear boundary
  value problems, 2020.
\newblock URL \url{https://arxiv.org/abs/2101.07206}.

\bibitem[Gu et~al.(2022)Gu, Goel, Gupta, and R{\'e}]{gu2022parameterization}
Albert Gu, Karan Goel, Ankit Gupta, and Christopher R{\'e}.
\newblock On the parameterization and initialization of diagonal state space
  models.
\newblock \emph{Advances in Neural Information Processing Systems}, 2022.

\bibitem[Koren and Lanthaler(2025)]{anonymous2025merging}
Nodens Koren and Samuel Lanthaler.
\newblock Merging memory and space: A state space neural operator.
\newblock \emph{Submitted to Transactions on Machine Learning Research}, 2025.
\newblock URL \url{https://openreview.net/forum?id=SwLxxz0x58}.
\newblock Under review.

\bibitem[Kovachki et~al.(2021)Kovachki, Lanthaler, and
  Mishra]{kovachki2021universalapproximationerrorbounds}
Nikola Kovachki, Samuel Lanthaler, and Siddhartha Mishra.
\newblock On universal approximation and error bounds for fourier neural
  operators, 2021.
\newblock URL \url{https://arxiv.org/abs/2107.07562}.

\bibitem[Kovachki et~al.(2023)Kovachki, Li, Liu, Azizzadenesheli, Bhattacharya,
  Stuart, and Anandkumar]{Kovachki-NeuralOperator:20223}
Nikola Kovachki, Zongyi Li, Burigede Liu, Kamyar Azizzadenesheli, Kaushik
  Bhattacharya, Andrew Stuart, and Anima Anandkumar.
\newblock Neural operator: Learning maps between function spaces with
  applications to pdes.
\newblock \emph{Journal of Machine Learning Research}, 24\penalty0
  (89):\penalty0 1--97, 2023.
\newblock URL \url{http://jmlr.org/papers/v24/21-1524.html}.

\bibitem[Kovachki et~al.(2024)Kovachki, Lanthaler, and
  Stuart]{kovachki2024operator}
Nikola~B Kovachki, Samuel Lanthaler, and Andrew~M Stuart.
\newblock Operator learning: Algorithms and analysis.
\newblock \emph{Handbook of Numerical Analysis}, 25:\penalty0 419--467, 2024.

\bibitem[Lanthaler et~al.(2025)Lanthaler, Stuart, and
  Trautner]{lanthaler2025discretizationerrorfourierneural}
Samuel Lanthaler, Andrew~M. Stuart, and Margaret Trautner.
\newblock Discretization error of fourier neural operators, 2025.
\newblock URL \url{https://arxiv.org/abs/2405.02221}.

\bibitem[Li et~al.(2020)Li, Kovachki, Azizzadenesheli, Liu, Stuart,
  Bhattacharya, and Anandkumar]{li2020multipole}
Zongyi Li, Nikola Kovachki, Kamyar Azizzadenesheli, Burigede Liu, Andrew
  Stuart, Kaushik Bhattacharya, and Anima Anandkumar.
\newblock Multipole graph neural operator for parametric partial differential
  equations.
\newblock \emph{Advances in Neural Information Processing Systems},
  33:\penalty0 6755--6766, 2020.

\bibitem[Li et~al.(2021)Li, Kovachki, Azizzadenesheli, Bhattacharya, Stuart,
  Anandkumar, et~al.]{lifourier}
Zongyi Li, Nikola~Borislavov Kovachki, Kamyar Azizzadenesheli, Kaushik
  Bhattacharya, Andrew Stuart, Anima Anandkumar, et~al.
\newblock Fourier neural operator for parametric partial differential
  equations.
\newblock In \emph{International Conference on Learning Representations}, 2021.

\bibitem[Li et~al.(2023{\natexlab{a}})Li, Huang, Liu, and
  Anandkumar]{li2023fourier}
Zongyi Li, Daniel~Zhengyu Huang, Burigede Liu, and Anima Anandkumar.
\newblock Fourier neural operator with learned deformations for pdes on general
  geometries.
\newblock \emph{Journal of Machine Learning Research}, 2023{\natexlab{a}}.

\bibitem[Li et~al.(2023{\natexlab{b}})Li, Kovachki, Choy, Li, Kossaifi, Otta,
  Nabian, Stadler, Hundt, Azizzadenesheli, and Anandkumar]{Li23-gino}
Zongyi Li, Nikola Kovachki, Chris Choy, Boyi Li, Jean Kossaifi, Shourya Otta,
  Mohammad~Amin Nabian, Maximilian Stadler, Christian Hundt, Kamyar
  Azizzadenesheli, and Animashree Anandkumar.
\newblock Geometry-informed neural operator for large-scale 3d pdes.
\newblock In A.~Oh, T.~Naumann, A.~Globerson, K.~Saenko, M.~Hardt, and
  S.~Levine, editors, \emph{Advances in Neural Information Processing Systems},
  volume~36, pages 35836--35854. Curran Associates, Inc., 2023{\natexlab{b}}.

\bibitem[Li et~al.(2024)Li, Zheng, Kovachki, Jin, Chen, Liu, Azizzadenesheli,
  and Anandkumar]{li2024physics}
Zongyi Li, Hongkai Zheng, Nikola Kovachki, David Jin, Haoxuan Chen, Burigede
  Liu, Kamyar Azizzadenesheli, and Anima Anandkumar.
\newblock Physics-informed neural operator for learning partial differential
  equations.
\newblock \emph{ACM/IMS Journal of Data Science}, 1\penalty0 (3):\penalty0
  1--27, 2024.

\bibitem[Lu et~al.(2021)Lu, Jin, Pang, Zhang, and
  Karniadakis]{learning-nonlinear-operators-deeponets}
Lu~Lu, Pengzhan Jin, Guofei Pang, Zhongqiang Zhang, and George~Em Karniadakis.
\newblock Learning nonlinear operators via {DeepONet} based on the universal
  approximation theorem of operators.
\newblock \emph{Nature Machine Intelligence}, 3\penalty0 (3):\penalty0
  218--229, March 2021.

\bibitem[Runst and Sickel(1996)]{RunstSickel+1996}
Thomas Runst and Winfried Sickel.
\newblock \emph{Sobolev Spaces of Fractional Order, Nemytskij Operators, and
  Nonlinear Partial Differential Equations}.
\newblock De Gruyter, Berlin, New York, 1996.
\newblock ISBN 9783110812411.
\newblock \doi{doi:10.1515/9783110812411}.
\newblock URL \url{https://doi.org/10.1515/9783110812411}.

\bibitem[Schmeisser and Triebel(1987)]{SchmeisserTriebel1987}
Hans-J{\"u}rgen Schmeisser and Hans Triebel.
\newblock \emph{Topics in Fourier Analysis and Function Spaces}.
\newblock John Wiley \& Sons, Chichester, 1987.
\newblock ISBN 978-0471910806.

\bibitem[Taylor(2023)]{Taylor2023chap1}
Michael~E. Taylor.
\newblock \emph{Function Space and Operator Theory for Nonlinear Analysis},
  pages 1--106.
\newblock Springer International Publishing, Cham, 2023.
\newblock ISBN 978-3-031-33928-8.
\newblock \doi{10.1007/978-3-031-33928-8_1}.
\newblock URL \url{https://doi.org/10.1007/978-3-031-33928-8_1}.

\bibitem[Tong et~al.(2025)Tong, Trung-Dung, Liu, den Broeck, and
  Niepert]{tong2025learningdiscretizedenoisingdiffusion}
Vinh Tong, Hoang Trung-Dung, Anji Liu, Guy~Van den Broeck, and Mathias Niepert.
\newblock Learning to discretize denoising diffusion odes, 2025.
\newblock URL \url{https://arxiv.org/abs/2405.15506}.

\bibitem[Triebel(1983)]{Triebel1983chap2}
Hans Triebel.
\newblock \emph{Function Spaces on Rn}, pages 33--187.
\newblock Springer Basel, Basel, 1983.
\newblock ISBN 978-3-0346-0416-1.
\newblock \doi{10.1007/978-3-0346-0416-1_2}.
\newblock URL \url{https://doi.org/10.1007/978-3-0346-0416-1_2}.

\end{thebibliography}
\bibliographystyle{plainnat}

\appendix
\onecolumn

\tableofcontents

\clearpage

\iffalse
\section{Fourier Analysis on $\T^d$ and $\T^d_N$}\label{appdx-sec:fourier-analysis-torus}

\adri{brief introductory paragraph + add precise definition of the flat torus $\T^d$, say that it is a compact $\mathscr{C}^{\infty}$ manifold without boundary, identified with its \emph{canonical} fundamental domain $\intff{0}{1}^d$. Also say we identify functions $v \colon \T^d \to \R^H$ with $\Z^d$-periodic functions from $\R^d \to \R^H$}

We recall the definition of the grid $\T^d_N$ used to approximate the integral:
 \[ \T^d_N := \frac{1}{N} [N]^d = \enstq{\left( \frac{i_1}{N}, \ldots, \frac{i_d}{N} \right) \in \N_0^d}{\text{for all $j \in \ens{1, \ldots, d}$, $0 \le i_j < N$}}. \]

\fi

\section{Fourier Analysis on $\T^d$ and $\T^d_N$}\label{appdx-sec:fourier-analysis-torus}

In this section we briefly recall the basic notions of Fourier analysis on the flat torus and on its discrete approximation. The $d$--dimensional flat torus $\T^d$ is defined as the quotient space
\[ \T^d := \R^d /\, \Z^d, \]
that is, $\R^d$ with points differing by an element of $\Z^d$ are identified. It is a compact $\mathscr{C}^{\infty}$ manifold without boundary and can be canonically identified with the fundamental domain
\[ [0,1)^d \subset \R^d. \]
Throughout, we identify functions $v \colon \T^d \to \R^H$ with $\Z^d$--periodic functions $v \colon \R^d \to \R^H$. We also recall the definition of the uniform grid $\T^d_N$ used to approximate integrals over $\T^d$:
\[ \T^d_N \simeq \frac{1}{N}[N]^d := \enstq{\left( \frac{i_1}{N}, \ldots, \frac{i_d}{N} \right) \in \N_0^d}
{\text{for all $j \in \ens{1,\ldots,d}$, $0 \le i_j < N$}}. \]

\subsection{Fourier Transform and Inversion Formula}

\begin{definition}[Fourier Transform on $\T^d$]\label{def:continuous-fourier-transform}
    For $f \colon \T^d \to \C^m$, the continuous Fourier transform on $\T^d$ is defined by:
        \[\widehat{f}(\xi) := \int_{\T^d} f(x) e^{-2 i \pi \xi \cdot x} \odif{x}, \quad \xi \in \Z^d.\]
    Then the inversion formula is
        \[ f(x) = \sum_{\xi \in \Z^d} \widehat{f}(\xi) e^{2 i \pi \xi \cdot x}, \quad x \in \T^d.\]
\end{definition}
 
\begin{definition}[Discrete Fourier Transform (on $(\nicefrac{\Z}{N \Z}^d$)]\label{def:discrete-fourier-transform}
    For $u \colon \T^d_N \to \C^m$, the discrete Fourier transform on $(\nicefrac{\Z}{N \Z})^d$ is defined by:
        \[\widehat{u}_N(\xi) := \dfrac{1}{N^d} \sum_{x \in \T^d_N} u(x) e^{-2 i \pi \xi \cdot x}, \quad \xi \in \ens{0, 1, \ldots, N - 1}^d.\]
    Then the inversion formula is
        \[ u(x) = \sum_{\xi \in \ens{0, 1, \ldots, N - 1}^d} \widehat{u}_N(\xi) e^{2 i \pi \xi \cdot x}, \quad x \in \T^d_N.\]
\end{definition}

\subsection{Convolution Operator and Some Properties}

We state below some properties of the Fourier transform and its discrete counterpart.
\begin{lemma}[Fourier Transform and Convolution]
    For any kernel $\cK \in L^2(\T^d, \R^{m \times n})$ and any function $v \in L^2(\T^d, \R^n)$, we have:
        \[  \widehat{(\cK * v)}(\xi) = \widehat{\cK} (\xi) \widehat{v}(\xi), \quad \text{for all $\xi \in \Z^d$}. \]
\end{lemma}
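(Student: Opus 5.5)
We prove the convolution theorem directly from \Cref{def:continuous-fourier-transform}, using Fubini and the translation invariance of the Haar (Lebesgue) measure on $\T^d$. Fix $\xi \in \Z^d$. By definition,
\[ \widehat{(\cK * v)}(\xi) = \int_{\T^d} \left( \int_{\T^d} \cK(x-y) v(y) \odif{y} \right) e^{-2i\pi \xi\cdot x} \odif{x}. \]

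\textbf{Step 1: well-definedness and integrability.} Since $\cK \in L^2(\T^d,\R^{m\times n})$ and $v \in L^2(\T^d,\R^n)$, and $\T^d$ has finite measure, both are in $L^1$. The joint integrand $(x,y)\mapsto \cK(x-y)v(y)e^{-2i\pi\xi\cdot x}$ is measurable on $\T^d\times\T^d$. Its norm is bounded by $\normop{\cK(x-y)}\,\norm{v(y)}$, and Tonelli with the substitution $x\mapsto x+y$ gives
\[ \iint \normop{\cK(x-y)}\,\norm{v(y)} \odif{x}\odif{y} = \norm{\normop{\cK}}_{L^1}\norm{v}_{L^1} < \infty. \]
So the integrand is in $L^1(\T^d\times\T^d)$, and in particular $\cK*v$ is defined almost everywhere and lies in $L^1$. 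One could alternatively use Cauchy--Schwarz to see that $\cK*v$ is even bounded, but this is not needed.

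\textbf{Step 2: Fubini and change of variables.} Apply Fubini to swap the integrals, and write $e^{-2i\pi\xi\cdot x} = e^{-2i\pi\xi\cdot(x-y)}e^{-2i\pi\xi\cdot y}$. For fixed $y$, substitute $z = x-y$. This is legitimate on $\T^d$ because functions are $\Z^d$-periodic and the measure is translation invariant, and $e^{-2i\pi\xi\cdot z}$ is well defined modulo $\Z^d$ since $\xi\in\Z^d$. The inner integral becomes
\[ \int_{\T^d}\cK(z)e^{-2i\pi\xi\cdot z}\odif{z} = \widehat{\cK}(\xi). \]
This matrix is independent of $y$, so it factors out to the left of $v(y)$ (order matters since these are matrices). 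What remains is $\widehat{\cK}(\xi)\int v(y)e^{-2i\pi\xi\cdot y}\odif{y} = \widehat{\cK}(\xi)\widehat{v}(\xi)$.

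\textbf{Main obstacle.} The only delicate point is justifying Fubini with matrix-valued kernels. I would handle it entrywise: the $(i)$-th component of $\cK*v$ is $\sum_j \cK_{ij}*v_j$, a finite sum of scalar convolutions of $L^1$ functions. The scalar theorem applies to each term, and summing over $j$ recovers the matrix–vector product $\widehat{\cK}(\xi)\widehat{v}(\xi)$. Periodicity ensures the change of variables introduces no boundary terms.
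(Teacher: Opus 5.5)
Your proposal is correct and follows essentially the same route as the paper: swap the integrals by Fubini, split $e^{-2i\pi\xi\cdot x}=e^{-2i\pi\xi\cdot(x-y)}e^{-2i\pi\xi\cdot y}$, and use translation invariance on $\T^d$ to pull out $\widehat{\cK}(\xi)$ on the left of $\widehat{v}(\xi)$. Your Tonelli bound $\iint \normop{\cK(x-y)}\,\norm{v(y)}\odif{x}\odif{y}=\norm{\normop{\cK}}_{L^1}\norm{v}_{L^1}$ justifies the Fubini step more explicitly than the paper does; the paper only notes that $\norm{\cdot}_{L^1}\le\norm{\cdot}_{L^2}$ on the unit torus.
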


%\adri{si si, on préfère avoir plus ou moins ça à porter de main au cas où? Pour le moment c'est encore un peu brouillon la preuve du lemme 3.10 et il est possible que suivant le noyeu de convolution $\cK_t$ utilisé, l'erreur liée à la discrétisation de la convolution empire la dépendance en $N$, nous n'en savons pas davantage pour le moment, nous finissions la preuve globale et nous raffinerons si besoin est (et il y en aura besoin de la faire :D)}

\begin{proof}
Let $\xi \in \Z^d$, we have
\begin{align*}
    \widehat{(\cK * v)}(\xi) &= \int_{\T^d} (\cK * v) (x) e^{-2i\pi \xi \cdot x} \odif{x} \\
    &= \int_{\T^d} \int_{\T^d} \cK(x - y) v(y) e^{-2i\pi \xi \cdot x} \odif{y} \odif{x} \\
    \oversetrel{rel:128c6b01-cfb6-4211-a71c-33ec1a585f5d}&{=} \int_{\T^d} \left( \int_{\T^d} \cK(x - y) e^{-2i\pi \xi \cdot x} \odif{x} \right)  v(y) \odif{y} \\
    &= \int_{\T^d}  \left( \int_{\T^d} \cK(x - y) e^{-2i\pi \xi \cdot (x-y)} \odif{x} \right) v(y) e^{-2i \pi \xi \cdot y} \odif{y} \\
    &= \widehat{\cK}(\xi) \int_{\T^d}  v(y) e^{-2i \pi \xi \cdot y}  \odif{y} \\
    &= \widehat{\cK}(\xi) \widehat{v}(\xi),
\end{align*}
as claimed. The permutation of integrals in~\relref{rel:128c6b01-cfb6-4211-a71c-33ec1a585f5d} follows from  the Fubini–Tonelli theorem and the fact that on the compact $\T^d$,
    \[ \norm{\cdot}_{L^1(\T^d)} \le \sqrt{\abs{\mathrm{Vol}(\T^d)}} \norm{\cdot}_{L^2(\T^d)} = \norm{\cdot}_{L^2(\T^d)}, \]
which follows by Cauchy-Schwarz's inequality.
\end{proof}

\begin{lemma}[Fourier Transform and Product]\label{appdx-lem:fourier-transform-product-functions}
    For any kernel $\cK \in L^2(\T^d, \R^{m \times n})$ and any function $v \in L^2(\T^d, \R^n)$, we have:
        \[  \widehat{(\cK \times v)}(\xi) = \left( \widehat{\cK} * \widehat{v} \right)(\xi) = \sum_{\eta \in \Z^d} \widehat{\cK}(\xi - \eta) \widehat{v}(\eta), \quad \text{for all $\xi \in \Z^d$}. \]
\end{lemma}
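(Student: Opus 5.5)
We compute $\widehat{(\cK \times v)}(\xi)$ directly from \Cref{def:continuous-fourier-transform}, substituting the Fourier inversion formula for one of the two factors. Write $v(x) = \sum_{\eta \in \Z^d} \widehat{v}(\eta) e^{2 i \pi \eta \cdot x}$, where the series converges in $L^2(\T^d, \C^n)$ by Parseval/Plancherel since $v \in L^2$. Then
\begin{align*}
    \widehat{(\cK \times v)}(\xi) = \int_{\T^d} \cK(x) v(x) e^{-2 i \pi \xi \cdot x} \odif{x} = \sum_{\eta \in \Z^d} \left( \int_{\T^d} \cK(x) e^{-2 i \pi (\xi - \eta) \cdot x} \odif{x} \right) \widehat{v}(\eta) = \sum_{\eta \in \Z^d} \widehat{\cK}(\xi - \eta) \widehat{v}(\eta),
\end{align*}
which is the claimed identity with the discrete convolution on $\Z^d$.

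The only nontrivial point, and the step I expect to need care, is interchanging the integral and the infinite sum in the second equality. Since $\cK \in L^2$ and $v \in L^2$, the product $\cK v$ is only in $L^1$, so I avoid pointwise arguments. Instead, fix $\xi$ and view the map $w \mapsto \int_{\T^d} \cK(x) w(x) e^{-2 i \pi \xi \cdot x} \odif{x}$ as a bounded linear functional on $L^2(\T^d, \C^n)$. By Cauchy--Schwarz, applied entrywise (or with the operator norm, $\normop{\cK(x)} \le \norm{\cK(x)}_F$), its norm is at most $\norm{\cK}_{L^2}$.

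The partial sums $S_M v := \sum_{\norm{\eta}_\infty \le M} \widehat{v}(\eta) e^{2 i \pi \eta \cdot x}$ converge to $v$ in $L^2$. By continuity of the functional, $\widehat{(\cK v)}(\xi) = \lim_{M} \sum_{\norm{\eta}_\infty \le M} \widehat{\cK}(\xi - \eta)\widehat{v}(\eta)$, where for each finite sum the interchange is trivial by linearity of the integral.

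It then remains to check that the full series $\sum_{\eta} \widehat{\cK}(\xi - \eta) \widehat{v}(\eta)$ converges absolutely, so that the limit of the symmetric partial sums equals the unordered sum. By Cauchy--Schwarz in $\ell^2(\Z^d)$ and Parseval, with entrywise Frobenius norms,
\[ \sum_{\eta} \norm{\widehat{\cK}(\xi - \eta)}_F \norm{\widehat{v}(\eta)}_2 \le \Big(\sum_{\eta} \norm{\widehat{\cK}(\eta)}_F^2\Big)^{1/2} \Big(\sum_{\eta} \norm{\widehat{v}(\eta)}_2^2\Big)^{1/2} = \norm{\cK}_{L^2} \norm{v}_{L^2} < \infty. \]
This concludes the argument.
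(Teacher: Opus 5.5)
Your proof is correct and follows the same route as the paper: expand $v$ in its Fourier series, interchange the sum with the integral, and recognize $\widehat{\cK}(\xi-\eta)$. The paper performs the interchange without comment; your argument supplies it, using the $L^2$-boundedness of $w \mapsto \int_{\T^d} \cK(x) w(x) e^{-2 i \pi \xi \cdot x}\,\mathrm{d}x$ to pass to the limit along partial sums, and the Cauchy--Schwarz/Parseval bound to show the series converges absolutely.
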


\begin{proof}
    We have:
    \begin{align*}
        \widehat{(\cK \times v)}(\xi) &= \int_{\T^d} \cK (x) v(x) e^{- 2 i \pi \xi \cdot x} \odif{x} \\
        &= \int_{\T^d} \cK (x) \sum_{\eta \in \Z^d} \widehat{v}(\eta) e^{2i \pi \eta \cdot x} e^{- 2 i \pi \xi \cdot x} \odif{x} \\ 
        &= \sum_{\eta \in \Z^d} \int_{\T^d} \cK(x) e^{-2 i \pi (\xi - \eta) \cdot x} \odif{x} \, \widehat{v}(\eta) \\
        &= \sum_{\eta \in \Z^d} \widehat{\cK}(\xi - \eta) \widehat{v}(\eta) \\
        &=  \left( \widehat{\cK} * \widehat{v} \right)(\xi),
    \end{align*}
    which concludes the proof.
\end{proof}

\begin{lemma}\label{appdx-lem:sum-over-grid}
    For any $\xi \in \Z^d$, we have
        \[ \sum_{k \in \ens{0, 1, \ldots, N - 1}^d} e^{\frac{2 i \pi}{N}\,  \xi \cdot k} = \begin{cases}
            N^d, & \text{if $\xi \equiv 0\, [N]$;} \\
            0, & \text{otherwise.}
        \end{cases} \]
\end{lemma}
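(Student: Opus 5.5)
The statement to prove is \Cref{appdx-lem:sum-over-grid}. The plan is to factor the $d$--dimensional sum into a product of one--dimensional geometric sums and evaluate each one. Write $k=(k_1,\ldots,k_d)$ and $\xi=(\xi_1,\ldots,\xi_d)$. Since $e^{\frac{2i\pi}{N}\xi\cdot k}=\prod_{j=1}^d e^{\frac{2i\pi}{N}\xi_j k_j}$ and the index set $\ens{0,\ldots,N-1}^d$ is a Cartesian product, the sum equals
\[
\prod_{j=1}^d S(\xi_j), \qquad S(m) := \sum_{k=0}^{N-1} \omega_m^{k}, \quad \omega_m := e^{\frac{2i\pi}{N} m}.
\]

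Next I treat the one--dimensional sum $S(m)$ for $m\in\Z$. If $m\equiv 0\,[N]$, then $\omega_m=e^{2i\pi m/N}=1$, so every term equals $1$ and $S(m)=N$. If $m\not\equiv 0\,[N]$, then $\omega_m\neq 1$, because $e^{2i\pi m/N}=1$ exactly when $m/N\in\Z$. The geometric series formula then gives
\[
S(m)=\frac{1-\omega_m^N}{1-\omega_m},
\]
and $\omega_m^N=e^{2i\pi m}=1$, so $S(m)=0$.

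Finally, I combine the factors, reading ``$\xi\equiv 0\,[N]$'' componentwise, i.e.\ $\xi\in N\Z^d$. If every $\xi_j\equiv 0\,[N]$, the product is $N^d$. Otherwise at least one factor vanishes, so the product is $0$. This is exactly the claimed dichotomy.

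There is no real obstacle. The only points needing care are the justification $\omega_m\neq1$ when $N\nmid m$, which the division in the geometric sum requires, and stating that the congruence on vectors is meant componentwise.
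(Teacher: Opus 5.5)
Your proposal is correct and follows essentially the same route as the paper: factor the $d$--dimensional sum into a product of one--dimensional geometric sums, then evaluate each factor as $N$ or $0$ according to whether $N \mid \xi_j$. You go slightly further than the paper by actually justifying the one--dimensional evaluation via the geometric series with $\omega_m \neq 1$ and $\omega_m^N = 1$, which the paper only asserts. You also index the final product correctly over $j = 1, \ldots, d$, where the paper's proof writes $\prod_{r=1}^N$.
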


\begin{proof}
    Let us fix $\xi = (\xi_1, \ldots, \xi_d) \in \Z^d$ we have
        \begin{align*}
            \sum_{k \in \ens{0, 1, \ldots, N - 1}^d} e^{\frac{2 i \pi}{N}\,  \xi \cdot k} &= \sum_{k = (k_1, \ldots, k_d) \in \ens{0, \dots, N-1}^d} \prod_{r = 1}^d e^{\frac{2 i \pi}{N} \, \xi_r k_r} \\
            &= \prod_{r = 1}^d \sum_{k_r = 0}^{N - 1} e^{\frac{2 i \pi}{N} \, \xi_r k_r},
        \end{align*}
    and, for $r \in \ens{1, 2, \ldots, d}$ fixed, we have
        \[ \sum_{k_r = 0}^{N - 1} e^{\frac{2 i \pi}{N} \, \xi_r k_r} = \begin{cases}
            N, & \text{ if } \xi_r \equiv 0\, [N]; \\
            0, & \text{ otherwise;} \\
        \end{cases} \]
    hence, this gives
        \begin{align*}
            \sum_{k \in \ens{0, 1, \ldots, N - 1}^d} e^{\frac{2 i \pi}{N}\,  \xi \cdot k} &= \prod_{r = 1}^N N \mathbbm{1}_{\ens{\xi_r \equiv 0\, [N]}} \\
            &= N^d \mathbbm{1}_{\ens{\xi \equiv 0\, [N]}},
        \end{align*}
    as claimed.
\end{proof}

\begin{lemma}
    For any function $f \in L^2(\T^d)$, we have
        \[ \dfrac{1}{N^d} \sum_{y \in \T^d_N} f(y) = \sum_{m \in \Z^d} \widehat{f}(m N), \]
    and
        \[ \int_{\T^d} f(y) \odif{y} = \widehat{f}(0). \]
\end{lemma}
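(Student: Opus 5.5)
The plan is to expand $f$ in its Fourier series and then apply \Cref{appdx-lem:sum-over-grid} term by term. The second identity is immediate, since setting $\xi = 0$ in \Cref{def:continuous-fourier-transform} gives $\widehat f(0) = \int_{\T^d} f(y)\odif{y}$. So the work is all in the first identity, the Poisson summation, or aliasing, formula on the grid.

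For the first identity, I write $f(y) = \sum_{\xi \in \Z^d} \widehat f(\xi) e^{2i\pi \xi\cdot y}$ and evaluate it at $y = k/N$ with $k \in \ens{0,\dots,N-1}^d$. Averaging over the grid and interchanging the two sums gives
\[ \frac{1}{N^d}\sum_{y\in\T^d_N} f(y) = \sum_{\xi\in\Z^d} \widehat f(\xi)\, \frac{1}{N^d}\sum_{k} e^{\frac{2i\pi}{N}\xi\cdot k}. \]
By \Cref{appdx-lem:sum-over-grid}, the inner average equals $\mathbbm{1}_{\ens{\xi \equiv 0\,[N]}}$. Hence only the frequencies $\xi = mN$ with $m\in\Z^d$ survive, and this is exactly the claimed formula.

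The main obstacle is justifying these manipulations. For a general $f \in L^2(\T^d)$, point values are not defined, so the grid sum is meaningless as written. The Fourier series also converges only in $L^2$, not pointwise, and $\sum_m \widehat f(mN)$ need not converge. I would therefore state and prove the identity under the extra hypothesis $\sum_{\xi}\norm{\widehat f(\xi)} < \infty$, i.e.\ $f$ in the Wiener algebra.

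Under that hypothesis the Fourier series converges absolutely and uniformly to a continuous representative of $f$, so point evaluation is legitimate. The interchange of the finite sum over $k$ with the absolutely convergent sum over $\xi$ is then trivial, and $\sum_m \widehat f(mN)$ converges absolutely.

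This hypothesis covers every case where the lemma is used later. If $f\in H^s$ with $s>d/2$, Cauchy--Schwarz gives $\sum_\xi \norm{\widehat f(\xi)} \le \bigl(\sum_\xi (1+\norm{\xi}^2)^{-s}\bigr)^{1/2}\norm{f}_{H^s}<\infty$. The same conclusion holds for products $\cK(x-\cdot)\,v(\cdot)$ in which $\widehat\cK$ is either compactly supported or decays faster than $\norm{\xi}^{-d}$, together with $v\in H^s$; I would check this using \Cref{appdx-lem:fourier-transform-product-functions} and Young's inequality on $\ell^1(\Z^d)$.
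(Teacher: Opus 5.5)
Your proof is correct and takes the paper's route for the first identity: expand $f$ in its Fourier series, average over the grid, interchange the sums, and apply \Cref{appdx-lem:sum-over-grid}. For the second identity, reading it directly off the definition at $\xi=0$ is simpler than the paper's term-by-term integration of the series, and your extra hypothesis $\sum_{\xi}\norm{\widehat f(\xi)}<\infty$ is needed rather than a deviation: it justifies the point evaluation and the interchange that the paper performs without comment, since for general $f\in L^2(\T^d)$ grid values are undefined and even for continuous $f$ the series $\sum_{m}\widehat f(mN)$ need not converge.
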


\begin{proof}
    Let $f \in L^2(\T^d)$, then, we have
    \[ f(y) = \sum_{\xi \in \Z^d} \widehat{f}(\xi) e^{2 i \pi \xi \cdot y},\]
and it follows that:
    \begin{align}
        \dfrac{1}{N^d} \sum_{y \in \T^d_N} f(y) &= \dfrac{1}{N^d} \sum_{y \in \T^d_N}  \sum_{\xi \in \Z^d} \widehat{f}(\xi) e^{2 i \pi \xi \cdot y} \\   
        &= \sum_{\xi \in \Z^d} \widehat{f}(\xi) \left[ \dfrac{1}{N^d} \sum_{k \in \ens{0, \dots, N-1}^d} e^{\frac{2 i \pi}{N} \, \xi \cdot k} \right] \\
        \oversetref{Lem.}{\ref{appdx-lem:sum-over-grid}}&{=} \sum_{\xi \in \Z^d} \widehat{f}(\xi) \mathbbm{1}_{\ens{\xi \equiv 0\ [N]}} \\
        &= \sum_{m \in \Z^d} \widehat{f}(m N).
    \end{align}
Moreover, notice that
    \[ \int_{\T^d} f(y) \odif{y} = \int_{\T^d} \left( \sum_{\xi \in \Z^d} \widehat{f}(\xi) e^{2i \pi \xi \cdot y} \right) \odif{y} = \sum_{\xi \in \Z^d} \widehat{f}(\xi) \underbrace{\left( \int_{\T^d} e^{2i\pi \xi \cdot y} \odif{y} \right)}_{= \begin{cases}
        0, & \text{if $\xi \neq 0$;} \\
        1, & \text{if $\xi = 0$;}
    \end{cases}} = \widehat{f}(0), \]
after switching the sum of the integral. This achieves the proof of the lemma.
\end{proof}

\begin{remark}
    Notably, the above lemma shows that the evaluation of a function $f \in L^2(\T^d)$ on the grid can be rewritten as
        \[ \dfrac{1}{N^d} \sum_{y \in \T^d_N} f(y) - \int_{\T^d} f(y) \odif{y} = \sum_{m \in \Z^d \setminus \{0\}} \widehat{f}(m N). \label{eq:formula-difference} \]
\end{remark}

\begin{lemma}[Continuous Parseval's Identity on $\T^d$]
    Let $f \in L^2(\T^d)$ with Fourier coefficients
    \[ \widehat{f}(\xi) = \int_{\T^d} f(x) e^{-2 \pi i \xi \cdot x} \odif{x}, 
        \quad \xi \in \Z^d. \]
    Then Parseval's identity holds:
    \[  \norm{f}_{L^2(\T^d)}^2 = \sum_{\xi \in \Z^d} \abs{\widehat{f}(\xi)}^2.  \]
\end{lemma}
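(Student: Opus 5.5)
The statement to prove is the continuous Parseval identity on $\T^d$. The plan is to show that the exponentials $e_\xi(x) := e^{2i\pi \xi\cdot x}$, $\xi \in \Z^d$, form an orthonormal basis of $L^2(\T^d)$. Parseval's identity then follows from the abstract Hilbert-space identity $\norm{f}^2 = \sum_\xi \abs{\ps{f}{e_\xi}}^2$. We note that $\ps{f}{e_\xi} = \int_{\T^d} f(x) e^{-2i\pi\xi\cdot x}\odif{x} = \widehat{f}(\xi)$, matching \Cref{def:continuous-fourier-transform}.

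\textbf{Orthonormality.} Fubini factorises the integral over $[0,1)^d$ into one-dimensional integrals:
\[ \ps{e_\xi}{e_\eta} = \prod_{r=1}^d \int_0^1 e^{2i\pi(\xi_r-\eta_r)x_r}\odif{x_r} = \mathbbm{1}_{\ens{\xi=\eta}}. \]
This is the continuous analogue of \Cref{appdx-lem:sum-over-grid}. Bessel's inequality then gives $\sum_\xi \abs{\widehat f(\xi)}^2 \le \norm{f}_{L^2}^2$ for every $f \in L^2(\T^d)$.

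\textbf{Completeness.} This is the main step. I would argue by density. Trigonometric polynomials form a subalgebra of $\mathscr{C}(\T^d,\C)$ that contains constants, is closed under conjugation and separates points of the compact space $\T^d$. By Stone--Weierstrass, this subalgebra is uniformly dense in $\mathscr{C}(\T^d)$. Continuous functions are in turn dense in $L^2(\T^d)$ (Lebesgue measure on a compact space, cf.~\citet{folland1999real}), and uniform convergence implies $L^2$ convergence on a space of volume $1$. Hence $\mathrm{span}\ens{e_\xi}$ is dense in $L^2(\T^d)$, so the orthonormal system is complete. An alternative that avoids Stone--Weierstrass is to convolve with the tensorised Fejér kernel, which is an approximate identity.

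\textbf{Conclusion.} For a complete orthonormal system, the partial sums $S_F f := \sum_{\xi\in F}\widehat f(\xi)e_\xi$ converge to $f$ in $L^2$ along finite sets $F \uparrow \Z^d$. Pythagoras gives $\norm{S_F f}^2 = \sum_{\xi\in F}\abs{\widehat f(\xi)}^2$, and letting $F \uparrow \Z^d$ yields $\norm{f}_{L^2(\T^d)}^2 = \sum_{\xi\in\Z^d}\abs{\widehat f(\xi)}^2$. The vector-valued case $f \colon \T^d\to\C^m$ follows by summing the scalar identity over components.

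The only nontrivial ingredient is the density in the completeness step. Everything else is routine Hilbert-space bookkeeping.
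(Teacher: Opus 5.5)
Your proof is correct. The paper gives no proof of this lemma; it is stated in the appendix as a classical fact and used without argument. So there is no competing route to compare against. What you supply is the standard proof: orthonormality of the characters $e^{2i\pi\xi\cdot x}$ via Fubini, completeness via Stone--Weierstrass (or the tensorised Fej\'er kernel) together with density of continuous functions in $L^2(\T^d)$, then the abstract Parseval identity. Every hypothesis checks out, including point separation on $\T^d=\R^d/\Z^d$ by the coordinate characters $e^{2i\pi x_r}$.

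The one step you pass over as routine deserves a line, namely the convergence $S_F f\to f$. Given $\varepsilon>0$, density yields a trigonometric polynomial $g$ with frequencies in a finite set $F_0$ and $\norm{f-g}_{L^2}<\varepsilon$. For every finite $F\supseteq F_0$, the map $S_F$ is the orthogonal projection onto $\mathrm{span}\ens{e_\xi : \xi\in F}$, which contains $g$, so $\norm{f-S_F f}_{L^2}\le\norm{f-g}_{L^2}<\varepsilon$.

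Compared with simply citing the result, your argument gives $L^2$ convergence of the Fourier series for every $f\in L^2(\T^d)$. This is exactly what justifies the term-by-term integration in the paper's \emph{Fourier Transform and Product} lemma. It does not, however, justify the pointwise expansions $f(x)=\sum_{\xi}\widehat f(\xi)e^{2i\pi\xi\cdot x}$ used in the paper's grid lemmas; those need extra regularity such as $\widehat f\in\ell^1$.
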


\begin{lemma}[Discrete Parseval's Identity on $\T^d$]\label{lem:parseval-identity-on-grid}
    For any function $f \colon \T^d \to \C^m$, we have
        \[ \normgridvec{f}^2 := \sum_{x \in \T^d_N} \abs{f(x)}^2 = N^d \sum_{\xi \in \ens{0, 1, \ldots, N - 1}^d} \abs{\sum_{m \in \Z^d} \widehat{f}(\xi + m N)}^2. \] 
\end{lemma}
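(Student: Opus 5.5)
We plan to derive the discrete Parseval identity by combining the finite DFT Parseval identity on the grid with the aliasing formula $\frac{1}{N^d}\sum_{y\in\T^d_N} f(y) = \sum_{m\in\Z^d}\widehat f(mN)$ established just above. Throughout we assume $f$ has absolutely summable Fourier coefficients, so that pointwise evaluation on the grid and the interchange of sums are legitimate. The sum over $m$ is an exact statement only under this assumption; it holds, for instance, for $f\in H^s$ with $s>d/2$, which is the regime used in the paper.

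\textbf{Step 1: finite Parseval for the DFT.} Using \Cref{def:discrete-fourier-transform} and \Cref{appdx-lem:sum-over-grid}, expand $\sum_{\xi\in\{0,\dots,N-1\}^d}|\widehat u_N(\xi)|^2$ as a double sum over $x,x'\in\T^d_N$. The inner sum over $\xi$ equals $N^d\mathbbm{1}_{x=x'}$, since $N(x-x')\in\Z^d$ vanishes mod $N$ only when $x=x'$ on the grid. This gives
\[ \sum_{x\in\T^d_N}|u(x)|^2 = N^d\sum_{\xi\in\{0,\dots,N-1\}^d}|\widehat u_N(\xi)|^2 . \]

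\textbf{Step 2: aliasing of DFT coefficients.} Fix $\xi$ and apply the preceding lemma, componentwise for vector-valued $f$, to the function $g(y):=f(y)e^{-2i\pi\xi\cdot y}$. Since $\widehat g(\eta)=\widehat f(\eta+\xi)$, we obtain
\[ \widehat{(f|_{\T^d_N})}_N(\xi)=\frac1{N^d}\sum_{y\in\T^d_N} g(y)=\sum_{m\in\Z^d}\widehat f(\xi+mN). \]
Substituting this into Step 1 with $u=f|_{\T^d_N}$ yields the claimed identity.

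\textbf{Main obstacle: regularity.} The delicate point is justifying the aliasing lemma for $g$. One needs the Fourier series to converge pointwise on the grid and the sums to be interchangeable. Mere $L^2$ membership is insufficient, since grid values are then not even defined. We will therefore invoke absolute summability $\sum_\eta|\widehat f(\eta)|<\infty$, which gives uniform convergence of the Fourier series. Under that assumption Fubini for sums applies directly, and absolute summability is preserved by the shift $\eta\mapsto\eta+\xi$.
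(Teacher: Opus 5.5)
Your proof is correct, and it takes a somewhat different route from the paper's. The paper does one direct computation. It substitutes the Fourier series of $f$ into $\sum_{x\in\T^d_N}|f(x)|^2$ and interchanges the finite grid sum with the double series over $(\xi,\eta)\in\Z^d\times\Z^d$. It then applies \Cref{appdx-lem:sum-over-grid} to get $N^d\mathbbm{1}_{\{\xi\equiv\eta\,[N]\}}$, and regroups the surviving pairs by residue class $r\in\{0,\dots,N-1\}^d$.

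You instead factor through the DFT. First comes finite Parseval, which is orthogonality of characters in the frequency variable. Then comes the aliasing formula $\widehat{(f|_{\T^d_N})}_N(\xi)=\sum_{m\in\Z^d}\widehat f(\xi+mN)$, obtained by applying the paper's grid-average lemma to the modulated function $y\mapsto f(y)e^{-2i\pi\xi\cdot y}$. Both routes rest on the same character-orthogonality lemma. Yours is more modular: it separates the purely finite-dimensional identity from the Poisson-summation step and reuses a lemma already in the paper. The paper's route is self-contained in a single calculation.

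A genuine advantage of your write-up is that you state the hypothesis the paper leaves implicit. The paper's pointwise expansion on the grid and its regrouping need the same kind of summability, and the lemma as literally stated ("for any function $f$") is false without it.

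One small tightening is needed. Absolute summability of $\widehat f$ alone does not determine the grid values: the indicator of the finite set $\T^d_N$ has $\widehat f\equiv 0$, yet $\normgridvec{f}^2=N^d$. So you should also assume $f$ is the continuous representative, i.e.\ equal at every point to its uniformly convergent Fourier series. For $f\in H^s$ with $s>d/2$ this is automatic once that representative is chosen.
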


\begin{proof}
    We have
    \begin{align}
        \sum_{x \in \T^d_N} \abs{f(x)}^2 &= \sum_{x \in \T^d_N} \ps{f(y)}{\overline{f(y)}} \\
        &= \sum_{x \in \T^d_N} \ps{\sum_{\xi \in \Z^d} \widehat{f}(\xi) e^{2 i \pi \xi \cdot x}}{\overline{\sum_{\eta \in \Z^d} \widehat{f}(\eta) e^{2 i \pi \eta \cdot x}}} \\
        &= \sum_{x \in \T^d_N} \sum_{\xi \in \Z^d} \sum_{\eta \in \Z^d} \widehat{f}(\xi) e^{2 i \pi \xi \cdot x} \overline{\widehat{f}(\eta)} e^{-2 i \pi \eta \cdot x} \\
        &= \sum_{\xi \in \Z^d} \sum_{\eta \in \Z^d} \widehat{f}(\xi) \overline{\widehat{f}(\eta)} \sum_{x \in \T^d_N} e^{2 i \pi (\xi - \eta) \cdot x} \\
        \oversetref{Lem.}{\ref{appdx-lem:sum-over-grid}}&{=} N^d \sum_{\xi \in \Z^d} \sum_{\eta \in \Z^d} \widehat{f}(\xi) \overline{\widehat{f}(\eta)} \mathbbm{1}_{\xi \equiv \eta \, [N]} \\
        &= N^d \sum_{r \in \ens{0, 1, \ldots, N - 1}^d} \sum_{\xi, \eta \in (r + N\Z^d)} \widehat{f}(\xi) \overline{\widehat{f}(\eta)} \\
        &= N^d \sum_{r \in \ens{0, 1, \ldots, N - 1}^d} \abs{\sum_{m \in \Z^d} \widehat{f}(r + m N)}^2,
    \end{align}
    as claimed.
\end{proof}

\newpage
\section{Function Spaces and Embeddings}\label{appdx-sec:function-spaces}

This appendix collects notation, definitions, and standard results on function spaces used throughout the paper.

\subsection{Notation and Preliminaries}

\subsubsection{Multi-Index Notation}\label{appdx-subsec:multi-index}

Following standard notations, we let $\alpha = (\alpha_1, \dots, \alpha_d) \in \mathbb{N}_0^d$ be a \emph{multi-index}, and we define its \emph{length} as $\smash{\abs{\alpha} := \sum\limits_{i=1}^d \alpha_i}$. Moreover, given $x = (x_1, \ldots, x_d) \in \R^d$, we define the monomial of degree $\abs{\alpha}$ as
    \[ x^{\alpha} := \prod_{i = 1}^d x_i^{\alpha_i}. \]

For a sufficiently regular function $f \colon \R^d \to \K$ or $f \colon \T^d \to \K$ with $\K$ being $\R$ or $\C$, the partial derivative of order $\abs{\alpha}$ corresponding to the multi-index $\alpha$ is denoted by
\[ \partial^\alpha f := \frac{\partial^{|\alpha|} f}{\partial x_1^{\alpha_1} \cdots \partial x_d^{\alpha_d}} = \partial_{x_1}^{\alpha_1} \cdots \partial_{x_d}^{\alpha_d} f = \partial_1^{\alpha_1} \cdots \partial_d^{\alpha_d} f. \]
and, when convenient, we write $D^\alpha f$ for $\partial^\alpha f$.

This notation is used in the definition of the functions spaces manipulated in this work.

%We also adopt the standard shorthand:
%\[ \int_{\mathbb{R}^d} f(x)\,dx := \int_{\mathbb{R}} \cdots \int_{\mathbb{R}} f(x_1, \dots, x_d)\,dx_1 \dots dx_d, \]

\subsection{Function Spaces}

Since we work with functions defined either on the $d$-dimensional torus $\T^d$ (for instance, the input functions of the neural operator) or on $\R^d$ (such as activation functions, including \textsc{ReLU}, \textsc{GELU} and others), we formulate the function spaces below in both settings, for clarity and self-containedness. For simplicity, we assume all functions to be \emph{real-valued}.

Before introducing the various function spaces, we recall the notion of the support of a function, which will be used repeatedly throughout this subsection.
\begin{definition}[Support of a Function]\label{appdx-def:support-function}
    Let $E$ denote either $\R^d$ or $\T^d$, the support of a function $f \colon E \to \R$ is defined as
        \[ \supp(f) := \adh{\enstq{x \in E}{f(x) \neq 0}}, \]
    which is the \emph{closure} (in $E$, with respect to its canonical topology) of points $x \in E$ where $f$ does not vanish.
\end{definition}

\subsubsection{Smooth Compactly Supported Functions, Schwartz Space and Tempered Distributions}

\paragraph{Case of $\R^d$.}

\begin{definition}[Smooth Functions on $\R^d$]\label{appdx-def:smooth-functions}
    We define the space $\mathscr{C}^{\infty}(\R^d)$ of \emph{real-valued} and \emph{smooth} functions on $\R^d$ as
        \[ \mathscr{C}^{\infty}(\R^d) := \enstq{f \colon \R^d \to \R}{ D^\alpha f \text{ exists and is \emph{continuous} on } \R^d \text{ for every multi-index }\alpha \in \N^d_0}. \]
\end{definition}
Equivalently, $f \in \mathscr{C}^{\infty}(\R^d)$ if all partial derivatives of every order exist (pointwise) and are continuous on $\R^d$. We endow $\mathscr{C}^{\infty}(\R^d)$ with the Fréchet topology generated by the family of seminorms
        \[ \norm{f}_{\alpha, n} := \sup_{\norm{x} \le n} \abs{D^\alpha f(x)}, \]
where $f \in \mathscr{C}^{\infty}(\R^d)$, $\alpha \in \N_0^d$, and $n \in \N$.

\begin{definition}[Smooth Compactly Supported Functions on $\R^d$]\label{appdx-def:smooth-compactly-supported-functions}
    The space $\mathscr{C}^{\infty}_c(\R^d)$ of real-valued, \emph{smooth} and \emph{compactly supported} functions is
        \[ \mathscr{C}^{\infty}_c(\R^d) := \enstq{f \in \mathscr{C}^{\infty}(\R^d)}{\supp(f)\text{ is compact in }\R^d}. \]
\end{definition}
For notational convenience, when needed, we write $\mathscr{C}^{\infty}_K(\R^d)$ the space of smooth, real-valued functions $f$ whose support lies in the compact subset $K \subset \R^d$. It follows that
    \[ \mathscr{C}^{\infty}_c(\R^d) = \bigcup_{K \subset \R^d \text{ compact}} \mathscr{C}^{\infty}_K(\R^d). \]

\begin{definition}[{Schwartz Space $\mathcal{S}(\R^d)$~\citep[Definition~1.18]{Bahouri2011}}]\label{appdx-def:schwartz-space}
    The Schwartz space $\mathcal{S}(\R^d)$ is the set of smooth functions $f \in \mathscr{C}^{\infty}(\R^d)$ such that for any $k \in \N_0$ we have
        \[ \norm{f}_{k, \mathcal{S}} := \sup_{\substack{\alpha \in \N_0^d \\ \abs{\alpha} \le k}}\, \sup_{x \in \R^d} \left( 1 + \norm{x}_2 \right)^k \abs{D^{\alpha} f(x)} < +\infty. \]
\end{definition}

\begin{definition}[{Tempered Distributions on $\R^d$~\citep[Definition~1.20]{Bahouri2011}}]
    The set $\mathcal{S}'(\R^d)$ of tempered distributions consists of all continuous linear functional on $\mathcal{S}(\R^d)$. More precisely, $u \in \mathcal{S}'(\R^d)$ if, and only if there exists a constant $C > 0$, and an integer $k \in \N_0$ such that
        \[ \abs{\ps{u}{\phi}} \le C \norm{\phi}_{k, \mathcal{S}}, \]
    for all $\phi \in \mathcal{S}(\R^d)$.
\end{definition}
Following~\cite{Bahouri2011}, a sequence $(u_n)_{n \in \N}$ of tempered distributions is said to converge to $u \in \mathcal{S}'(\R^d)$ if, and only if
    \[ \ps{u_n}{\phi} \xrightarrow[n \to +\infty]{} \ps{u}{\phi}, \]
for all $\phi \in \mathcal{S}(\R^d)$.

\paragraph{Case of $\T^d$.} In the case of the $d$-dimensional torus, which is compact, any real-valued smooth function $f \in \mathscr{C}^{\infty}(\T^d)$ is automatically compactly supported, i.e.,~\Cref{appdx-def:smooth-functions,appdx-def:smooth-compactly-supported-functions} are equivalent. Consequently, the Schwartz space $\mathcal{S}(\T^d)$ (which in the Euclidean setting lies strictly between $\mathscr{C}^{\infty}(\R^d)$ and $\mathscr{C}^{\infty}_c(\R^d)$) coincides in this case with both $\mathscr{C}^{\infty}_c(\T^d)$ and $\mathscr{C}^{\infty}(\T^d)$. That is,
\[ \mathscr{C}^{\infty}_c(\T^d) = \mathcal{S}(\T^d) = \mathscr{C}^{\infty}(\T^d). \]
As a result, in the case of $\T^d$, the space $\mathcal{S}'(\T^d)$ of tempered distributions coincides with the space of distributions
\[ \left( \mathscr{C}^{\infty}_c(\T^d) \right)', \]
namely the topological dual of $\mathscr{C}^{\infty}_c(\T^d)$.

\subsubsection{Lebesgue Spaces}
Throughout, we let $E$ denote either $\R^d$ or $\T^d$. We equip $E$ with its canonical Borel $\sigma$-algebra and with the Lebesgue measure $\odif{x}$ (in the case $E = \R^d$) or the normalized Haar (Lebesgue) measure on $\T^d$ (so that $\abs{\T^d} = 1$). All functions are identified up to equality almost everywhere (a.e.).

\begin{definition}[Measurable Functions]
    A function $f \colon E \to \R$ is said to be (Lebesgue) measurable if $f^{-1}(U)$ is measurable for every open set $U \subset \R$.
\end{definition}

\begin{definition}[Lebesgue Spaces $L^p(E)$, \citep{folland1999real}]\label{appdx-def:lebesgue}
    Let $0 < p < \infty$. The Lebesgue space $L^p(E)$ is the set of (equivalence classes of a.e. equal) measurable functions $f \colon E \to \R$ such that
        \[ \norm{f}_{L^p(E)} := \left( \int_E |f(x)|^p \odif{x} \right)^{\nicefrac{1}{p}} < +\infty. \]
    
    For $p=\infty$, we define
        \[ L^\infty(E) := \enstq{f \colon E \to \R \text{ measurable}}{\norm{f}_{L^\infty(E)} := \sup_{x\in E} |f(x)| < \infty}. \]
\end{definition}

It is well-known that for $p \ge 1$, the normed space $(L^p(E), \norm{\cdot}_{L^p(E)})$ is a Banach space.% For $0 < p < 1$, the functional $\norm{\cdot}_{L^p(E)}$ is a quasi-norm (triangle inequality holds up to a constant), and $L^p(E)$ is a quasi-Banach space; we will nevertheless use the same notation.

\paragraph{Periodic viewpoint.} As said in~\Cref{appdx-sec:fourier-analysis-torus}, a function $f \colon \T^d \to \R$ can be identified with a $\Z^d$--periodic measurable function on $\R^d$, and we define 
    \[ \int_{\T^d} f(x) \odif{x} = \int_{\intfo{0}{1}^d} f(x) \odif{x}. \]

\subsubsection{Sobolev Spaces}\label{appdx-subsec:sobolev}

Let $E$ be either $\R^d$ or $\T^d$. We define the following spaces, which we used throughout this paper.
\begin{definition}[Integer--order Sobolev Spaces]\label{appdx-def:sobolev-space-integer}
    For $m \in \N_0$ and $1 \le p \le \infty$ the Sobolev space $W^{m, p}(E)$ is the space of (equivalence classes of a.e.\ equal) measurable functions $f \colon E \to \R$ whose \emph{weak derivatives} $D^\alpha f$ exist and belong to $L^p(E)$ for all $\abs{\alpha} \le m$, where $\alpha \in \N_0^d$.
\end{definition}
Notably, the space $W^{m, p}(E)$ is endowed with the norm
    \[ \norm{f}_{W^{m,p}(E)} := \left( \sum_{|\alpha|\le m}\norm{D^\alpha f}_{L^p(E)}^p \right)^{\nicefrac{1}{p}} < +\infty,
    \]
for any real number $ 1 \le p < +\infty$, and
    \[ \norm{f}_{W^{m, \infty}(E)} := \sum_{|\alpha|\le m}\norm{D^\alpha f}_{L^\infty(E)} < +\infty, \]
when $p = +\infty$.

In particular, we write $H^m(E) := W^{m,2}(E)$. For $1 \le p \le \infty$ and $m \in \N_0$, the normed space $(W^{m, p}(E), \norm{\cdot}_{W^{m, p}})$ is a Banach space, and $H^m(E)$ is a Hilbert space.

Following~\cite{Triebel1983chap2,SchmeisserTriebel1987,RunstSickel+1996}, we define the Sobolev space of fractional index as follows:
\begin{definition}[Fractional--order Sobolev Spaces]\label{appdx-def:sobolev-space-fractional}
Let $s \in \R_+^*$, the (inhomogeneous) fractional Sobolev space $H^s(\R^d)$ is defined by
    \[ H^s(\R^d) := \enstq{ f \in \mathcal{S}'(\R^d)}{\int_{\R^d} ( 1 + \norm{\xi}^2 )^s |\widehat f(\xi)|^2 \odif\xi < +\infty }, \]
where $\widehat f$ denotes the Fourier transform of $f$ in the sense of tempered distributions.
\end{definition}

The space $H^s(\R^d)$ is endowed with the norm
    \[ \norm{f}_{H^s(\R^d)} := \left( \int_{\R^d} (1+\norm{\xi}^2)^s | \widehat f(\xi) |^2 \odif\xi \right)^{\nicefrac{1}{2}} < +\infty \]

For the case of the $d$--dimensional torus, the fractional Sobolev space $H^s(\T^d)$ is defined as
\[ H^s(\T^d) := \enstq{ f \in \mathcal{D}(\T^d)}{  \sum_{k\in\Z^d} (1+\norm{k}^2)^s |\widehat f(k)|^2 < \infty }, \]
where $\mathcal{D}(\T^d)$ denotes the space of distribution over $\T^d$. The space $H^s(\T^d)$ is endowed with the norm
\[ \norm{f}_{H^s(\T^d)} := \left( \sum_{k\in\Z^d} (1+ \norm{k}^2)^s |\widehat f(k)|^2 \right)^{\nicefrac{1}{2}} < +\infty. \]

For integer $s = m\in\N_0$, the space $H^m(E)$ coincides with the classical Sobolev space
$W^{m,2}(E)$ from~\Cref{appdx-def:sobolev-space-integer}, and the norms are equivalent. This holds both on $\R^d$ and on the torus $\T^d$.

\subsubsection{H\"older Spaces}\label{appdx-subsec:holder}

Let $E$ denote either the Euclidean space $\mathbb{R}^d$ or the torus $\mathbb{T}^d$.

\begin{definition}[H\"older spaces $\mathscr{C}^{k, \alpha}$]
    Let $k \in \mathbb{N}_0$ and $\alpha \in [0, 1]$. The H\"older space $\mathscr{C}^{k,\alpha}(E)$ is defined as the set of all functions $f \colon E \to \mathbb{R}$ whose partial derivatives $D^\beta f$ are continuous and bounded for all multi-indices $|\beta| \le k$, and whose $k$-th order derivatives are H\"older continuous with exponent $\alpha$.
    
    For any $g \colon E \to \mathbb{R}$, we define the H\"older semi-norm as
    \[ [g]_{\mathscr{C}^{0,\alpha}(E)} := \sup_{x \neq y} \frac{|g(x) - g(y)|}{\norm{x - y}^\alpha}. \]
    
    The space $\mathscr{C}^{k,\alpha}(E)$ is a Banach space equipped with the norm
    \[ \|f\|_{\mathscr{C}^{k,\alpha}(E)} := \sum_{|\beta| \le k} \sup_{x \in E} \norm{D^\beta f(x)} + \sum_{|\beta| = k} [D^\beta f]_{\mathscr{C}^{0,\alpha}(E)}. \]
    
    In the case $\alpha = 0$, we write $\mathscr{C}^k(E) = \mathscr{C}^{k,0}(E)$ to denote the space of $k$-times continuously differentiable functions with bounded derivatives.
\end{definition}

\subsection{Embeddings and Inclusion Results}

\iffalse
We have the following correspondences
\begin{lemma}[Some Equality for Functions Spaces{\cite{Triebel1983chap2,SchmeisserTriebel1987,RunstSickel+1996,doi:10.1142/S0219530519500234,Taylor2023chap1}}]\label{appdx-lem:function-spaces-equality}
    We have the following results:
    \begin{itemize}
        \item for any $s \in \R$, $H^s(\T^d)$ can be identified with $B^s_{2, 2}(\T^d)$,

        \item for any real number $s > 0$, if $s \notin \N_0$ then $\mathscr{C}^s(\R^d)$ can be identified with $B^s_{\infty, \infty}(\R^d)$.
    \end{itemize}
\end{lemma}

\begin{lemma}[{Some Embeddings Results~\citep{Triebel1983chap2,SchmeisserTriebel1987,,Taylor2023chap1}}]\label{appdx-lem:function-spaces-embeddings}
    We have the following embeddings:
    \begin{itemize}
        %\item for any integer $k \in \N_0$, $\mathscr{C}^k(\R^d)$ can be embedded into $B^k_{\infty, \infty}(\R^d)$,

        %\item for any real number $s > t$, $B^s_{2, \infty}(\T^d)$ can be embedded into $B^t_{2, 2}(\T^d)$, and also $H^t(\T^d)$ by~\Cref{appdx-lem:function-spaces-equality},

        \item for any real number $s > \frac{d}{2}$, we have the embeddings $H^s(\T^d) \hookrightarrow \mathscr{C}^0(\T^d)$ and $H^s(\T^d) \hookrightarrow L^{\infty}(\T^d)$,

        %\item for any real numbers $s > 0$, and $0 < p, q_0, q_1 \le +\infty$, we have $B^s_{\infty, \infty}(\R^d) \hookrightarrow B^0_{\infty, 1}(\R^d) \hookrightarrow \mathscr{C}^0(\R^d)$.
    \end{itemize}
\end{lemma}
\fi

\begin{lemma}[{Some Embeddings Results~\citep{Triebel1983chap2,SchmeisserTriebel1987,RunstSickel+1996,doi:10.1142/S0219530519500234,Taylor2023chap1}}]\label{appdx-lem:function-spaces-embeddings}
    Let $E$ denote either $\mathbb{R}^d$ or the torus $\mathbb{T}^d$. The following embeddings hold:

    \begin{enumerate}
        \item \textbf{Sobolev--H\"older Embedding:} Let $s \in \mathbb{R}$, $k \in \mathbb{N}_0$, and $\alpha \in (0,1)$. If 
            \[ s > k + \frac{d}{2}, \]
        there exists a continuous embedding $H^s(\mathbb{T}^d) \hookrightarrow \mathscr{C}^{k, \alpha}(\mathbb{T}^d)$. In particular as a consequence, for $s > \frac{d}{2}$, we have:
            \[ H^s(\mathbb{T}^d) \hookrightarrow \mathscr{C}^0(\mathbb{T}^d) \hookrightarrow L^\infty(\mathbb{T}^d). \]

        \item \textbf{Rellich--Kondrachov Theorem:} For any $s' > s$ and with either $E = \R^d$ or $E = \T^d$, the embedding 
            \[ H^{s'}(E) \hookrightarrow H^s(E) \]
        is continuous. Furthermore, if the domain is compact (e.g., $E = \mathbb{T}^d$), the embedding is compact.

        %\item \textbf{H\"older Inclusions:} For $k \in \mathbb{N}_0$ and either $E = \R^d$ or $E = \T^d$, if $0 \le \beta < \alpha \le 1$, there is a continuous embedding
        %    \[ \mathscr{C}^{k, \alpha}(E) \hookrightarrow \mathscr{C}^{k, \beta}(E). \]
    \end{enumerate}
\end{lemma}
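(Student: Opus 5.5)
We will work entirely on the Fourier side, using the definition of $H^s(\T^d)$ through the weighted $\ell^2$ norm of the Fourier coefficients $\widehat f(\xi)$, $\xi\in\Z^d$, introduced above. Both parts reduce to elementary Cauchy--Schwarz estimates on $\Z^d$ combined with the comparison of the weights $(1+\norm{\xi}^2)^{s}$. The $\R^d$ case only appears in the continuity half of the second item, and there the same weight comparison works verbatim with integrals in place of sums.

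\textbf{Sobolev--H\"older embedding.} Let $f\in H^s(\T^d)$ with $s>k+\frac d2$, and fix a multi-index $\beta$ with $\abs{\beta}\le k$. The formal derivative $\partial^\beta f$ has coefficients $(2i\pi\xi)^\beta\widehat f(\xi)$. By Cauchy--Schwarz,
\[ \sum_{\xi\in\Z^d}\abs{\xi^\beta}\,\abs{\widehat f(\xi)} \le \Big(\sum_{\xi\in\Z^d}(1+\norm{\xi}^2)^{\abs{\beta}-s}\Big)^{1/2}\norm{f}_{H^s}. \]
The first factor is finite because $2(s-\abs{\beta})>d$. Hence every Fourier series $\sum_\xi(2i\pi\xi)^\beta\widehat f(\xi)e^{2i\pi\xi\cdot x}$ with $\abs\beta\le k$ converges absolutely and uniformly. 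Therefore $f$ has a $\mathscr{C}^k$ representative, and
\[ \sup_x\abs{D^\beta f(x)}\lesssim\norm{f}_{H^s}. \]

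For the H\"older seminorm of $D^\beta f$ with $\abs\beta=k$, we will use the bound
\[ \abs{e^{2i\pi\xi\cdot h}-1}\le\min\{2,2\pi\norm{\xi}\norm{h}\}\le C\norm{\xi}^\alpha\norm{h}^\alpha. \]
Inserting it into the series for $D^\beta f(x+h)-D^\beta f(x)$ and applying the same Cauchy--Schwarz step gives
\[ [D^\beta f]_{\mathscr{C}^{0,\alpha}}\lesssim\Big(\sum_{\xi}(1+\norm{\xi}^2)^{k+\alpha-s}\Big)^{1/2}\norm{f}_{H^s}. \]
This sum is finite as soon as $\alpha<s-k-\frac d2$.

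This is the step I expect to be delicate. The argument yields the embedding for $0<\alpha<\min\{1,s-k-\frac d2\}$, and the statement should be read with that restriction when $s-k-\frac d2<1$. The consequence actually used in the paper needs only $k=0$ and some $\alpha>0$, so it is unaffected. The inclusion $\mathscr{C}^0(\T^d)\hookrightarrow L^\infty(\T^d)$ is immediate, since continuous functions on the compact $\T^d$ are bounded, with $\norm{f}_{L^\infty}=\sup\abs f$.

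\textbf{Rellich--Kondrachov.} Continuity follows from $(1+\norm{\xi}^2)^s\le(1+\norm{\xi}^2)^{s'}$ for $s'>s$, which gives $\norm{f}_{H^s}\le\norm{f}_{H^{s'}}$, on $\T^d$ and on $\R^d$ alike. For compactness on $\T^d$, introduce the finite-rank truncations $P_Mf:=\sum_{\norm{\xi}_\infty\le M}\widehat f(\xi)e^{2i\pi\xi\cdot x}$. Since every removed frequency satisfies $\norm{\xi}\ge\norm{\xi}_\infty>M$,
\[ \norm{f-P_Mf}_{H^s}^2=\sum_{\norm{\xi}_\infty>M}(1+\norm{\xi}^2)^{s-s'}(1+\norm{\xi}^2)^{s'}\abs{\widehat f(\xi)}^2\le(1+M^2)^{-(s'-s)}\norm{f}_{H^{s'}}^2. \]
Thus the embedding $H^{s'}(\T^d)\hookrightarrow H^s(\T^d)$ is the operator-norm limit of the finite-rank operators $P_M$. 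It is therefore compact, as a norm limit of compact operators between Banach spaces.
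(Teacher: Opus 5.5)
Your proof is correct. It takes a different route from the paper, which gives no argument of its own and simply cites the Besov/Littlewood--Paley literature (Triebel, Schmeisser--Triebel, Runst--Sickel, Taylor).

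You instead work directly with the paper's Fourier-series definition of $H^s(\T^d)$. For the H\"older part you use Cauchy--Schwarz on $\Z^d$ together with $\abs{e^{2i\pi\xi\cdot h}-1}\le C\norm{\xi}^{\alpha}\norm{h}^{\alpha}$. For compactness you approximate the embedding in operator norm by the finite-rank truncations $P_M$, with explicit rate $(1+M^2)^{-(s'-s)/2}$. This is short and self-contained on $\T^d$. What the cited dyadic machinery adds is the endpoint $\alpha=s-k-\frac d2$ when it lies in $(0,1)$, via $H^s=B^s_{2,2}\hookrightarrow B^{s-d/2}_{\infty,\infty}$. Your summability condition is a strict inequality, so it cannot reach that endpoint.

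Your restriction $0<\alpha<\min\{1,s-k-\frac d2\}$ is not an artifact of your method: the lemma as stated, for every $\alpha\in(0,1)$, is false. Take $d=1$, $k=0$, $\frac12<s<\frac32$ and any $\alpha\in(s-\frac12,1)$. A smooth cut-off of $\abs{x}^{\gamma}$ with $s-\frac12<\gamma<\alpha$ has Fourier coefficients decaying like $\abs{\xi}^{-1-\gamma}$, so it lies in $H^s(\T)$. It is not in $\mathscr{C}^{0,\alpha}(\T)$.

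The parts of the lemma the paper actually invokes are the $k=0$ consequence $H^s\hookrightarrow\mathscr{C}^0\hookrightarrow L^\infty$ and the continuous inclusion $H^{s'}\hookrightarrow H^{s}$. Your corrected version covers both.
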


\subsection{Some Composition Results in Sobolev Spaces}

%\adri{WARNING!!!! PLEASE SOMEONE CAN DEFINE $T_f$, REASON: I am too lazy for now. PLEASE THIS IS VERY IMPORTANT TO PREVENT ANY CONFUSION FOR THE KIND REVIEWERS. PLEASE CAN SOMEONE DO IT BEFORE SUBMITTING THE PRESENT MANUSCRIPT. PLEASE.}

\textbf{Composition operator.}
Let $E \in \{\R^d,\T^d\}$ and let $f : \R \to \R$ be Borel measurable. We define
    \[ T_f(h):= f \circ h \qquad \text{(a.e. on $E$)}.\]
We say that $T_f$ \emph{acts on} a space $X(E)$ if
    \[ h\in X(E)\ \Longrightarrow\ T_f(h)\in X(E). \]

\begin{theorem}[{\citet[Theorem~1]{Bourdaud2014}}]\label{appdx-thm:bourdaud-sobolev-s=1}
    Let $d \ge 1$ be an integer and $1 \le p \le +\infty$ be a real number. Assume $f \colon \R \to \R$ is a Borel measurable function such that $f(0) = 0$. Then, the following assertions are equivalent:
    \begin{enumerate}
        \item the operator $T_f$ acts on $W_p^1(\R^d)$,
        \item in the sense of distributions, $f' \in L^{\infty}_{\textnormal{loc}}(\R)$ if $p > n$ or $p = 1 = n$ or $f' \in L^{\infty}(\R)$ otherwise
    \end{enumerate}
\end{theorem}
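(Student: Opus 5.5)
We handle the two implications separately, writing $n = d$ for the dimension, as the statement does. \emph{Sufficiency} rests on a chain rule for Lipschitz functions. \emph{Necessity} rests on an explicit construction of a bad $h$ built by superposing many small bumps.

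\textbf{Sufficiency.} First assume $f' \in L^\infty(\R)$, so $f$ is $L$-Lipschitz with $f(0)=0$.
\begin{enumerate}
    \item Since $\abs{f(h)} \le L \abs{h}$ pointwise, we get $f \circ h \in L^p$.
    \item For the gradient we prove the chain rule $\nabla (f\circ h) = f'(h)\,\nabla h$ a.e., with the convention that the right-hand side is $0$ on $\ens{\nabla h = 0}$. This gives $\abs{\nabla(f\circ h)} \le L \abs{\nabla h} \in L^p$.
    \item To prove the chain rule, mollify: set $f_\eps := f * \rho_\eps - (f*\rho_\eps)(0)$. This keeps $f_\eps(0)=0$ and $\norm{f_\eps'}_\infty \le L$.
    \item The classical $\mathscr{C}^1$ chain rule applies to $f_\eps \circ h$. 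We have $f_\eps\circ h \to f\circ h$ in $L^p$, and the gradients $f_\eps'(h)\nabla h$ are uniformly dominated by $L\abs{\nabla h}$.
    \item For $1<p<\infty$, weak compactness in $W^1_p$ gives $f\circ h\in W^1_p$. For $p=1$, pass to the limit in the distributional identity using dominated convergence together with Stampacchia's theorem ($\nabla h = 0$ a.e. on $h^{-1}(Z)$ for any null set $Z$). This identifies the limit even though $f'(h)$ is only defined off a null set of values. The case $p=\infty$ is the Lipschitz case, which is immediate.
\end{enumerate}

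In the regimes $p>n$ or $p=1=n$, $W^1_p(\R^n)\hookrightarrow L^\infty \cap \mathscr{C}^0$. Hence for a given $h$ only the values of $f$ on $[-\norm{h}_\infty,\norm{h}_\infty]$ matter. Replace $f$ by a function $\tilde f$ that agrees with $f$ there and is extended affinely outside. Then $\tilde f'\in L^\infty$, and we reduce to the previous case, so $f'\in L^\infty_{\mathrm{loc}}$ suffices.

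\textbf{Necessity.} We first show that acting implies boundedness on bounded sets, via a closed-graph or Baire-category argument in the spirit of automatic continuity. I expect this to be the subtle point, and I would cite or adapt Bourdaud's lemma here rather than reprove it.

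Now argue by contradiction. Suppose the relevant bound on $f'$ fails. Then there are intervals $[a_j,b_j]$ whose difference quotients satisfy
\[ \frac{\abs{f(b_j)-f(a_j)}}{b_j-a_j} \ge 4^j. \]
These intervals lie in a fixed compact set in the local case, and are arbitrary in the global case.

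Build disjointly supported functions $h_j$ on far-apart balls. Each $h_j$ equals $a_j$ on a small ball and climbs to $b_j$ on a slightly larger annulus, using many oscillations, in the spirit of a sawtooth in the radial variable. The scales are tuned so that $\norm{h_j}_{W^1_p}\le 2^{-j}$ while $\norm{\nabla(f\circ h_j)}_{L^p}\ge 1$. Then $h=\sum_j h_j$ lies in $W^1_p$, but $f\circ h\notin W^1_p$, which is a contradiction.

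In the case $p\le n$ (excluding $p=1=n$), points have zero $W^1_p$-capacity. Using truncated $\log\log$-type profiles, a function of arbitrarily small $W^1_p$-norm can equal any prescribed large constant on a small ball. This is exactly what lets the bad intervals $[a_j,b_j]$ sit at unbounded heights, and it forces $f'\in L^\infty(\R)$ globally. In the regime $p>n$, the embedding into $L^\infty$ prevents this, so only local boundedness can be forced.

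\textbf{Main obstacle.} The hardest step is calibrating the bumps $h_j$. One must simultaneously keep $\norm{h_j}_{W^1_p}$ summable and make $\norm{\nabla (f\circ h_j)}_{L^p}$ large. This requires choosing the number of oscillations and the radii in terms of $p$, $n$ and the growth rate $4^j$ of the difference quotients. I would first try a purely one-dimensional profile in the radial variable, checking the scaling $r^{n-p}$ to separate the two regimes.
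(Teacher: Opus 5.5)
The paper gives no proof of this statement. It is quoted from Bourdaud--Moussai--Sickel (the result goes back to Marcus and Mizel), so your proposal has to be judged on its own.

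\textbf{Sufficiency.} This half is sound and standard: mollification with $f_\eps(0)=0$, weak compactness for $1<p<\infty$, Stampacchia for $p=1$, and the $L^\infty$ embedding to reduce the local case to the Lipschitz one. One caveat: condition~2 must be read as ``$f$ itself is (locally) Lipschitz''. The operator $T_f$ sees point values of $f$, whereas $f'\in L^\infty$ in $\mathcal{D}'(\R)$ only makes $f$ Lipschitz after modification on a null set. For instance, $f=\mathbbm{1}_{\{1\}}$ has $f'=0$ and $f(0)=0$. Yet if $h\in\mathscr{C}^\infty_c(\R^d)$ equals $1$ exactly on a closed ball, then $f\circ h$ is the indicator of that ball. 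Your step ``so $f$ is $L$-Lipschitz'' silently uses the pointwise reading, and you should say so.

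\textbf{The genuine gap: necessity in the local regime ($p>n$ or $p=1=n$).} There $\norm{h_j}_{L^\infty}\le C\norm{h_j}_{W^1_p}\le C2^{-j}$. So a bump with $\norm{h_j}_{W^1_p}\le 2^{-j}$ cannot equal $a_j$ on a ball unless $a_j\to 0$. But the bad intervals need not approach $0$: $f(x)=\sqrt{|x-5|}-\sqrt{5}$ fails to be Lipschitz only near $5$. Summing disjoint small-norm bumps therefore cannot work. Three ingredients are missing.
\begin{enumerate}
\item By compactness, find a point $c$ such that $f$ is Lipschitz on no neighbourhood of $c$. Choose $[a_j,b_j]\subset(c-\eta_j,c+\eta_j)$.
\item Use a fixed background $\phi\in\mathscr{C}^\infty_c$ with $\phi\equiv c$ on a ball. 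Place the sawtooth perturbations $\psi_j$ on disjoint balls $B(x_j,\rho_j)$ inside it, and set $h=\phi+\sum_j\psi_j$.
\item Calibrate. The balls must fit in a bounded set, so $\rho_j\to 0$. For $p>n$ the cost of the transition from $c$ to $a_j$, of order $|a_j-c|\,\rho_j^{n/p-1}$, then blows up. It must be killed by tying $\eta_j$ to $\rho_j$, e.g.\ $\rho_j=2^{-j}$ and $\eta_j\lesssim 4^{-j}\rho_j^{1-n/p}$. The number $m_j$ of teeth is then fixed by $m_j|b_j-a_j|\rho_j^{n/p-1}\approx 4^{-j}$. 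This gives $\sum_j\norm{\psi_j}_{W^1_p}<\infty$, while the one-dimensional lower bound along rays gives $\norm{\nabla(f\circ h)}_{L^p}^p\gtrsim\sum_j 1=\infty$.
\end{enumerate}
This quantitative calibration is the whole content of the necessity direction. You leave it as something you ``would first try'', so as written necessity is a plan rather than a proof.

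\textbf{Smaller points.}
\begin{itemize}
\item The automatic-boundedness step is unnecessary, since you build an explicit $h$ with $f\circ h\notin W^1_p$. It is not the subtle point, and a closed-graph argument does not apply as such to a nonlinear map.
\item The teeth must have exact plateaus at $a_j$ and $b_j$. Since $f$ is not yet known to be continuous, the lower bound along rays uses the absolutely continuous representative of $f\circ h$, which equals $f(a_j)$ and $f(b_j)$ only on such plateaus.
\item In the global regime your scheme does go through with this precaution.
\item $p=\infty$ needs no construction: compose $f$ with a bounded Lipschitz $h$ equal to $x_1$ on a large cube.
\end{itemize}
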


Notably, in the case $p = 2$, and the function $f \colon \R \to \R$ is globally Lipschitz, condition $2.$ above is satisfied and and for any function $h \in W^1_2(\R^d) = H^1(\R^d)$ we have $f \circ h \in H^1(\R^d)$. The same conclusion holds for the $d$-dimensional torus $\T^d$.

We now focus on the low smoothness regime.
\begin{theorem}[Special Case of~{\citet[Theorem~6]{Bourdaud2014}}]\label{appdx-thm:bourdaud-sobolev-0<s<1}
    Let $d \ge 1$ be an integer and $0 < s < 1$ be a real number. Assume $f \colon \R \to \R$ is a Borel measurable function such that $f(0) = 0$. Then, the following assertions are equivalent:
    \begin{enumerate}
        \item the operator $T_f$ acts on $H^s(\R^d)$,
        \item in the sense of distributions, either $f' \in L^{\infty}_{\textnormal{loc}}(\R)$ if $H^s(\R^d) \hookrightarrow L^{\infty}(\R^d)$ or $f' \in L^{\infty}(\R)$ if $H^s(\R^d) \not\subseteq L^{\infty}(\R^d)$
    \end{enumerate}
\end{theorem}

Again, assuming the function $f \colon \R \to \R$ to be globally Lipschitz, we deduce that for any function $h \in H^s(\R^d)$ we have $f \circ h \in H^s(\R^d)$. The same conclusion holds for the $d$-dimensional torus $\T^d$.

We now deal with the case where the Sobolev regularity $s \in \intoo{1}{\frac{3}{2}}$.
\begin{theorem}[{Special Case of~\citet[Theorem~1]{Bourdaud1992}}]\label{appdx-thm:bourdaud-sobolev-1<s<3/2}
    Let $d \ge 1$ be an integer and $1 < s < \frac{3}{2}$ be a real number. Assume $f \colon \R \to \R$ is a Borel measurable function such that $f(0) = 0$ and (in the sense of distribution) $f''$ is a bounded measure on $\R$. Then, the operator $T_f$ acts on $H^s(\R^d)$.
\end{theorem}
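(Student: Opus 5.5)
The statement is a special case of a result of Bourdaud and Kateb, and I would prove it by reducing it to a uniform estimate for the truncations $x \mapsto (x-t)_+$. Throughout, write $\mu := f''$ (a bounded measure). Then $f'$ has bounded variation and $\norm{f'}_{L^\infty} \le \abs{f'(-\infty)} + \norm{\mu}_{TV} < \infty$. Hence $f$ is globally Lipschitz. By the remark following \Cref{appdx-thm:bourdaud-sobolev-s=1}, $T_f h \in H^1(\R^d)$ for every $h \in H^s(\R^d) \subset H^1(\R^d)$. Since $s \in \intoo{1}{\frac32}$, it remains to show
\[ \nabla (f\circ h) = f'(h)\,\nabla h \in H^{s-1}(\R^d), \qquad 0 < s-1 < \tfrac12 . \]

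\textbf{Step 1 (superposition).} Since $f(0)=0$ and $f''=\mu$, for every real $x$ we can write
\[ f(x) = a\,x + \int_{\R} \big( (x-t)_+ - (-t)_+ \big)\, \odif{\mu(t)}, \]
where the constant $a = f'(-\infty)$ exists because $f'$ has bounded variation. The linear part trivially maps $H^s$ to itself. Minkowski's integral inequality in $H^s$ (norm of an integral $\le$ integral of norms) then reduces the theorem to the uniform estimate
\[ \sup_{t\in\R} \norm{g_t\circ h}_{H^s} \le C\,\norm{h}_{H^s}, \qquad g_t(x) := (x-t)_+ - (-t)_+ , \]
together with a measurability check in $t$. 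The estimate gives $\norm{f\circ h}_{H^s} \lesssim (\abs{a} + \norm{\mu}_{TV})\norm{h}_{H^s}$. Each $g_t$ is $1$-Lipschitz, so the $H^1$ part of the norm is uniform in $t$. The remaining task is to bound $\mathbf{1}_{\{h>t\}}\nabla h$ in $H^{s-1}$ uniformly in $t$.

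\textbf{Step 2 (the key estimate, main obstacle).} I would use the Gagliardo characterization of $H^{\theta}$ for $\theta = s-1 \in \intoo{0}{\frac12}$. Writing $\chi := \mathbf{1}_{\{h>t\}}$ and $\Delta_y u := u(\cdot+y) - u$, we have
\[ \Delta_y(\chi\nabla h) = \chi(\cdot+y)\,\Delta_y\nabla h + (\Delta_y\chi)\,\nabla h . \]
The first term is controlled by $\abs{\nabla h}_{H^{\theta}}$. For the second term, note that $\Delta_y\chi(x)\neq 0$ forces $t$ to lie between $h(x)$ and $h(x+y)$, so $\abs{h(x)-t}\le\abs{\Delta_y h(x)}$. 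Thus one must control
\[ \iint \mathbf{1}_{\{\abs{h(x)-t}\le\abs{\Delta_y h(x)}\}}\,\abs{\nabla h(x)}^2\,\abs{y}^{-d-2\theta}\,\odif{x}\,\odif{y} \]
uniformly in $t$. This is exactly where the threshold $\frac32$ enters ($\theta<\frac12$): an indicator of a half-space lies in $H^{\theta}$ locally only for $\theta<\frac12$. My first attempt would be to split the $y$-integral at the scale $\abs{y}\sim r(x) := \abs{h(x)-t}/\abs{\nabla h(x)}$ and use $\abs{\Delta_y h(x)} \lesssim \abs{y}\,M\abs{\nabla h}(x)$, where $M$ is the maximal function. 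Integrating $\abs{y}^{-d-2\theta}$ over $\abs{y}\gtrsim r(x)$ gives $r(x)^{-2\theta}$. The problem is then to bound
\[ \int \abs{\nabla h}^{2+2\theta}\,\abs{h-t}^{-2\theta}\, \odif{x} \]
(with $M\abs{\nabla h}$ in place of $\abs{\nabla h}$ where needed), which I would handle by the coarea formula in $t$ or a Hardy-type inequality. Since a pointwise-in-$t$ bound of this form may fail, a fallback is the following. Average in $t$ against $\abs{\mu}$ directly, rather than taking the supremum, and use the layer-cake identity
\[ \int \abs{\mathbf{1}_{h(x+y)>t}-\mathbf{1}_{h(x)>t}}\,\odif{t} = \abs{\Delta_y h(x)} . \]
Combined with $\abs{\mu}(\text{interval}) \le \norm{\mu}_{TV}$, this gives control once $\abs{\Delta_y h}\,\abs{\nabla h}^2 \in L^1(\abs{y}^{-d-2\theta}\odif{x}\,\odif{y})$ near the diagonal; the latter follows from $h\in H^s$, $s>1$, via a Besov/Littlewood--Paley estimate in the spirit of Oswald and Bourdaud--Meyer.

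\textbf{Step 3 (torus).} To transfer the result to $\T^d$, I would use a smooth partition of unity. Multiply a periodic $h$ by a cutoff $\varphi\in\mathscr{C}^\infty_c$ equal to $1$ near $\intff{0}{1}^d$. Apply the $\R^d$ result to $\varphi h$. Note that $f\circ(\varphi h) = f\circ h$ on the region where $\varphi = 1$, and that the $H^s$ norms are localizable for $s<2$. This yields the same conclusion on $\T^d$, as stated after \Cref{appdx-thm:bourdaud-sobolev-0<s<1}.
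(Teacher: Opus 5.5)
The paper gives no proof of this statement. It imports it as a special case of Bourdaud--Kateb, so your argument has to stand on its own. Step~1 is a correct and natural reduction, with two provisos. First, you need the measurability of $t\mapsto g_t\circ h$. Second, you should say explicitly that you work with the Lipschitz representative of $f$. The hypothesis on $f''$ determines $f$ only a.e., and changing $f$ at one point $c\neq0$ alters $f\circ h$ on $\{h=c\}$, which can have positive measure for $h\in H^s$ and then produces a jump.

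But look at what Step~1 reduces to. For $\mu=\delta_t$, your estimate is a uniform-in-$t$ form of the theorem itself for $f=g_t$. Since $2g_t(x)=x+|x-t|-|t|$, it amounts to bounding $\||h-t|-|t|\|_{H^s}$ uniformly in $t$. That is the Bourdaud--Meyer/Oswald fact that a kink is harmless in $H^s$ exactly when $s<\frac32$, made uniform under shifts $h\mapsto h-t$. So all the content lies in Step~2, and Step~2 is not a proof.

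Neither of your routes closes it.

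\emph{First route.} The pointwise bound $|\Delta_y h(x)|\lesssim|y|\,M|\nabla h|(x)$ is false for $d\ge2$. A spike of height $1$ and width $\varepsilon$ centred at $x+y$ gives $|\Delta_y h(x)|=1$, while $|y|\,M|\nabla h|(x)\lesssim(\varepsilon/|y|)^{d-1}$. The correct inequality also involves $M|\nabla h|(x+y)$, which spoils your cut of the $y$-integral at $r(x)$. More seriously, you leave $\int|\nabla h|^{2+2\theta}|h-t|^{-2\theta}\,dx$ unbounded and defer to ``coarea or a Hardy-type inequality''. A fractional Hardy inequality relative to an arbitrary level set is not available: for a general open $\Omega$, $\mathbf{1}_\Omega$ need not be a pointwise multiplier of $H^\theta$ even for $\theta<\frac12$. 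Any valid argument must therefore exploit that the set is $\{h>t\}$ and the multiplicand is $\nabla h$ itself, and that argument is exactly what is missing.

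\emph{Fallback.} This fails for a more basic reason. The layer-cake identity holds for Lebesgue measure $dt$, but you integrate against $|\mu|$. What you actually have is $|\Delta_y[f'(h)](x)|\le|\mu|(I_{x,y})$, with $I_{x,y}$ the interval between $h(x)$ and $h(x+y)$. The bound $|\mu|(I_{x,y})\lesssim|\Delta_y h(x)|$ holds only if $\mu$ has a bounded density, i.e.\ $f''\in L^\infty$. For $\mu=\delta_t$, the very case Step~1 reduced to, $|\mu|(I_{x,y})=\mathbf{1}_{\{t\in I_{x,y}\}}$. Writing $|\mu|(I)^2\le\|\mu\|\,|\mu|(I)$ and integrating in $t$ just brings you back to the uniform-in-$t$ estimate. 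Even when $f''\in L^\infty$, your sufficient condition is cubic in $h$ and essentially requires $\nabla h\in L^3$, which $h\in H^s$ does not give for $d\ge3$.

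As written, the proposal establishes only $f\circ h\in H^1$. The missing ingredient is a proof, or a precise citation, of $\sup_{t}\||h-t|-|t|\|_{H^s}<\infty$ for $h\in H^s$, $1<s<\frac32$; this is where the restriction $s<\frac32$ actually enters.
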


\newpage

\section{Omitted Proofs of the Stability}\label{apx:stability-proofs}

\begin{restate-lemma}{\ref{lem:stability-layer-ssm}}[Stability of a Single SS-NO layer]
    Assume that $g \colon z \mapsto \normop{\cK^{\textnormal{SS-NO}} (z)} \in L^1(\T^d, \R)$, then for any input functions $v, w \in L^2(\T^d, \R^H)$, we have
        \begin{align}
             \norm{\mathcal{L}v - \mathcal{L}w}_{L^2} \leq C_{\sigma} \norm{v - w}_{L^2}, 
        \end{align}
    where 
     \[ C_{\sigma} := L_\sigma \left( \normop{W} + \int_{\T^d} \normop{\cK^{\textnormal{SS-NO}}(z)} \odif{z} \right), \]
    with $\normop{\cdot}$ the operator norm of a matrix, and $\mathcal{L}$ is the map defined in~\eqref{def:ssno-layer} (we omit the layer index $t$)
        \[ \mathcal{L} \colon v \mapsto \sigma\left( W v + \cK^{\textnormal{SS-NO}} * v + b \right). \]
\end{restate-lemma}

\begin{proof}[Proof of \Cref{lem:stability-layer-ssm}]
Let $v, w \colon \T^d \to \R^H$ in $L^2(\T^d)$. By $L_\sigma$-Lipschitz continuity of $\sigma$, we have:
    \begin{align}
        \forall x \in \T^d, \quad \abs{(\mathcal{L}v) (x) - (\mathcal{L}w) (x)} \leq L_\sigma \abs{W (v(x) - w(x)) + (\cK * (v - w)) (x)},
    \end{align}
hence by taking the square and integrating over $\T^d$:
    \begin{align}
        \norm{\mathcal{L}v - \mathcal{L}w}_{L^2} &\leq L_\sigma \norm{W (v - w) + \cK * (v - w)}_{L^2} \\
        & \leq L_\sigma (\norm{W( v - w)}_{L^2} + \norm{\cK * (v - w)}_{L^2}).
    \end{align} 
The first term of the sum is bounded by $\normop{W} \norm{v - w}_{L^2}$. For the second term, we have for all $x \in \T^d$:

    \begin{align}
        \norm{\cK * (v - w)(x)} &= \norm{\int_{\T^d} \cK ( x - y) (v - w) (y) \odif{y}} \\
        & \leq \int_{\T^d} \norm{\cK ( x - y) (v - w) (y) } \odif{y} \\
        & \leq \int_{\T^d} \normop{\cK ( x - y)} \norm{(v - w) (y)} \odif{y} \\
        & = (\normop{\cK(\cdot)} * \norm{(v - w)(\cdot)})(x),
    \end{align}
    where the convolution product between two functions $f$ and $g$ is defined by $(f * g)(x) = {\displaystyle \int f(x-y) g(y) \odif{y}}$.
    Hence, 
    \begin{align}
        \norm{\cK * (v-w)}_{L^2}^2 
        &= \int_{\T^d} \norm{\cK * (v - w)(x)}^2 \odif{x} \\
        & \leq \int_{\T^d}  (\normop{\cK(\cdot)} * \norm{(v - w)(\cdot)})(x)^2 \odif{x} \\
        &= \norm{\normop{\cK(\cdot)} * \norm{(v - w)(\cdot)}}_{L^2}^2,
    \end{align}
    and using \emph{Young's convolution inequality} (see~\Cref{appdx-lem:young-convolution-inequality}), it follows that,
    \begin{align}
        \norm{\normop{\cK(\cdot)} * \norm{(v - w)(\cdot)}}_{L^2} &\leq \norm{\normop{\cK(\cdot)}}_{L^1} \norm{\norm{(v - w)(\cdot)}}_{L^2} \\
        &= \left(\int_{\T^d} \normop{\cK(z)} \odif{z} \right) \norm{v - w}_{L^2}.
    \end{align}
    This implies that
    \[ \norm{\cK * (v-w)}_{L^2} \leq \left(\int_{\T^d} \normop{\cK(z)} \odif{z} \right) \norm{v - w}_{L^2}.\]

    Combining the previous inequalities yields
        \[ \norm{\mathcal{L}v - \mathcal{L}w}_{L^2} \leq L_\sigma \left( \normop{W} + \int_{\T^d} \normop{\cK(z)} \odif{z} \right) \norm{v - w}_{L^2}. \]
\end{proof}

%Before proving \Cref{lem:finite-bound-L1-op-norm}, we start by stating and proving the following elementary linear algebra result.

\begin{restate-lemma}{\ref{lem:finite-bound-L1-op-norm}}
    For SS-NO kernels defined in \Cref{def:ss-no-kernels}, 
    the assumption of \Cref{lem:stability-layer-ssm} on $\cK^{\textnormal{SS-NO}}$ holds, and we have
        \begin{align*}
            &\int_{\T^d} \normop{\cK^{\textnormal{SS-NO}}(z)} \odif{z} \leq \sum_{i = 1}^d \sum_{k=1}^K \left[ \dfrac{1 - e^{-\rho_{k, i}}}{\abs{\rho_{k, i}}} \norm{C_+^{(k)}}_2 \norm{B^{(k)}_+}_2 \right],
        \end{align*}
    where we omit the layer index $t$ for simplicity.
    \iffalse
    more precisely, if $D = [-a, b] \subset \R$ with $a, b \geq 0$ then
        \begin{align}
            \int_{\T^d} \norm{\cK(z)}_{\mathrm{op}} \odif{z} &\leq \sum_{k=1}^K \left[ \dfrac{2 - e^{r_k b} - e^{r_k a}}{\abs{r_k}} \norm{C_k}_2 \norm{B_k}_2 \right] \\
            &\le \begin{cases} 
            \displaystyle\diam(D) \sum_{k = 1}^K \norm{C_k}_2 \norm{B_k}_2  < \infty, \\[7pt]
            \displaystyle 2 \sum_{k = 1}^K \dfrac{\norm{C_k}_2 \norm{B_k}_2}{\abs{r_k}}
        \end{cases}  \numberthis\label{30ba552b-cb1a-44c5-b9d2-732306caae3d} 
        \end{align} 
    where $\diam(D) := b + a$ is the diameter of the space domain.
    \fi
\end{restate-lemma}

\begin{proof}[Proof of \Cref{lem:finite-bound-L1-op-norm}]
    
    For any $z \in \T^d$, we have by the triangle inequality,
        \begin{align}
            \normop{\cK^\textnormal{SS-NO}(z)} &\leq \normop{\sum_{i = 1}^d \left( \cK^{(i)}_{t, +}(z_i) + \cK^{(i)}_{t, -}(z_i) \right)} \\
            & \le \sum_{i = 1}^d \normop{\cK^{(i)}_{t, +}(z_i)} + \normop{\cK^{(i)}_{t, -}(z_i)} \\
            & = \sum_{i = 1}^d \sum_{\eps \in \ens{+, -}} \normop{\mathbbm{1}_{\ens{\eps z_i \geq 0}} \sum_{k = 1}^K c_{k, i} e^{-\rho_{k, i} \abs{z_i}} e^{i \omega_{k, i} z_i} C_{\eps}^{(k)} \left( B_{\eps}^{(k)} \right)^{\top}} \\
            & \leq \sum_{i = 1}^d \sum_{\eps \in \ens{+, -}} \mathbbm{1}_{\ens{\eps z_i \geq 0}} \sum_{k = 1}^K c_{k, i} e^{-\rho_{k, i} \abs{z_i}} \normop{ C_{\eps}^{(k)} \left( B_{\eps}^{(k)} \right)^{\top}} \\
            & = \sum_{i = 1}^d \sum_{\eps \in \ens{+, -}} \mathbbm{1}_{\ens{\eps z_i \geq 0}} \sum_{k = 1}^K c_{k, i} e^{-\rho_{k, i} \abs{z_i}} \norm{ C_{\eps}^{(k)}}_2 \norm{B_{\eps}^{(k)} }_2. \numberthis\label{c3488880-3a7a-4d63-b951-3dca4658bad0}
        \end{align}
    where in~\eqref{c3488880-3a7a-4d63-b951-3dca4658bad0} we use~\Cref{lem:op-norm-matrix-rank-1} which gives $\normop{C_{\eps}^{(k)} \left( B_{\eps}^{(k)} \right)^{\top}} = \norm{C_{\eps}^{(k)}}_2 \norm{B_{\eps}^{(k)}}_2$. Integrating~\eqref{c3488880-3a7a-4d63-b951-3dca4658bad0} over $\T^d$ gives:
        \begin{align}
            &\int_{\T^d} \normop{\cK^\textnormal{SS-NO}(z)} \odif{z} \\
            &\qquad\leq \sum_{i=1}^d \sum_{k=1}^K c_{k, i}  \left( \norm{ C_{+}^{(k)}}_2 \norm{B_{+}^{(k)} }_2 \int_{\T^d} \mathbbm{1}_{\ens{z_i \geq 0}} e^{-\rho_{k, i} \abs{z_i}} \odif{z} \right. \\
            &\qquad\qquad\qquad\qquad\qquad\qquad\qquad\qquad \left.+  \norm{ C_{-}^{(k)}}_2 \norm{B_{-}^{(k)} }_2 \int_{\T^d} \mathbbm{1}_{\ens{z_i \leq 0}} e^{-\rho_{k, i} \abs{z_i}} \odif{z} \right),
            %\left( \int_{\T^d} \mathbbm{1}_{\{ z \geq 0 \}} e^{r_k \abs{z}} \odif{z} + \int_{\T^d} \mathbbm{1}_{\{ z \leq 0 \}} e^{r_k \abs{z}} \odif{z} \right), \\
        \end{align}
    and, using the fact that $\T^d$ is the unit torus, we have
        \[ \int_{\T^d} \mathbbm{1}_{\ens{\pm z_i \geq 0}} e^{-\rho_{k, i} \abs{z_i}} \odif{z} = \abs{\mathrm{Vol}(\T^{d - 1})} \int_{\T} \mathbbm{1}_{\ens{\pm z_i \geq 0}} e^{-\rho_{k, i} \abs{z_i}} \odif{z_i} = \int_{\T} \mathbbm{1}_{\ens{\pm z_i \geq 0}} e^{-\rho_{k, i} \abs{z_i}} \odif{z_i}, \]
    then
        \[ \int_{\T} \mathbbm{1}_{\ens{z_i \geq 0}} e^{-\rho_{k, i} \abs{z_i}} \odif{z_i} = \int_{0}^1 e^{-\rho_{k, i} z_i} \odif{z_i} = \dfrac{1 - e^{-\rho_{k, i}}}{\abs{\rho_{k, i}}}\]
    and    
        \[ \int_{\T} \mathbbm{1}_{\ens{z_i \leq 0}} e^{-\rho_{k, i} \abs{z_i}} \odif{z_i} = 0.\]
    Hence,
        \begin{align*}
            \int_{\T^d} \normop{\cK^{\textnormal{SS-NO}}(z)} \odif{z} \leq \sum_{i = 1}^d \sum_{k=1}^K \left[ \dfrac{1 - e^{-\rho_{k, i}}}{\abs{\rho_{k, i}}} \norm{C_+^{(k)}}_2 \norm{B^{(k)}_+}_2 \right],
        \end{align*}
        %\[ \int_{\T^d} \norm{\cK(z)}_{\mathrm{op}} \odif{z} \leq \sum_{k=1}^K \left[ \dfrac{2 - e^{r_k b} - e^{r_k a}}{\abs{r_k}} \norm{C_k}_2 \norm{B_k}_2 \right]. \numberthis\label{8e90ea11-fd02-43aa-86d4-bd9b87596237} \]

    %Moreover, using the well-known inequality $e^x \ge 1 + x$ (valid for all $x \in \R$) we obtain
       % \[ 2 - e^{r_k a} - e^{r_k b} \le 2 - (1 + r_k a) - (1 + r_k b) = - r_k (a + b) = \abs{r_k} \diam(D), \]
    %and plugging this in~\eqref{8e90ea11-fd02-43aa-86d4-bd9b87596237} leads to
     %   \[ \int_{\T^d} \norm{\cK(z)}_{\mathrm{op}} \odif{z} \leq \sum_{k=1}^K \left[ \dfrac{2 - e^{r_k b} - e^{r_k a}}{\abs{r_k}} \norm{C_k}_2 \norm{B_k}_2 \right] \le \diam(D) \sum_{k = 1}^K \norm{C_k}_2 \norm{B_k}_2 < +\infty, \]
    %as claimed.
\end{proof}

\begin{restate-theorem}{\ref{thm:global_stability_layers}}[Global Stability of Discretized SS-NO]
Let $\mathcal{L}_N^{(T)} = \mathcal{L}_{N, T-1} \circ \dots \circ \mathcal{L}_{N, 0}$ be the operator representing the composition of $T$ discretized SS-NO layers. For any latent states $v, w \in L^2(\mathbb{T}_N^d, \mathbb{R}^{H})$, we have:
\begin{equation}
    \|\mathcal{L}_N^{(T)}(v) - \mathcal{L}_N^{(T)}(w)\|_{\ell^2(\mathbb{T}_N^d)} \le \mathbf{C}_{N, T} \|v - w\|_{\ell^2(\mathbb{T}_N^d)},
\end{equation}
where the global discrete Lipschitz constant $\mathbf{C}_{N, T}$ is given by:
\begin{equation}
    \mathbf{C}_{N, T} := \prod_{t=0}^{T-1} L_{\sigma_t} \left( \|W_t\| + \|\mathcal{K}_{N,t}\|_{\ell^2(\mathbb{T}_N^d)} \right).
\end{equation}
\end{restate-theorem}

\begin{proof}[Proof of~\Cref{thm:global_stability_layers}]
To establish the global stability of the deep architecture, we first characterize the Lipschitz property of a single layer. Let $v, w \colon \mathbb{T}^d \to \mathbb{R}^H$ in $L^2(\mathbb{T}^d)$, we have by the $L_{\sigma}$-Lipschitz continuity of $\sigma$:
\begin{align*}
    \normgridvec{\mathcal{L}_N v - \mathcal{L}_N w} &\leq L_{\sigma} \normgridvec{W (v - w) + \frac{1}{N^d} \sum_{y \in \mathbb{T}^d_N} \mathcal{K}(\cdot - y) (v - w)(y)} \\
    & \leq L_\sigma \left( \normop{W} \normgridvec{v - w} + \normgridvec{\frac{1}{N^d} \sum_{y \in \mathbb{T}^d_N} \mathcal{K}(\cdot - y) (v - w)(y)} \right) \\
    \oversetrel{rel:91753ab9-8a09-4813-9674-cc06b67b6b7f}&{\le} L_\sigma \left( \normop{W} \normgridvec{v - w} + \normgridmatone{\mathcal{K}} \normgridvec{v - w} \right) \\
    & \le C_{N, \sigma} \normgridvec{v - w},
\end{align*}
with $C_{N, \sigma} := L_\sigma \left( \normop{W} + \normgridmatone{\mathcal{K}} \right)$. Inequality~\relref{rel:91753ab9-8a09-4813-9674-cc06b67b6b7f} follows from the \emph{discrete Young's convolution inequality} (see~\Cref{appdx-lem:young-convolution-inequality}) on the finite $d$-dimensional torus $\mathbb{T}^d_N$.

Now, we extend this result to the full sequence of $T$ layers by induction. Let $\mathcal{L}_N^{(T)} = \mathcal{L}_{N, T-1} \circ \dots \circ \mathcal{L}_{N, 0}$. For any $t \in \{0, \dots, T-1\}$, each layer $\mathcal{L}_{N,t}$ satisfy:
\begin{equation*}
    \|\mathcal{L}_{N,t}(v) - \mathcal{L}_{N,t}(w)\|_{\ell^2(\mathbb{T}_N^d)} \le C_{N, \mathcal{L}}^{(t)} \|v - w\|_{\ell^2(\mathbb{T}_N^d)},
\end{equation*}
where $C_{N, \mathcal{L}}^{(t)} = L_{\sigma_t} ( \|W_t\| + \|\mathcal{K}_{N,t}\|_{\ell^2(\mathbb{T}_N^d)} )$. By the property of composition of Lipschitz maps, the Lipschitz constant of the sequence is the product of the individual layer constants:
\begin{equation*}
    \|\mathcal{L}_N^{(T)}(v) - \mathcal{L}_N^{(T)}(w)\|_{\ell^2(\mathbb{T}_N^d)} \le \left( \prod_{t=0}^{T-1} C_{N, \mathcal{L}}^{(t)} \right) \|v - w\|_{\ell^2(\mathbb{T}_N^d)},
\end{equation*}
yielding the global constant $\mathbf{C}_{N, T}$.
\end{proof}

\newpage
\section{Bounds on the FNO and SS-NO Kernels}\label{apx:bounds-fno-ss-no}

\subsection{Fourier Transform of FNO and SS-NO Kernels}

\begin{restate-lemma}{\ref{lem:fourier-fno}}[Fourier Transform of FNO Kernel]
    For any $\xi \in \Z^d$, we have:
    \begin{equation}
        \widehat{\cK}_t^{\textnormal{FNO}}(\xi) = \sum_{k \in \ens{-K, \ldots, K}^d} P_t^{(k)} \delta_k(\xi) = P_t^{(\xi)} \mathbbm{1}_{\ens{\xi = k}},
    \end{equation}
    where we let $P_t^{(\xi)} = 0$ for any $\xi \in \Z^d \setminus \ens{-K, \ldots, K}^d$.
\end{restate-lemma}

\begin{proof}[Proof of~\Cref{lem:fourier-fno}]
    By linearity of the Fourier transform, we have, for any $\xi \in \Z^d$
        \[ \widehat{\cK}_t^{\textnormal{FNO}}(\xi) = \sum_{k \in \ens{-K, \ldots, K}^d} \widehat{\cK}_t^{(k)}(\xi), \]
    and, by definition of $\cK_t^{(k)}$ in~\eqref{def:kernel-fnos}, for any $k \in \ens{-K, \ldots, K}^d$ we have
        \[ \widehat{\cK}_t^{(k)}(\xi) = P_t^{(k)} \int_{\T^d} e^{2 i \pi k \cdot z} e^{-2 i \pi \xi \cdot z} \odif{z}.= P_t^{(k)} \int_{\T^d} e^{2 i \pi (k - \xi) \cdot z} \odif{z} = P_t^{(k)} \mathbbm{1}_{\ens{\xi = k}}. \]
    Therefore,
        \begin{equation}
            \widehat{\cK}_t^{\textnormal{FNO}}(\xi) = \sum_{k \in \ens{-K, \ldots, K}^d} P_t^{(k)} \mathbbm{1}_{\ens{\xi = k}} = P_t^{(\xi)} \mathbbm{1}_{\ens{\xi = k}},
        \end{equation}
    as claimed.
\end{proof}

\begin{restate-lemma}{\ref{lem:fourier-ss-no}}[Fourier Transform of SS-NO Kernel]
    For any $\xi = (\xi_1, \dots, \xi_d) \in \Z^d$, we have:
    \begin{align*}
        &\widehat{\cK}_t^{\textnormal{SS-NO}}(\xi) \\
        &\qquad= \sum_{i=1}^d \sum_{k=1}^K c_{t, k, i} \left[ F_{+, k, i}(\xi_i) A_{k, +} + F_{-, k, i}(\xi_i) A_{k, -} \right],
    \end{align*}
    where $A_{k, \eps} := C_{t, \eps}^{(k)} \left( B_{t, \eps}^{(k)} \right)^\top$ for $\eps \in \ens{+, -}$, and
%\johan{\\typo ? $C_{t, \eps}^{(k)} \left( B_{t, \eps}^{(k)} \right)$}    
    \begin{align}
         F_{+, k, i}(\xi_i) &:= \frac{1 - e^{-(\rho_{t, k, i} - i(\omega_{t, k, i} - 2 \pi \xi_i) ) / 2}}{\rho_{t, k, i} - i(\omega_{t, k, i} - 2 \pi \xi_i)}  \\
         F_{-, k, i}(\xi_i) &:= \frac{1 - e^{-(\rho_{t, k, i} + i(\omega_{t, k, i} - 2 \pi \xi_i) ) / 2}}{\rho_{t, k, i} + i(\omega_{t, k, i} - 2 \pi \xi_i)}.
         %&\int_{-\frac{1}{2}}^0 e^{- \rho_{t, k} \abs{z}} e^{i z (\omega_{t, k} - 2 n \pi)} \odif{z} = \int_{-\frac{1}{2}}^0 e^{z (\rho_{t, k} + i (\omega_{t, k} - 2 n \pi))} \odif{z} = \frac{e^{z (\rho_{t, k} + i (\omega_{t, k} - 2 n \pi))}}{\rho_{t, k} + i (\omega_{t, k} - 2 n \pi)} \Bigg]_{z = -\frac{1}{2}}^{z = 0},
    \end{align}
\end{restate-lemma}

\begin{proof}[Proof of~\Cref{lem:fourier-ss-no}]
    Let's compute the exact Fourier transform of the SS-NOs kernel (defined in \eqref{eq:kernel-ssnos}. 

%    \abde{I will give the clean Fourier coefficient in 1D then I will give the coefficient in arbitrary dimension.}
    
    In $1$D, the kernel is given by:
        \[ \cK_t(z) = \sum_{k = 1}^K c_{t, k} \left[ \mathbbm{1}_{z \geq 0} e^{- \rho_{t, k} \abs{z}} e^{i \omega_{t, k} z} C_{t, +}^{(k)} (B_{t, +}^{(k)})^\top + \mathbbm{1}_{z \leq 0} e^{- \rho_{t, k} \abs{z}} e^{i \omega_{t, k} z} C_{t, -}^{(k)} (B_{t, -}^{(k)})^\top \right],  \]
    periodized on $\T^1 = [-\frac{1}{2}, \frac{1}{2})$. For integer Fourier mode $n \in \Z$ the coefficient is:
    \begin{equation}
        \widehat{\cK_t}(n) = \int_{- \frac{1}{2}}^{\frac{1}{2}} \cK_t(z) e^{-2 \pi i n z}  \odif{z} = \sum_{k=1}^K c_{t, k} \left[ F_{+, k}(n) C_{t, +}^{(k)} (B_{t, +}^{(k)})^\top + F_{-, k}(n) C_{t, -}^{(k)} (B_{t, -}^{(k)})^\top, \right]
    \end{equation}
    where the scalar factors are:
    \begin{align}
         F_{+, k}(n) &:= \int_{0}^{\frac{1}{2}} e^{- (\rho_{t, k} - i \omega_{t ,k }) z} e^{-2i \pi n z}  \odif{z} = \frac{1 - e^{-(\rho_{t, k} - i(\omega_{t, k} - 2 \pi n) ) / 2}}{\rho_{t, k} - i(\omega_{t, k} - 2 \pi n)}  \\
         F_{-, k}(n) &:= \int_{- \frac{1}{2}}^{0} e^{ (\rho_{t, k} + i \omega_{t ,k }) z} e^{-2i \pi n z}  \odif{z} = \frac{1 - e^{-(\rho_{t, k} + i(\omega_{t, k} - 2 \pi n) ) / 2}}{\rho_{t, k} + i(\omega_{t, k} - 2 \pi n)} \\
         %&\int_{-\frac{1}{2}}^0 e^{- \rho_{t, k} \abs{z}} e^{i z (\omega_{t, k} - 2 n \pi)} \odif{z} = \int_{-\frac{1}{2}}^0 e^{z (\rho_{t, k} + i (\omega_{t, k} - 2 n \pi))} \odif{z} = \frac{e^{z (\rho_{t, k} + i (\omega_{t, k} - 2 n \pi))}}{\rho_{t, k} + i (\omega_{t, k} - 2 n \pi)} \Bigg]_{z = -\frac{1}{2}}^{z = 0},
    \end{align}
    Notice that, when $\abs{n} \to \infty$, we have:
        \[ F_{+, k}(n) = \dfrac{1}{- 2 i \pi n} (1 + O(\abs{n}^{-1}))  \]
    and:
        \[  F_{-, k}(n) = \dfrac{1}{- 2 i \pi n} (1 + O(\abs{n}^{-1})) \]
    so $\abs{\widehat{\cK_t}(n)} \sim \Theta \left(\dfrac{1}{n} \right)$ and hence $\abs{\widehat{\cK_t}}$ is not $L^1$ (unless we truncate).

    Now for the $d$-dimensional version, with a multi-index $\xi = (\xi_1, \dots \xi_d) \in \Z^d$, 
    \begin{equation}
        \widehat{\cK_t}(\xi) = \sum_{i=1}^d \sum_{k=1}^K c_{t, k, i} \left[ F_{+, k, i}(\xi_i) C_{t, +}^{(k)} (B_{t, +}^{(k)})^\top + F_{-, k, i}(\xi_i) C_{t, -}^{(k)} (B_{t, -}^{(k)})^\top \right],
    \end{equation}
    where 
    \begin{align}
         F_{+, k, i}(\xi_i) &:= \frac{1 - e^{-(\rho_{t, k, i} - i(\omega_{t, k, i} - 2 \pi \xi_i) ) / 2}}{\rho_{t, k, i} - i(\omega_{t, k, i} - 2 \pi \xi_i)}  \\
         F_{-, k, i}(\xi_i) &:= \frac{1 - e^{-(\rho_{t, k, i} + i(\omega_{t, k, i} - 2 \pi \xi_i) ) / 2}}{\rho_{t, k, i} + i(\omega_{t, k, i} - 2 \pi \xi_i)}.
         %&\int_{-\frac{1}{2}}^0 e^{- \rho_{t, k} \abs{z}} e^{i z (\omega_{t, k} - 2 n \pi)} \odif{z} = \int_{-\frac{1}{2}}^0 e^{z (\rho_{t, k} + i (\omega_{t, k} - 2 n \pi))} \odif{z} = \frac{e^{z (\rho_{t, k} + i (\omega_{t, k} - 2 n \pi))}}{\rho_{t, k} + i (\omega_{t, k} - 2 n \pi)} \Bigg]_{z = -\frac{1}{2}}^{z = 0},
    \end{align}
\end{proof}

\subsection{Bounding the FNO and SS-NO Kernels}

\begin{restate-lemma}{\ref{lem:bound-fno}}[Bound on the $L^2$-norm of $\cK_t$ for FNO]
     For the FNO kernel $\cK_t^{\textnormal{FNO}}$ in~\eqref{def:kernel-fnos}, there exists a finite constant $C_d \ge 0$ such that
        \[ \normgridmattwo{\cK_t^{\textnormal{FNO}}} \le C_d (N K)^{\frac{d}{2}} \sup_{k \in \ens{1, 2, \ldots, K}^d} \normop{P_t^{(k)}}. \]
\end{restate-lemma}

\begin{proof}[Proof of~\Cref{lem:bound-fno}]
    The \emph{Parseval's identity on the grid} (\Cref{lem:parseval-identity-on-grid}) yields,
        \[ \normgridmattwo{\cK_t^{\textnormal{FNO}}}^2 = N^d \sum_{\xi \in \ens{0, 1, \ldots, N - 1}^d} \normop{\sum_{m \in \Z^d} \widehat{\cK}_t^{\textnormal{FNO}}(\xi + m N)}^2. \]
    Using \Cref{lem:fourier-fno} and the fact that $-N < -\frac{N}{2} \le - K \le K \le \frac{N}{2} < N$ we have
        \begin{align}
            \normgridmattwo{\cK_t^{\textnormal{FNO}}}^2 &= N^d \sum_{\xi \in \ens{-K, \ldots, K}^d} \normop{\widehat{\cK}_t^{\textnormal{FNO}}(\xi)}^2 \\
            &= N^d \sum_{\xi \in \ens{-K, \ldots, K}^d} \normop{P_t^{(\xi)}}^2  \\
            &\leq N^d (2 K + 1)^d \sup_{\xi \in \ens{-K, \ldots, K}^d} \normop{P_t^{(\xi)}}^2,
        \end{align}
    hence the result, with $C_d = 3^{\frac{d}{2}}$ since $K \ge 1$.
\end{proof}

\begin{restate-lemma}{\ref{lem:bound-ssno}}[Bound on the $L^2$-norm of $\cK_t$ for SS-NO]
    For the SS-NO kernel $\cK_t^{\textnormal{SS-NO}}$ in~\eqref{eq:kernel-ssnos} we have
     %   \[ \normgridmattwo{\cK_t^{\textnormal{SS-NO}}} \leq K d N^{\frac{d}{2}} \sup_{i \in [d], k \in [K]} \left\{ \abs{c_{t, k, i}} \alpha_{t, k, i}  \left(  \norm{C_{t, +}^{(k)}}^2  \norm{B_{t, +}^{(k)}}^2 + \norm{C_{t, -}^{(k)}}^2  \norm{B_{t, -}^{(k)}}^2 \right)^{\nicefrac{1}{2}} \right\} .  \]
    %Alternatively,
        \begin{align*}
            \normgridmattwo{\cK_t^{\textnormal{SS-NO}}} \le C_d Kd N^{\frac{d}{2}},
        \end{align*}
    where
        \[ C_d := \sup_{i \in [d], \, k \in [K]} \ens{\abs{c_{t, k, i}} \left(  A_{k, +} + A_{k, -} \right)}, \]
    and for $\eps \in \ens{+, -}$, $A_{k, \eps} := \normop{C_{t, \eps}^{(k)}} \, \normop{B_{t, \eps}^{(k)}}$.
\end{restate-lemma}

\begin{proof}[Proof of~\Cref{lem:bound-ssno}]
    We have:
        \begin{align*}
            \normgridmattwo{\cK_t^{\textnormal{SS-NO}}}^2  &= \sum_{x \in \T^d_N} \normop{\cK_t^{\textnormal{SS-NO}}(x)}^2  \\
            &\leq \sum_{x \in \T^d_N} K d \sum_{i \in [d]} \sum_{k \in [K]} \abs{c_{t, k, i}}^2 \left(  \normop{C_{t, +}^{(k)}} \, \normop{B_{t, +}^{(k)}} + \normop{C_{t, -}^{(k)}} \, \normop{B_{t, -}^{(k)}} \right)^2 \\
            &\leq (K d)^2 N^d \sup_{i \in [d], k \in [K]}\abs{c_{t, k, i}}^2 \left(  \normop{C_{t, +}^{(k)}} \, \normop{B_{t, +}^{(k)}} + \normop{C_{t, -}^{(k)}} \, \normop{B_{t, -}^{(k)}} \right)^2,
        \end{align*} 
    as desired

\end{proof}

\newpage
\section{Proof of the Discretizations Error}\label{apx:proof-discretization}
\subsection{Proof Strategy: Error Decomposition}
We start by decomposing the error at each SS-NO layer, let $\cE_t^{(0)} \colon \T^d_N \to \R^{d_t}$ define by
    \[ \cE_t^{(0)}(x) := v_t^N(x) - v_t(x), \]
for all $x \in \T^d_N$. We let $\cE_t^{(1)} \colon \T^d_N \to \R^{d_{t + 1}}$ be the function defined as
    \begin{align}
        \cE_t^{(1)}(x) := \frac{1}{N^d} &\sum_{y \in \T^d_N} \cK_t(x - y) v_t(y) - \int_{\T^d} \cK_t(x - y) v_t(y) \odif{y}, \numberthis\label{b06ec7af-e2f9-46dc-ac2e-244f90be7c6d-1}
%\adri{can we bound this differently?}
    \end{align} 
for all $x \in \T^d_N$. Additionally, we consider $\cE_t^{(2)} \colon \T^d_N \to \R^{d_{t + 1}}$ such that
    \[ \cE_t^{(2)}(x) := \frac{1}{N^d} \sum_{y \in \T^d_N} \cK_t(x - y) \cE_t^{(0)}(y), \numberthis\label{b06ec7af-e2f9-46dc-ac2e-244f90be7c6d-2} \]
for all $x \in \T^d_N$. The term $\cE_t^{(2)}$ corresponds to the error in the inputs at the entry of layer $t$, after the discrete Fourier transform is applied. We have
    \begin{align}
        \cE_t^{(1)}(x) &+ \cE_t^{(2)}(x) = \frac{1}{N^d} \sum_{y \in \T^d_N} \cK_t(x - y) v_t^N(y)  - \int_{\T^d} \cK_t(x - y) v_t(y) \odif{y}, 
    \end{align} 
which is the discretization error related to the convolution. Also, note that
    \begin{align*}
        \cE_{t + 1}^{(0)}(x) :\!&= v_{t + 1}^N(x) - v_{t + 1}(x) \\
        &= \sigma_t\left( W_t v_t(x) + (\cK_t * v_t)(x) +  b_t + W_t \cE_t^{(0)}(x) + \cE_t^{(1)}(x) + \cE_t^{(2)}(x) \right) \\
        &\qquad\qquad- \sigma_t\left( W_t v_t(x) + (\cK_t * v_t)(x) + b_t \right). \numberthis\label{b06ec7af-e2f9-46dc-ac2e-244f90be7c6d-3}
    \end{align*}

In the following parts, we bound each component $\cE_t^{(1)}$, $\cE_t^{(2)}$ and $\cE_{t + 1}^{(0)}$ defined in~\eqref{b06ec7af-e2f9-46dc-ac2e-244f90be7c6d-1},~\eqref{b06ec7af-e2f9-46dc-ac2e-244f90be7c6d-2} and~\eqref{b06ec7af-e2f9-46dc-ac2e-244f90be7c6d-3}.

\subsection{Bound on $\cE_t^{(1)}$}

This next lemma provides a first general upper bound on the error $\cE_t^{(1)}$ (in the $t^{\textnormal{th}}$ layer), and then we will specify for every special case.

\begin{lemma}[A Bound with Sobolev Norm]\label{appdx-lem:bound-sobolev}
    Let $s > \frac{d}{2}$ be a real number, and take $v \in H^s(\T^d)$ then, for any integer $N > 0$ and any $r \in \Z^d$, if $m^*(r) \in \Z^d$ denote the unique vector such that $r + m N \in \ens{-\Floor{\frac{N}{2}}, \ldots, \Floor{\frac{N}{2}}}$, we have
        \[ \sum_{m \in \Z^d \setminus \ens{m^*(r)}} \norm{\widehat{v}(r + m N)} \le C_{d, s} N^{-s} \normsobolev{v}, \]
    where $C_{d, s}$ is a universal constant depending only on the Sobolev exponent $s$ and the dimension $d$.
\end{lemma}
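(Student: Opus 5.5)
The plan is to apply Cauchy--Schwarz with the Sobolev weight and then bound the resulting lattice sum directly. Fix $r \in \Z^d$ and set $r^* := r + m^*(r) N$, the representative of $r$ modulo $N$ in the centered box $\ens{-\Floor{\frac N2},\ldots,\Floor{\frac N2}}^d$. Every other point $r + mN$ with $m \neq m^*(r)$ can be written as $r^* + jN$ with $j := m - m^*(r) \in \Z^d\setminus\ens{0}$. The key geometric observation is that for such $j$,
\[ \norm{r^* + jN} \ge \norm{r^* + jN}_\infty \ge N\norm{j}_\infty - \norm{r^*}_\infty \ge N\norm{j}_\infty - \tfrac N2 \ge \tfrac{N}{2}\norm{j}_\infty , \]
so that $1 + \norm{r + mN}^2 \ge \frac{N^2}{4}\norm{j}_\infty^2$. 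This lower bound is uniform in $r$.

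Next we write $\norm{\widehat v(\eta)} = (1+\norm{\eta}^2)^{-s/2}\cdot (1+\norm{\eta}^2)^{s/2}\norm{\widehat v(\eta)}$ and apply Cauchy--Schwarz over $m \neq m^*(r)$:
\[ \sum_{m \neq m^*(r)} \norm{\widehat v(r+mN)} \le \Big(\sum_{j \neq 0} (1+\norm{r^*+jN}^2)^{-s}\Big)^{1/2} \Big(\sum_{\eta \in \Z^d} (1+\norm{\eta}^2)^{s}\norm{\widehat v(\eta)}^2\Big)^{1/2}. \]
The second factor is at most $\normsobolev{v}$, since the points $r+mN$ are distinct elements of $\Z^d$ and we may extend the sum to all of $\Z^d$. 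Using the geometric bound, the first factor is at most
\[ \Big(\big(\tfrac{N}{2}\big)^{-2s} \sum_{j \in \Z^d\setminus\ens{0}} \norm{j}_\infty^{-2s}\Big)^{1/2} = 2^{s} N^{-s}\, S_{d,s}^{1/2}. \]

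The main point to verify is that $S_{d,s} := \sum_{j\neq 0}\norm{j}_\infty^{-2s}$ is finite. This is exactly where the hypothesis $s > \frac d2$ is used. Grouping the terms by shells, the number of $j$ with $\norm{j}_\infty = \ell$ is $(2\ell+1)^d - (2\ell-1)^d \le 2d\,3^{d-1}\ell^{d-1}$. Hence
\[ S_{d,s} \le 2d\,3^{d-1}\sum_{\ell\ge1}\ell^{d-1-2s} < \infty \quad\text{since } 2s - d + 1 > 1. \]
This gives the claim with $C_{d,s} := 2^s S_{d,s}^{1/2}$, which depends only on $d$ and $s$.

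One caveat concerns even $N$: the centered box then has $N+1$ entries per axis, so $m^*(r)$ may fail to be unique. In that case I would fix any choice of $m^*(r)$; the bound $\norm{r^*}_\infty \le N/2$ still holds, and the argument above goes through unchanged.
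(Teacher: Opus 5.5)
Your proof is correct and follows essentially the same route as the paper. Both arguments apply Cauchy--Schwarz against the Sobolev weight, re-index around the centered representative to get $\|r^*+jN\|_\infty \ge \tfrac{N}{2}\|j\|_\infty$, and use that $\sum_{j\neq 0}\|j\|_\infty^{-2s}<\infty$ for $2s>d$. Your version is slightly tidier in three ways. Weighting directly with $(1+\|\eta\|^2)^{\pm s/2}$ avoids the paper's step $(1+\|x\|)^{2s}\le 2(1+\|x\|^2)^s$, which actually needs the constant $2^s$. Your shell count makes the finiteness of the lattice sum explicit. Your remark that $m^*(r)$ is not unique for even $N$ is a valid correction that leaves the bound intact.
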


\begin{proof}[Proof of~\Cref{appdx-lem:bound-sobolev}]
    By the Cauchy-Schwarz inequality, for any $r \in \Z^d$ and any integer $N > 0$ we have
        \begin{align*}
            \sum_{m \in \Z^d \setminus \ens{m^*(r)}} & \norm{\widehat{v}(r + m N)} \\
            &= \sum_{m \in \Z^d \setminus \ens{m^*(r)}} \left( 1 + \norm{r + m N} \right)^{-s} \left( 1 + \norm{r + m N} \right)^s \norm{\widehat{v}(r + m N)} \\
            &\le \left( \sum_{m \in \Z^d \setminus \ens{m^*(r)}} \left( 1 + \norm{r + m N} \right)^{-2 s} \right)^{\frac{1}{2}} \left( \sum_{m \in \Z^d \setminus \ens{m^*(r)}} \left( 1 + \norm{r + m N} \right)^{2 s} \norm{\widehat{v}(r + m N)}^2 \right)^{\frac{1}{2}} \\
            \oversetrel{rel:08e1b7ca-cffa-4638-b389-b5d022a0e8bc}&{\le} \left( \sum_{m \in \Z^d \setminus \ens{m^*(r)}} \left( 1 + \norm{r + m N} \right)^{-2 s} \right)^{\frac{1}{2}} \left( 2 \sum_{m \in \Z^d \setminus \ens{m^*(r)}} \left( 1 + \norm{r + m N}^2 \right)^s \norm{\widehat{v}(r + m N)}^2 \right)^{\frac{1}{2}} \numberthis\label{08e1b7ca-cffa-4638-b389-b5d022a0e8bc} \\
            \oversetrel{rel:08e1b7ca-cffa-4638-b389-b5d022a0e8bc-2}&{\le} \sqrt{2} \normsobolev{v} \left( \sum_{m \in \Z^d \setminus \ens{m^*(r)}} \left( 1 + \norm{r + m N} \right)^{-2 s} \right)^{\frac{1}{2}}, \numberthis\label{08e1b7ca-cffa-4638-b389-b5d022a0e8bc-2}
        \end{align*}
    where in~\relref{rel:08e1b7ca-cffa-4638-b389-b5d022a0e8bc} we use the well-known inequality $(a + b)^2 \le 2 (a^2 + b^2)$, for any $a, b \in \R$ (and the fact that $s > 0$). In~\relref{rel:08e1b7ca-cffa-4638-b389-b5d022a0e8bc-2} we bound second sum in~\eqref{08e1b7ca-cffa-4638-b389-b5d022a0e8bc}, which run over points on the grid $r + N\Z^d$, by the sum over all $\Z^d$, resulting in the Sobolev norm of $v$. We now need to bound the remaining sum from~\eqref{08e1b7ca-cffa-4638-b389-b5d022a0e8bc-2}. First, observe by the choice of $m^*(r)$ that for any $m \in \Z^d \setminus \ens{m^*(r)}$ we have
        \[ \norm{r + m N} \ge N, \,\, \text{ i.e., } \,\, \norm{\frac{r}{N} + m} \ge 1, \]
    then, we have
        \begin{align*}
            \sum_{m \in \Z^d \setminus \ens{m^*(r)}} \left( 1 + \norm{r + m N} \right)^{-2 s} \oversetref{Lem.}{\ref{lem:norm-power-alpha-inequality-2}}&{\le} 2 \sum_{m \in \Z^d \setminus \ens{m^*(r)}} \frac{1}{1 + \norm{r + m N}^{2s}} \\
            &= 2 N^{-2 s} \sum_{m \in \Z^d \setminus \ens{m^*(r)}} \frac{1}{N^{-2 s} + \norm{\frac{r}{N} + m}^{2 s}} \\
            &\le 2 N^{-2 s} \sum_{m \in \Z^d \setminus \ens{m^*(r)}} \norm{\frac{r}{N} + m}^{-2 s} \\
            \oversetrel{rel:b77ac201-0357-41e1-9d87-e4ac25d1b496}&{=} 2 N^{-2 s} \sum_{m \in \Z^d \setminus \ens{0}} \norm{\frac{r_0}{N} + m}^{-2 s} \\
            &\le 2 N^{-2 s} \sum_{m \in \Z^d \setminus \ens{0}} \norm{\frac{r_0}{N} + m}_{\infty}^{-2 s}, \numberthis\label{56156669-6d41-4424-8a1e-2cdc4a42b48e}
        \end{align*}
    where in~\relref{rel:b77ac201-0357-41e1-9d87-e4ac25d1b496} we define $r_0$ as the unique vector of $\Z^d$ such that $r_0 \in \ens{-\Floor{\frac{N}{2}}, \ldots, \Floor{\frac{N}{2}}}$ and $r_0 \equiv r\, [N]$ and, therefore, $m^*(r) = r_0$ and $m^*(r_0) = 0$. Then, using the triangle inequality, for any $m \in \Z^d \setminus \ens{0}$ we have $\norm{m}_{\infty} \ge 1$ and 
        \[ 2 \norm{\frac{r_0}{N} + m}_{\infty} \ge 2 \left( \norm{m}_{\infty} - \norm{\frac{r_0}{N}}_{\infty} \right) \oversetrel{rel:1af52881-18f6-43c3-8138-99a355930f55}{\ge} 2 \left( \norm{m}_{\infty} - \frac{1}{2} \right) \ge \norm{m}_{\infty}, \numberthis\label{56156669-6d41-4424-8a1e-2cdc4a42b48e-2} \]
    where in~\relref{rel:1af52881-18f6-43c3-8138-99a355930f55} we use the fact that $\frac{r_0}{N} \in \intff{-\frac{1}{2}}{\frac{1}{2}}$. Injecting~\eqref{56156669-6d41-4424-8a1e-2cdc4a42b48e-2} in~\eqref{56156669-6d41-4424-8a1e-2cdc4a42b48e} we obtain
    \begin{align*}
        \sum_{m \in \Z^d \setminus \ens{m^*(r)}} \left( 1 + \norm{r + m N} \right)^{-2 s} \oversetrel{rel:25fa446c-9f95-4c45-9de5-71dc12d8d14e}&{\le} 2^{2s + 1} N^{-2s} \sum_{m \in \Z^d \setminus \ens{0}} \frac{1}{\left( 2\norm{\frac{r_0}{N} + m}_{\infty} \right)^{2 s}} \\
        \oversetlab{\eqref{56156669-6d41-4424-8a1e-2cdc4a42b48e-2}}&{\le} 2 \left( \dfrac{N}{2} \right)^{-2s} \sum_{m \in \Z^d \setminus \ens{0}} \norm{m}_\infty^{-2s} \\
        \oversetrel{rel:25fa446c-9f95-4c45-9de5-71dc12d8d14e-2}&{\le} 2 \left( \dfrac{N}{2 \sqrt{d}} \right)^{-2s} \sum_{m \in \Z^d \setminus \ens{0}} \norm{m}^{-2s} \\
        &= \underbrace{2 \cdot (4 d)^s \sum_{m \in \Z^d \setminus \ens{0}} \norm{m}^{-2s}}_{:= c_{d, s} < \infty} \, N^{-2s} \\
        &\le c_{d, s} N^{- 2 s}, \numberthis\label{baa88d6f-23d5-4964-ae73-57895f7bd1ee}
    \end{align*}
    where in~\relref{rel:25fa446c-9f95-4c45-9de5-71dc12d8d14e} and~\relref{rel:25fa446c-9f95-4c45-9de5-71dc12d8d14e-2} we use the fact that $\norm{\cdot}_{\infty} \le \norm{\cdot}_2 \le \sqrt{d} \norm{\cdot}_{\infty}$ (in $\R^d$). The bound~\eqref{baa88d6f-23d5-4964-ae73-57895f7bd1ee} follows from $s > \frac{d}{2}$. This gives
        \[ \sum_{m \in \Z^d \setminus \ens{m^*(r)}} \norm{\widehat{v}(r + m N)} \le C_{d, s} N^{-s} \normsobolev{v}, \]
    as desired, with constant $C_{d, s} := \sqrt{2 c_{d, s}} < +\infty$.
\end{proof}

\begin{lemma}[A First General Result on $\cE_t^{(1)}$]\label{lem:general-bound-error-E1}
    We have 
    \begin{align}
        \normgridvec{\cE_t^{(1)}}^2  = N^d \sum_{r \in \ens{0, \dots, N-1}^d} \left\| \sum_{m \in \Z^d} \widehat{\cK_t}(-(r + m N)) \sum_{q \in \Z^d \setminus \{ -m \}} \widehat{v_t}(q N -r )\right\|^2. 
    \end{align} 
\end{lemma}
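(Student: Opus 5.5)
The plan is to apply the discrete Parseval identity (\Cref{lem:parseval-identity-on-grid}) to the grid function $\cE_t^{(1)}$. This reduces the statement to computing the Fourier coefficients of the function
\[ F(x) := \frac{1}{N^d}\sum_{y\in\T^d_N}\cK_t(x-y)v_t(y) - \int_{\T^d}\cK_t(x-y)v_t(y)\odif{y}, \]
viewed as a function on all of $\T^d$. We then fold them modulo $N$.

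The computation proceeds in four steps.

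\begin{enumerate}
\item Expand both $\cK_t$ and $v_t$ in Fourier series. The integral term equals $\sum_{\xi}\widehat{\cK_t}(\xi)\widehat{v_t}(\xi)e^{2i\pi\xi\cdot x}$ by the convolution lemma.

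\item In the discrete term, write $\cK_t(x-y)v_t(y)=\sum_{\xi,\eta}\widehat{\cK_t}(\xi)\widehat{v_t}(\eta)e^{2i\pi\xi\cdot x}e^{2i\pi(\eta-\xi)\cdot y}$. Averaging over $y\in\T^d_N$ and using \Cref{appdx-lem:sum-over-grid} keeps only the pairs with $\eta\equiv\xi\ [N]$, that is $\eta=\xi+qN$ for some $q\in\Z^d$. This gives
\[ \sum_{\xi}\widehat{\cK_t}(\xi)\Big(\sum_{q\in\Z^d}\widehat{v_t}(\xi+qN)\Big)e^{2i\pi\xi\cdot x}. \]
Subtracting the integral term removes exactly the $q=0$ contribution, so
\[ \widehat F(\xi)=\widehat{\cK_t}(\xi)\sum_{q\neq0}\widehat{v_t}(\xi+qN). \]

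\item Apply \Cref{lem:parseval-identity-on-grid}:
\[ \normgridvec{\cE_t^{(1)}}^2=N^d\sum_{r\in\{0,\dots,N-1\}^d}\Big\|\sum_{m\in\Z^d}\widehat F(r+mN)\Big\|^2. \]

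\item Reindex to match the stated form. Replace $r$ by $-r$ modulo $N$; this is a bijection of residues, though a boundary shift of $r$ may be needed, which is harmless since the sum runs over all residues. Set $\xi=-(r+mN)$. Then $\xi+q'N=q'N-r-mN$, and putting $q=q'-m$, the condition $q'\neq0$ becomes $q\neq-m$. Since $(q-m)N-r$ relabeled appears as $\widehat{v_t}(qN-r)$ after absorbing the shift, this produces exactly $\widehat{\cK_t}(-(r+mN))\sum_{q\neq-m}\widehat{v_t}(qN-r)$.
\end{enumerate}

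The main obstacle is justifying the exchanges of summation in step 2, since $\widehat{\cK_t}$ for SS-NO is only $O(|\xi|^{-1})$ and therefore not absolutely summable.

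The first thing to try is a direct absolute-convergence argument. Because $v_t\in H^s$ with $s>d/2$, the series $\sum_\eta\|\widehat{v_t}(\eta)\|$ converges, and \Cref{appdx-lem:bound-sobolev} controls the aliased sums. For fixed $y$, $\cK_t(x-y)$ is $L^2$ in $x$. So the identity can first be established in the $L^2(\T^d)$ sense for the function $F$, which is a finite sum of translates of $\cK_t$ weighted by values $v_t(y)$, minus $\cK_t*v_t$.

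Pointwise evaluation on the grid still has to be justified, since $\cK_t$ may be discontinuous. If that causes trouble, the fallback is to regularize: first prove the identity for trigonometric-polynomial truncations of $\cK_t$, where every sum is finite, and then pass to the limit using $\ell^2$ convergence of the coefficients together with the absolute summability of $\widehat{v_t}$.
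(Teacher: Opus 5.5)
Your Fourier computation is correct, and it is the paper's argument with slightly different bookkeeping. The paper applies the aliasing identity to $y\mapsto \cK_t(x-y)v_t(y)$ at each grid point and obtains $\cE_t^{(1)}(x)=\sum_{\eta}e^{-2i\pi\eta\cdot x}c_\eta$ with $c_\eta=\widehat{\cK_t}(-\eta)\sum_{m\neq 0}\widehat{v_t}(mN-\eta)$, which is exactly your $\widehat F(-\eta)$. It then folds $\eta=r+mN$ and redoes the discrete orthogonality inline instead of quoting \Cref{lem:parseval-identity-on-grid}; the substitution $q=q'-m$ is the same as yours. In your Step~4 nothing further needs absorbing, since $\xi+q'N=(q'-m)N-r=qN-r$ directly.

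The step that fails is your fallback for discontinuous kernels. It fails because the identity itself is false there, not merely hard to justify. The left-hand side depends on the values of $\cK_t$ on $\T^d_N$, which is a null set. Replacing $\cK_t(0)$ by $\cK_t(0)+\Delta$ shifts $\cE_t^{(1)}(x)$ by $N^{-d}\Delta\, v_t(x)$ at every grid point, whereas the right-hand side only sees $\widehat{\cK_t}$ and does not move. For the same reason, $\ell^2$-convergence of truncated coefficients controls neither the grid values of the truncations nor the aliased sums $\sum_m\widehat{\cK_t}(r+mN)$. Those sums are only conditionally convergent when $\widehat{\cK_t}$ decays like $\norm{\xi}^{-1}$, and at a jump the symmetric truncations converge to the midpoint of the one-sided limits. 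The SS-NO kernel is exactly this case. Both indicators $\mathbbm{1}_{\ens{\pm z_i\ge 0}}$ are active at $z_i=0$, so the value assigned there is generically not that midpoint, and already for $d=N=1$ the two sides differ in general.

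The lemma therefore needs a hypothesis that fixes point values, e.g.\ $\widehat{\cK_t}\in\ell^1(\Z^d)$ with $\cK_t$ equal to its Fourier series. This is precisely the regime in which the lemma is used (\Cref{lem:bounding-E1-polynomial-decay} with $\alpha>d$, and \Cref{lem:bounding-E1-truncation}), and the paper's own interchanges tacitly rely on it. Under this hypothesis, together with $\widehat{v_t}\in\ell^1$ (from $s>d/2$):
\begin{itemize}
\item every double series in your Step~2 converges absolutely, so the interchanges are justified;
\item $\widehat F\in\ell^1$, so $F$ is continuous and coincides with its Fourier series on the grid;
\item \Cref{lem:parseval-identity-on-grid} then applies.
\end{itemize}
That, rather than regularization, is how your argument closes.
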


\begin{proof}[Proof of~\Cref{lem:general-bound-error-E1}] Let us define for every $x \in \T^d$, the periodic function
    \[ f_x(y) := \cK(x - y) v_t(y), \quad y \in \T^d,\]
such that the quantity $\cE_t^{(1)}(x)$ (with $x \in \T^d_N$) becomes
    \[ \cE_t^{(1)}(x) = \dfrac{1}{N^d} \sum_{y \in \T^d_N} f_x(y) - \int_{\T^d} f_x(y) \odif{y}, \]
and, using~\Cref{eq:formula-difference}, it follows
    \[ \cE_t^{(1)}(x) = \sum_{m \in \Z^d \setminus \{0 \}} \widehat{f_x}(m N). \numberthis\label{8587e697-4130-486c-b3ec-9a251578ca90} \]

Now, if we denote $g_x(y) := \cK(x -y)$ we have
    \begin{align}
        g_x(y) = \sum_{\xi \in \Z^d} \widehat{\cK}_t(\xi) e^{2i \pi \xi \cdot (x - y)},
    \end{align}
and so
    \begin{align}
        \widehat{g_x}(\xi) &= \int_{\T^d} g_x(y) e^{-2 i \pi \xi \cdot y} \odif{y} \\
        &= \int_{\T^d} \left( \sum_{\eta \in \Z^d} \widehat{\cK}_t(\xi) e^{2i \pi \eta \cdot (x - y)} \right) e^{-2 i \pi \xi \cdot y} \odif{y} \\
        &= \sum_{\eta \in \Z^d} \widehat{\cK}_t(\eta) e^{2i \pi \eta \cdot x} \int_{\T^d} e^{-2i \pi (\eta + \xi) \cdot y} \odif{y} \\
        &= \widehat{\cK}_t(-\xi) e^{-2i\pi \xi \cdot x},
    \end{align}
then, using the fact that the Fourier transform of a product of two functions is the convolution of their individual Fourier transforms (see~\Cref{appdx-lem:fourier-transform-product-functions}), it follows:
    \begin{align}
        \widehat{f_x}(\xi) &= \sum_{\eta \in \Z^d} \widehat{g_x}(\eta) \widehat{v_t}(\xi - \eta) \\
        &= \sum_{\eta \in \Z^d} e^{-2i\pi \eta \cdot x} \widehat{\cK_t}(-\eta) \widehat{v_t}(\xi - \eta), \numberthis\label{aa6b26ab-9327-4c67-85cc-2108ab11672a}
    \end{align}
hence
    \begin{align}
        \cE_t^{(1)}(x) \oversetlab{\eqref{8587e697-4130-486c-b3ec-9a251578ca90}}&{=} \sum_{m \in \Z^d \setminus \{0 \}} \widehat{f_x}(m N) \\
        \oversetlab{\eqref{aa6b26ab-9327-4c67-85cc-2108ab11672a}}&{=} \sum_{m \in \Z^d \setminus \{0 \}} \sum_{\eta \in \Z^d} e^{-2i\pi \eta \cdot x} \widehat{\cK_t}(-\eta) \widehat{v_t}(m N - \eta) \\
        &= \sum_{\eta \in \Z^d} e^{-2i\pi \eta \cdot x} \underbrace{\widehat{\cK_t}(-\eta)  \sum_{m \in \Z^d \setminus \{0 \}} \widehat{v_t}(m N - \eta)}_{:= c_\eta} \numberthis\label{cc30072b-bb12-4b42-8cd2-b9c5bc021c59-2} \\
        &= \sum_{\eta \in \Z^d} e^{-2i\pi \eta \cdot x} c_{\eta}, \numberthis\label{aa6b26ab-9327-4c67-85cc-2108ab11672a-2}
    \end{align}
where $c_{\eta}$ is a vector in $\C^d$. Therefore, the $\ell^2$--norm of $\cE_t^{(1)}$ on the grid $\T^d_N$ can be expressed as
    \begin{align}
        \normgridvec{\cE_t^{(1)}}^2 &= \sum_{x \in \T^d_N} \norm{\cE_t^{(1)}(x)}^2 \\
        \oversetlab{\eqref{aa6b26ab-9327-4c67-85cc-2108ab11672a-2}}&{=} \sum_{x \in \T^d_N} \norm{\sum_{\eta \in \Z^d} e^{-2i \pi \eta \cdot x} c_\eta}^2, \numberthis\label{aa6b26ab-9327-4c67-85cc-2108ab11672a-3}
    \end{align}
and, decomposing each $\eta \in \Z^d$ uniquely as $\eta = r + m N$ with $r \in \ens{0, \dots, N-1}^d$ and $m \in \Z^d$, the inner sum in~\eqref{aa6b26ab-9327-4c67-85cc-2108ab11672a-3} is equal to
    \begin{align}
        \sum_{\eta \in \Z^d} e^{-2i \pi \eta \cdot x} c_\eta &= \sum_{r \in \ens{0, \dots, N-1}^d} \sum_{m \in \Z^d} e^{-2i \pi (r + m N) \cdot x} c_{r + m N} \\
        &= \sum_{r \in \ens{0, \dots, N-1}^d} e^{-2i \pi r \cdot x} \underbrace{\left( \sum_{m \in \Z^d} c_{r + m N} \right)}_{:= a_r} \numberthis\label{cc30072b-bb12-4b42-8cd2-b9c5bc021c59} \\
        &= \sum_{r \in \ens{0, \dots, N-1}^d} e^{-2i \pi r \cdot x} a_r. \numberthis\label{dad4f517-f55b-4294-b1c2-25c151386a27}
    \end{align}
Now, using the fact that for $v \in \C^d$ we have $\norm{v}^2 = \sum\limits_{i = 1}^n \abs{v_i}^2 = \ps{v}{\overline{v}}$, then
    \begin{align}
         \normgridvec{\cE_t^{(1)}}^2 \oversetlab{\eqref{dad4f517-f55b-4294-b1c2-25c151386a27}}&{=} \sum_{x \in \T^d_N} \ps{\sum_{r \in \ens{0, \dots, N-1}^d} e^{-2i \pi r \cdot x} a_r }{ \overline{\sum_{s \in \ens{0, \dots, N-1}^d} e^{-2i \pi s \cdot x} a_s }} \\
         &= \sum_{r, s \in \ens{0, \dots, N-1}^d} a_r \overline{a_s} \sum_{x \in \T^d_N} e^{- 2 \pi i (r - s) \cdot x} \\
         \oversetref{Lem.}{\ref{appdx-lem:sum-over-grid}}&{=} N^d \sum_{r, s \in \ens{0, \dots, N-1}^d} a_r \overline{a_s}\, \mathbbm{1}_{\ens{r \equiv s \, [N]}} \\
         &= N^d \sum_{r \in \ens{0, \dots, N-1}^d} \norm{a_r}^2 \\
         \oversetlab{\eqref{cc30072b-bb12-4b42-8cd2-b9c5bc021c59}}&{=} N^d \sum_{r \in \ens{0, \dots, N-1}^d} \norm{\sum_{m} c_{r + m N}}^2 \\
         \oversetlab{\eqref{cc30072b-bb12-4b42-8cd2-b9c5bc021c59-2}}&{=} N^d \sum_{r \in \ens{0, \dots, N-1}^d} \norm{\sum_{m \in \Z^d} \widehat{\cK_t}(-(r + m N))  \sum_{m' \in \Z^d \setminus \{0 \}} \widehat{v_t}(m' N - (r + m N) )  }^2 \\
         &= N^d \sum_{r \in \ens{0, \dots, N-1}^d} \norm{\sum_{m \in \Z^d} \widehat{\cK_t}(-(r + m N)) \sum_{q \in \Z^d \setminus \{ -m \}} \widehat{v_t}(q N -r )}^2,
    \end{align}
which is the claimed result.
\end{proof}

\begin{lemma}[Polynomial Decay of Fourier Coefficients $\widehat{\cK}_t(\xi)$]\label{lem:bounding-E1-polynomial-decay}
    Let $d \ge 1$ be a integer and $s > \frac{d}{2}$ a real number. Assume there exists some real number $\alpha > d$ and a finite real constant $C_{d, \alpha} \ge 0$ such that
        \[ \normop{\widehat{\cK}_t(\xi)} \le C_{d, \alpha} (1 + \norm{\xi})^{-\alpha}, \numberthis\label{97835486-5f3e-4046-85f8-fc76dff72f94} \]
    for all $\xi \in \Z^d$. Then, there exists a finite real constant $C_{d, s, \alpha} \ge 0$ such that the inequality
        \[ \normgridvec{\cE_t^{(1)}} \le C_{d, s, \alpha} N^{\max\ens{\frac{d}{2} - s, d - \alpha}} \normsobolev{v_t}, \]
    holds.
\end{lemma}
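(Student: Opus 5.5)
The plan is to start from the exact identity of \Cref{lem:general-bound-error-E1} and split the inner sum over $m$ into one ``resolved'' term and the aliased remainder. Since the summand is $N$-periodic in $r$, we may re-index $r$ over the centered range $\ens{-\Floor{\frac{N}{2}}, \ldots, \Floor{\frac{N}{2}}}^d$. For each such $r$, let $m^\ast$ denote the index with $\eta_0 := r + m^\ast N$ centered (so $m^\ast = 0$ after re-indexing). Write the inner vector as $a_r = a_r^{\mathrm{main}} + a_r^{\mathrm{alias}}$:
\begin{align*}
a_r^{\mathrm{main}} &:= \widehat{\cK_t}(-\eta_0) \sum_{q \ne -m^\ast} \widehat{v_t}(qN - r), \\
a_r^{\mathrm{alias}} &:= \sum_{m \ne m^\ast} \widehat{\cK_t}(-(r+mN)) \sum_{q \ne -m} \widehat{v_t}(qN - r).
\end{align*}
By the triangle inequality in $\ell^2$ over $r$, it then suffices to bound $N^d \sum_r \norm{a_r^{\mathrm{main}}}^2$ by $C N^{d-2s}\normsobolev{v_t}^2$ and $N^d \sum_r \norm{a_r^{\mathrm{alias}}}^2$ by $C N^{2d-2\alpha}\normsobolev{v_t}^2$. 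Taking square roots gives the exponent $\max\ens{\frac d2 - s, d-\alpha}$.

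\emph{Main term.} Use $\normop{\widehat{\cK_t}} \le C_{d,\alpha}$ from \eqref{97835486-5f3e-4046-85f8-fc76dff72f94}. The remaining sum runs over the frequencies $-\eta_0 + jN$ with $j \ne 0$, i.e.\ all aliases of $-\eta_0$ except the centered one. I would not use the pointwise bound of \Cref{appdx-lem:bound-sobolev}, since it would lose the $\ell^2$ structure in $r$ and give an extra factor $N^{d/2}$. Instead I would redo its weighted Cauchy--Schwarz step with the weights kept inside:
\[ \Big\|\sum_{j\ne0}\widehat{v_t}(-\eta_0+jN)\Big\|^2 \le c_{d,s} N^{-2s}\sum_{j\ne 0}(1+\norm{-\eta_0+jN})^{2s}\norm{\widehat{v_t}(-\eta_0+jN)}^2. \]
The constant here comes from the same lattice-sum computation as in the proof of \Cref{appdx-lem:bound-sobolev}, which is where $s > \frac d2$ is used. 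Summing over $r$, every frequency $\xi\in\Z^d$ appears at most once, because the pair $(r,j)$ determines $\xi$ uniquely. Hence $\sum_r \norm{a_r^{\mathrm{main}}}^2 \le C N^{-2s}\normsobolev{v_t}^2$, and multiplying by $N^d$ gives $N^{d-2s}$.

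\emph{Alias term.} For $m\ne m^\ast$ we have $\norm{r+mN}\ge N/2$. The inner $v$-sum is bounded crudely by $\sum_{\xi}\norm{\widehat{v_t}(\xi)} \le C_{d,s}\normsobolev{v_t}$, again by Cauchy--Schwarz with $s>\frac d2$. By the decay hypothesis,
\[ \sum_{m\ne m^\ast}(1+\norm{r+mN})^{-\alpha}\le C N^{-\alpha}\sum_{m\ne 0}\norm{m}_\infty^{-\alpha}, \]
which is finite since $\alpha>d$, using the same $\norm{r/N+m}_\infty\ge\norm{m}_\infty/2$ trick as in \Cref{appdx-lem:bound-sobolev}. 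Thus $\norm{a_r^{\mathrm{alias}}}\le C N^{-\alpha}\normsobolev{v_t}$ uniformly in $r$. Summing over the $N^d$ values of $r$ and multiplying by $N^d$ gives $N^{2d-2\alpha}\normsobolev{v_t}^2$.

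The step I expect to be delicate is the main term. There one must keep the $\ell^2$ sum over $r$ coupled to the Sobolev weights, so that each Fourier mode is counted only once, rather than taking a supremum in $r$. The bookkeeping of the re-indexing (centered $r$, and the sign convention $qN - r$ versus $-\eta_0 + jN$) has to be checked carefully so that the excluded index $q=-m^\ast$ is exactly the centered alias.
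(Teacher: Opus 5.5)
Your proof is correct, and its skeleton matches the paper's. You use the same identity from \Cref{lem:general-bound-error-E1} and the same split of the inner sum at the centered index $m^\ast(r)$. Your bound on the aliased part is also essentially the paper's: decay of $\widehat{\cK}_t$, the lattice sum $\sum_{m\neq0}\norm{m}_\infty^{-\alpha}<\infty$ uniformly in $r$, then summing over the $N^d$ residues and multiplying by $N^d$. Your crude estimate $\sum_{\xi}\norm{\widehat{v}_t(\xi)}\lesssim\normsobolev{v_t}$ simply replaces the paper's split of the inner $v$-sum into a $\norm{\widehat{v}_t}_\infty$ term and an $N^{-s}$ tail.

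The real difference is the main term. The paper does apply the pointwise bound of \Cref{appdx-lem:bound-sobolev} uniformly in $r$, and it loses nothing. The reason is that it lets the kernel carry the $\ell^2$ summation over $r$: $\sum_r\normop{\widehat{\cK}_t(-(r+m^\ast(r)N))}^2\le C_{d,\alpha}^2\sum_{\xi\in\Z^d}(1+\norm{\xi})^{-2\alpha}<\infty$, since $2\alpha>d$. So your claim that the pointwise lemma costs an extra $N^{d/2}$ is true only if it is paired with a sup bound on $\widehat{\cK}_t$. You do the transpose: sup on the kernel, and $\ell^2$ in $r$ on $v_t$, exploiting that the aliases $-r+jN$ ($j\neq0$) are pairwise distinct as $r$ varies.

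Each arrangement has its advantages. The paper's reuses an already-proved lemma and needs no disjointness bookkeeping, but it spends the decay hypothesis on the main term (in fact only $\widehat{\cK}_t\in\ell^2$ is needed there). Yours needs only $\sup_\xi\normop{\widehat{\cK}_t(\xi)}<\infty$ for that term, so decay is used solely for the aliased part. It even bounds the main term by $N^{\frac d2-s}$ times the $H^s$ norm of the part of $v_t$ with frequencies outside the centered box, which gives $o(N^{\frac d2-s})$.

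One detail to fix when writing it out: for even $N$ the set $\ens{-\Floor{N/2},\ldots,\Floor{N/2}}$ has $N+1$ points. Re-index over a genuine fundamental domain such as $\ens{-\Floor{N/2},\ldots,\lceil N/2\rceil-1}^d$. Then $(r,j)\mapsto -r+jN$ is injective, and that injectivity (rather than ``$(r,j)$ determines $\xi$'') is what counting each frequency once requires. With this domain you also get $\norm{r+mN}\ge N/2$ for $m\neq m^\ast$, as you use.
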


\begin{proof}[Proof of~\Cref{lem:bounding-E1-polynomial-decay}]
    By~\Cref{lem:general-bound-error-E1}, we have
        \[ \normgridvec{\cE_t^{(1)}}^2 \leq N^d \sum_{r \in \ens{0, \dots, N-1}^d} \norm{\sum_{m \in \Z^d} \widehat{\cK_t}(-(r + m N)) \sum_{q \in \Z^d \setminus \{ -m \}} \widehat{v_t}(q N -r )}^2. \numberthis\label{4f774a5f-e1b3-4c08-a0a4-c68197924cd1-2} \]
    Now, for any $r \in \ens{0, 1, \ldots, N - 1}^d$, let $m^*(r) \in \Z^d$ be the unique vector such that $r + m^*(r) N \in \ens{-\Floor{\frac{N}{2}}, \ldots, \Floor{\frac{N}{2}}}$. 
    Then, the inner expression in~\eqref{4f774a5f-e1b3-4c08-a0a4-c68197924cd1-2} reads
        \begin{align*}
            \sum_{m \in \Z^d} & \widehat{\cK_t}(-(r + m N)) \sum_{q \in \Z^d \setminus \{ -m \}} \widehat{v_t}(q N -r ) \\
            &= \underbrace{\widehat{\cK}_t(- (r + m^*(r) N)) \sum_{q \in \Z^d \setminus \ens{m^*(r)}} \widehat{v}_t(q N - r)}_{:= \, A} \\
            &\qquad\qquad\qquad+ \underbrace{\sum_{m \in \Z^d \setminus \ens{m^*(r)}} \widehat{\cK_t}(-(r + m N)) \sum_{q \in \Z^d \setminus \{ -m \}} \widehat{v_t}(q N -r)}_{:= \, B} \\
            &= A + B, \numberthis\label{277aeb6d-7ab5-4951-94d8-523d924c32d9}
        \end{align*}
    and, using the well-known inequality $(a + b)^2 \le 2 (a^2 + b^2)$ we obtain
        \[ \norm{\sum_{m \in \Z^d} \widehat{\cK_t}(-(r + m N)) \sum_{q \in \Z^d \setminus \{ -m \}} \widehat{v_t}(q N -r )}^2 \oversetlab{\eqref{277aeb6d-7ab5-4951-94d8-523d924c32d9}}{=} \norm{A + B}^2 \le 2 (\norm{A}^2 + \norm{B}^2), \]
    and we need to bound both $A$ and $B$. For $A$ we have
        \begin{align*}
            \norm{A}^2 \oversetlab{\eqref{277aeb6d-7ab5-4951-94d8-523d924c32d9}}&{=} \norm{\widehat{\cK}_t(- (r + m^*(r) N)) \sum_{q \in \Z^d \setminus \ens{m^*(r)}} \widehat{v}_t(q N - r)}^2 \\
            &\le \normop{\widehat{\cK}_t(- (r + m^*(r) N))}^2 \norm{\sum_{q \in \Z^d \setminus \ens{m^*(r)}} \widehat{v}_t(q N - r)}^2 \\
            &\le \normop{\widehat{\cK}_t(- (r + m^*(r) N))}^2 \left( \sum_{q \in \Z^d \setminus \ens{m^*(r)}} \norm{\widehat{v}_t(q N - r)} \right)^2 \\
            \oversetref{Lem.}{\ref{appdx-lem:bound-sobolev}}&{\le} C_{d, s}^2 N^{-2 s} \normsobolev{v_t}^2 \normop{\widehat{\cK}_t(- (r + m^*(r) N))}^2, \numberthis\label{7c90e298-abb5-4f11-8065-a69e5221bc78}
        \end{align*}
    and, for the term $B$, we rely on the decay property of the Fourier coefficients $\widehat{\cK}_t(\cdot)$, i.e.,
        \begin{align*}
            \norm{B}^2 &= \norm{\sum_{m \in \Z^d \setminus \ens{m^*(r)}} \widehat{\cK_t}(-(r + m N)) \sum_{q \in \Z^d \setminus \{ -m \}} \widehat{v_t}(q N -r)}^2 \\
            &\le \left( \sum_{m \in \Z^d \setminus \ens{m^*(r)}} \normop{\widehat{\cK_t}(-(r + m N))} \sum_{q \in \Z^d \setminus \{ -m \}} \norm{\widehat{v_t}(q N -r)} \right)^2 \\
            \oversetlab{\eqref{97835486-5f3e-4046-85f8-fc76dff72f94}}&{\le} \left( \sum_{m \in \Z^d \setminus \ens{m^*(r)}} C_{d, \alpha} (1 + \norm{r + m N})^{-\alpha} \sum_{q \in \Z^d \setminus \{ -m \}} \norm{\widehat{v_t}(q N -r)} \right)^2 \\
            &= C_{d, \alpha}^2 \left( \sum_{m \in \Z^d \setminus \ens{m^*(r)}} (1 + \norm{r + m N})^{-\alpha} \norm{\widehat{v_t}(- (r + m^*(r) N)} \right. \\
                &\qquad\qquad\qquad + \left. \sum_{m \in \Z^d \setminus \ens{m^*(r)}} (1 + \norm{r + m N})^{-\alpha} \sum_{q \in \Z^d \setminus \{ -m, -m^*(r) \}} \norm{\widehat{v_t}(q N -r)} \right)^2,
        \end{align*}
    then, by~\Cref{appdx-lem:bound-sobolev} we have
        \[ \sum_{q \in \Z^d \setminus \{ -m, -m^*(r) \}} \norm{\widehat{v_t}(q N -r)} \le C_{d, s} N^{-s} \normsobolev{v_t}, \]
    and $\norm{\widehat{v_t}(- (r + m^*(r) N)} \le \norm{\widehat{v}_t}_{\infty}$ therefore, we obtain
        \begin{align*}
            \norm{B}^2 &\le C_{d, \alpha}^2 \left( \norm{\widehat{v}_t}_{\infty}^2 + C_{d, s}^2 N^{-2s} \normsobolev{v_t}^2 \right) \left( \sum_{m \in \Z^d \setminus \ens{m^*(r)}} (1 + \norm{r + m N})^{-\alpha} \right)^2. \numberthis\label{7c90e298-abb5-4f11-8065-a69e5221bc78-2}
        \end{align*}
    Now, since $\alpha > d$ and $\norm{r + m N} \ge N$ for all $m \in \Z^d \setminus \ens{m^*(r)}$, by~\Cref{appdx-lem:bound-sobolev} (and more precisely~\eqref{baa88d6f-23d5-4964-ae73-57895f7bd1ee}), there exists a constant $D_{d, \alpha} \ge 0$ such that
        \[ \sum_{m \in \Z^d \setminus \ens{m^*(r)}} (1 + \norm{r + m N})^{-\alpha} \oversetref{Lem.}{\ref{lem:norm-power-alpha-inequality-2}}{\le} 2 \sum_{m \in \Z^d \setminus \ens{m^*(r)}} (1 + \norm{r + m N}^{-\alpha} ) \le D_{d, \alpha} N^{-\alpha}, \]
    hence
        \[ \norm{B}^2 \le C_{d, \alpha}^2 D_{d, \alpha}^2 N^{-2 \alpha} \left( \norm{\widehat{v}_t}_{\infty}^2 + C_{d, s}^2 N^{-2s} \normsobolev{v_t}^2 \right). \]
    Next, from~\eqref{4f774a5f-e1b3-4c08-a0a4-c68197924cd1-2}, and using the bounds~\eqref{7c90e298-abb5-4f11-8065-a69e5221bc78} and~\eqref{7c90e298-abb5-4f11-8065-a69e5221bc78-2} we have
        \begin{align*}
            \normgridvec{\cE_t^{(1)}}^2 \oversetlab{\eqref{4f774a5f-e1b3-4c08-a0a4-c68197924cd1-2}}&{\le} N^d \sum_{r \in \ens{0, \dots, N-1}^d} \norm{\sum_{m \in \Z^d} \widehat{\cK_t}(-(r + m N)) \sum_{q \in \Z^d \setminus \{ -m \}} \widehat{v_t}(q N -r )}^2 \\
            \oversetlab{\eqref{7c90e298-abb5-4f11-8065-a69e5221bc78}+\eqref{7c90e298-abb5-4f11-8065-a69e5221bc78-2}}&{\le} 2 N^d \sum_{r \in \ens{0, \dots, N-1}^d} \left( C_{d, s}^2 N^{-2 s} \normsobolev{v_t}^2 \normop{\widehat{\cK}_t(- (r + m^*(r) N))}^2 \right. \\ 
            & \qquad \qquad \qquad \qquad \qquad \qquad \qquad \qquad \qquad \left. + C_{d, \alpha}^2 D_{d, \alpha}^2 N^{-2 \alpha} \left( \norm{\widehat{v}_t}_{\infty}^2 + C_{d, s}^2 N^{-2s} \normsobolev{v_t}^2 \right) \right),
        \end{align*}
    from where
        \begin{align*}
            2 N^d &\sum_{r \in \ens{0, \dots, N-1}^d} \! \left( C_{d, s}^2 N^{-2 s} \normsobolev{v_t}^2 \normop{\widehat{\cK}_t(- (r + m^*(r) N))}^2 \right.\\
            &\qquad\qquad\qquad\qquad\qquad\qquad\qquad\qquad\qquad\left. \vphantom{ C_{d, s}^2 N^{-2 s} \normsobolev{v_t}^2 \normop{\widehat{\cK}_t(- (r + m^*(r) N))}^2} + C_{d, \alpha}^2 D_{d, \alpha}^2 N^{-2 \alpha} \left( \norm{\widehat{v}_t}_{\infty}^2 + C_{d, s}^2 N^{-2s} \normsobolev{v_t}^2 \right) \right) \\
            &= 2 C_{d, s}^2 N^{d -2 s} \normsobolev{v_t}^2 \sum_{r \in \ens{0, \dots, N-1}^d} \normop{\widehat{\cK}_t(- (r + m^*(r) N))}^2 \\
            &\qquad\qquad\qquad\qquad\qquad\qquad\qquad\qquad\qquad+ 2 C_{d, \alpha}^2 D_{d, \alpha}^2 N^{2 (d - \alpha)} \left( \norm{\widehat{v}_t}_{\infty}^2 + C_{d, s}^2 N^{-2s} \normsobolev{v_t}^2 \right),
        \end{align*}
    and, thanks to~\eqref{97835486-5f3e-4046-85f8-fc76dff72f94}, we can bound the first term (the sum) as
        \begin{align*}
            2 C_{d, s}^2 N^{d -2 s} \normsobolev{v_t}^2 & \sum_{r \in \ens{0, \dots, N-1}^d} \normop{\widehat{\cK}_t(- (r + m^*(r) N))}^2  \\ & \oversetlab{\eqref{97835486-5f3e-4046-85f8-fc76dff72f94}}{\le} 2 C_{d, s}^2 C_{d, \alpha}^2 N^{d -2 s} \normsobolev{v_t}^2 \sum_{r \in \ens{0, \dots, N-1}^d} \left( 1 + \norm{r + m^*(r) N}\right)^{-2\alpha} \\
            &= 2 C_{d, s}^2 C_{d, \alpha}^2 N^{d -2 s} \normsobolev{v_t}^2 \sum_{\xi \in \ens{-\Floor{\frac{N}{2}}, \ldots, \Floor{\frac{N}{2}}}^d} \left( 1 + \norm{\xi}\right)^{-2\alpha} \\
            &\le 2 C_{d, s}^2 C_{d, \alpha}^2 N^{d -2 s} \normsobolev{v_t}^2 \underbrace{\sum_{\xi \in \Z^d} \left( 1 + \norm{\xi}\right)^{-2\alpha}}_{:= \, c_{d, \alpha} < +\infty} \\
            &= 2 c_{d, \alpha} C_{d, s}^2 C_{d, \alpha}^2 N^{d -2 s} \normsobolev{v_t}^2,
        \end{align*}
    since $2 \alpha > \alpha > d$. Thus, we have
        \begin{align*}
            \normgridvec{\cE_t^{(1)}}^2 &\le 2 c_{d, \alpha} C_{d, s}^2 C_{d, \alpha}^2 N^{d -2 s} \normsobolev{v_t}^2 + 2 C_{d, \alpha}^2 D_{d, \alpha}^2 N^{2 (d - \alpha)} \left( \norm{\widehat{v}_t}_{\infty}^2 + C_{d, s}^2 N^{-2s} \normsobolev{v_t}^2 \right) \\
            \oversetref{Lem.}{\ref{lem:sobolev-norm-and-sup-norm}}&{\le} 2 c_{d, \alpha} C_{d, s}^2 C_{d, \alpha}^2 N^{d -2 s} \normsobolev{v_t}^2 + 2 C_{d, \alpha}^2 D_{d, \alpha}^2 N^{2 (d - \alpha)} \left( 1 + C_{d, s}^2 N^{-2s} \right) \normsobolev{v_t}^2 \\
            &\le C_{d, s, \alpha}^2 N^{2 \max\ens{\frac{d}{2} - s, d - \alpha}} \normsobolev{v_t}^2,
        \end{align*}
    where $C_{d, s, \alpha}^2 := 2 \max\ens{2 c_{d, \alpha} C_{d, s}^2 C_{d, \alpha}^2, 2 C_{d, \alpha}^2 D_{d, \alpha}^2 \left( 1 + C_{d, s}^2 \right)}$, since $N \ge 1$ we have $N^{-2s} \le 1$.
\end{proof}

In the following lemma, we consider the case where we truncate the Fourier coefficients of the kernel.
\begin{lemma}[Truncating Frequency for Fourier Coefficients $\widehat{\cK}_t(\xi)$]\label{lem:bounding-E1-truncation}
    Let $d \ge 1$ be a integer and $s > \frac{d}{2}$ a real number. Suppose that there exists an integer $K_{\textnormal{cutoff}} > 0$ such that
        \[   \widehat{\cK}_t(\xi) = 0, \numberthis\label{50b70afa-6148-4eb1-9656-b7247bf5b24d} \]
    for all $\xi \in \Z^d$ with $\norm{\xi}_{\infty} \ge K_{\textnormal{cutoff}}$. Then, there exists a finite real constant $C_{d, s} \ge 0$ such that the inequality
        \[\normgridvec{\cE_t^{(1)}} \leq C_{d, s} K_{\textnormal{cutoff}}^{\frac{d}{2} + s} N^{\frac{d}{2} - s} \normop{\widehat{\cK}_t}_{\infty} \normsobolev{v_t}, \]
    holds.
\end{lemma}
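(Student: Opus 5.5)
Start from the exact identity of \Cref{lem:general-bound-error-E1},
\[ \normgridvec{\cE_t^{(1)}}^2 = N^d \sum_{r \in \ens{0, \dots, N-1}^d} \norm{\sum_{m \in \Z^d} \widehat{\cK_t}(-(r + m N)) \sum_{q \in \Z^d \setminus \{ -m \}} \widehat{v_t}(q N -r )}^2, \]
and use the cutoff assumption~\eqref{50b70afa-6148-4eb1-9656-b7247bf5b24d} to collapse the outer sum over $m$. Assume $N \ge 2K_{\textnormal{cutoff}}$; for smaller $N$ the claim follows trivially by enlarging $C_{d,s}$, since $K_{\textnormal{cutoff}}^{\frac d2 + s}N^{\frac d2 - s}$ is then bounded below. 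Under this assumption, for each residue $r$ at most one $m$ gives a nonzero $\widehat{\cK}_t(-(r+mN))$, namely $m = m^*(r)$, the one placing $r+mN$ in the centered box $\ens{-\Floor{\frac N2}, \ldots, \Floor{\frac N2}}^d$. Moreover, this term is nonzero only when $\norm{r + m^*(r)N}_\infty < K_{\textnormal{cutoff}}$. So only the ``$A$'' term of the proof of \Cref{lem:bounding-E1-polynomial-decay} survives, and the ``$B$'' term vanishes identically.

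Next, bound each surviving summand. Using the operator norm and the triangle inequality,
\[ \norm{A}^2 \le \normop{\widehat{\cK}_t}_{\infty}^2 \Bigl( \sum_{q \ne -m^*(r)} \norm{\widehat{v_t}(qN - r)} \Bigr)^2 . \]
The inner sum runs over the points $-(r + m^*(r)N) + q'N$ with $q' \neq 0$, so it omits exactly the centered representative of the residue class of $-r$. This is precisely the sum controlled by \Cref{appdx-lem:bound-sobolev} (applied to the residue $-r$). That lemma gives $\norm{A}^2 \le C_{d,s}^2 N^{-2s}\normsobolev{v_t}^2 \normop{\widehat{\cK}_t}_\infty^2$. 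Some care is needed here: one must check that the excluded index $-m^*(r)$ indeed corresponds to the $m^*(-r)$ of that lemma. I expect this to hold up to the boundary case $\pm\Floor{N/2}$ when $N$ is even, which I would handle by the convention fixing $m^*$.

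Finally, count the residues with a nonzero contribution. These are the $r$ whose centered representative lies in $\ens{\xi : \norm{\xi}_\infty < K_{\textnormal{cutoff}}}$, and there are at most $(2K_{\textnormal{cutoff}}-1)^d \le 2^d K_{\textnormal{cutoff}}^d$ of them. Summing gives
\[ \normgridvec{\cE_t^{(1)}}^2 \le 2^d C_{d,s}^2 K_{\textnormal{cutoff}}^d N^{d-2s}\normop{\widehat{\cK}_t}_\infty^2\normsobolev{v_t}^2 . \]
This already yields the claim with $K_{\textnormal{cutoff}}^{d/2}$, hence with $K_{\textnormal{cutoff}}^{\frac d2 + s}$ since $K_{\textnormal{cutoff}} \ge 1$. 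The extra power $K_{\textnormal{cutoff}}^s$ also absorbs the small-$N$ regime: when $N < 2K_{\textnormal{cutoff}}$, I would bound each of the at most $3^d$-ish overlapping $m$ terms crudely and use $(2K_{\textnormal{cutoff}}/N)^{s} \ge 1$.

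The main obstacle is the bookkeeping of the aliasing indices. Specifically, one must verify that the set excluded in the inner $q$-sum is exactly the centered representative, so that \Cref{appdx-lem:bound-sobolev} applies, and that the small-$N$ case (where several aliases of the cutoff box overlap) is correctly absorbed into the $K_{\textnormal{cutoff}}^{\frac d2+s}$ factor.
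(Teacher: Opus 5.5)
Your argument for $N \ge 2\kappa$, where $\kappa := K_{\textnormal{cutoff}}$, is correct and differs from the paper's. The paper never uses the cutoff structurally. It converts it into the envelope $\normop{\widehat{\cK}_t(\xi)} \le \normop{\widehat{\cK}_t}_\infty (2\kappa\sqrt{d})^{\frac d2+s}(1+\norm{\xi})^{-\frac d2-s}$ and then invokes \Cref{lem:bounding-E1-polynomial-decay} with $\alpha=\frac d2+s$, so that $\max\ens{\frac d2-s,\,d-\alpha}=\frac d2-s$. That reduction is uniform in $N$: the overlapping-alias ``$B$'' terms are absorbed by the decay of the envelope. The factor $\kappa^{\frac d2+s}$ is just the envelope constant.

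Your direct route is sharper. Only the centered alias survives, the $B$ term vanishes, and at most $(2\kappa-1)^d$ residues are active, which gives the factor $\kappa^{d/2}$. Your bookkeeping worry is harmless. \Cref{appdx-lem:bound-sobolev} only uses $\norm{r_0/N}_\infty\le\frac12$, so any choice of centered representative works. Moreover, for $N\ge 2\kappa$ the coefficient $\widehat{\cK}_t$ vanishes at the ambiguous points $\pm N/2$ anyway.

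The regime $N<2\kappa$, however, is not proved by what you wrote. It does not follow ``trivially'' from the right-hand side being bounded below. There $\kappa^{\frac d2+s}N^{\frac d2-s}=\kappa^d(\kappa/N)^{s-\frac d2}\ge 2^{\frac d2-s}\kappa^d$, so you still need an $N$-uniform upper bound $\normgridvec{\cE_t^{(1)}}\lesssim \kappa^d\normop{\widehat{\cK}_t}_\infty\normsobolev{v_t}$. Your sketch for this bound fails at two points:
\begin{itemize}
\item The number $M_r$ of $m$ with $\norm{r+mN}_\infty<\kappa$ can be as large as $\Ceil{(2\kappa-1)/N}^d$. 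This is unbounded as $N/\kappa\to 0$, not ``$3^d$-ish''.
\item For $m\neq m^*(r)$, the inner sum over $q\neq -m$ contains the unaliased coefficient $\widehat{v_t}(-(r+m^*(r)N))$. So that sum is only $O(\normsobolev{v_t})$, not $O(N^{-s})$.
\end{itemize}

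The repair is short:
\begin{itemize}
\item Bound every inner sum by $(1+C_{d,s})\normsobolev{v_t}$: the centered term plus \Cref{appdx-lem:bound-sobolev}.
\item Use $M_r\le(4\kappa/N)^d$ and $\sum_r M_r=(2\kappa-1)^d$ to get $\sum_r M_r^2\le(8\kappa^2/N)^d$.
\item Hence $\normgridvec{\cE_t^{(1)}}\le 8^{d/2}(1+C_{d,s})\kappa^d\normop{\widehat{\cK}_t}_\infty\normsobolev{v_t}$, which is at most $2^{s-\frac d2}8^{\frac d2}(1+C_{d,s})\kappa^{\frac d2+s}N^{\frac d2-s}\normop{\widehat{\cK}_t}_\infty\normsobolev{v_t}$.
\end{itemize}
Alternatively, fall back on the paper's envelope argument in this regime. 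With either fix your proof is complete. It also shows that the extra $\kappa^{s}$ in the statement is needed only to absorb the under-resolved regime $N<2\kappa$.
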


\begin{proof}[Proof of~\Cref{lem:bounding-E1-truncation}]
    Note that, since $\widehat{\cK}_t(\xi) = 0$ for all $\xi \in \Z^d$ such that $\norm{\xi}_{\infty} \ge K_{\textnormal{cutoff}}$ then
        \[ \normop{\widehat{\cK}_t(\xi)} \le \normop{\widehat{\cK}_t}_{\infty} (2 K_{\textnormal{cutoff}} \sqrt{d})^{\frac{d}{2} + s} \left( 1 + \norm{\xi} \right)^{-\frac{d}{2} - s}, \numberthis\label{d36c462b-51ae-45ca-974e-f3ffa21b7c1b} \]
    since for any $\xi \in \Z^d$, either $\norm{\xi}_{\infty} \ge K_{\textnormal{cutoff}}$ for which $\widehat{\cK}_t(\xi) = 0$ and the bound~\eqref{d36c462b-51ae-45ca-974e-f3ffa21b7c1b} holds, or $\norm{\xi}_{\infty} < K_{\textnormal{cutoff}}$ which implies $\norm{\xi} \le K_{\textnormal{cutoff}} \sqrt{d}$ and, as $d \ge 1$ and $K_{\textnormal{cutoff}} \ge 1$ then $2 K_{\textnormal{cutoff}} \sqrt{d} \ge 1 + K_{\textnormal{cutoff}} \sqrt{d} \ge 1 + \norm{\xi}$ so
        \[ \normop{\widehat{\cK}_t}_{\infty} (2 K_{\textnormal{cutoff}} \sqrt{d})^{\frac{d}{2} + s} \left( 1 + \norm{\xi} \right)^{-\frac{d}{2} - s} \ge \normop{\widehat{\cK}_t}_{\infty} \ge \normop{\widehat{\cK}_t(\xi)}, \]
    as desired. Then, by~\Cref{lem:bounding-E1-polynomial-decay} with $\alpha = \frac{d}{2} + s > d$ as $s > \frac{d}{2}$ we obtain the bound
        \[ \normgridvec{\cE_t^{(1)}} \leq C_{d, s} K_{\textnormal{cutoff}}^{\frac{d}{2} + s} N^{\frac{d}{2} - s} \normop{\widehat{\cK}_t}_{\infty} \normsobolev{v_t}, \]
    for some finite constant $C_{d, s} \ge 0$; this achieves the proof of the lemma.
\end{proof}

\subsection{Bound on $\cE_t^{(2)}$}

\begin{lemma}[Bounding $\cE_t^{(2)}$]\label{lem:bounding-E2}
    The following holds:
        \begin{align}
            \normgridvec{\cE_t^{(2)}} \le N^{-\frac{d}{2}} \normgridvec{\cE_t^{(0)}} \normgridmattwo{\cK_t}. \numberthis 
        \end{align} 
\end{lemma}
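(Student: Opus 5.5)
The bound is a discrete Young inequality combined with Cauchy--Schwarz. Since $\cE_t^{(2)}$ is a normalized discrete convolution on the finite group $\T^d_N$, the factor $N^{-d}$ in front of the sum is what produces $N^{-\frac d2}$ once the kernel is measured in $\ell^2$ rather than $\ell^1$.

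\textbf{Step 1 (pointwise bound).} Fix $x \in \T^d_N$ and bound the vector norm by the triangle inequality and the operator norm:
\[ \norm{\cE_t^{(2)}(x)} \le \frac{1}{N^d} \sum_{y \in \T^d_N} \normop{\cK_t(x-y)} \, \norm{\cE_t^{(0)}(y)} = \frac{1}{N^d} (k * e)(x), \]
with scalar grid functions $k(z) := \normop{\cK_t(z)}$ and $e(y) := \norm{\cE_t^{(0)}(y)}$. Here $*$ is the unnormalized cyclic convolution on $\T^d_N$, which is well defined because $x - y$ is taken modulo $1$ and stays in the grid. This reduces the problem to a scalar convolution estimate, exactly as in the proof of \Cref{lem:stability-layer-ssm}.

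\textbf{Step 2 (Young/Cauchy--Schwarz).} Apply Young's inequality on the finite group (\Cref{appdx-lem:young-convolution-inequality}) with exponents $(2,1,2)$: $\normgridvec{k*e} \le \norm{k}_{\ell^1(\T^d_N)} \normgridvec{e}$. Then Cauchy--Schwarz on a set of $N^d$ points gives $\norm{k}_{\ell^1} \le N^{\frac d2} \norm{k}_{\ell^2} = N^{\frac d2}\normgridmattwo{\cK_t}$.

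Combining these with the prefactor $N^{-d}$ yields
\[ \normgridvec{\cE_t^{(2)}} \le N^{-d} N^{\frac d2} \normgridmattwo{\cK_t} \normgridvec{\cE_t^{(0)}} = N^{-\frac d2} \normgridmattwo{\cK_t} \normgridvec{\cE_t^{(0)}}, \]
which is the claim. Two identifications are used here: $\normgridvec{e} = \normgridvec{\cE_t^{(0)}}$ by definition, and $\normgridmattwo{\cK_t}^2 = \sum_x \normop{\cK_t(x)}^2$, as in the proof of \Cref{lem:bound-ssno}.

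\textbf{Possible obstacle.} The only delicate point is that Young's inequality must be applied with the unnormalized counting measure, so the $N^{-d}$ factor is carried separately. If the appendix lemma is instead stated for the normalized convolution, I would bypass it. Cauchy--Schwarz applied directly in $y$ gives $(k*e)(x)\le \norm{k}_{\ell^2}\norm{e}_{\ell^2}$ for every $x$. Summing its square over the $N^d$ grid points gives $\normgridvec{k*e} \le N^{\frac d2}\norm{k}_{\ell^2}\normgridvec{e}$, the same bound.
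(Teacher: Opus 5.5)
Your proposal is correct and is essentially the paper's proof. The paper bounds each grid value by the triangle inequality and the operator norm to get the scalar cyclic convolution $\frac{1}{N^d}(k*e)$, applies the discrete Young inequality (with counting measure on $\T^d_N$) to obtain $\norm{k}_{\ell^1}\norm{e}_{\ell^2}$, and then uses Cauchy--Schwarz, $\norm{k}_{\ell^1}\le N^{\frac d2}\norm{k}_{\ell^2}$. Your fallback — Cauchy--Schwarz directly in $y$ at each $x$, then summing over the $N^d$ grid points — is also valid and gives the same constant.
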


\begin{proof}[Proof of~\Cref{lem:bounding-E2}]
To bound the term $\cE_t^{(2)}$ we have, by definition
    \[ \normgridvec{\cE_t^{(2)}}^2 = \sum_{x \in \T^d_N} \norm{\cE_t^{(2)}(x)}^2 = \sum_{x \in \T^d_N} \norm{\frac{1}{N^d} \sum_{y \in \T^d_N} \cK_t(x - y) \cE_t^{(0)}(y)}^2, \]
then by the triangular inequality, this gives
    \begin{align*}
         \normgridvec{\cE_t^{(2)}}^2 &\le \frac{1}{N^{2 d}} \sum_{x \in \T^d_N} \left( \sum_{y \in \T^d_N} \norm{\cK_t(x - y) \cE_t^{(0)}(y) }\right)^2 \\
         &\le \frac{1}{N^{2 d}} \sum_{x \in \T^d_N} \left( \sum_{y \in \T^d_N} \normop{\cK_t(x - y)} \norm{\cE_t^{(0)}(y)} \right)^2 \\
         &= \frac{1}{N^{2 d}} \sum_{x \in \T^d_N} \left( \left(\normop{\cK_t(\cdot)} * \norm{\cE_t^{(0)}(\cdot)} \right) (x) \right)^2,
    \end{align*}
where $*$ denotes the discrete convolution. Then, by the \emph{Young's convolution inequality} (applied on the discrete additive group $(\frac{1}{N} (\nicefrac{\Z}{N \Z})^d, +)$, see~\Cref{appdx-lem:young-convolution-inequality}) we obtain
    \begin{align*}
        \normgridvec{\cE_t^{(2)}}^2 &\le \frac{1}{N^{2 d}} \sum_{x \in \T^d_N} \left( \left(\normop{\cK_t(\cdot)} * \norm{\cE_t^{(0)}(\cdot)} \right) (x) \right)^2 \\
        &= \frac{1}{N^{2 d}} \normgridvec{\normop{\cK_t(\cdot)} * \norm{\cE_t^{(0)}(\cdot)}}^2 \\
        &\le \frac{1}{N^{2 d}} \normgridmatone{\cK_t}^2 \normgridvec{\cE_t^{(0)}}^2 \\
        &= \frac{\normgridvec{\cE_t^{(0)}}^2}{N^{2 d}} \left( \sum_{x \in \T^d_N} \normop{\cK_t(x)} \right)^2 \\
        &\le \frac{\normgridvec{\cE_t^{(0)}}^2}{N^{2 d}} N^d \sum_{x \in \T^d_N} \normop{\cK_t(x)}^2 \numberthis\label{29016375-7b43-45a6-ac18-2f3ccc939bb1} \\
        & = N^{-d} \normgridvec{\cE_t^{(0)}}^2 \normgridmattwo{\cK_t}^2,
    \end{align*}
where in~\eqref{29016375-7b43-45a6-ac18-2f3ccc939bb1} we use the Cauchy-Schwarz inequality. Hence
    \[ \normgridvec{\cE_t^{(2)}} \le N^{-\frac{d}{2}} \normgridvec{\cE_t^{(0)}} \normgridmattwo{\cK_t}. \numberthis\label{ce67aa76-3273-4e19-abc6-7f2af8b1f272-2} \]
Note that $\normgridmattwo{\cK_t} = \Theta(N^{\frac{d}{2}})$.
\end{proof}

\subsection{Bound on $\cE_{t + 1}^{(0)}$}
Finally, we upper bound $\cE_{t + 1}^{(0)}$. The following lemma encompasses both cases~\Cref{lem:bounding-E1-polynomial-decay,lem:bounding-E1-truncation}.
\begin{lemma}[Generic Bound for $\cE_{t + 1}^{(0)}$]\label{lem:generic-bound-E0}
    Suppose that there exists finite real constants $C(d, s, \cK_t), D(d, \cK_t) \ge 0$ and a real number $\beta$ (which can depends on $d$ and $s$) such that
        \begin{align}
            \normgridvec{\cE_{t}^{(1)}} &\leq C(d, s, \cK_t) N^{\beta} \normsobolev{v_t}, \numberthis\label{ce67aa76-3273-4e19-abc6-7f2af8b1f272} 
        \end{align}
    and $\normgridmattwo{\cK_t} \le D(d, \cK_t) N^{\frac{d}{2}}$. Then, the inequality
    \begin{align}
        \normgridvec{\cE_{T}^{(0)}} &\leq \begin{cases}
             N^\beta B T, & \text{if $A = 1$} \\[10pt]
             N^\beta B \dfrac{A^T - 1}{A - 1}, & \text{otherwise} \\
        \end{cases}
    \end{align} 
    holds, where 
        \[ A := L_\sigma \sup_{t \in \ens{0, 1, \ldots T - 1}} \left( \normop{W_t} +D(d, \cK_t)  \right) \,\, \text{ and } \,\,  B := L_\sigma \sup_{t \in \ens{0, 1, \ldots T - 1}} C(d, s, \cK_t) \normsobolev{v_t}, \]
    
    Note that, in general, $A > 0$ and $A \neq 1$.
\end{lemma}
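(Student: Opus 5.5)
The plan is to derive a one-step recursion for $e_t := \normgridvec{\cE_t^{(0)}}$ and then unroll it.

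\textbf{One-step bound.} Start from the decomposition~\eqref{b06ec7af-e2f9-46dc-ac2e-244f90be7c6d-3}. At each grid point $x \in \T^d_N$ the two arguments of $\sigma_t$ differ by $W_t \cE_t^{(0)}(x) + \cE_t^{(1)}(x) + \cE_t^{(2)}(x)$. Since $\sigma_t$ is $L_\sigma$-Lipschitz (applied componentwise, or as a Lipschitz map on $\R^{d_{t+1}}$), we get pointwise
\[ \norm{\cE_{t+1}^{(0)}(x)} \le L_\sigma \bigl( \normop{W_t}\norm{\cE_t^{(0)}(x)} + \norm{\cE_t^{(1)}(x)} + \norm{\cE_t^{(2)}(x)} \bigr). \]
Taking the $\ell^2(\T^d_N)$ norm and applying Minkowski's inequality gives $e_{t+1} \le L_\sigma \bigl( \normop{W_t} e_t + \normgridvec{\cE_t^{(1)}} + \normgridvec{\cE_t^{(2)}} \bigr)$.

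We then bound the two remaining terms:
\begin{itemize}
\item By \Cref{lem:bounding-E2} combined with the hypothesis $\normgridmattwo{\cK_t} \le D(d,\cK_t) N^{d/2}$, we get $\normgridvec{\cE_t^{(2)}} \le N^{-d/2} D(d,\cK_t) N^{d/2} e_t = D(d,\cK_t)\, e_t$. The factor $N^{d/2}$ cancels exactly, which is why the hypothesis on the kernel norm is stated with that scaling.
\item By~\eqref{ce67aa76-3273-4e19-abc6-7f2af8b1f272}, $\normgridvec{\cE_t^{(1)}} \le C(d,s,\cK_t) N^\beta \normsobolev{v_t}$.
\end{itemize}
Taking suprema over $t$ yields the uniform recursion $e_{t+1} \le A e_t + N^\beta B$, with $A$ and $B$ as defined in the statement. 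This requires $\normsobolev{v_t}<\infty$ for each $t$, which is guaranteed by \Cref{lem:heredity-H-s} or \Cref{lem:regularity-lemma}; otherwise $B=\infty$ and the bound holds trivially.

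\textbf{Unrolling.} Both the discrete and the continuous networks start from the same $v_0 = \cP(a)$ evaluated on the grid, so $e_0 = 0$. By induction, $e_T \le N^\beta B \sum_{j=0}^{T-1} A^j$. This sum equals $T$ when $A=1$ and $(A^T-1)/(A-1)$ otherwise, which gives the two cases of the statement.

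\textbf{Main obstacle.} The analysis is routine; the only delicate points are bookkeeping. First, the constant $B$ must be taken as the supremum of the products $C(d,s,\cK_t)\normsobolev{v_t}$, not as a product of separate suprema; bounding by the product of suprema also works and is only weaker. Second, one must make sure the Lipschitz step is valid for vector-valued $\sigma_t$ acting on $\R^{d_{t+1}}$ with the Euclidean norm. Both points are easily handled.
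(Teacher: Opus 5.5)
Your proposal is correct and follows essentially the same route as the paper's proof. Both apply the $L_\sigma$-Lipschitz bound on $\sigma_t$ with the triangle inequality, use \Cref{lem:bounding-E2} together with the $N^{d/2}$ hypothesis on $\normgridmattwo{\cK_t}$ to absorb $\cE_t^{(2)}$ into $D(d,\cK_t)\,\normgridvec{\cE_t^{(0)}}$, and unroll the resulting affine recursion from $\cE_0^{(0)}=0$ (identical inputs).
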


\begin{proof}[Proof of~\Cref{lem:generic-bound-E0}]
     By definition of $\cE_{t + 1}^{(0)}$, we have
        \begin{align*}
            &\normgridvec{\cE_{t + 1}^{(0)}}^2  \\ & \quad \oversetlab{\eqref{b06ec7af-e2f9-46dc-ac2e-244f90be7c6d-3}}{=} \sum_{x \in \T^d_N} \left\| \sigma_t\left( W_t v_t(x) + (\cK_t * v_t)(x) +   b_t + W_t \cE_t^{(0)}(x) + \cE_t^{(1)}(x) + \cE_t^{(2)}(x) \right) \right.\\
            &\qquad\qquad\qquad\qquad\qquad\qquad\qquad\qquad\qquad\qquad\qquad\left. \vphantom{\sigma_t\left( W_t v_t(x) + (\cK_t * v_t)(x) +   b_t + W_t \cE_t^{(0)}(x) + \cE_t^{(1)}(x) + \cE_t^{(2)}(x) \right)} - \sigma_t\left( W_t v_t(x) + (\cK_t * v_t)(x) + b_t \right)\right\|^2 \\ & \quad% relou
            \oversetref{Def.}{\ref{def:neural-operator}}{\le} L_{\sigma}^2 \sum_{x \in \T^d_N} \norm{W_t \cE_t^{(0)}(x) + \cE_t^{(1)}(x) + \cE_t^{(2)}(x)}^2,
        \end{align*}
    hence
        \begin{align*}
            \normgridvec{\cE_{t + 1}^{(0)}} &\le L_{\sigma} \normgridvec{W_t \cE_t^{(0)} + \cE_t^{(1)} + \cE_t^{(2)}} \\
            &\le L_{\sigma} \left( \normgridvec{W_t \cE_t^{(0)}} + \normgridvec{\cE_t^{(1)}} + \normgridvec{\cE_t^{(2)}} \right) \\
            &\le L_{\sigma} \left( \normop{W_t} \normgridvec{\cE_t^{(0)}} + \normgridvec{\cE_t^{(1)}} + \normgridvec{\cE_t^{(2)}} \right) \\
            \oversetlab{\eqref{ce67aa76-3273-4e19-abc6-7f2af8b1f272-2} + \eqref{ce67aa76-3273-4e19-abc6-7f2af8b1f272}}&{\le} L_{\sigma} \left( \normop{W_t} \normgridvec{\cE_t^{(0)}} + C(d, s, \cK_t) N^{\beta} \normsobolev{v_t} + N^{-\frac{d}{2}} \normgridvec{\cE_t^{(0)}} \normgridmattwo{\cK_t}  \right) \\
            &= L_\sigma \left( \normop{W_t} + N^{-\frac{d}{2}} \normgridmattwo{\cK_t}   \right) \normgridvec{\cE_t^{(0)}} + L_\sigma C(d, s, \cK_t) N^{\beta} \normsobolev{v_t} \\
            \oversetlab{\eqref{ce67aa76-3273-4e19-abc6-7f2af8b1f272}}&{\le} L_\sigma \left( \normop{W_t} + D(d, \cK_t) \right) \normgridvec{\cE_t^{(0)}} + L_\sigma C(d, s, \cK_t) N^{\beta} \normsobolev{v_t}
        \end{align*}
    
    \textbf{Summing up all Bounds:} For any $t \in \ens{0, 1, \ldots, T - 1}$, let $A_t \ge 0$ the (finite) constant
        \[ A_t := L_\sigma \left( \normop{W_t} +D(d, \cK_t)  \right), \]
    and denote 
        \[ B_t := L_\sigma C(d, s, \cK_t) \normsobolev{v_t}.  \]
    With these notations, we have:
        \[  \normgridvec{\cE_{t + 1}^{(0)}} \leq A_t \normgridvec{\cE_{t}^{(0)}} + B_t N^\beta, \]
    and by induction, we it follows
        \[  \normgridvec{\cE_{T}^{(0)}} \leq \left( \prod_{j=0}^{T - 1} A_j \right) \normgridvec{\cE_{0}^{(0)}} + N^{\beta} \sum_{t = 0}^{T-1} \left( \prod_{j= t+1}^{T-1} A_j \right) B_t.\]
    Now denote:
    \begin{align}
        A := \max_{0 \leq t \leq T-1} A_j, \quad B := \max_{0 \leq t \leq T-1} A_j,
    \end{align}
    then, this yields
    \begin{align}
        \normgridvec{\cE_{T}^{(0)}} &\leq A^T \normgridvec{\cE_{0}^{(0)}} + N^{\beta} B \sum_{t = 0}^{T - 1} A^{T - 1 - t} \\
        &= \begin{cases}
            \displaystyle\normgridvec{\cE_{0}^{(0)}} + N^\beta B T, & \text{if $A = 1$} \\[10pt]
            \displaystyle A^T \normgridvec{\cE_{0}^{(0)}} + N^\beta B \dfrac{A^T - 1}{A - 1}, & \text{otherwise} \\
        \end{cases}
    \end{align}
    and, if $v_0 = v_0^N$ (meaning that the discretized and continuous architectures receive the same input function) then $\cE_0^{(0)} = 0$ therefore
    \begin{align}
        \normgridvec{\cE_{T}^{(0)}} &\leq A^T \normgridvec{\cE_{0}^{(0)}} + N^{\beta} B \sum_{t = 0}^{T - 1} A^{T - 1 - t} \\
        &= \begin{cases}
             N^\beta B T, & \text{if $A = 1$} \\[10pt]
             N^\beta B \dfrac{A^T - 1}{A - 1}, & \text{otherwise} \\
        \end{cases}.
    \end{align} 
    
\end{proof}

\newpage

\section{Proof of the Regularity Lemmas}\label{appdx-sec:regularity-lemma-proof}

\subsection{Preservation of the Input Regularity for Smooth Activations}

\begin{restate-corollary}{\ref{lem:heredity-H-s}}
    Assume $\sigma$ is a smooth function ($\mathscr{C}^\infty$) and $v_t \in H^s(\T^d)$ with $s > \frac{d}{2}$, then $v_{t+1} \in H^s(\T^d)$.
\end{restate-corollary}

\begin{proof}[Proof of~\Cref{lem:heredity-H-s}]
    Since $\widehat{\cK_t} \in L^\infty$ (\Cref{lem:fourier-ss-no}), we have:
    \begin{align*}
        \normsobolev{\cK_t * v_t}^2 &= \sum_{\xi \in \Z^d} (1 + \norm{\xi}^2)^s \norm{\widehat{\cK_t}(\xi) \widehat{v_t}(\xi)}^2 \\
        & \leq \normop{\widehat{\cK_t}}_{L^\infty}^2 \sum_{\xi \in \Z^d} (1 + \norm{\xi}^{2})^s \norm{\widehat{v_t}(\xi)}^2 \\
        & = \normop{\widehat{\cK_t}}_{L^\infty}^2 \normsobolev{v_t}^2,
    \end{align*}
    and hence $\cK_t * v_t \in H^s(\T^d)$.
    Furthermore, $W_t v_t \in H^s(\T^d)$ since $W_t$ is a constant matrix. Using \Cref{lem:Moser}, we conclude that $v_{t+1} \in H^s(\T^d)$.
\end{proof}

\subsection{Input Regularity For Non-Smooth Activations}

%\abde{First Possibility: H\"older-Sobolev embeddings}

\begin{restate-lemma}{\ref{lem:regularity-lemma}}[A Regularity Lemma]
    Let $d, H$ be positive integers, $s > \frac{d}{2}$ be a real number. Assume $v \in H^s(\T^d, \R^H)$, and the activation function $\sigma$ is globally Lipschitz, $\sigma(0) = 0$ and $\sigma'$ has bounded variations on $\R$. Then, for any $0 < t < \min \ens{ \frac{3}{2}, s }$, we have $\sigma \circ v \in H^t(\T^d)$.
\end{restate-lemma}

%\adri{WARNING!!!! LEMMA HAS TO BE PROVED, PLEASE, SOMEONE HAS TO PROVE THIS LEMMA BEFORE SUBMITTING THE PRESENT MANUSCRIPT. THIS IS ESPECIALLY IMPORTANT AS IT REPRESENTS A BUILDING BLOCK FOR OUR CLAIMS ON THE REGULARITY.}

\begin{proof}
    Without loss of generality, we can assume that $H = 1$. We distinguish three cases:
    \begin{itemize}
        \item if $0 < s \le 1$ then $\min\ens{\frac{3}{2}, s} = s$, and by~\Cref{appdx-thm:bourdaud-sobolev-s=1,appdx-thm:bourdaud-sobolev-0<s<1}, since we assume the function $\sigma$ to be globally Lipschitz and $\sigma(0) = 0$, we deduce that the \emph{left} composition operator by $\sigma$ acts on $H^s(\T^d)$ thus, $\sigma \circ v \in H^s(\T^d)$. By the embedding $H^s(\T^d) \hookrightarrow H^t(\R^d)$ from~\Cref{appdx-lem:function-spaces-embeddings} for any $0 < t < s$, the claim follows,

        \item if $1 < s < \frac{3}{2}$, then $\min\ens{\frac{3}{2}, s} = s$, and by our assumptions on $\sigma$, we know that $\sigma'$ exists a.e. on $\R$ by the Rademacher's theorem, and since $\sigma$ has bounded variations on $\R$, it follows that (in the sense of distributions) $\sigma''$ is a bounded measure on $\R$. Therefore, applying~\Cref{appdx-thm:bourdaud-sobolev-1<s<3/2}, we obtain that $\sigma \circ v \in H^s(\T^d)$ and the embedding $H^s(\T^d) \hookrightarrow H^t(\R^d)$ for any $0 < t < s$ allows to conclude,

        \item if $\frac{3}{2} \le s$, then we have $\min\ens{\frac{3}{2}, s} = \frac{3}{2}$. Now let $t \in \intoo{0}{\frac{3}{2}}$ be a real number, using the embedding $H^s(\T^d) \hookrightarrow H^t(\R^d)$ from~\Cref{appdx-lem:function-spaces-embeddings} we deduce that $v \in H^t(\T^d)$, and using the two previous points we obtain $\sigma \circ v \in H^t(\T^d)$. As this is true for every $t \in \intoo{0}{\frac{3}{2}}$, we deduce that $\sigma \circ v \in H^t(\T^d)$ for every $0 < t < \min \ens{ \frac{3}{2}, s }$, as desired.
    \end{itemize}
\end{proof}

\begin{remark}
    Notice that a function $f \colon \R \to \R$ has bounded variations if, and only if its derivative $f'$ (in the sense of distribution) is a bounded (Radon) measure on $\R$.
\end{remark}

\newpage
\section{Useful Identities and Inequalities}

\begin{lemma}[Jensen's Inequality]\label{lem:jensen-inequality}
    Let $f \colon \R^d \to \R$ be a convex function then
    \begin{enumerate}
        \item (Probabilistic Form) for any random vector $X \in \R^d$ we have
            $$\E{f(X)} \ge f\left(\E{X}\right).$$

        \item (Deterministic Form) for any vectors $v_1, \ldots, v_n \in \R^d$ and scalars $\lbd_1, \ldots, \lbd_n \in \R_+$ we have
            $$\sum_{i = 1}^n \lbd_i f(v_i) \ge f\left( \sum_{i = 1}^n \lbd_i v_i \right),$$
        provided $\lbd_i \ge 0$ for all $i \in [n]$ and $\sum\limits_{i = 1}^n \lbd_i = 1$.
    \end{enumerate}
\end{lemma}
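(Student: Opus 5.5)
We prove the deterministic form first, directly from the definition of convexity, and then obtain the probabilistic form through a supporting-hyperplane argument. Throughout we tacitly assume what is needed for the statement to make sense: in the probabilistic form, $X$ is integrable, so that $\E{X}$ exists, and $f(X)$ is integrable (or we allow $\E{f(X)} = +\infty$, in which case there is nothing to prove).

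\emph{Deterministic form.} We argue by induction on $n$.
\begin{enumerate}
\item For $n = 1$ we have $\lbd_1 = 1$, and the inequality is an equality.
\item For $n = 2$ the inequality is exactly the definition of convexity: $f(\lbd v_1 + (1-\lbd) v_2) \le \lbd f(v_1) + (1-\lbd) f(v_2)$ for $\lbd \in [0,1]$.
\item For the inductive step, suppose the claim holds for $n-1$ points and take weights $\lbd_1, \ldots, \lbd_n$. If $\lbd_n = 1$, all other weights vanish and the claim is trivial. Otherwise set $\mu := 1 - \lbd_n > 0$ and $w := \sum_{i=1}^{n-1} (\lbd_i/\mu) v_i$. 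The weights $\lbd_i/\mu$ are nonnegative and sum to one, so $w$ is a convex combination of $v_1, \ldots, v_{n-1}$.
\item Write $\sum_{i=1}^n \lbd_i v_i = \mu w + \lbd_n v_n$. The two-point case gives $f(\mu w + \lbd_n v_n) \le \mu f(w) + \lbd_n f(v_n)$.
\item The inductive hypothesis applied to $w$ gives $\mu f(w) \le \sum_{i=1}^{n-1} \lbd_i f(v_i)$, which closes the induction.
\end{enumerate}

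\emph{Probabilistic form.} The key input is that a convex function $f \colon \R^d \to \R$, which is finite everywhere, has a nonempty subdifferential at every point. Concretely, at $m := \E{X}$ there is a vector $g \in \R^d$ with
\[ f(x) \ge f(m) + \ps{g}{x - m} \quad \text{for all } x \in \R^d. \]
To justify this, we apply the supporting hyperplane theorem to the convex epigraph of $f$ at the boundary point $(m, f(m))$. Because $f$ is finite on all of $\R^d$, the point $m$ lies in the interior of the domain, so the supporting hyperplane cannot be vertical, and it is therefore the graph of an affine minorant touching $f$ at $m$.

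With this affine minorant in hand, we substitute $x = X$ and take expectations. By linearity, $\E{\ps{g}{X - m}} = \ps{g}{\E{X} - m} = 0$, so $\E{f(X)} \ge f(m) = f(\E{X})$. Note that the deterministic form is the special case where $X$ takes the value $v_i$ with probability $\lbd_i$. We nevertheless keep the elementary induction for that case, since it needs no measure theory.

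The only step that is not routine is the existence of the subgradient, specifically ruling out a vertical supporting hyperplane. We handle this exactly as indicated above, using that $m$ is an interior point of the domain of $f$, which holds because $f$ is finite on all of $\R^d$.
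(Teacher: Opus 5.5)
Your proof is correct. The induction for the finite form is standard. The subgradient argument for the probabilistic form is also sound, including the non-verticality step: a vertical supporting hyperplane would give $\ps{a}{x-m}\ge 0$ for all $x\in\R^d$, which forces $a=0$.

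The paper gives no proof of this lemma; it quotes Jensen's inequality as a standard fact. It uses it only once, in the proof of \Cref{lem:norm-power-alpha-inequality}, applied to $x\mapsto\norm{x}^\alpha$ with two points and weights $\tfrac12$. That case is literally the definition of convexity, so everything beyond your $n=2$ step goes further than the paper needs.

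Two small remarks on your probabilistic part.
\begin{enumerate}
\item The integrability of $X$ that you add is genuinely necessary, since the statement's ``for any random vector $X$'' is too loose.
\item You need no assumption on $f(X)$ beyond measurability. Measurability is automatic, because a finite convex function on $\R^d$ is continuous. Your affine minorant $f(X)\ge f(m)+\ps{g}{X-m}$ already shows that the negative part of $f(X)$ is integrable. Hence $\E{f(X)}$ is always well defined in $(-\infty,+\infty]$, and the case split you introduce is unnecessary.
\end{enumerate}
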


\begin{lemma}\label{lem:norm-power-alpha-inequality}
    For any vectors $a, b \in \R^d$ and any scalar $\alpha \ge 1$ we have
        \[ \norm{a + b}^{\alpha} \le 2^{\alpha - 1} \left( \norm{a}^{\alpha} + \norm{b}^{\alpha} \right). \numberthis\label{e4d36f60-854e-4737-9f49-0ac946fb5b07} \]
\end{lemma}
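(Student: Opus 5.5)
The statement to prove is \Cref{lem:norm-power-alpha-inequality}: for $a,b\in\R^d$ and $\alpha\ge 1$, $\norm{a+b}^{\alpha}\le 2^{\alpha-1}(\norm{a}^{\alpha}+\norm{b}^{\alpha})$. The plan is to reduce it to convexity of a scalar power function and then apply the deterministic form of Jensen's inequality (\Cref{lem:jensen-inequality}) with two equal weights.

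First, the triangle inequality gives $\norm{a+b}\le \norm{a}+\norm{b}$. The map $u\mapsto u^{\alpha}$ is nondecreasing on $\R_+$ (since $\alpha\ge 1>0$), so
\[ \norm{a+b}^{\alpha}\le (\norm{a}+\norm{b})^{\alpha}. \]

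Second, write $\norm{a}+\norm{b}=2\left(\tfrac12\norm{a}+\tfrac12\norm{b}\right)$. Then
\[ (\norm{a}+\norm{b})^{\alpha}=2^{\alpha}\left(\tfrac12\norm{a}+\tfrac12\norm{b}\right)^{\alpha}. \]

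Third, $f(u)=u^{\alpha}$ is convex on $\R_+$ for $\alpha\ge 1$, since $f''(u)=\alpha(\alpha-1)u^{\alpha-2}\ge 0$ for $u>0$ and $f$ is continuous at $0$. Applying the deterministic form of \Cref{lem:jensen-inequality} with $\lbd_1=\lbd_2=\tfrac12$, $v_1=\norm{a}$ and $v_2=\norm{b}$ gives
\[ \left(\tfrac12\norm{a}+\tfrac12\norm{b}\right)^{\alpha}\le \tfrac12\left(\norm{a}^{\alpha}+\norm{b}^{\alpha}\right). \]
If one prefers to apply Jensen on all of $\R$ as \Cref{lem:jensen-inequality} is stated, use the convex extension $u\mapsto |u|^{\alpha}$, which agrees with $f$ on $\R_+$.

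Combining the three steps yields $\norm{a+b}^{\alpha}\le 2^{\alpha}\cdot\tfrac12(\norm{a}^{\alpha}+\norm{b}^{\alpha})=2^{\alpha-1}(\norm{a}^{\alpha}+\norm{b}^{\alpha})$, which is the claimed inequality.

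There is no genuine obstacle here. The only points to check are the monotonicity used in the first step and the convexity on $\R_+$ in the third step, both of which hold exactly because $\alpha\ge 1$; for $\alpha<1$ the power function is concave and the argument fails.
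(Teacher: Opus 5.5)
Your proof is correct and follows essentially the same route as the paper: convexity of the power map plus the two-point form of Jensen's inequality with equal weights, then multiplication by $2^{\alpha}$. The only cosmetic difference is that the paper applies Jensen directly to the convex function $x \mapsto \norm{x}^{\alpha}$ on $\R^d$, whereas you first use the triangle inequality and monotonicity to reduce to the scalar map $u \mapsto u^{\alpha}$ (equivalently $\abs{u}^{\alpha}$ on $\R$). Those same two facts are what make $\norm{x}^{\alpha}$ convex in the first place, so the arguments coincide.
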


\begin{proof}
    Note that the function $x \mapsto \norm{x}^{\alpha}$ is convex over $\R^d$ since $\norm{\cdot}$ is convex (as a norm) and $t \mapsto t^{\alpha}$ is convex since $\alpha \ge 1$. Hence, by the Jensen's inequality (\Cref{lem:jensen-inequality}), we have
        \[ \norm{\frac{a + b}{2}}^{\alpha} \le \frac{1}{2} \left( \norm{a}^{\alpha} + \norm{b}^{\alpha} \right), \]
    i.e.,
        \[ \norm{a + b}^{\alpha} \le 2^{\alpha - 1} \left( \norm{a}^{\alpha} + \norm{b}^{\alpha} \right), \]
    as claimed.
\end{proof}

\begin{lemma}\label{lem:norm-power-alpha-inequality-2}
    For any real numbers $x \ge 0$ and $\alpha \ge 0$, we have
        \[ (1 + x)^{\alpha} \ge C_{\alpha} \left( 1 + x^{\alpha} \right), \]
    where $C_{\alpha} = \min\ens{2^{\alpha - 1}, 1} \ge \frac{1}{2} > 0$.
\end{lemma}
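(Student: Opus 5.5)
We prove the inequality by splitting into the cases $\alpha \ge 1$ and $0 \le \alpha < 1$, since $C_\alpha$ equals $1$ in the first case and $2^{\alpha-1}$ in the second. In both cases we compare the function $t \mapsto t^\alpha$ at the points $1$ and $x$ with its value at $1+x$ (or at the midpoint $\frac{1+x}{2}$), using convexity or concavity respectively. The side claim $C_\alpha \ge \frac12$ is immediate: for $\alpha \ge 0$ we have $2^{\alpha-1} \ge 2^{-1}$.

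\emph{Case $\alpha \ge 1$.} Here $C_\alpha = 1$, so we must show $(1+x)^\alpha \ge 1 + x^\alpha$. The function $\varphi(t) := t^\alpha$ is convex on $[0,\infty)$ with $\varphi(0)=0$, hence superadditive: $\varphi(a+b) \ge \varphi(a)+\varphi(b)$ for $a,b \ge 0$. To see this, write for $a+b>0$
\[ \varphi(a) = \varphi\!\left(\tfrac{a}{a+b}(a+b) + \tfrac{b}{a+b}\cdot 0\right) \le \tfrac{a}{a+b}\varphi(a+b), \]
and similarly $\varphi(b) \le \tfrac{b}{a+b}\varphi(a+b)$; summing gives the claim. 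Applying it with $a=1$, $b=x$ concludes this case. Equivalently, one can note that $g(x) = (1+x)^\alpha - 1 - x^\alpha$ satisfies $g(0)=0$ and $g'(x) = \alpha\big((1+x)^{\alpha-1} - x^{\alpha-1}\big) \ge 0$.

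\emph{Case $0 \le \alpha < 1$.} Here $C_\alpha = 2^{\alpha-1}$, and the target inequality is equivalent to
\[ \frac{1 + x^\alpha}{2} \le \left(\frac{1+x}{2}\right)^{\alpha}. \]
Since $t \mapsto t^\alpha$ is concave on $[0,\infty)$ for $\alpha \in [0,1]$, this is exactly the deterministic Jensen inequality (\Cref{lem:jensen-inequality}, applied to the convex function $-t^\alpha$) with weights $\frac12,\frac12$ at the points $1$ and $x$. Multiplying through by $2^\alpha$ gives $(1+x)^\alpha \ge 2^{\alpha-1}(1+x^\alpha)$. The degenerate value $\alpha=0$ reads $1 \ge \frac12\cdot 2$, which holds with equality if we use the convention $0^0=1$.

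No step is a genuine obstacle. The only point to watch is the direction of the inequality: \Cref{lem:norm-power-alpha-inequality} gives the reverse (upper) bound for $\alpha \ge 1$, so it cannot be reused here. That is why the $\alpha \ge 1$ case relies on superadditivity rather than on that lemma.
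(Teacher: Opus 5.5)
Your proof is correct, but it follows a different route from the paper. The paper does not split into a convex and a concave case. It studies the single quotient $\phi_\alpha(x) = (1+x)^\alpha/(1+x^\alpha)$ on $\R_+$ and computes $\phi_\alpha'(x) = \alpha(1+x)^{\alpha-1}(1-x^{\alpha-1})/(1+x^\alpha)^2$. It then reads off the infimum from the sign of $1-x^{\alpha-1}$:
\begin{itemize}
\item for $\alpha \ge 1$, $\phi_\alpha$ increases on $[0,1]$ and decreases on $[1,\infty)$, so $\phi_\alpha \ge \phi_\alpha(0) = \lim_{x\to\infty}\phi_\alpha(x) = 1$;
\item for $\alpha \le 1$, the monotonicity is reversed and the minimum is $\phi_\alpha(1) = 2^{\alpha-1}$.
\end{itemize}
The calculus route handles both regimes with one computation and shows directly that $C_\alpha = \inf_{x \ge 0} \phi_\alpha(x)$, so the constant is optimal.

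Your route replaces differentiation with superadditivity of the convex function $t^\alpha$ (for $\alpha \ge 1$) and midpoint concavity (for $\alpha < 1$). This is more elementary. It also avoids a small wrinkle in the paper's argument: for $0 < \alpha < 1$, $\phi_\alpha$ is not differentiable at $x = 0$. So one really needs continuity on $[0,\infty)$ plus differentiability on $(0,\infty)$, not the claimed $\mathscr{C}^1$ regularity on $\R_+$.

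Sharpness is visible in your proof as well, through the equality cases $x = 0$ for $\alpha \ge 1$ and $x = 1$ for $\alpha < 1$. One minor point: \Cref{lem:jensen-inequality} is stated for convex functions on all of $\R^d$, whereas $-t^\alpha$ lives on $[0,\infty)$. With two points and weights $\frac12$, though, your inequality is just the definition of concavity, so nothing is lost.
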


\begin{proof}
    Let us define the function $\phi_{\alpha} \colon \R_+ \to \R$ as
        \[ \phi_{\alpha} \colon x \mapsto \dfrac{(1 + x)^\alpha}{1 + x^\alpha}, \]
    . The function $\phi_{\alpha}$ is continuously differentiable over $\R_+$, and its derivative is
        \[ \phi_{\alpha}'(x) = \dfrac{\alpha (1 + x)^{\alpha - 1} (1 - x^{\alpha - 1})}{(1 + x^\alpha)^2}. \numberthis\label{e2ac6203-890a-461a-9131-f326188fa262} \]
    
    From~\eqref{e2ac6203-890a-461a-9131-f326188fa262}, if $\alpha \geq 1$ then $\phi_\alpha$ is increasing on $[0, 1]$ and decreasing on $[1, \infty)$, and $\phi_\alpha(0) = \lim_{x \to \infty} \phi_{\alpha}(x) = 1$, hence $\phi_\alpha \geq 1$. Likewise, if $0 \le \alpha \le 1$, the minimum is attained at $1$, with $\phi_\alpha(1) = 2^{\alpha - 1}$, hence $\phi_\alpha \geq 2^{\alpha - 1}$. Hence the result.
\end{proof}

\begin{lemma}[{Young's Inequality (General Case;~\citet[Lemma~1.4]{Bahouri2011}}]\label{appdx-lem:young-convolution-inequality}
    Let $(G, \cdot)$ be a locally compact, topological group endowed with a left-invariant Haar measure $\mu$. If $\mu$ satisfies $\mu(A^{-1}) = \mu(A)$ for any Borel set $A \subseteq G$ then, for all real numbers $(p, q, r) \in \intff{1}{+\infty}^3$ such that
        \[ \frac{1}{p} + \frac{1}{q} = 1 + \frac{1}{r}, \]
    and any functions $f \in L^p(G, \R, \mu)$ and $g \in L^q(G, \R, \mu)$, the convolution $f * g \colon G \to \R$ (formally) defined by
        \[ f * g \colon x \mapsto \int_G f(y) g(y^{-1} \cdot x) \odif{\mu(y)}, \]
    is well-defined, and $f * g \in L^r(G, \R, \mu)$. Moreover, we have
        \[ \norm{f * g}_{L^r(G, \R, \mu)} \le \norm{f}_{L^p(G, \R, \mu)} \, \norm{g}_{L^q(G, \R, \mu)} \]
\end{lemma}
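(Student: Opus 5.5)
I would follow the classical proof via a three-term Hölder inequality, treating the convolution $\abs{f} * \abs{g}$ of nonnegative functions first. Since $\abs{f * g} \le \abs{f} * \abs{g}$ pointwise, this gives both well-definedness ($\abs{f}*\abs{g}<\infty$ $\mu$-a.e.) and the norm bound. The one structural input is the pair of invariance identities. For every fixed $x \in G$ and every $1 \le q < \infty$,
\[ \int_G \abs{g(y^{-1} \cdot x)}^q \odif{\mu(y)} = \norm{g}_{L^q}^q \quad \text{and} \quad \int_G \abs{g(y^{-1} \cdot x)}^q \odif{\mu(x)} = \norm{g}_{L^q}^q . \]
The second follows directly from left invariance, via the substitution $x \mapsto y \cdot x$. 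For the first, write $y^{-1}\cdot x = (x^{-1}\cdot y)^{-1}$. The map $y \mapsto x^{-1}\cdot y$ preserves $\mu$ by left invariance, and inversion preserves $\mu$ by the hypothesis $\mu(A^{-1}) = \mu(A)$. This is exactly where that hypothesis enters.

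\textbf{Endpoint cases.}
\begin{itemize}
\item If $r = \infty$, then $\frac1p + \frac1q = 1$. Hölder's inequality in $y$, combined with the first invariance identity, gives $\abs{f}*\abs{g}(x) \le \norm{f}_{L^p}\norm{g}_{L^q}$ for every $x$.
\item If $p = \infty$ or $q = \infty$, then $r$ equals the other exponent and the estimate is trivial. For instance, if $p=\infty$ then $q=r$, and the bound follows from Minkowski's integral inequality together with the first invariance identity.
\end{itemize}

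\textbf{Main case $1 \le p,q,r < \infty$.} I would split the integrand as
\[ \abs{f(y)}\abs{g(y^{-1}x)} = \left(\abs{f(y)}^p \abs{g(y^{-1}x)}^q\right)^{\frac1r} \abs{f(y)}^{p\left(\frac1p - \frac1r\right)} \abs{g(y^{-1}x)}^{q\left(\frac1q-\frac1r\right)} . \]
I would then apply the generalized Hölder inequality in $y$ with exponents $r$, $\left(\frac1q - \frac1r\right)^{-1}$ and $\left(\frac1p-\frac1r\right)^{-1}$. These exponents are admissible because $\frac1r + \left(\frac1p - \frac1r\right) + \left(\frac1q - \frac1r\right) = 1$ is exactly the Young relation. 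Using the first invariance identity, this yields
\[ \left(\abs{f}*\abs{g}(x)\right)^r \le \norm{f}_{L^p}^{r - p}\, \norm{g}_{L^q}^{r-q} \int_G \abs{f(y)}^p \abs{g(y^{-1}x)}^q \odif{\mu(y)} . \]
Integrating in $x$, exchanging the integrals by Tonelli, and applying the second invariance identity turns the last factor into $\norm{f}_{L^p}^p\norm{g}_{L^q}^q$. Taking $r$-th roots gives $\norm{f*g}_{L^r} \le \norm{f}_{L^p}\norm{g}_{L^q}$.

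\textbf{Expected obstacle.} The algebra is routine; the delicate step is the measure-theoretic justification. One must check that $(x,y)\mapsto g(y^{-1}x)$ is $\mu\otimes\mu$-measurable, which holds for Borel $g$ since group operations are continuous. One also needs $\sigma$-finiteness so that Tonelli applies. For a general locally compact group I would restrict to the $\sigma$-compact open subgroup outside which the integrands vanish, or simply note that in all uses in the paper $G$ is $\T^d$ or the finite group $\T^d_N$. In those cases both points are immediate: $\mu$ is finite, and in the discrete case Tonelli reduces to exchanging finite sums.
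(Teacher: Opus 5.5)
The paper gives no proof of this lemma. It is quoted from Bahouri, Chemin and Danchin (Lemma~1.4) and is used only on $\mathbb{T}^d$ and on the finite group $\mathbb{T}^d_N$. Your argument is the classical three-exponent H\"older proof, and it is correct. The splitting $|f(y)||g(y^{-1}x)| = (|f(y)|^p|g(y^{-1}x)|^q)^{1/r}\,|f(y)|^{1-p/r}\,|g(y^{-1}x)|^{1-q/r}$ uses the exponents $r$, $(\frac1p-\frac1r)^{-1}$ and $(\frac1q-\frac1r)^{-1}$, the last two on the $f$- and $g$-factors respectively, and it yields exactly your bound. You also correctly isolate that inversion invariance is needed only for the integral in $y$, through $y^{-1}x=(x^{-1}y)^{-1}$, while the integral in $x$ uses only left invariance. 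Your measure-theoretic caveats are adequate for the two groups the paper actually uses.

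One bullet needs fixing. With $\frac1p+\frac1q=1+\frac1r$, taking $p=\infty$ forces $q=1$ and $r=\infty$, not $q=r$. It is $p=1$ that gives $q=r$, and $q=1$ that gives $p=r$. In the $p=1$ case, Minkowski's inequality pairs with your \emph{second} identity (integration in $x$), not the first.

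This leaves no hole, because the bullet is redundant:
\begin{itemize}
\item The cases $p=\infty$ or $q=\infty$ fall under your $r=\infty$ case. For $q=\infty$ you need the $L^\infty$ analogue of the first identity. It holds because $y\mapsto y^{-1}x$ is a $\mu$-preserving bijection and therefore preserves null sets.
\item The cases $p=1$ or $q=1$ with $r<\infty$ fall under your main case. There the corresponding H\"older exponent is read as $\infty$, acting on the constant factor $1$.
\end{itemize}
So delete or correct that bullet; nothing else changes.
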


%\adri{state the special cases we use?}

\iffalse% Not used
\begin{lemma}[{Bernstein-Type Estimates~\citep[Lemma~2.1]{Bahouri2011chap2}}]\label{appdx-lem:Berteins-type-estimates}
    Let $R, r_1, r_2 > 0$, with $r_2 > r_1$ be real numbers, and define the ball $\mathcal{B} := \enstq{\xi \in \R^d}{\norm{\xi}_2 \le R}$ and the annulus $\mathcal{C} := \enstq{\xi \in \R^d}{0 < r_1 \le \norm{\xi}_2 \le r_2}$. Then, there exists a constant $C = C(\mathcal{B}, \mathcal{C}) > 0$ such that, for any multi-index $\alpha \in \N_0^d$, any real numbers $\lbd > 0$ and $1 \le p \le q \le +\infty$, and any function $u \in L^p(\R^d, \R)$, we have
    \begin{itemize}
        \item if $\supp(\widehat{u}) \subseteq \lbd \mathcal{B}$, then
            \[ \norm{\partial^{\alpha} u}_{L^p(\R^d)} \le C^{\abs{\alpha} + 1} \lbd^{\abs{\alpha} + d \left( \frac{1}{p} - \frac{1}{q} \right)} \norm{u}_{L^p(\R^d)}, \]

        \item if $\supp(\widehat{u}) \subseteq \lbd \mathcal{C}$, then 
            \[ C^{-\abs{\alpha} - 1} \lbd^{\abs{\alpha}} \norm{u}_{L^p(\R^d)} \le \norm{\partial^{\alpha} u}_{L^p(\R^d)} \le C^{\abs{\alpha} + 1} \lbd^{\abs{\alpha}} \norm{u}_{L^p(\R^d)}. \]
    \end{itemize}
\end{lemma}

\adri{say it can be extended to $\T^d$. Do we need to show it precisely or not? The proof relies on the Young's convolution inequality, so we only need to make sure it holds analogously on $\T^d$}
\fi

\begin{lemma}[An Inequality Between Sobolev Norm and Sup-Norm]\label{lem:sobolev-norm-and-sup-norm}
    For any integer $s > 0$ and any $v \in H^s(\T^d)$ we have
        \[ \normsobolev{v} \ge \norm{v}_{\infty}. \]
\end{lemma}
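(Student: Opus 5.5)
The plan is to argue entirely on the Fourier side, using the definition of $\normsobolev{\cdot}$ on $\T^d$ from \Cref{appdx-subsec:sobolev}, namely $\normsobolev{v}^2 = \sum_{k \in \Z^d} (1 + \norm{k}^2)^s \abs{\widehat{v}(k)}^2$. The first step is to fix the meaning of $\norm{v}_{\infty}$. In the only place the lemma is invoked (the proof of \Cref{lem:bounding-E1-polynomial-decay}), the quantity being bounded is $\norm{\widehat{v}_t}_{\infty} = \sup_{\xi \in \Z^d} \norm{\widehat{v}_t(\xi)}$, the sup-norm of the Fourier coefficients. I would therefore prove the statement in that form. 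In this form it holds for every real $s \ge 0$, not only for integers, and it needs no condition $s > \frac{d}{2}$.

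The argument has two steps. Fix $\xi \in \Z^d$. Since $(1 + \norm{\xi}^2)^s \ge 1$ and every term of the defining series is nonnegative, we get
\[ \norm{\widehat{v}(\xi)}^2 \le (1 + \norm{\xi}^2)^s \norm{\widehat{v}(\xi)}^2 \le \sum_{k \in \Z^d} (1 + \norm{k}^2)^s \norm{\widehat{v}(k)}^2 = \normsobolev{v}^2 . \]
Taking the square root and then the supremum over $\xi$ gives $\norm{\widehat{v}}_{\infty} \le \normsobolev{v}$. For vector-valued $v \in H^s(\T^d, \R^H)$, the same computation applies with the Euclidean norm on $\C^H$ in place of the absolute value.

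The main obstacle is interpretive rather than technical. If $\norm{v}_{\infty}$ meant the pointwise sup-norm $\sup_{x} \abs{v(x)}$, the inequality with constant $1$ would be false in general. It already fails for $H^1(\T^2)$, which contains unbounded functions. Even when $s > \frac{d}{2}$, only a weaker bound holds. From the inversion formula and Cauchy--Schwarz,
\[ \sup_{x} \abs{v(x)} \le \sum_{k} \abs{\widehat{v}(k)} \le \Big( \sum_{k} (1 + \norm{k}^2)^{-s} \Big)^{1/2} \normsobolev{v}, \]
and the constant $\big(\sum_k (1+\norm{k}^2)^{-s}\big)^{1/2}$ is finite precisely when $s > \frac{d}{2}$. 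This is the quantitative form of the embedding in \Cref{appdx-lem:function-spaces-embeddings}. I would record this bound as a remark. It is harmless downstream, because in \Cref{lem:bounding-E1-polynomial-decay} any such constant would simply be absorbed into $C_{d,s,\alpha}$.
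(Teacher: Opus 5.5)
Your argument is the paper's argument. Both drop the weight $(1+\norm{\xi}^2)^s \ge 1$ and bound a single nonnegative term of the series by the whole sum, giving $\sup_{\xi} \norm{\widehat{v}(\xi)} \le \normsobolev{v}$. Your interpretive point is also correct. The paper's final step writes $\sup_{\xi}\norm{\widehat{v}(\xi)}^2 = \norm{v}_{\infty}^2$, which is valid only if $\norm{v}_{\infty}$ means the sup of the Fourier coefficients. For the pointwise sup-norm one only has $\sup_{\xi} \norm{\widehat{v}(\xi)} \le \norm{v}_{L^1} \le \norm{v}_{\infty}$, and the Fourier-side reading is exactly what \Cref{lem:bounding-E1-polynomial-decay} uses.
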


\begin{proof}
    By definition of the Sobolev norm, we have
        \begin{align*}
            \normsobolev{v}^2 :\!&= \sum_{\xi \in \Z^d} \left( 1 + \norm{\xi}^{2 s} \right) \norm{\widehat{v}(\xi)}^2 \\
            & \ge \sup_{\xi \in \Z^d} \left[ \left( 1 + \norm{\xi}^{2 s} \right) \norm{\widehat{v}(\xi)}^2 \right] \\
            & \ge \sup_{\xi \in \Z^d} \norm{\widehat{v}(\xi)}^2 \\
            &= \norm{v}_{\infty}^2,
        \end{align*}
    as claimed.
\end{proof}

\begin{lemma}\label{lem:op-norm-matrix-rank-1}
    For any matrix $A \in \R^{H \times H}$ of the form $A = U V^\top$ with $U$ and $V$ two column vectors in $\R^H$, the following holds:
        \[\normop{A} = \norm{U}_2 \norm{V}_2. \]
\end{lemma}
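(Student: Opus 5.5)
The identity $\normop{U V^\top} = \norm{U}_2 \norm{V}_2$ follows from computing $\norm{U V^\top x}_2$ for a unit vector $x$ directly. The argument is a matching upper bound and lower bound.

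\textbf{Reduction.} Fix $x \in \R^H$ with $\norm{x}_2 = 1$. By associativity, $U V^\top x = U (V^\top x) = \ps{V}{x}\, U$, since $V^\top x$ is a scalar. Taking norms gives
\[ \norm{U V^\top x}_2 = \abs{\ps{V}{x}} \, \norm{U}_2 . \]
So the operator norm equals $\norm{U}_2 \sup_{\norm{x}_2 = 1} \abs{\ps{V}{x}}$. It remains to show this supremum equals $\norm{V}_2$.

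\textbf{Upper bound.} By the Cauchy--Schwarz inequality, $\abs{\ps{V}{x}} \le \norm{V}_2 \norm{x}_2 = \norm{V}_2$. This yields $\normop{A} \le \norm{U}_2 \norm{V}_2$.

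\textbf{Lower bound.} If $V = 0$, then $A = 0$ and both sides vanish, so assume $V \neq 0$. Take $x := V / \norm{V}_2$, which is a unit vector. Then $\ps{V}{x} = \norm{V}_2$, so the supremum is attained and $\normop{A} \ge \norm{U}_2 \norm{V}_2$.

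Combining the two bounds gives the claim. No step is a real obstacle; the only care needed is the degenerate case $V = 0$, where the normalizing choice of $x$ is unavailable but the identity holds trivially.
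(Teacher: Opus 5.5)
Your proposal is correct and follows essentially the same route as the paper's proof: the paper also writes $\norm{Ax}_2 = \abs{V^\top x}\,\norm{U}_2$, applies Cauchy--Schwarz for the upper bound, and attains equality at $x^\star = V/\norm{V}_2$. The only minor difference is that the paper sets aside both $U = 0$ and $V = 0$ as trivial, whereas you correctly observe that only $V = 0$ needs separate treatment.
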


%\adri{Can we extend this lemma to matrices $A \in \R^{m \times n}$ of the form $U V^{\top}$ where $U \in \R^n$ and $V \in \R^m$? Should be straightforward I think...}

\begin{proof}[Proof of \Cref{lem:op-norm-matrix-rank-1}]
    If $U = 0$ or $V = 0$, then $A = 0$ and the identity is trivial. Assume that $U \neq 0$ and $V \neq 0$. For any $x \in \R^H$ with $x \neq 0$, we have $A x = U (V^\top x)$, so 
        \[ \norm{Ax}_2 = \abs{V^\top x} \norm{U}_2,\]
    and by \emph{Cauchy-Schwarz}, $\abs{V^\top x} \leq \norm{V}_2 \norm{x}_2$, therefore,
        \[ \dfrac{\norm{Ax}_2}{\norm{x}_2} \leq \norm{U}_2 \norm{V}_2 \implies \normop{A} \leq \norm{U}_2 \norm{V}_2. \]
    To show equality, evaluate at the unit vector $x^\star = \frac{V}{\norm{V}_2}$, then $V^\top x^\star \norm{V}_2$ and so $\norm{A x^\star}_2 = \norm{U}_2 \norm{V}_2$.
\end{proof}

\clearpage

\section{Results for the Product-Form of the SS-NO}
\label{sec:prodform}

In this section we give the results for the product-form which we omitted in the main text.
For lemma \ref{lem:fourier-ss-no}, for the product-form, we have:
    \begin{align*}
        \widehat{\cK}_t^{\textnormal{SS-NO}}(\xi) = \prod_{i=1}^d \sum_{k=1}^K c_{t, k, i} \left[ F_{+, k, i}(\xi_i) A_{k, +} + F_{-, k, i}(\xi_i) A_{k, -} \right].
    \end{align*}

Next, concerning Lemma \ref{lem:bound-ssno}, for the product-form SS-NO kernel, the multiplicative interaction between one--dimensional kernels leads to a loss of the linear scaling in $d$ observed earlier for the sum-form. Indeed, similar arguments as in the proof of \Cref{lem:bound-ssno} yield
       \begin{align*}
            \normgridmattwo{\cK_t^{\textnormal{SS-NO}}} \le (C_d N K)^{\frac{d}{2}},
       \end{align*}
where $C_d$ is the constant defined in~\Cref{lem:bound-ssno}.

Finally, on the discretization theorem, for the SS-NO in product-form, the constant $A$ in~\Cref{thm:discretization-error-ssnos} reads instead
\begin{equation}
    A = L_{\sigma} \sup_{t \in \ens{0, \ldots, T - 1}} \left( \normop{W_t} + K^{\frac{d}{2}} \, C(\cK_t)^{\frac{d}{2}} \right),
\end{equation}
i.e., the dependency is now in $K^{\frac{d}{2}}$ instead of $K d$ for the sum-form.

\clearpage
\section{Experimental Details}\label{sec:exp-details}

%\subsection{Benchmarks and Datasets}\label{sec:datasets} 

%{\color{blue}
\subsection{1D Burgers' Equation}

We consider the one-dimensional viscous Burgers' equation,
\begin{equation}
    \partial_t u(x,t) + u(x,t)\,\partial_x u(x,t) = \nu\,\partial_{xx} u(x,t), \quad (t, x) \in \intff{0}{T} \times \intff{0}{L},
\end{equation}
where $\nu > 0$ is the viscosity coefficient.

\begin{figure}[h]
    \centering
    \includegraphics[width=0.8\linewidth]{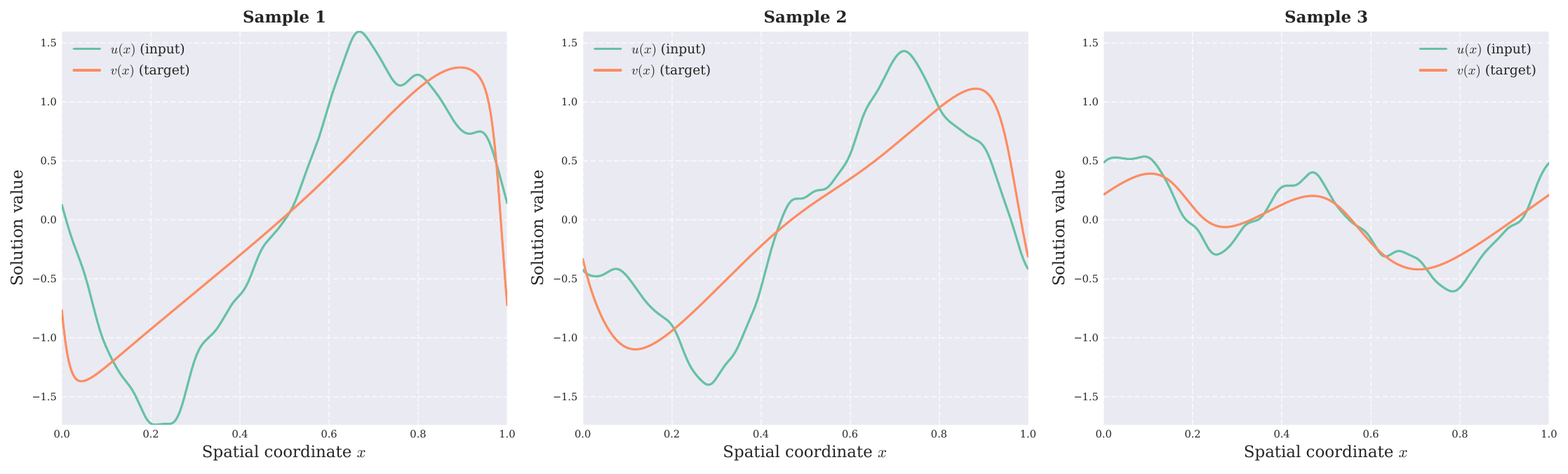}
    \caption{ Representative solutions of the one-dimensional viscous Burgers' equation with viscosity $\nu = 0.1$. Each plot corresponds to a different initial condition randomly sampled from the benchmark distribution.}
    \label{fig:appdx-burgers-1d}
\end{figure}
%}

\subsection{Gaussian Random Fields (GRFs)}\label{sec:datasets}

We consider synthetic Gaussian random field (GRF) benchmarks in one and two spatial dimensions, which provide controlled test cases for studying discretization effects independently of numerical PDE solvers. GRFs are generated on a fine spatial grid and treated as continuous functions; lower-resolution representations are obtained via uniform subsampling.

A GRF is defined on a periodic domain by prescribing its Fourier coefficients. Concretely, suppose $u \colon \T^d \to \R$ admits the Fourier representation
\begin{equation}
    u(x) = \sum_{k \in \Z^d} \widehat{u}_k \, e^{2\pi i k \cdot x},
\end{equation}
where the coefficients $\widehat{u}_k$ are independent complex-valued Gaussian random variables with zero mean and variance proportional to
\begin{equation}
    \E{\norm{\widehat{u}_k}^2} \propto \left( \norm{k}^2 + \eps \right)^{-\nicefrac{\alpha}{2}},
\end{equation}
for a smoothness parameter $\alpha > 0$ and a small regularization constant $\varepsilon > 0$. The zero-frequency mode is removed to enforce mean-zero realizations. Increasing $\alpha$ enhances the suppression of high-frequency modes, yielding smoother samples, as shown in~\Cref{fig:appdx-grf-1d,fig:appdx-grf-2d}. To target a prescribed Sobolev regularity $s$, we choose $\alpha = s + \frac{d}{2}$.

GRFs are generated on a fine uniform spatial grid and treated as realizations of continuous functions. Coarser representations are obtained via uniform subsampling of the fine-grid fields.

\begin{figure}[h]
    \centering
    \includegraphics[width=\linewidth]{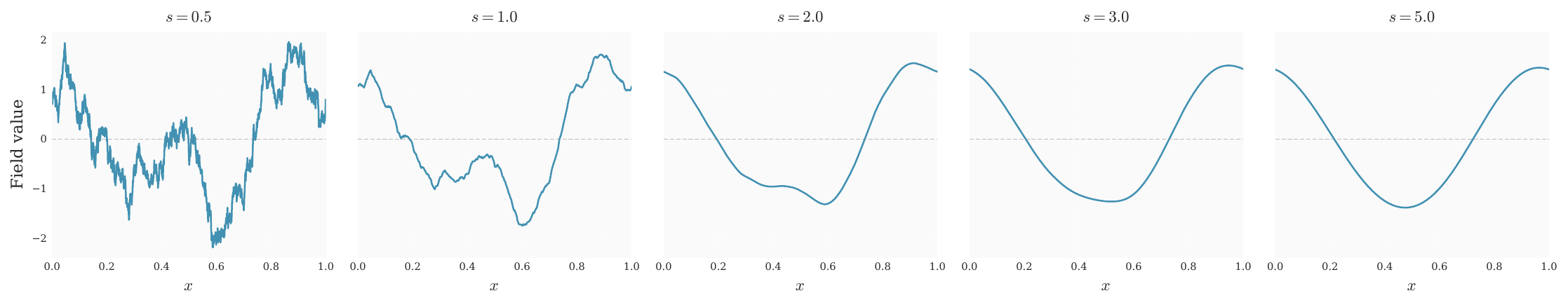}
    \caption{One-dimensional Gaussian random field realizations with varying smoothness parameters. Increasing smoothness corresponds to higher regularity and reduced high-frequency content.}
    \label{fig:appdx-grf-1d}
\end{figure}

\begin{figure}[h]
    \centering
    \includegraphics[width=\linewidth]{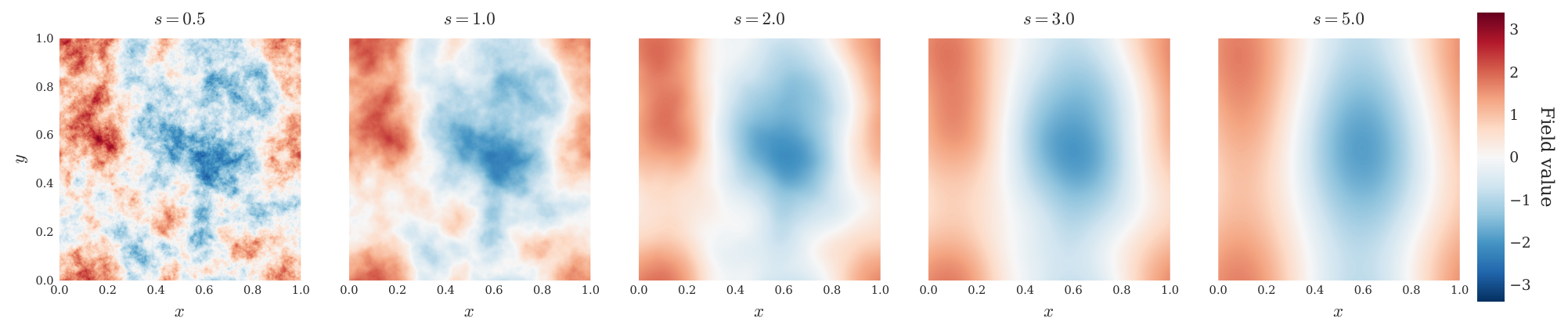}
    \caption{Two-dimensional Gaussian random field realizations with varying smoothness parameters. Fields are shown on a uniform spatial grid and share a common color scale for comparability.}
    \label{fig:appdx-grf-2d}
\end{figure}

\newpage

%{\color{blue}
\section{Additional Experiment: Discretization Scaling for \emph{Trained} SS-NOs on 1D Burgers}
\label{appdx-sec:trained-burgers-depth}

The experiments in the main text were intentionally designed to isolate the \emph{continuous-to-discrete implementation error} from training effects, by probing fixed operators under controlled changes of the spatial resolution. 
In this appendix, we further demonstrate empirically that the same phenomenon remains visible \emph{after training}, through an experiment on a standard operator-learning task: the 1D Burgers benchmark. This experiment serves two purposes. First, it tests whether the discretization-scaling behavior established in our theory is still observable for \emph{trained} SS-NOs. Second, it provides a direct empirical study of how this behavior evolves with the depth of the architecture.

\subsection{Experimental Design}

We consider the 1D SS-NO architecture and train a family of models on the Burgers dataset, varying only the depth of the network. In all cases, we use $64$ channels, $32$ poles, an output resolution of $1024$, and a \textsc{ReLU} activation. We study the four depths $T \in \{1,2,4,8\}$. The corresponding parameter counts are $20993$ for $T=1$, $33473$ for $T=2$, $58433$ for $T=4$, and $108353$ for $T=8$.

All models are trained at the finest available training resolution (no subsampling) for $30$ epochs, with batch size $32$, using \textsc{AdamW} with learning rate $10^{-2}$ and weight decay $10^{-6}$, together with a cosine annealing scheduler. The train/validation/test split is kept fixed across all runs, so that the only changing factor is the number of layers.

After training, we freeze the learned weights and analyze the discretization behavior of the resulting operator. For a given Burgers test input $u$, we first compute the prediction obtained from the full-resolution input, which we use as our reference:
\[
v^{\mathrm{ref}} := \Psi_{\theta}(u).
\]
We then subsample the same input by different factors, evaluate the same trained model on each coarsened version $u^{(\mathrm{sub})}$, and compare the resulting prediction
\[
v^{(\mathrm{sub})} := \Psi_{\theta}\bigl(u^{(\mathrm{sub})}\bigr)
\]
to the full-resolution reference. The quantity we report is therefore the relative discretization error
\[
\mathrm{RelErr}(L)
:=
\frac{\|v^{(\mathrm{sub})}-v^{\mathrm{ref}}\|_{\ell^2}}
{\|v^{\mathrm{ref}}\|_{\ell^2}},
\]
%Malheureusement nous n'avons pas eu le temps de traiter cette remarque avant la deadline des rebuttals
%\mn{est ce que cela ne montre que la architecture est stable mais  est que  ca prouve qu'on a  bien appris l'EDP?  La différence entre notre modèle à basse résolution et celui modèle à haute résolution sera tjrs un pb. ils vont vouloir voir la difference entre notre modèle à basse résolution  et solution analytique (la vraie physique de Burgers) ou element fini ou autre par Appris.  Si tu as on a graphique qui montre que le SS-NO garde une bonne erreur de test même quand on change de résolution par rapport à l'entraînement (N vers N*10 ), ca prouve que la théorie a un impact direct sur le déploiement. je pense que le reviewer penible veut savoir si ca aide de changer la façon d'entraîner mettre plus de couches ou bien si on on entraîne en X et tester en Y grand sans réentraînement}
where $L$ is the spatial resolution after subsampling. In practice, we probe the resolutions $L \in \{16, 32, 64, 128, 256, 512, 1024\}$.

\subsection{Results}

\begin{figure*}[t]
    \centering
    \includegraphics[width=\textwidth]{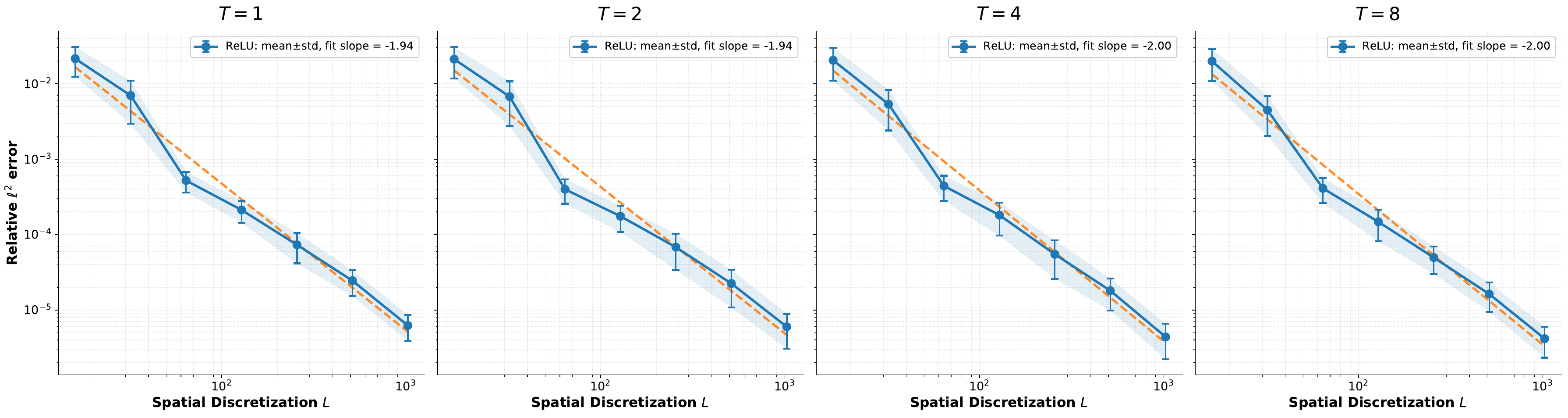}
    \caption{
    Relative discretization error for \emph{trained} 1D ReLU SS-NOs on the Burgers benchmark, for depths $T\in\{1,2,4,8\}$. In each subplot, the same trained operator is evaluated on coarsened versions of the same test input and compared to its full-resolution prediction. The plotted quantity is the relative $\ell^2$ error, shown as mean $\pm$ standard deviation over $30$ test samples, together with a log--log fit.
    }
    \label{fig:trained-burgers-depth-sweep}
\end{figure*}
%\mn{Christian Trouve les réponses sont tres bien sauf il conseille fortement de rapporter l'erreur $L^2$ relative de test réelle par rapport à la vérité terrain à pleine résolution à côté de la figure pour montrer que les modèles entraînés résolvent véritablement la tâche. Il dit il faut impérativement une conclusion pour les gens d'ML de type nous montrons la stabilité structurelle, mais on confirme par ailleurs que le modèle atteint une erreur de test de tant par rapport à la solution exacte, prouver que le 'scaling' observé s'applique à une solution physiquement vraie.  }

The results are shown in \Cref{fig:trained-burgers-depth-sweep}. 
For each depth, we plot the mean $\pm$ standard deviation over $30$ Burgers test samples, together with a log--log linear fit.
Across all four depths, the discretization error decreases regularly as the input resolution is refined, and the decay is very close to linear in log--log scale. The fitted slopes are $-1.9436$ for $T=1$, $-1.9428$ for $T=2$, $-2.0006$ for $T=4$, and $-1.9976$ for $T=8$, with corresponding $R^2$ values $0.9793$, $0.9698$, $0.9829$, and $0.9821$.

Two points are worth emphasizing. First, the scaling behavior predicted by our discretization analysis clearly remains visible \emph{after training}. This is important because it shows that the continuous-to-discrete gap studied in the paper is not merely a property of random or frozen architectures: it is still present, and still strongly structured, in a realistic learned setting. In that sense, the experiment strengthens the practical significance of the theory.

Second, the depth sweep suggests that increasing the number of layers from $T=1$ to $T=8$ does not qualitatively alter the observed law. All four models exhibit nearly the same slope, close to $-2$, and no instability or breakdown of the scaling appears as depth increases. At least on this 1D Burgers task, optimization therefore does not seem to induce any pathological amplification of the discretization error with depth. Instead, the trained models remain numerically well behaved across all tested depths.
%}

%{\color{blue}

\newpage
\section{Additional Experiment: Depth and Stability on 1D Gaussian Random Fields}
\label{appdx-sec:depth-GRF-stability}

To complement the stability experimental analysis, we additionally study how the empirical stability behavior evolves with the depth of the architecture. In particular, while \Cref{thm:global_stability_layers,thm:global_iss} highlights that the relevant stability constants accumulate across layers, it is also important to understand whether this depth dependence leads to an observable degradation in practice. To this end, we perform an experiment on untrained 1D SS-NOs driven by Gaussian random field (GRF) inputs, with the goal of directly assessing the effect of depth on perturbation amplification.

\subsection{Experimental Setup}

We follow the same general protocol as in the stability experiment of the main text, but now vary the depth of the SS-NO. More precisely, we consider 1D SS-NOs with no positional encoding, \textsc{GELU} activations, and output resolution $8192$, and study the depths
\[
T \in \{1,2,4,8,16,32\}.
\]
The inputs are sampled from 1D Gaussian random fields with smoothness parameter $s_{\mathrm{GRF}}=2.0$, that is, with Fourier decay exponent $\alpha=s_{\mathrm{GRF}}+\frac12=2.5$. For each depth $T$, we evaluate two complementary empirical stability quantities.

\paragraph{Perturbation Response.}
For each of $N_{\mathrm{GRF}}=20$ random GRF inputs $v$, we generate $N_{\mathrm{DIRS}}=20$ independent random unit directions $\xi$, and evaluate the output discrepancy
\[
\bigl\|\mathcal{L}_N^{(T)}(v+\varepsilon \xi)-\mathcal{L}_N^{(T)}(v)\bigr\|_{\ell^2}
\]
over perturbation scales $\varepsilon \in \{0,0.025,\dots,0.8\}$. The reported curves correspond to the mean $\pm$ standard deviation over all sampled inputs and directions.

\paragraph{Empirical Lipschitz factor.}
To summarize the global sensitivity of the full $T$-layer architecture, we also estimate an empirical Lipschitz constant by sampling $200$ random GRF pairs $(x_1,x_2)$ and computing
\[
\max \frac{\|\mathcal{L}_N^{(T)}(x_1)-\mathcal{L}_N^{(T)}(x_2)\|_{\ell^2}}{\|x_1-x_2\|_{\ell^2}}.
\]
In addition, we report the mean output perturbation error at the largest tested scale $\varepsilon=0.8$, in order to compare a worst-case pairwise sensitivity estimate with the average response observed under structured perturbations.

\subsection{Results}

\begin{figure*}[t]
    \centering
    \includegraphics[width=\textwidth]{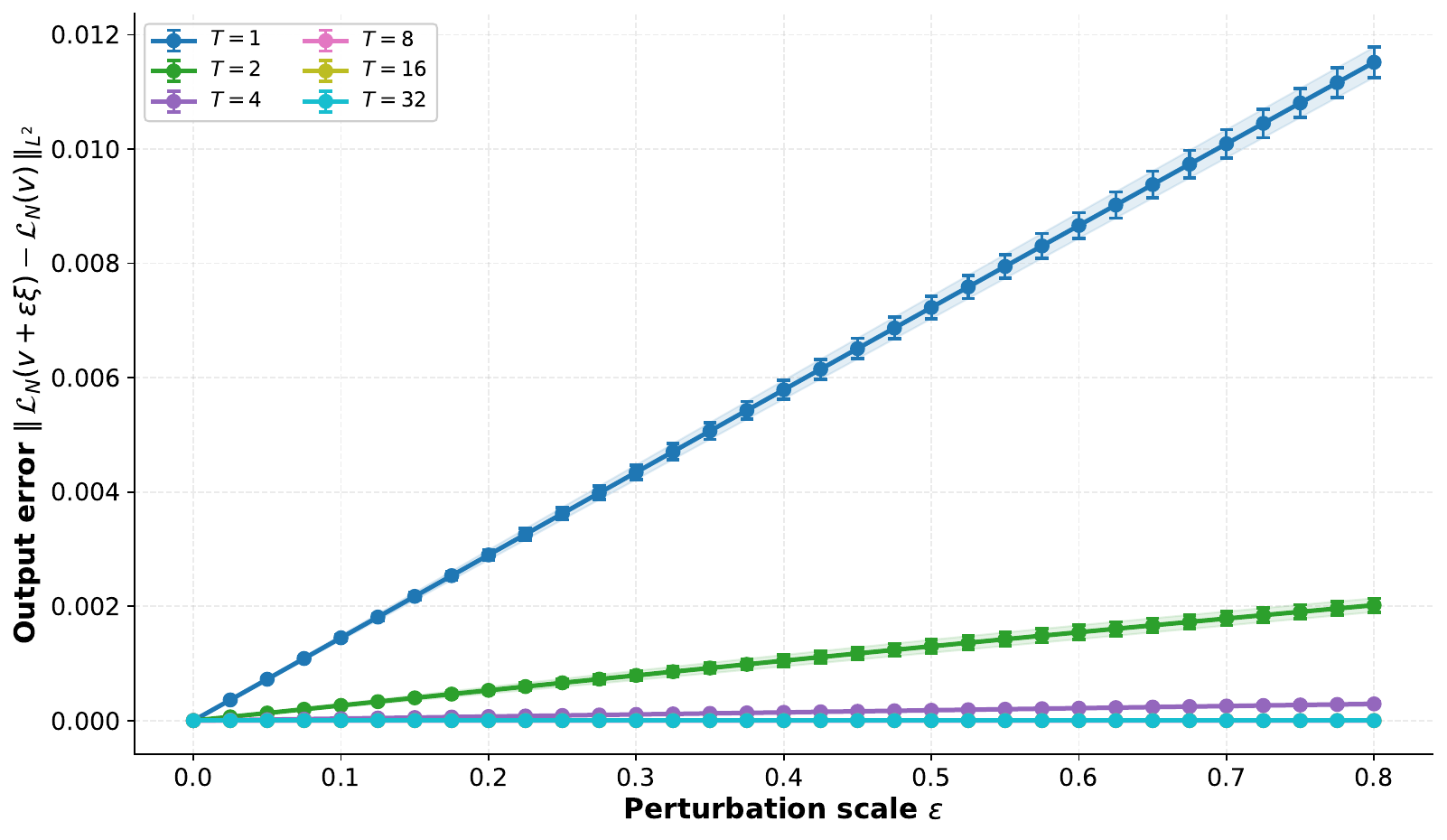}
    \caption{
    Output perturbation error for 1D SS-NOs on GRF inputs, for depths $T\in\{1,2,4,8,16,32\}$. The models use output dimension $8192$, \textsc{GELU} activations, and no positional encoding. For each depth, we report the mean $\pm$ standard deviation over $N_{\mathrm{GRF}}=20$ random fields and $N_{\mathrm{DIRS}}=20$ random perturbation directions per field, across perturbation scales $\varepsilon\in[0,0.8]$.
    }
    \label{fig:grf-output-error-depth}
\end{figure*}

\begin{figure*}[t]
    \centering
    \includegraphics[width=0.74\textwidth]{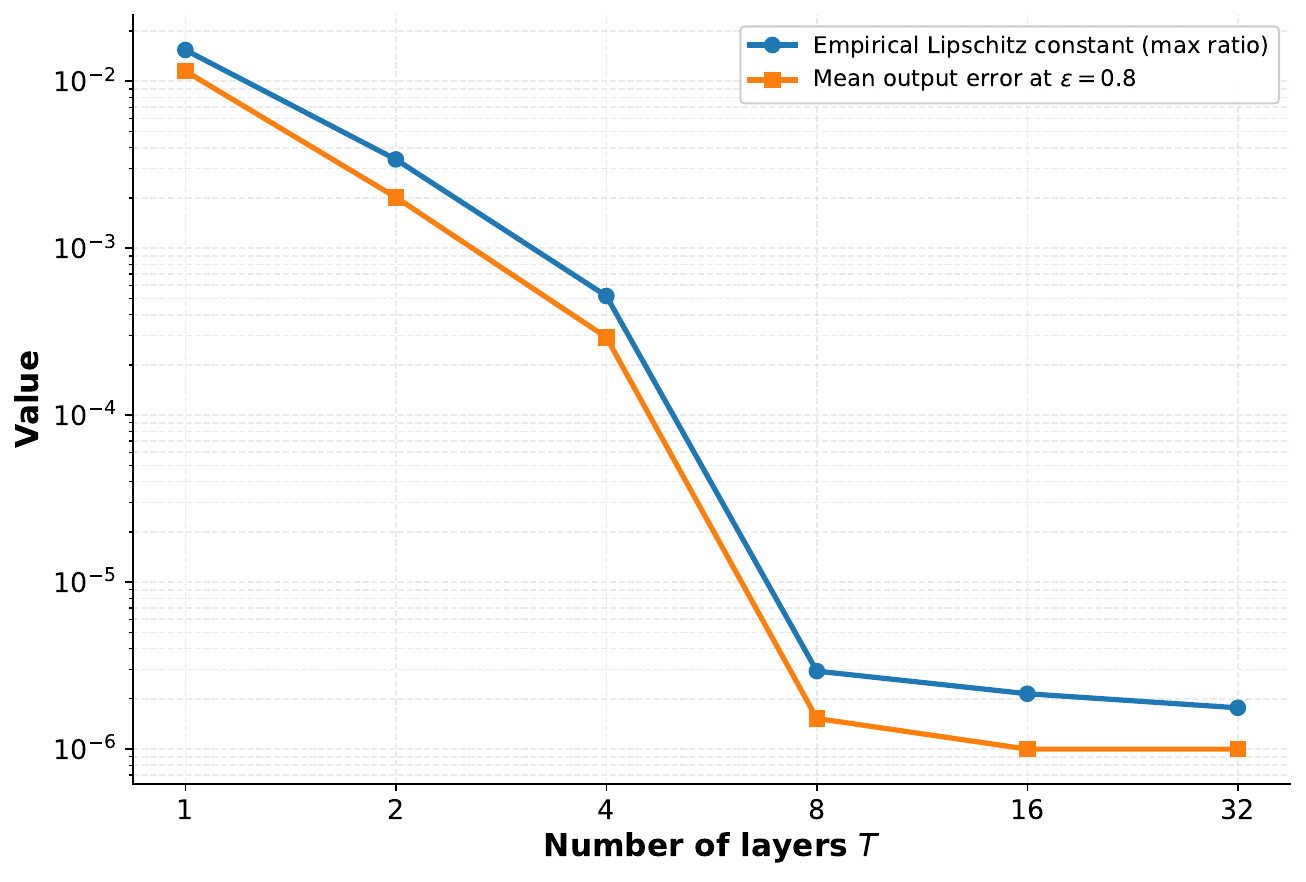}
    \caption{
    Empirical stability versus depth $T$ on a log--log scale. We plot the estimated Lipschitz factor of the full $T$-layer network (maximum output-to-input perturbation ratio over $200$ random GRF pairs) together with the mean output perturbation error at $\varepsilon=0.8$. The same model family and hyperparameters as in \Cref{fig:grf-output-error-depth} are used.
    }
    \label{fig:grf-lipschitz-depth}
\end{figure*}

The results are shown in \Cref{fig:grf-output-error-depth,fig:grf-lipschitz-depth}. Several observations emerge. 

First, \Cref{fig:grf-output-error-depth} shows that, for all tested depths, the output perturbation grows in a nearly linear fashion with the perturbation amplitude $\varepsilon$, which is the qualitative behavior expected from the Lipschitz-type stability bounds derived in the main text. Importantly, increasing the depth from $T=1$ to $T=32$ does not lead to any visible instability or abrupt amplification regime. On the contrary, the perturbation response remains smooth and well controlled throughout the tested range.

Second, \Cref{fig:grf-lipschitz-depth} indicates that both the empirical Lipschitz factor and the mean perturbation response at $\varepsilon=0.8$ decrease as the depth grows, rather than increasing explosively. In this experiment, deeper untrained networks therefore appear \emph{less} sensitive to perturbations. A natural explanation is that, at random initialization, the successive layers remain in a contractive regime, due to the effect of moderate weight magnitudes and the regularizing action of repeated kernel mixing. In particular, although our theoretical upper bounds accumulate layerwise and are therefore necessarily conservative, the global network may operate far from the worst-case regime.

Overall, this additional experiment supports the main theoretical insight of the paper from a complementary angle. The stability bounds in the main text identify the mechanisms through which perturbations may propagate across layers, while the present empirical study shows that, in a representative regime of random SS-NO initializations, this accumulation remains practically benign even for fairly deep architectures. This provides further evidence that the discrete stability framework developed here is not only mathematically meaningful, but also consistent with the observed behavior of the architecture in depth.

%}

\clearpage

%{\color{blue}

\section{Additional Stress Test: Under-Resolved Oscillatory Inputs}
\label{appdx-sec:underresolved-stress-test}

Beyond the empirical well-behaved discretization analysis of the main text, we include here a deliberately \emph{under-resolved} stress test designed to exhibit a regime where discretization error becomes large in a way consistent with our theory. This experiment addresses the question of whether the continuous-to-discrete analysis is only descriptive of well-behaved settings, or whether it also predicts degradation when the operator is evaluated outside its numerically resolvable regime.

\subsection{Experimental design}

We consider a fixed 1D SS-NO with \textsc{GELU} activation and output resolution $8192$, with randomly initialized parameters kept frozen throughout the experiment. No training is involved: the goal is to isolate the numerical implementation gap of the operator itself.

For each frequency
\[
k \in \{4,8,16,32,64,128\},
\]
we construct oscillatory inputs of the form
\[
u_k(x) = \sin(2\pi kx + \phi),
\]
where $\phi$ is a random phase. For each $k$, we evaluate the same SS-NO on a very fine grid, which serves as a reference discretization, and then re-evaluate it on coarser grids of sizes
\[
L \in \{16,32,64,128,256,512,1024\}.
\]
The discretization error is measured as the relative $\ell^2$ difference between the coarse-grid evaluation and the fine-grid reference. We average over multiple random phases.

This setup is intentionally chosen to probe a simple failure mode predicted by the theory: as the input frequency $k$ increases, the relevant oscillations become harder to resolve on a coarse grid. Hence one should expect the coarse-grid implementation to depart more strongly from the fine-grid one when $L$ is too small relative to $k$. In each subplot we indicate the rough Nyquist threshold $L \approx 2k$, which marks the minimal sampling scale required to begin resolving a sinusoid of frequency $k$.

\subsection{Results}

\begin{figure*}[t]
    \centering
    \includegraphics[width=\textwidth]{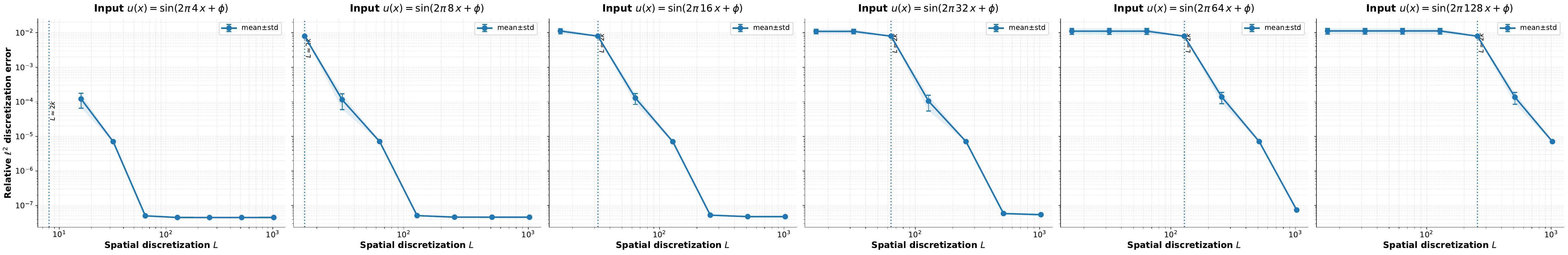}
    \caption{
    Stress test with increasingly oscillatory inputs. For each input frequency $k \in \{4,8,16,32,64,128\}$, we evaluate the same fixed 1D SS-NO on grids of sizes $L \in \{16,32,64,128,256,512,1024\}$ and compare the result to a fine-grid reference. The dashed vertical line indicates the rough Nyquist threshold $L \approx 2k$. As $k$ increases, a clear under-resolved regime appears: the discretization error remains large and may plateau when the grid is too coarse to resolve the input oscillations, before recovering once $L$ becomes sufficiently large.
    }
    \label{fig:underresolved-stress-test}
\end{figure*}

The results are shown in \Cref{fig:underresolved-stress-test}. For low-frequency inputs, such as $k=4$ and $k=8$, the discretization error already decays cleanly as the evaluation resolution increases. By contrast, for larger frequencies the coarse-grid regime becomes markedly worse. For $k=16$ and $k=32$, the first resolutions exhibit much larger errors before the decay regime is recovered. This phenomenon becomes even clearer for $k=64$ and $k=128$: the error essentially remains on a plateau for several coarse resolutions, and only begins to decrease once the grid size crosses the scale needed to resolve the oscillations.

\subsection{Interpretation}

This experiment provides a concrete example of a discretization failure regime that is qualitatively predicted by our theory. The point is not that the SS-NO becomes unstable in an optimization sense, but rather that its \emph{discrete realization ceases to be faithful} when the relevant oscillations are too fine for the grid. In the well-resolved regime, the discrete operator tracks the fine-grid reference accurately and the error decreases with resolution; in the under-resolved regime, the implementation gap becomes large and may even plateau before eventually decaying again.

We stress that this is precisely the type of phenomenon our analysis is meant to capture. The continuous-to-discrete error is small only when the evaluation grid is sufficiently fine relative to the frequency content that the operator must process. Hence \Cref{fig:underresolved-stress-test} shows that our theory is not merely a post-hoc description of favorable cases: it also correctly anticipates a failure mode in which coarse discretization is no longer able to realize the intended operator faithfully.

% 58 pages xD oui
%}

%%%%%%%%%%%%%%%%%%%%%%%%%%%%%%%%%%%%%%%%%%%%%%%%%%%%%%%%%%%%

\end{document}